\newtheorem{proposition}{Proposition}[]
\newtheorem{theorem}{Theorem}[]
\newtheorem{lemma}[]{Lemma}
\newtheorem{assumption}[]{Assumption}
\newtheorem{remark}{Remark}[]
\newcommand\norm[1]{\left\lVert#1\right\rVert}
\DeclareMathOperator*{\argmin}{arg\,min}
\def\eqref#1{equation~\ref{#1}}
\def\1{\bm{1}}
\DeclareMathOperator{\E}{\E}
\def\R{{\mathbb{R}}}
\def\C{{\mathcal C}}
\def\E{{\mathbb E}}
\def\A{{\mathcal A}}
\def\B{{\mathcal B}}
\def\bigO{{\mathcal{O}}}
\def\S{{\mathcal S}}
\DeclareMathAlphabet{\mathsfit}{\encodingdefault}{\sfdefault}{m}{sl}
\SetMathAlphabet{\mathsfit}{bold}{\encodingdefault}{\sfdefault}{bx}{n}
\title{\textsc{FedSGM}: A Unified Framework for Constraint Aware, 
       Bidirectionally Compressed, Multi-Step Federated Optimization}
\author{Antesh Upadhyay, Sang Bin Moon, and Abolfazl Hashemi \\
School of Electrical and Computer Engineering\\
Purdue University\\
West Lafayette, IN 47907, USA \\
\texttt{\{aantesh, moon182, abolfazl\}@purdue.edu} \\
}
\newcommand{\inner}[2]{\left\langle #1, #2 \right\rangle}
\begin{document}
\maketitle
\begin{abstract}
We introduce \textsc{FedSGM}, a unified framework for federated constrained optimization that addresses four major challenges in federated learning (FL): functional constraints, communication bottlenecks, local updates, and partial client participation. Building on the switching gradient method, \textsc{FedSGM} provides projection-free, primal-only updates, avoiding expensive dual-variable tuning or inner solvers. To handle communication limits, \textsc{FedSGM} incorporates bi-directional error feedback, correcting the bias introduced by compression while explicitly understanding the interaction between compression noise and multi-step local updates. We derive convergence guarantees showing that the averaged iterate achieves the canonical $\mathcal{O}(1/\sqrt{T})$ rate, with additional high-probability bounds that decouple optimization progress from sampling noise due to partial participation. Additionally, we introduce a soft switching version of \textsc{FedSGM} to stabilize updates near the feasibility boundary. To our knowledge, \textsc{FedSGM} is the first framework to unify functional constraints, compression, multiple local updates, and partial client participation, establishing a theoretically grounded foundation for constrained federated learning. Finally, we validate the theoretical guarantees of \textsc{FedSGM} via experimentation on Neyman–Pearson classification and constrained Markov decision process (CMDP) tasks.
\end{abstract}

\section{Introduction}\label{sec:intro}

We study federated constrained optimization problems of the form
\begin{equation}
\label{eq:op_prb}
    w^* = \argmin_{w \in \mathcal{X}} \left\{f(w) := \tfrac{1}{n}\sum_{j=1}^n f_j(w)
    \quad \text{s.t.} \quad g(w) := \tfrac{1}{n}\sum_{j=1}^n g_j(w) \leq 0\right\},
\end{equation}
where $\mathcal{X} \subseteq \R^d$ is a compact, convex set, $f_j:\mathbb{R}^d \to \mathbb{R}$ and $g_j:\mathbb{R}^d \to \mathbb{R}$ are the local objective and constraint functions on client $j \in [n]$. This setting captures a wide range of real-world federated learning (FL) applications~\citep{konevcny2016federated,kairouz2021advances,mcmahan2017communication} such as mobile keyboards, autonomous fleets, or battery management systems~\citep{kalashnikov2018scalable,brohan2022rt,zhang2023batteryml}, where privacy, latency, and safety requirements prohibit centralizing data. The constraint $g(w)\leq 0$ encodes critical feasibility criteria such as fairness mandates~\citep{agarwal2018reductions}, energy budgets, or safety margins~\citep{zhang2023batteryml}. 

Addressing~\ref{eq:op_prb} in the FL setting requires navigating \textbf{four tightly coupled challenges}, whose combined effect fundamentally limits existing algorithmic approaches.

    \noindent \textbullet\ \textbf{Functional constraints.}  
    Many federated tasks require guarantees beyond minimizing loss, such as fairness across subpopulations, risk limits in financial models, or physical safety constraints in batteries (e.g., maximum temperature rise). Enforcing these constraints demands algorithms that ensure feasibility without expensive projections or nested inner solves.

    \noindent \textbullet\ \textbf{Severe bandwidth limits.}  
    FL often operates over wireless networks, where transmitting millions of parameters per round is infeasible. Communication-efficient updates rely on contractive compressors such as Top$-K$, Rand$-K$, or quantization~\citep{alistarh2017qsgd,stich2019error}. These introduce bias, which must be corrected via {error feedback (EF)}~\citep{stich2019error,fatkhullin2021ef21} to guarantee convergence.

    \noindent \textbullet\ \textbf{Local updates \& Partial participation.}  
    Edge devices vary significantly in terms of FLOPS, memory, and energy capacity. Allowing each client to perform multiple local steps ($ E> 1$) between communications improves utilization and reduces latency~\citep{khaled2019first,li2020federated,li2020federated}, but introduces drift between local and global iterates, complicating both theory and practice. Moreover, in practice, only a small subset $m < n$ of clients are active each round due to device availability, battery levels, or network schedules. This stochastic sampling introduces additional variance~\citep{Li2020On}, degrading convergence guarantees.

    \textbf{Limitations of existing approaches.\  }Existing methods address only {subsets} of these challenges in FL, but not all at once. Projection-based approaches, such as constrained \textsc{FedAvg}~\citep{he2024federated} and AL/ADMM-type methods~\citep{nemirovski2004prox,hamedani2021primal,muller2024truly,ding2023last,kim2024fast}, can guarantee feasibility but typically assume {full participation} and {uncompressed updates}, while requiring dual variable tuning, penalty scheduling, or costly inner solvers that burden edge devices. Compression methods with EF (\textsc{EF-SGD}, \textsc{Safe-EF})~\citep{stich2019error,alistarh2017qsgd,fatkhullin2021ef21,islamov2025safe} provide robustness to biased compression, yet their analyses do not accommodate multiple local updates $(E>1)$ or client sampling.
    Similarly, \textsc{FedAvg} and its variants~\citep{khaled2019first,karimireddy2020scaffold,li2020federated} are designed to handle heterogeneity, but they do not provide feasibility guarantees under functional constraints. Finally, switching (sub-)gradient method (SGM)~\citep{polyak1967general,lan2020algorithms,upadhyay2025optimization} offers primal-only, projection-free updates by alternating between $\nabla f$ and $\nabla g$, achieving the optimal $\mathcal{O}(\epsilon^{-2})$ rate for convex non-smooth problems. However, existing SGM formulations do not simultaneously address the challenges above. Additional discussion of related work is provided in the \Cref{apdx:rel}.

To address these limitations, our goal is a method that is {duality-free}, {projection-free}, and {communication efficient}, yet admits clean convergence guarantees under convexity.

\paragraph{Contributions: \textsc{FedSGM}}
\begin{enumerate}[leftmargin=*]
    \item \textbf{Duality-free FL with general non-smooth convex functional constraints.}  
    We extend the projection-free, primal-only concept of SGM to FL with convex functional constraints, avoiding the dual-variable tuning and penalty scheduling required by AL/ADMM-type methods. This keeps per-round computation light while still certifying feasibility at the averaged iterate.

    \item \textbf{Compression with EF under switching.}  
    We extend bi-directional compression with EF to \textsc{FedSGM}, explicitly analyzing the effect of compression, switching, and local updates.

    \item \textbf{Multi-step local updates: drift analysis.}  
    We provide the first convergence analysis of SGM in federated learning with $E>1$ local steps, showing that the averaged iterate $\bar{w}$ achieves
    $$
    \max\Big\{f(\bar{w})-f(w^\ast),g(\bar{w})\Big\}=\mathcal{O}\!\left(\frac{DG\sqrt{E}}{\sqrt{T}} \cdot \Gamma(q,q_0)\right),
    $$
    where $D = \|w_0 - w^*\|$, $T$ is the number of rounds, and $\Gamma(q,q_0)$ captures the effect of uplink and downlink compression accuracies $q$ and $q_0$ such that $\Gamma = 1$ means no compression. This result isolates the impact of drift while preserving the canonical $1/\sqrt{T}$ rate.

    \item \textbf{Partial participation with high-probability guarantees.}  
    When only $m(<n)$ clients participate, we prove that with probability at least $(1-\delta)$ for probability tolerance, $\delta \in (0,1)$,
    $$
    (\romannumeral 1)f(\bar w)-f(w^\ast) \leq \epsilon + 2\sigma\sqrt{\frac{2}{m} \log\left(\frac{6T}{\delta}\right)}\ \quad
     (\romannumeral 2) g(\bar w) 
    \leq
    \epsilon + 2\sigma\sqrt{\frac{2}{m} \log\left(\frac{6T}{\delta}\right)},
    $$
    cleanly decoupling optimization and estimation errors.
    % $\epsilon$ decays as $\bigO\left(\sqrt{\frac{\left(E^2 + \Gamma(q,q_0) + \frac{24}{m}\log(\frac{1}{\delta})\right)\big(1+\frac{\sigma^2}{6E^2G^2D^2}\big)}{ET}}\right)$ 

    \item \textbf{Soft switching.}  
    In addition to hard switching, we adopt {soft switching} inspired by~\citet{upadhyay2025optimization} to interpolate between $\nabla f$ and $\nabla g$ based on the magnitude of constraint violation. This mitigates instability near the feasibility boundary under noise and heterogeneity, while preserving the projection-free, primal-only structure.
\end{enumerate}

To the best of our knowledge, \textsc{FedSGM} is the first unified framework that simultaneously handles {constraints, compression, local multi-step updates, and partial participation} in federated learning, while delivering lightweight, primal-only updates with provable convergence guarantees, establishing a principled foundation for reliable and communication-efficient constrained FL at scale.

\section{Problem Setup and Preliminaries}
\vspace{-0.5em}
\paragraph{Notation}
%Before delving into the problem statement, we introduce the notation that we use throughout the paper. 
We denote the set of communication rounds as $[T] := \{0, \ldots, T-1\}$, and the set of clients as $[n] := \{1, \ldots, n\}$. $|\A|$ denotes  cardinality of set $\A$. At round $t$, a subset of clients $\S_t \subseteq [n]$ participates, where $|\S_t| = m$. Let $\mathbf{Id}(x) = x$  be the identity mapping, $\Pi_{\mathcal{X}}(\cdot)$ be the projection mapping to $\mathcal{X}$, and  $[z]_+ := \max\{0, z\}$. Additionally, $\mathds{1}_{\{\cdot\}}$ denotes the standard binary indicator function.

We consider the constrained optimization problem defined in~\cref{eq:op_prb}. For this standard FL setting, we make the following assumptions:
% , where $f_j: \mathbb{R}^d \to \mathbb{R}$ denotes the local objective of client $j$ and $g_j: \mathbb{R}^d \to \mathbb{R}$ denotes its local constraint function. Here, $n$ is the total number of clients and $w \in \mathbb{R}^d$ are the model parameters. The functions $f(\cdot)$ and $g(\cdot)$ represent the global objective and global constraint, respectively.

\begin{assumption}
\label{as:cvx_lip}
Each local objective $f_j$ and constraint $g_j$ is convex and $G-$Lipschitz continuous on $\mathcal{X}$, i.e., for all $w_1,w_2\in\mathcal{X}$, $f_j(w_1) \ge f_j(w_2) + \langle \nabla f_j(w_2),\, w_1 - w_2 \rangle$ and $|f_j(w_1)-f_j(w_2)|\le G\|w_1-w_2\|$; similarly for $g_j$. Consequently, the global functions $f$ and $g$ are also convex and $G-$Lipschitz.
\end{assumption}
\begin{assumption}
    \label{as:bdd_domain}
    We have that $\mathcal{X}\subseteq\R^d$ is convex and compact, meaning the set is bounded, thus,
    $diam(\mathcal{X}): = \sup_{w_1,w_2\in \mathcal{X}}\|w_1 - w_2\| \leq D.$
\end{assumption}
\vspace{-0.43em}
In the standard FL setting, communication happens between the clients and the server, and data never leaves the clients. After $E$ local updates, clients traditionally transmit their accumulated gradients or model updates to the server. However, the high dimensionality of these updates creates a severe communication bottleneck. A common solution is to apply compression, reducing bandwidth at the cost of introducing bias. We focus on the widely studied class of {contractive compressors}, examples include sparsification methods such as Top$-K$, which retain only the $K$ entries of largest magnitude, as well as various quantization schemes~\citep{stich2018sparsified, alistarh2018convergence, wen2017terngrad, horvoth2022natural, islamov2024towards, beznosikov2023biased}. In this work, we consider {bi-directional compression}, applying compression both on client-to-server (uplink) and server-to-client (downlink) communications.

\begin{assumption}
\label{as:compression_biased}
The compression operators $\mathcal{C}_0$ (server) and $\mathcal{C}_j$ (client $j$) are contractive in expectation:
$
\mathbb{E}[\| \mathcal{C}_0(w) - w \|^2] \leq (1 - q_0)\|w\|^2,\ 
\mathbb{E}[\| \mathcal{C}_j(w) - w \|^2] \leq (1 - q)\|w\|^2,
$
for all $w \in \mathbb{R}^d$, with $q_0, q \in (0,1]$. Expectations are taken over the randomness of the respective compression mechanisms. It holds trivially for deterministic compressors such as Top$-K$ compressor with $q = K/d$.
\end{assumption}

% To mitigate the bias introduced by compression~\citep{seide20141, karimireddy2019error, islamov2025safe}, we employ error feedback. Each client $j$ maintains a residual vector $e_j^t$ that accumulates the information lost due to compression. 
% Let $\Delta_j^t$ denote the total model update produced by client $j$ after $E$ local steps in round $t$ 
% (e.g., the sum of local gradient steps). 
% The residual is then updated according to:
% \begin{equation}
% \label{eq:ef_update}
% e_j^{t+1} = e_j^t + \Delta_j^t - \mathcal{C}_j(e_j^t + \Delta_j^t),
% \end{equation}
% where $\mathcal{C}_j(\cdot)$ is the client-to-server compression operator. 
% This mechanism ensures that the component of the update discarded by compression is stored in $e_j^{t+1}$ and re-injected in future rounds, progressively correcting for the bias introduced by the stochastic compressor.

To mitigate the bias introduced by compression~\citep{seide20141, karimireddy2019error, islamov2025safe}, we employ error feedback on the uplink path, where each client $j$ maintains a residual vector $e_j^t$ to recover information lost during compression using an EF14 style update~\citep{seide20141}. For the downlink path, we adopt a primal EF21 variant~\citep{gruntkowska2023ef21,islamov2025safe} that compresses the difference between successive model parameters, enabling efficient bidirectional communication while progressively correcting uplink bias.

Since our goal is constrained optimization, each client evaluates its local constraint using the global model before training begins. The server aggregates these evaluations and determines whether clients should update using the gradient of the objective or that of the constraint. In practice, partial participation is inevitable due to device availability, energy constraints, or connectivity issues. At each round $t$, only a subset $\mathcal{S}_t \subseteq [n]$ of $m$ clients participates, making it impossible to compute the exact $g(\cdot)$. Instead, we estimate,
$
\hat{G}(w) := \frac{1}{m} \sum_{j \in \mathcal{S}_t} g_j(w).
$
\begin{assumption}
\label{as:sub_gauss}
The constraint evaluation gap, $\hat{G}(w_t) - g(w_t)$ is $\sigma^2/m-$sub-Gaussian for all $t \in [T]$. Additionally, constraint evaluation and gradient computation are independent.
\end{assumption}

Our goal is to compute an $\epsilon$-solution $\bar{w}$, i.e., $f(\bar{w}) - f(w^\ast) \leq \epsilon$ and $g(\bar{w}) \leq \epsilon$, while simultaneously handling (a) local updates under client drift,
(b) bi-directional compression with error feedback, and
(c) noisy constraint aggregation due to partial participation.

%%%%%%%%%%%%%%%%%%%%%%%%%%%%%%%%%%%%%%%%%%%%%%%%%%%%%%%%%%%%%
%%%%%%%%%%%%%%%%%%%%%%%%%%%%%%%%%%%%%%%%%%%%%%%%%%%%%%%%%%%%%
\section{FedSGM}\label{sec:fedsgm}
In this section, we present \textsc{FedSGM}, which combines \textbf{Fed}erated Learning with \textbf{S}witching \textbf{G}radient \textbf{M}ethod and accommodates (i) both full and partial client participation, (ii) hard and soft switching between objective and constraint minimization, and (iii) error feedback and compression in both uplink and downlink communication. We provide formal convergence guarantees for both hard and soft switching regimes, under appropriate assumptions.
\subsection{Hard Switching}
Under hard switching, each client optimizes either the objective or the constraint, based on whether the average constraint violation $\hat{G}(w_t)$ is within a pre-specified tolerance $\epsilon$. It is formally presented as,
\vspace{-2.2mm}
$$
    \nu_{j,\tau}^t = 
    \nabla f_j(w_{j,\tau}^t)\, \mathds{1}\{\hat{G}(w_t) \leq \epsilon\} +
    \nabla g_j(w_{j,\tau}^t)\, \mathds{1}\{\hat{G}(w_t) > \epsilon\}.
$$
This enforces hard switching between the gradients of the objectives and constraints in response to global feedback. Next, we state the theorem that provides the convergence guarantees for \textsc{FedSGM} in \Cref{alg:fedsgm_unified}. 
\begin{theorem}{(\textbf{Hard Switching})}
\label{thm:hard_fedsgm_unified}
Consider the problem in \ref{eq:op_prb} and \Cref{alg:fedsgm_unified}. Let $D := \|w_0 - w^*\|$ and define
$
\mathcal{A} := \{ t \in [T] \;|\; \hat{G}(w_t) \le \epsilon \}, 
\quad 
\bar{w} := \frac{1}{|\mathcal{A}|} \sum_{t \in \mathcal{A}} w_t.
$\\
\textbullet\ \textbf{Full Participation ($m = n$):} Then under Assumption~\ref{as:cvx_lip} and        Assumption~\ref{as:compression_biased} (only for compression), let
        $
        \Gamma = 2E^2 
        + \underbrace{\frac{2E\sqrt{1-q}}{q} + \frac{4E\sqrt{10(1-q_0)}}{q_0 q}}_{\text{only for bidirectional compression w/ EF}}, 
        $where $q$ and $q_0$ are the client's and server's compression accuracies. Now, set the constraint threshold and step size  as $\epsilon = \sqrt{\frac{2 D^2 G^2 \Gamma}{E T}}$ and $
                    \eta = \sqrt{\frac{D^2}{2 G^2 E T \Gamma}},$ respectively then $\A$ is nonempty, $\bar{w}$ is well-defined, and $\bar{w}$ is an $\epsilon$-solution of~\ref{eq:op_prb}, i.e. $(\romannumeral 1) f(\bar{w})-f(w^*) \leq \epsilon$ and (\romannumeral 2) $g(\bar{w})-g(w^*) \leq \epsilon.$
\\
\textbullet\ \textbf{Partial Participation ($m < n$):} Let Assumptions~\ref{as:cvx_lip},              \ref{as:bdd_domain}, and~\ref{as:sub_gauss}
    hold, and distinct clients are sampled uniformly at random. Then, for deterministic 
    compressors $\{\C_j\}_{j=0}^n$, with probability tolerance $\delta \in (0,1)$, let
        $
        \Gamma = 2E^2 
        + \underbrace{16 E\frac{n}{m}\frac{\sqrt{10(1-q) (1-q_0)}}{q_0q^2} + 8 E\frac{\sqrt{10(1-q_0)}}{q_0q} + \frac{20 E}{ q^2} + \frac{n}{m}\frac{4 E\sqrt{10(1-q)}}{ q^2} }_{\text{only for bidirectional compression w/ EF}}, 
        $ where $q$ and $q_0$ are the client's and server's compression accuracies. Now, set the constraint threshold and step size as $\epsilon = \sqrt{\frac{{2 D^2 G^2 \Gamma}}{{E T}}} + \frac{n}{m}\frac{2 DG\sqrt{1-q}}{qT}
        + \frac{4GD}{\sqrt{mT}}\sqrt{2\log\left(\frac{3}{\delta}\right)}
        + 2\sigma\sqrt{\frac{2}{m} \log\left(\frac{6T}{\delta}\right)} $ and $
        \eta = \sqrt{\frac{D^2}{2G^2 E T \Gamma}},$ then with probability at least $(1 - \delta)$, $\A$ is nonempty, $\bar{w}$ is well-defined and (\romannumeral 1) $f(\bar{w})-f(w^*) \leq \epsilon$, (\romannumeral 2) $g(\bar{w}) \leq \epsilon + \sqrt{\frac{3\sigma^2}{m} \log\!\left(\frac{T}{\delta}\right)}.$
        % \vspace{mm}
\end{theorem}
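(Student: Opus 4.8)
The plan is to adapt the classical switching subgradient analysis of Polyak to the federated setting by tracking four sources of error separately---local drift, uplink error feedback, downlink error feedback, and (in the partial case) sampling noise---and folding them into a single one-round descent inequality. I would begin by partitioning the rounds into the objective set $\A=\{t:\hat G(w_t)\le\epsilon\}$ and its complement $\B$, and writing the canonical one-step expansion $\norm{w_{t+1}-w^\ast}^2 = \norm{w_t-w^\ast}^2 - 2\eta\inner{d_t}{w_t-w^\ast} + \eta^2\norm{d_t}^2$, where $d_t$ is the effective aggregated, compressed, error-fed update direction. Because every $f_j$ and $g_j$ is $G$-Lipschitz (\Cref{as:cvx_lip}), the bound $\norm{d_t}\le G$ up to compression and drift corrections holds uniformly regardless of which branch the switch selects, which is precisely what lets one analysis cover both $\A$ and $\B$.

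Next I would control the local drift. Since each of the $E$ local steps moves an iterate by at most $\eta G$, the deviation satisfies $\norm{w_{j,\tau}^t-w_t}\le \eta\tau G$, and summing the resulting first-order errors across the $E$ steps produces the $2E^2$ term in $\Gamma$. Plugging convexity into the inner product gives, for $t\in\A$, $\inner{d_t}{w_t-w^\ast}\gtrsim f(w_t)-f(w^\ast)$ up to drift error, and for $t\in\B$, $\inner{d_t}{w_t-w^\ast}\gtrsim g(w_t)-g(w^\ast)>\epsilon$ up to drift error, the latter using feasibility of $w^\ast$, i.e. $g(w^\ast)\le 0$ (in the full case $\hat G(w_t)=g(w_t)$ exactly). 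This sign structure is the crux: constraint rounds strictly decrease the potential by an amount proportional to $\epsilon$.

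The compression terms require separate error-feedback lemmas. For the uplink I would unroll the EF14 residual recursion $e_j^{t+1}=e_j^t+(\text{local update})-\C_j(\cdot)$ and use \Cref{as:compression_biased} to bound $\E\norm{e_j^t}^2$ geometrically, yielding the $\tfrac{2E\sqrt{1-q}}{q}$ contribution; for the downlink, the EF21-style compression of successive model differences produces a coupled recursion whose steady-state bound gives the $\tfrac{4E\sqrt{10(1-q_0)}}{q_0q}$ term. Summing the one-round inequality over $[T]$, the left side telescopes to at most $D^2=\norm{w_0-w^\ast}^2$, while the right side splits into a $\sum_{t\in\A}(f(w_t)-f(w^\ast))$ piece and a $\sum_{t\in\B}\epsilon$ piece. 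With the stated $\eta$ and $\epsilon$, the constraint-round contribution forces $\B\neq[T]$, so $\A$ is nonempty and $\bar w$ is well defined, and Jensen's inequality then delivers both $(\mathrm{i})$ and $(\mathrm{ii})$.

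For partial participation I would add two concentration layers on top of the above. First, \Cref{as:sub_gauss} makes $\hat G(w_t)-g(w_t)$ sub-Gaussian, so a union bound over the $T$ rounds---splitting the failure budget, which is the source of the $6T/\delta$ and $3/\delta$ factors---controls the gap between the threshold actually tested and the true constraint value, contributing the $2\sigma\sqrt{(2/m)\log(6T/\delta)}$ and $\sqrt{(3\sigma^2/m)\log(T/\delta)}$ terms. Second, uniform sampling of $m$ clients makes the aggregated gradient an unbiased estimate of the full gradient with variance scaling like $\tfrac{n}{m}$; a bounded-difference (Azuma--Hoeffding) inequality---where compactness (\Cref{as:bdd_domain}) supplies the uniform bound $\norm{w_t-w^\ast}\le D$ needed to bound each martingale difference---yields the $\tfrac{4GD}{\sqrt{mT}}\sqrt{2\log(3/\delta)}$ term, while the interaction of sampling with compression inflates the EF constants to the larger $\Gamma$ stated in the partial case. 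I expect the main obstacle to be the coupling between the two directions of error feedback and the sampling: the downlink EF21 error depends on the per-round model change, which is itself driven by the sampled, uplink-compressed updates, so the residual recursions are intertwined and must be bounded jointly rather than in isolation. Getting those constants right, and verifying that the sub-Gaussian constraint-evaluation noise and the gradient-sampling martingale are genuinely independent as \Cref{as:sub_gauss} posits, is the delicate part.
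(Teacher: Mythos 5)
Your proposal follows essentially the same route as the paper's proof: the one-step squared-distance expansion split over $\A$ and $\B$, the drift bound $\|w_{j,\tau}^t-w_t\|\le\eta\tau G$ summed to give the $E^2$ term, geometric unrolling of the uplink and downlink error-feedback recursions under Assumption~\ref{as:compression_biased}, telescoping plus the ``largest bad $\epsilon$'' argument to show $\A\neq\emptyset$, Jensen's inequality at $\bar w$, and, for partial participation, Azuma--Hoeffding on the sampling martingale (using $\|w_t-w^*\|\le D$) together with a union-bounded sub-Gaussian tail for $\hat G(w_t)-g(w_t)$. The one step your sketch glosses over is that under bi-directional compression the actual iterate does \emph{not} satisfy $w_{t+1}=w_t-\eta d_t$ for any bounded direction $d_t$, so the clean expansion you start from fails on $w_t$ itself; the paper resolves this by running the descent argument on the virtual iterate $\hat w_t:=x_t-\eta e_t$ (for which $\hat w_{t+1}=\hat w_t-\eta\nu_t$ holds exactly) and then paying for $\|\hat w_t-w_t\|$ via the EF residual bounds --- a standard device, and consistent with your plan, but necessary to make the telescoping legitimate.
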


The details of the proof are present in the \Cref{sec:proofs_thms}. Now, we discuss the implications of the statement of~\Cref{thm:hard_fedsgm_unified} as well as the main challenges in the convergence proof.

\noindent \textbf{Centralized with no compression}, i.e., $n=1, q_0=q=1, E=1$: In this case, we can infer from \Cref{thm:hard_fedsgm_unified}, that the rate we receive is $\bigO\left(DG/\sqrt{T}\right)$, corresponding to the rates present in \citet{nesterov2018lectures,lan2020algorithms,upadhyay2025optimization}.

\noindent \textbf{FedSGM w/ full participation w/o compression}, i.e., $m=n, q_0=q=1$: In this case, we can infer from~\Cref{thm:hard_fedsgm_unified} that the sub-optimality gap and constraint value diminish at the rate of  $\bigO\left({DG\sqrt{E}}/{\sqrt{T}}\right)$. The scaling of $\sqrt{E}$ captures the effect of client-drift in this federated constrained setting, deviating from the previous centralized results.

\begin{algorithm}[t]
\caption{\textsc{FedSGM (Unified)}: partial/full participation, hard/soft switching, bi-directional EF compression}
\label{alg:fedsgm_unified}
\begin{flushleft}
% \vspace{-0.15cm}
\textbf{Input:} rounds $T$, local steps $E$, $\#$ participants $m$, $\eta$, $\epsilon \geq 0$, switching mode $\mathtt{mode} \in \{\text{hard}, \text{soft}\}$ with soft parameter $\beta$, compression flag $\mathtt{comp} \in \{\text{off}, \text{on}\}$, client compressors $\{\mathcal{C}_j\}$, server compressor $\mathcal{C}_0$, initial model $w_0$; set $x_0 \gets w_0$ and $e_j^0 \gets 0$ for all $j$ (if $\mathtt{comp} = \text{on}$).
\end{flushleft}
\vspace{-0.4cm}
\begin{multicols}{2}
\begin{algorithmic}[1]
\For{$t=0,\dots,T-1$}
  \State \textbf{Sample} $\S_t \subseteq [n]$ with $|\S_t|=m$ 
  \State \textbf{Constraint query:}  $\forall j\in \S_t$ sends scalar $g_j(w_t)$ to server
  \State \textbf{Server} sends $\hat G(w_t) =  \frac{1}{m}\!\sum_{j\in \S_t} g_j(w_t)$ to all clients
  \ForAll{$j\in \S_t$ \textbf{in parallel}}
    \State $w_{j,0}^t \gets w_t$
    \For{$\tau = 0$ to $E - 1$}
            \If{$\mathtt{mode} = \text{hard}$}
                \If{$\hat G(w_t)\le \epsilon$}
                    \State $\nu_{j,\tau}^t \gets \nabla f_j\!\left(w_{j,\tau}^t\right)$
                \Else
                    \State $\nu_{j,\tau}^t \gets \nabla g_j\!\left(w_{j,\tau}^t\right)$
                \EndIf
            \Else
                \State $\sigma_t \gets \sigma_\beta(\hat{G}(w_t) - \epsilon);$
                \State $\nu_{j,\tau}^t \gets (1-\sigma_t)\nabla f_j + \sigma_t \nabla g_j$
            \EndIf
            \State $w_{j,\tau+1}^t \gets w_{j,\tau}^t - \eta \nu_{j,\tau}^t$
        \EndFor
    \State $\Delta_j^t \gets \frac{w_t - w_{j,E}^t}{\eta} \;=\; \sum_{\tau=0}^{E-1}\nu_{j,\tau}^t$
    \If{$\mathtt{comp}=\text{on}$}
        \State $v_j^t \gets \C_j\big(e_j^t + \Delta_j^t\big)$;
        \State $e_j^{t+1} \gets e_j^t + \Delta_j^t - v_j^t$ 
        \State \textbf{Client} sends $v_j^t$ to server
    \Else
        \State \textbf{Client} sends $\Delta_j^t$ to server
    \EndIf
  \EndFor
  \State \textbf{Server aggregation:}
  \If{$\mathtt{comp}=\text{on}$}
     \State $v_t \gets \frac{1}{m}\sum_{j\in \S_t} v_j^t$;  
     \State $x_{t+1} \gets \Pi_\mathcal{X}\left(x_t - \eta\, v_t\right)$
     \State \textbf{Downlink EF compression:} 
     \State Send $\C_0\!\big(x_{t+1} - w_t\big)$ to all clients
     \State \textbf{All clients:} 
     \State Set  $w_{t+1} \gets w_t + \mathcal C_0\!\big(x_{t+1} - w_t\big)$
        \Else 
     \State $w_{t+1} \gets \Pi_\mathcal{X}\left(w_t - \eta\;\frac{1}{m}\sum_{j\in \S_t} \Delta_j^t\right)$
  \EndIf
\EndFor
% \vspace{-0.1cm}
\end{algorithmic}
\end{multicols}
\end{algorithm}
\noindent \textbf{Full participation}, i.e., $m=n, E=1$:  In this case, we obtain $f(\bar{w}) - f(w^*) \leq \bigO\left(\frac{DG}{\sqrt{q_0qT}}\right)$ and $g(\bar{w}) \leq \bigO\left(\frac{DG}{\sqrt{q_0qT}}\right)$, which recovers the rates present in \citet{islamov2025safe}.

\noindent \textbf{Unidirectional uplink compression}, i.e., $q_0 = 1$ and $\C_0 \equiv \mathbf{Id}$: In this scenario, we observe that the sub-optimality gap as well as the constraint violation converge at the rate of $\bigO\left(\frac{\sqrt{E}}{q\sqrt{T}}\right)$, in the full participation case. In the case of partial participation, we recover the slowdown factor of $\frac{\sqrt{n}}{\sqrt{m}}$ matching the unconstrained case \citep{li2023analysis}, in addition to the constraint estimation error. 

\noindent \textbf{No constraint and uplink compression}, i.e.,  $g_j \equiv 0, \forall j\in [n], E=1$ and $\C_0 \equiv \mathbf{Id}$: In this case our algorithm reduces to the EF-14~\citep{seide20141}. It has been analyzed in the non-smooth case for $n=1$ by \citet{karimireddy2019error}, and our rates are consistent with their results. 

\noindent \textbf{Bi-directional compression}: We study the general case where both uplink and downlink transmissions are compressed. Prior work in this setting largely focuses on restricted compressor families such as absolute \citep{tang2019doublesqueeze} or unbiased operators \citep{philippenko2021preserved,gruntkowska2023ef21,gruntkowska2024improving,tyurin20232direction} and does not extend to the more flexible, potentially biased operators considered here. Existing studies on biased compression~\citep{beznosikov2023biased,islamov2024towards,safaryan2021fednl} address unconstrained optimization, assume full participation, single-step local updates, and omit bidirectional compression. The closest work to ours, \citet{islamov2025safe}, treats constrained FL with bidirectional compression but still assumes full participation, $E=1$, and restricts switching to the hard case.

Our framework relaxes all of these assumptions. The convergence result in \Cref{thm:hard_fedsgm_unified} holds under partial participation, multiple local updates, and general biased compressors with EF. Moreover, we extend the analysis to both hard and soft switching, with the latter addressed in Section~\ref{sec:soft-switching-analysis}.

\noindent \textbf{Principal theoretical hurdles}: 
This work presents the first principled analysis of constrained federated optimization under the simultaneous presence of multiple local updates, bi-directional compression with error feedback, and partial client participation. At each communication round, we assume that distinct clients are sampled uniformly at random\footnote{The partial participation analysis remains valid without loss of generality for the case of sampling without replacement, due to Lemma 1.1 in \citet{bardenet2015concentration} (or Theorem 4 in \citet{hoeffding1963probability}). These results establish that, for any continuous convex function \(f\), the expected value of \(f\) applied to the sum of a sample without replacement is never larger than that for a sample with replacement.}.
The main technical challenge is to control the deviation of the empirical constraint estimate
$
\hat{G}(w_t)
$
from the true global constraint
$
g(w_t)
$
under such random sampling.  
Under Assumption~\ref{as:sub_gauss}, the deviations $\hat{G}(w_t) - g(w_t)$ are sub-Gaussian with variance proxy $\sigma^2/m$, where $m$ is the number of sampled clients. This bound ensures that $\hat{G}(w_t)$ remains sufficiently close to $g(w_t)$ to make correct switching decisions. In particular, the feasibility set
$
\mathcal{A} := \{ t \in [T] \mid g(w_t) \leq \epsilon \}
$
is guaranteed to be nonempty if $\epsilon$ is chosen above the concentration radius implied by the sub-Gaussian tail bound, namely
$
\epsilon \gtrsim \text{optimization error } +  \sigma \sqrt{{2 \log(6T/\delta)}/{m}}
$\footnote{The $\log T$ term arises from applying a union bound to ensure the high-probability bound on $\hat{G}(w_t)$ holds uniformly over all $t \in {1,\dots,T}$, a common artifact in stochastic method analyses~\citep{lan2020algorithms}. This factor disappears in the full participation case, where $\hat{G}(w_t)=g(w_t)$ and no client sampling variance occurs.}
with probability $(1-\delta)$ for $\delta \in (0,1)$. Once such an $\epsilon$ is identified, the rest of the convergence argument proceeds from that point.

\subsection{Soft Switching}
\label{sec:soft-switching-analysis}
Hard switching enforces a binary rule: at each round, the algorithm either optimizes the objective or enforces the constraint. While simple, such sharp transitions can be unstable, especially when the iterates lie close to the feasibility boundary. Indeed, the continuous-time limit of SGM dynamics reveals a skew-symmetric component in the Jacobian of the flow (see Proposition~1 in~\citet{upadhyay2025optimization}). This skew-symmetric structure induces rotational behavior, causing oscillations whenever $\hat{G}(w_t)$ fluctuates around the tolerance $\epsilon$. Consequently, hard switching may lead to frequent back-and-forth updates near the constraint boundary.

\noindent\textbf{Geometric Source of Oscillations.}  
To better understand this instability, let $a = \nabla f(w)$ and $b = \nabla g(w)$ denote the global objective and constraint gradients. The interaction between $a$ and $b$ is encoded in the skew-symmetric matrix
$
K_{\text{glob}} := ab^\top - ba^\top.
$
Whenever $K_{\text{glob}} \neq 0$, the dynamics inherit a rotational component, leading to oscillations in the trajectory. In contrast, when $K_{\text{glob}} = 0$, the gradients are aligned, i.e.,
$
\nabla f(w) = \lambda \nabla g(w) \quad \text{for some } \lambda \in \mathbb{R},$
and the dynamics become piecewise linear across feasibility boundaries without intrinsic oscillations.

In federated settings, however, even if $K_{\text{glob}}=0$, local heterogeneity induces additional skewness. Specifically, define
$$
K_{\text{loc}} := \frac{1}{n} \sum_{j=1}^{n} \big( \nabla f_j(w) \nabla g_j(w)^\top - \nabla g_j(w) \nabla f_j(w)^\top \big),
$$
with the bound
$
\|K_{\text{loc}}\|_F \;\le\; \sqrt{2 V_f V_g},
$
where $V_f:=\frac{1}{n}\sum_{j=1}^n\|\nabla f_j - \nabla f\|^2, V_g:=\frac{1}{n}\sum_{j=1}^n\|\nabla g_j - \nabla g\|^2$ quantify the heterogeneity of objective and constraint gradients (see proof in \Cref{sec:proofs_thms}). Thus, instability may arise not only from the global geometry but also from client-level heterogeneity.

\begin{remark} \label{rmk:local_skewness}
Even if the global gradients are perfectly aligned ($K_{\text{glob}}=0$), federated optimization may still exhibit rotational drift due to $K_{\text{loc}} \neq 0$. Such client-induced skewness can be mitigated by reducing the number of local steps $E$, by explicitly regularizing client updates, or by tuning the soft switching parameter~$\beta$, which dampens the amplification of heterogeneity-driven rotations. In this way, $\beta$ acts as a geometric stabilizer, complementing algorithmic controls on local dynamics.
\end{remark}

These observations motivate a continuous relaxation of the switching rule. Instead of enforcing a hard decision between objective and constraint updates, we introduce {soft switching}, where the tradeoff is governed continuously through a weighting function. Formally, each client forms a convex combination of the two gradients:
% \vspace{-2mm}
$$\nu_{j,\tau}^t = (1-\sigma_t) \nabla f_j(w_{j,\tau}^t) + \sigma_t\nabla g_j(w_{j,\tau}^t),\vspace{-3mm}$$ 

where $\sigma_t = \sigma_\beta(\hat{G}(w_t) - \epsilon) $ is the switching weight depending on the global constraint violation. 
The mapping $\sigma_\beta(\cdot)$ is a activation with sharpness controlled by $\beta$, i.e., a trimmed hinge $\sigma_\beta(x) := \min\{1, [1+\beta x]_+\} = \text{Proj}_{[0,1]}(1+\beta x)$ that maps $\mathbb{R}$ to $[0,1]$.

% The mapping $\sigma_\beta(\cdot)$ is a smooth activation, i.e., a trimmed hinge that maps $\mathbb{R}$ to $[0,1]$, with sharpness controlled by $\beta$. Formally, it is defined as,
% $\sigma_\beta(x) := \min\{1, [1+\beta x]_+\} = \text{Proj}_{[0,1]}(1+\beta x).$
As $\beta \to \infty$, we observe that it becomes hard switching; otherwise, the two gradients are blended. This removes sharp transitions near the feasibility boundary and dampens oscillatory behavior, while retaining convergence guarantees. We now state the main convergence guarantee for the unified soft switching method under full participation.

\begin{theorem}{(\textbf{Soft Switching})}
\label{thm:fedssgm_unified}
Consider the problem in \ref{eq:op_prb} and \Cref{alg:fedsgm_unified}, under Assumption~\ref{as:cvx_lip} and Assumption~\ref{as:compression_biased} (only for compression). Let $D := \|w_0 - w^*\|$ and define
$
\A = \{t \in [T] \;|\; g(w_t) < \epsilon\}, 
\bar{w} = \sum_{t \in \A} \alpha_t w_t,
\quad
where \quad
\alpha_t = \frac{1 - \sigma_\beta(g(w_t) - \epsilon)}{\sum_{s \in \A} \big[ 1 - \sigma_\beta(g(w_s) - \epsilon) \big]}.
$
Let
$
\Gamma = 2E^2 
+ \underbrace{\frac{2E\sqrt{1-q}}{q} + \frac{4E\sqrt{10(1-q_0)}}{q_0 q}}_{\text{only for bidirectional compression w/ EF}},
$
where $q$ and $q_0$ are the client's and server's contraction parameters.  
Now, set the constraint threshold and step size  as
$
\epsilon = \sqrt{\frac{2 D^2 G^2 \Gamma}{E T}} \text{, }  
\eta = \sqrt{\frac{D^2}{2 G^2 E T \Gamma}}, 
\text{ and } 
\beta \geq \frac{2}{\epsilon},$ respectively
then $\A$ is nonempty, $\bar{w}$ is well-defined and an $\epsilon$-solution of~\ref{eq:op_prb}.
\end{theorem}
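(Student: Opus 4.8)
The plan is to collapse the vector dynamics to a single scalar merit sequence and then telescope. I define the per-round merit $M_t := (1-\sigma_t)\big(f(w_t)-f(w^\ast)\big) + \sigma_t\, g(w_t)$, which is exactly what the convex combination $\nu_{j,\tau}^t = (1-\sigma_t)\nabla f_j + \sigma_t\nabla g_j$ produces when tested against $w_t - w^\ast$. First I would establish a one-round descent inequality of the form $\|x_{t+1}-w^\ast\|^2 \le \|x_t - w^\ast\|^2 - 2\eta E\, M_t + (\text{second-order terms})$ for the virtual uncompressed server iterate $x_t$. The factor $E$ and the $2E^2$ block of $\Gamma$ come from summing the $E$ local steps: convexity gives $\langle\nabla f_j(w_{j,\tau}^t), w_t-w^\ast\rangle \ge f(w_t)-f(w^\ast) - 2G\|w_{j,\tau}^t - w_t\|$ with $\|w_{j,\tau}^t - w_t\|\le \eta\tau G$, while $g(w^\ast)\le 0$ turns the constraint inner product into $g(w_t)$; the drift remainders together with $\|\sum_\tau \bar\nu_\tau^t\|^2 \le E^2G^2$ aggregate into $\eta^2 G^2\cdot 2E^2$.

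To fold in bidirectional compression with error feedback I would run a Lyapunov argument, augmenting $\|x_t-w^\ast\|^2$ with the uplink residual energy $\tfrac1n\sum_j\|e_j^t\|^2$ and the downlink discrepancy $\|w_t - x_t\|^2$; contractivity of $\mathcal C_j$ and $\mathcal C_0$ (Assumption~\ref{as:compression_biased}) then generates exactly the $q$- and $q_0$-dependent blocks of $\Gamma$. Summing the augmented inequality over $t=0,\dots,T-1$ and substituting the prescribed $\eta$ and $\epsilon$ collapses the right-hand side to $D^2+\eta^2G^2\Gamma T = D^2(1+\tfrac1{2E})$, giving the master bound $\sum_{t} M_t \le \tfrac{\epsilon T}{2}\big(1+\tfrac1{2E}\big)$.

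From the master bound the three conclusions follow in order. For nonemptiness of $\mathcal A$ I argue by contradiction: if $g(w_t)\ge\epsilon$ for every $t$ then $\sigma_t=1$ for all $t$, so $\sum_t M_t = \sum_t g(w_t)\ge \epsilon T$, which exceeds $\tfrac{\epsilon T}{2}(1+\tfrac1{2E})$ for every $E\ge1$ — a contradiction — hence $\mathcal A\neq\emptyset$; since $\sigma_t<1$ on $\mathcal A$, the normalizer $Z:=\sum_{t\in\mathcal A}(1-\sigma_t)$ is strictly positive and $\bar w$ is well defined. The constraint guarantee is immediate: $\bar w$ is a convex combination of $\{w_t\}_{t\in\mathcal A}$, so convexity of $g$ gives $g(\bar w)\le\sum_{t\in\mathcal A}\alpha_t g(w_t)<\epsilon$. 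For the objective, Jensen gives $f(\bar w)-f(w^\ast)\le \tfrac1Z\sum_{t\in\mathcal A}(1-\sigma_t)(f(w_t)-f(w^\ast))$, and the role of $\beta\ge 2/\epsilon$ is to ensure that whenever $\sigma_t>0$ on a feasible round one has $g(w_t)>\epsilon - 1/\beta\ge\epsilon/2$, so that $\sigma_t g(w_t)\ge\tfrac\epsilon2\sigma_t\ge 0$; dropping these nonnegative constraint-weighted terms from the master bound isolates $\sum_{t\in\mathcal A}(1-\sigma_t)(f(w_t)-f(w^\ast))$ and lets me push it through the $1/Z$ normalization.

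The main obstacle I anticipate is precisely this last step: closing the objective bound at $\epsilon$ requires controlling the effective feasible mass $Z$, because the residual second-order slack leaves a term of order $\epsilon T/(E Z)$ after normalization. The condition $\beta\ge 2/\epsilon$ is what makes the soft rule behave enough like hard switching — it confines the transition band to $(\epsilon/2,\epsilon)$ and forces $\sum_t\sigma_t g(w_t)\ge\tfrac\epsilon2\sum_t\sigma_t\ge 0$ — but converting this into a quantitative lower bound on $Z$ (equivalently, an upper bound on the average soft weight over feasible rounds) is the delicate part and is where I would concentrate effort, either by a dynamical argument showing that near-boundary rounds are self-correcting toward the deeply feasible region $\{g\le\epsilon/2\}$ where $\sigma_t=0$, or by tightening the telescoped slack. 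The error-feedback Lyapunov bookkeeping for bidirectional compression under $E>1$ local steps is the other technically heavy component, though conceptually routine once the correct potential is fixed.
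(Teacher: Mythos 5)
Your skeleton matches the paper's proof almost exactly: the one-step expansion of the squared distance (run on the virtual iterate $\hat w_t = x_t - \eta e_t$ under compression), the $\eta^2E^2G^2$ and drift terms assembling into $\Gamma$, the telescoped master bound, the contradiction argument for $\A\neq\emptyset$, and convexity of $g$ for feasibility are all the paper's steps. The genuine gap is in the final normalization, and you have correctly located it but proposed the wrong repair. You plan to discard the constraint-weighted terms $\sigma_t g(w_t)\ge 0$ as merely nonnegative and then recover the objective bound by lower-bounding the feasible mass $Z=\sum_{t\in\A}(1-\sigma_t)$, e.g.\ via a dynamical self-correction argument. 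No bound on $Z$ is needed, and discarding those terms is precisely what makes one seem necessary. The paper keeps them: for $t\in\A$ with $\sigma_t>0$ the trimmed hinge gives $g(w_t)-g(w^*)\ge g(w_t)\ge \epsilon-1/\beta$, so (with the paper's choice of $\epsilon$, for which the telescoped slack equals $\epsilon T/2$) the master bound reads
\[
\tfrac{\epsilon T}{2}\;\ge\;\sum_{t\in\A}(1-\sigma_t)\bigl(f(w_t)-f(w^*)\bigr)\;+\;\epsilon|\B|\;+\;\Bigl(\epsilon-\tfrac{1}{\beta}\Bigr)\sum_{t\in\A}\sigma_t .
\]
Substituting $|\B|=T-|\A|$, the right-hand side rearranges so that, in the only nontrivial case where the $f$-sum is positive, dividing by $Z$ yields a numerator equal to $\epsilon Z+\bigl(\beta^{-1}\sum_{t\in\A}\sigma_t-\tfrac{\epsilon T}{2}\bigr)$. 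The parenthetical is nonpositive whenever $\beta\ge 2\sum_{t\in\A}\sigma_t/(\epsilon T)$, for which $\beta\ge 2/\epsilon$ suffices since $\sum_{t\in\A}\sigma_t<T$. Hence $f(\bar w)-f(w^*)\le\epsilon$ no matter how small $Z$ is: the \emph{sign} of the residual is controlled, not its magnitude relative to $Z$.

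Concretely, if you drop the $\sigma_t$-weighted terms your numerator is only bounded by $\epsilon\bigl(|\A|-T/2\bigr)$, which exceeds $\epsilon Z=\epsilon\bigl(|\A|-\sum_{t\in\A}\sigma_t\bigr)$ whenever $\sum_{t\in\A}\sigma_t>T/2$, and the argument does not close; the self-correcting-dynamics route you sketch as a fallback is not used anywhere in the paper and would be substantially harder than the purely algebraic cancellation above. The rest of your proposal (the error-feedback Lyapunov bookkeeping, the nonemptiness contradiction, and the feasibility claim) is sound and coincides with the paper's treatment up to constant-factor bookkeeping in $\Gamma$ and $\epsilon$.
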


By setting the sharpness parameter $\beta \geq 2/\epsilon$, as stated in Theorem~\ref{thm:fedssgm_unified}, we obtain the same rates as in \Cref{thm:hard_fedsgm_unified}. This parameter choice ensures a continuous transition but may be overly conservative when $\epsilon$ is very small, effectively approximating a hard switch. Nevertheless, all findings from the hard switching setting, such as convergence rates in special cases, the impact of local updates, and the effects of compression, continue to apply, with the key difference that soft switching reduces the rotational component without altering the bounds' asymptotic order.

% %%%%%%%%%%%%%%%%%%%%%%%%%%%%%%%%%%%%%%%%%%%%%%%%%%%%%%%%%%%%%
%%%%%%%%%%%%%%%%%%%%%%%%%%%%%%%%%%%%%%%%%%%%%%%%%%%%%%%%%%%%%
\vspace{-1mm}
\section{Experiments}\label{sec:exp}
\vspace{-1mm}
In this section, we provide experiments on two key applications of constrained optimization. We begin with a classical problem of Neyman-Pearson (NP) classification, and then extend to the highly stochastic deep reinforcement learning setting in constrained Markov decision processes (CMDP). %In what follows, we describe the task setting for each experiment.

\textbf{NP Classification Task Setting.} NP classification studies the constrained optimization problem in~\ref{eq:op_prb}, where the goal is to minimize the empirical loss on the majority class while keeping the loss on the minority class below a specific tolerance. For each client $j \in [n]$, the local objective and constraint are defined as $f_j(w) := \frac{1}{m_{j0}} \sum_{x \in \mathcal{D}_j^{(0)}} \phi(w;(x,0)),\ g_j(w) := \frac{1}{m_{j1}} \sum_{x \in \mathcal{D}_j^{(1)}} \phi(w;(x,1)),$ where $\mathcal{D}_j^{(0)}$ and $\mathcal{D}_j^{(1)}$ are local class-0 and class-1 datasets of sizes $m_{j0}$ and $m_{j1}$, and $\phi$ is the binary logistic loss. This setup follows the NP paradigm where $f(w)$ enforces the performance on the majority class, while constraint $g(w) \leq \epsilon$ maintains a loss bound on the minority class. Here, we set $n=20$, $m=10$, $E=5$, $T=500$, Top$-K$, $K/d=0.1$, and $\epsilon = 0.05$ unless stated otherwise, with experiments conducted on the breast cancer dataset~\citep{breast_cancer_wisconsin_dataset}. We report the mean and variance bands over three random seeds. From \Cref{fig:NP_classification_main}, we observe that both hard and soft switching achieve convergence of the majority-class objective $f(w)$ while satisfying the constraint $g(w)$. Additional experimental details are presented in \Cref{apdx:np_class}.
\begin{figure}[!t]
    \centering
    \includegraphics[width=\textwidth]{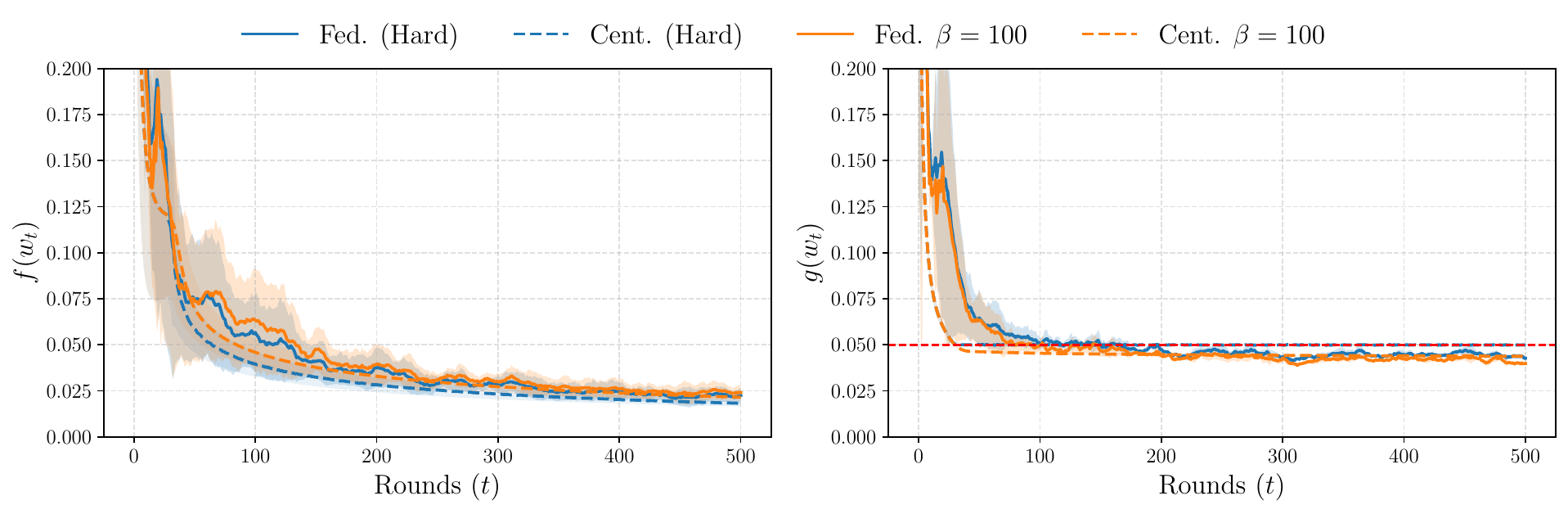}
    \caption{\textit{NP classification:} Progress per round with hard and soft switching.}
    \label{fig:NP_classification_main}
    \vspace{-5mm}
\end{figure}
% \begin{figure}[!t]
%     \centering
%     % ---- Left plot ----
%     \includegraphics[width=0.45\textwidth]{}
%     \hfill
%     % ---- Right plot ----
%     \includegraphics[width=0.45\textwidth]{}
%     \caption{NP classification results with Top-$K$, $K/d=0.1$. Left: $f(w)$ and Right: $g(w)$ per round.}
%     \label{fig:NP_classification_main}
%     % \vspace{-5mm}
% \end{figure}
\begin{figure}[!t]
    \centering
    \includegraphics[width=\textwidth]{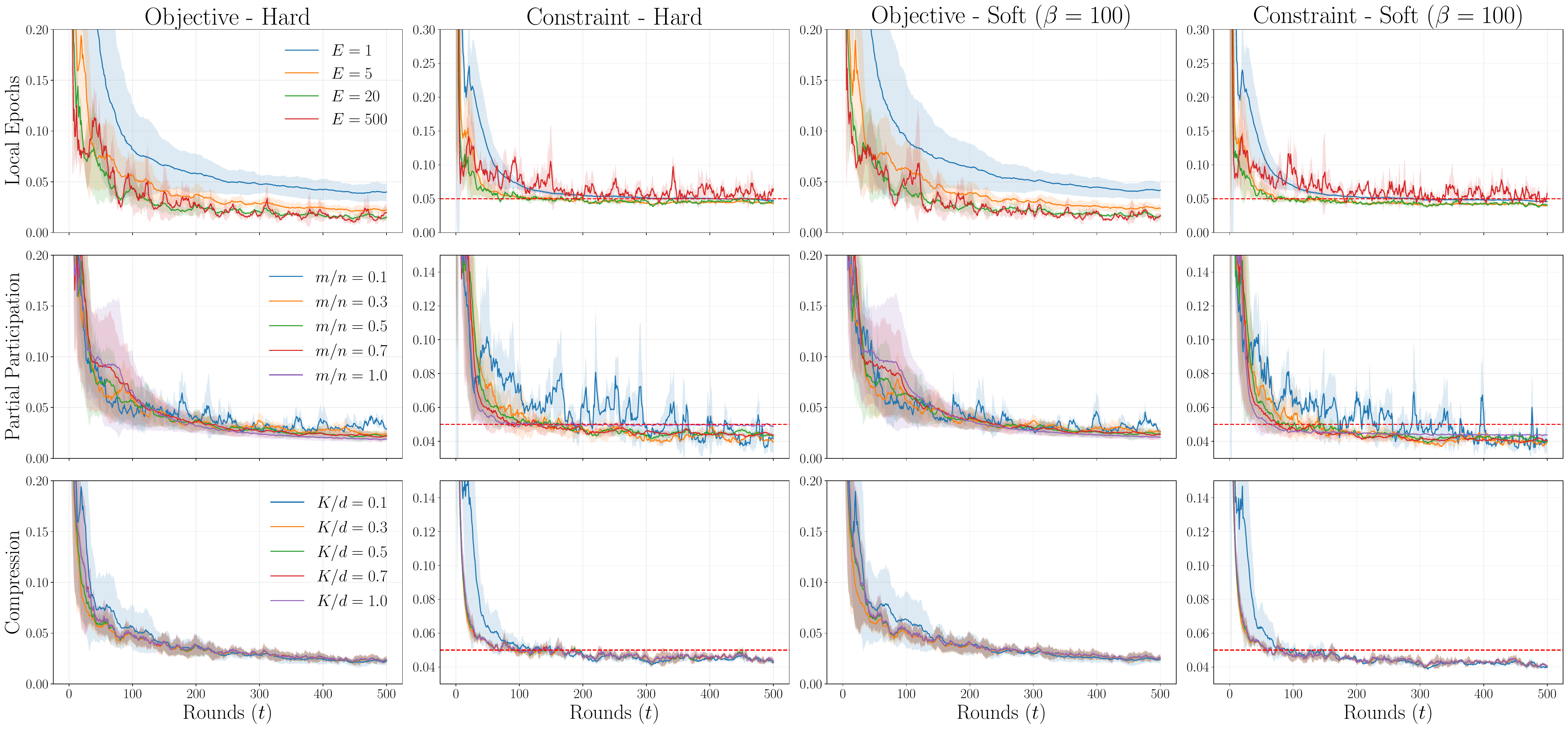}
    \caption{Top: Effect of local updates. Middle: Effect of partial participation. Bottom: Effect of compression on objective and feasibility for hard (left) and soft (right) switching.}
    \label{fig:NP_combined}
    \vspace{-3.8mm}
\end{figure}
\begin{figure}[!t]
    \centering
    % --------- First pair (Reward & Cost) ---------
    \begin{minipage}[t]{0.48\textwidth}
        \centering
        \includegraphics[width=\textwidth]{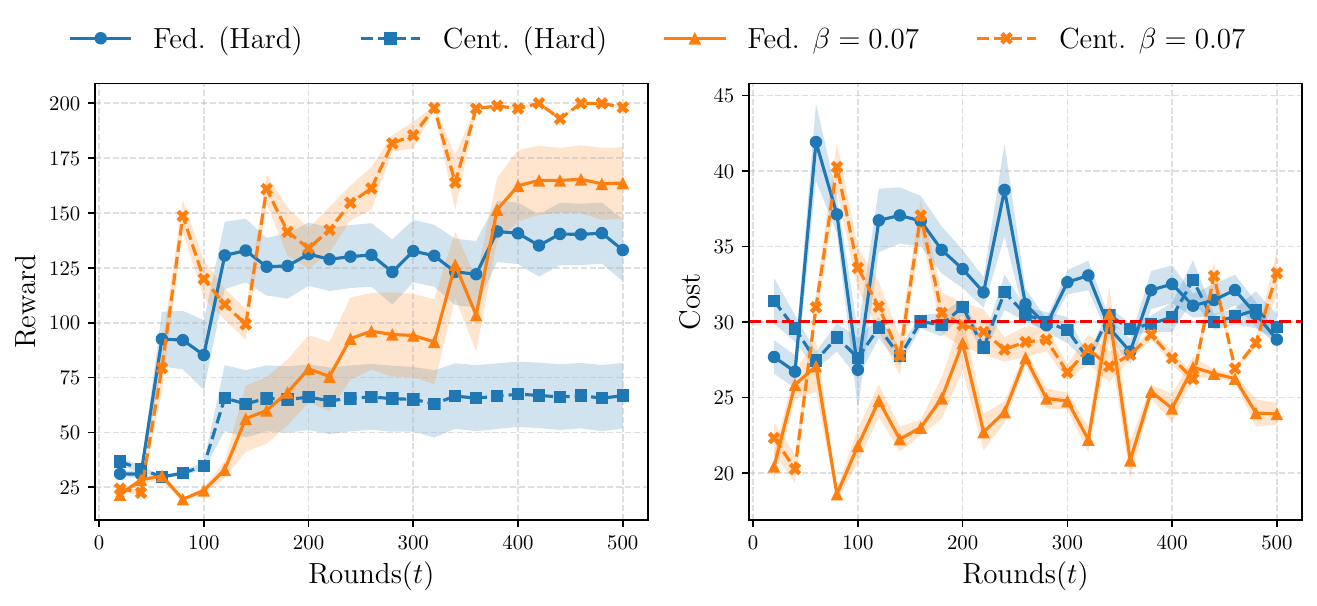}
        \caption{CMDP: Episodic reward vs. Cost. Dotted: centralized, solid: federated with $m/n=0.7$ and Top$-K$, $K/d=0.5$. Red dotted line: safety budget of $30$.}
        \label{fig:cartpole_main}
    \end{minipage}
    \hspace{1pt}
    % --------- Second pair (Other results) ---------
    \begin{minipage}[t]{0.48\textwidth}
        \centering
        \includegraphics[width=\textwidth]{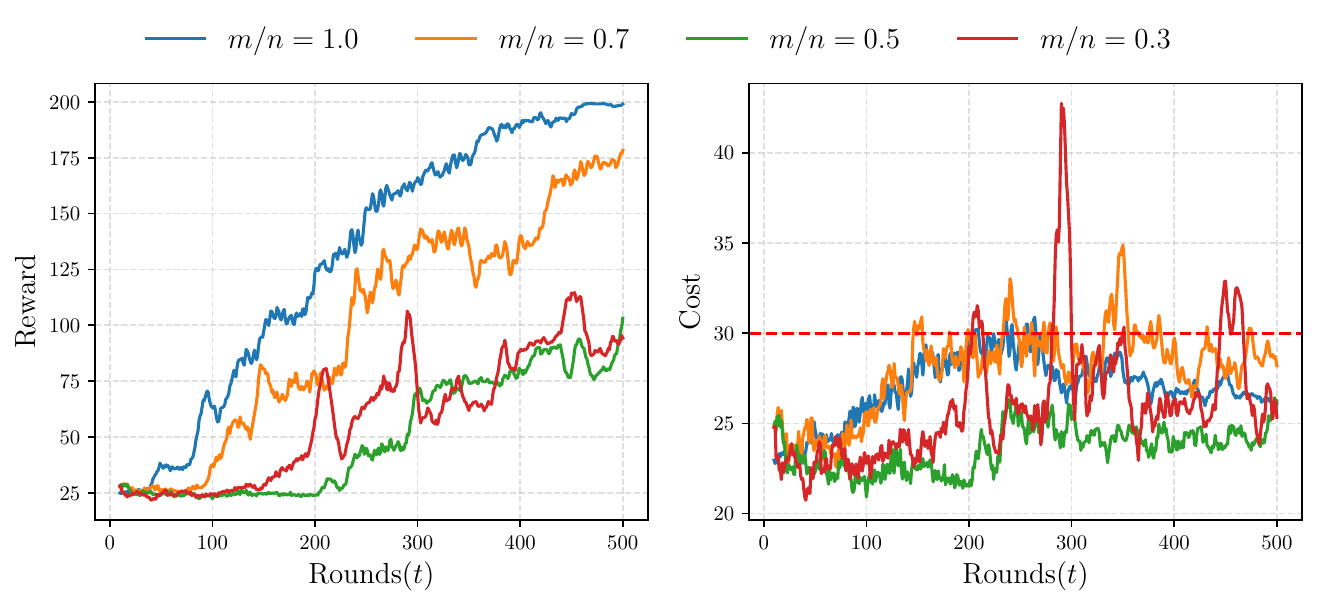}
        \caption{Effect of partial participation on episodic reward and cost. Total clients $n=10$, no compression. Red dotted line: safety budget of $30$.}
        \label{fig:cartpole_partial}
    \end{minipage}
    \vspace{-5mm}
\end{figure}

\textbf{Discussion: NP Classification.} To validate the theoretical efficacy of our approach, we conduct experiments by varying local epochs, participation rate, and compression factor. Intuitively, it is apparent that increasing local updates on each client reduces the burden on global communication; however, this can exacerbate the issue of drift and start to harm the learning process \citep{Li2020On}. As we can observe from \Cref{fig:NP_combined} (top row), increasing the number of local epochs leads to improved convergence of the objective function. However, as $E$ increases, the gains in convergence start to diminish, indicating a trade-off between local computation and global progress. At the same time, we observe that the oscillations around the threshold become more pronounced for increased $E$. It is also noteworthy to mention that we discussed this phenomenon in Section~\ref{sec:soft-switching-analysis}, where we talk about the source of this geometric oscillation. Next, varying the participation rate shows that higher participation improves convergence for both soft and hard switching, matching the centralized case. Finally, when analyzing the impact of the compression factor $K/d$, we find that even an aggressive compression factor of $0.1$ achieves convergence due to EF, though at a slower rate in \Cref{fig:NP_combined}. However, we observe instability in the constraint dynamics, mitigated by soft switching.

\textbf{CMDP Task Setting.} We define our constrained Markov decision process (CMDP) by the tuple $(\mathcal{S},\mathcal{A},r,c,P,\gamma,\rho)$, which extends a standard episodic discounted MDP with a cost function $c:\mathcal{S}\times\mathcal{A}\times\mathcal{S}\rightarrow\mathbb{R}$ and a corresponding safety constraint. By standard notation, $\mathcal{S}$ and $\mathcal{A}$ are the state and action spaces, respectively, $r:\mathcal{S}\times\mathcal{A}\times\mathcal{S}\rightarrow\mathbb{R}$ is the reward function, $P:\mathcal{S}\times\mathcal{A}\times\mathcal{S}\rightarrow[0,1]$ is the stochastic transition kernel, $\gamma\in(0,1]$ is the discount factor and $\rho:\mathcal{S}\rightarrow[0,1]$ is the initial state distribution. Each of the $n$ clients interacts with a unique CMDP instance, distinguished by a client-specific safety budget $d_i$. The trajectories collected by each client are then used to compute their local constraint and gradient. The objective of each client is defined as $f_i(w)=-\mathbb{E}_{w,P}\left[\sum_{\tau=0}^{T-1}\gamma^t r(s_\tau,a_\tau,s_{\tau+1})\right]$ and the constraint as $g_i(w)=\mathbb{E}_{w,P}\left[\sum_{\tau=0}^{T-1}\gamma^t c(s_\tau,a_\tau,s_{\tau+1})\right]-d_i$ to solve the global optimization in \ref{eq:op_prb}, where $w$ represents the policy parameter. 
To solve policy optimization, we adopt TRPO \citep{schulman2015trust}, which calculates policy gradients in a centralized, unconstrained setting. For our purposes, value function training is subsumed under policy training because value functions are trained on their own objective regardless of the safety constraint. We use the continuous Cartpole environment \citep{brockman2016openaigym, towers2024gymnasium}, augmented with safety constraints. The agent incurs a cost of 1 at each timestep for violating either of two conditions: the cart entering a prohibited area or the pole's angle exceeding the threshold \citep{xu2021crpo}. Each client is assigned a different safety budget $d_i\in[25,35]$ and collects their own data to train. This makes the task extremely heterogeneous, because each data point is sampled from different distribution determined by client drift, stochastic policy and initial state. More details on the specific settings of the experiment can be found in \Cref{apdx:cartpole}.

\textbf{Discussion: Cartpole.}
\Cref{fig:cartpole_main} plots the mean episodic reward and cost, averaged over 5 runs with different random seeds. The shaded area represents 0.2 standard deviation. The results demonstrate that soft switching stabilizes the learning process, leading to a faster convergence for both constraint satisfaction and objective optimization. Also, it is notable that, with hard switching, the FL setting benefited the learning of the objective, despite the complications arising from compression, partial participation, and varying constraint limits. 
\begin{wraptable}[18]{r}{0.8\textwidth}
\vspace{-2mm}
\caption{Effect of quantization and Top$-K$ compression in terms of average episodic rewards and costs after 100 and 500 rounds. The arrows indicate that the episodic reward is higher the better, with a maximum of 200, and the episodic cost is lower the better, with a safety margin of 30. The asterisks indicate that the episodic costs next to them exceed the safety margin. The bold numbers indicate the maximum feasible episodic reward across different compression methods in specific rounds.}
\vspace{-1mm}
\label{tab:compression_cartpole}
\centering
% Using 'c' for numerical columns often improves alignment
\begin{tabular}{llcccc} 
\toprule
Compression & $\hat{f}(w_{100})\ \uparrow$ & $\hat{g}(w_{100})\ \downarrow$ & $\hat{f}(w_{500})\ \uparrow$ & $\hat{g}(w_{500})\ \downarrow$ \\
\midrule
Centralized & 119.9 & 33.6* & 198.2 & 33.2* \\
No comp. (float32) & \textbf{67.2} & 26.9 & \textbf{199.4} & 27.6 \\
float16 & 90.5 & 31.2* & 198.8 & 26.9 \\
float8 & 91.4 & 31.4* & 199.1 & 26.5 \\
float4 & 49.1 & 25.2 & 197.2 & 25.6 \\
% $K/d=0.75$ & 33.1 & 24.6 & 197.5 & 26.0 \\ 
$K/d=0.5$ & 46.5 & 23.8 & 131.6 & 24.0 \\ 
$K/d=0.25$ & 25.1 & 22.7 & 25.5 & 22.2 \\ 
\bottomrule
\end{tabular}
% \vspace{-2mm}
\end{wraptable}
Indeed, the noise and implicit regularization introduced by these factors can smooth the optimization landscape, stabilize the switching mechanism, and encourage exploration. The experimental results in \cite{islamov2025safe} and \cite{Li2020On} also describe the effects of these complications, which do not necessarily hinder the learning of the objective. \Cref{fig:cartpole_partial} demonstrates the effect of partial participation under soft switching. Lower participation exhibits greater instability and slower convergence in the learning curve. \Cref{tab:compression_cartpole} explores the effect of two types of common compression methods under soft switching: Top$-K$ compression and quantization. For this experiment, Top$-K$ compression is simulated by zeroing out the gradient vector with dimensions exceeding $K$ on the order of their magnitude. Instead of converting data types, quantization is simulated by rounding the number that exceeds the precision given by the number of bits to use. It is notable that the algorithm stably satisfies constraints and maximizes episodic reward, and quantization with error feedback had minimal impact in the number of required rounds to maximize episodic rewards while satisfying constraints.
% %%%%%%%%%%%%%%%%%%%%%%%%%%%%%%%%%%%%%%%%%%%%%%%%%%%%%%%%%%%%%%%%%%%%%%%%%%%%%%%%%%%%%%%%%%%%%%%%%%%%%%%%%%%%%%%%%%%%%%%%%%
\vspace{-1mm}
\section{Conclusion, Limitations, and Future works}
\vspace{-1mm}
\label{sec:conclusion}
\textsc{FedSGM} provides a unified algorithmic framework for constrained FL by integrating projection and duality free switching gradient method with multiple local updates, bi-directional compression with EF, and partial participation. Additionally, we employ soft switching that matches the convergence rates of hard switching when $\beta \geq 2/\epsilon$, while ensuring performance stability. To the best of our knowledge, this is the first algorithm to analyze a constrained FL under these regimes. Overall, \textsc{FedSGM} robustly balances feasibility, client drift, and communication efficiency, and lays the foundation for extensions to other federated paradigms.

Although our work advances the understanding of the constrained FL problem with practical motivations, certain limitations remain. Our theoretical analysis relies on the convexity of the objectives and constraints, though we show the efficacy of the proposed algorithm on RL, which is highly non-convex. Moreover, extending the analysis to a weakly convex setting is the next natural step, potentially using the ideas from \citet{huang2023oracle}. We focus on gradient descent for simplicity, which may be restrictive in large-scale systems. Though we believe an analogous SGD version could be analyzed theoretically within the current framework, following approaches in \citet{lan2020algorithms,islamov2025safe}. Empirically, we address this by evaluating on an RL task, which is highly stochastic in both data sampling and client participation. Future research could extend \textsc{FedSGM} by incorporating advanced optimizers, privacy-preserving techniques, and asynchronous update schemes. Establishing theoretical lower bounds for constrained FL is another key direction, offering insights to guide the principled design of \textsc{FedSGM} and beyond.
%%%%%%%%%%%%%%%%%%%%%%%%%%%%%%%%%%%%%%%%%%%%%%%%%%%%%%%%%%%%%
\section*{Reproducibility Statement}

We provide the code in a \texttt{.zip} file as the supplementary material. In addition, we provide all the details to run the code in the \Cref{sec:exp} and \Cref{apdx:exp}. 
%%%%%%%%%%%%%%%%%%%%%%%%%%%%%%%%%%%%%%%%%%%%%%%%%%%%%%%%%%%%%
\bibliography{iclr2026_conference}
\bibliographystyle{iclr2026_conference}

\newpage
\appendix

% Optional: Add "Appendix" title in a clean, formal way
\section*{Appendix}
\addcontentsline{toc}{section}{Appendix} % Add to main TOC if needed
\markboth{Appendix}{Appendix} % For headers

% Create a mini-TOC for the appendix
\startcontents[appendix]
\printcontents[appendix]{l}{1}{\setcounter{tocdepth}{3}}

\vspace{1em}
This appendix provides additional lemmas, proofs, and extended experimental results that complement the main text.

% --- Appendix sections ---
\section{Geometry of Skewness}
\label{sec:skewness}
%%%%%%%%%%%%%%%%%%%%%%%%%%%Start Here%%%%%%%%%%%%%%%%%%%%%%%%%%%%%
For each client $j\in[n]$ let $a_j=\nabla f_j(w)$ and $b_j=\nabla g_j(w)$, and let
$
a=\frac1n\sum_{j=1}^n a_j=\nabla f(w),\;
b=\frac1n\sum_{j=1}^n b_j=\nabla g(w).
$

Define the {\em global} skew–symmetric matrix
$
K_{\mathrm{glob}}:=ab^\top-ba^\top
$
and the mean {\em local} skew matrix
$
K_{\mathrm{loc}}:=\frac1n\sum_{j=1}^n
\bigl(a_j b_j^\top-b_j a_j^\top\bigr).
$
Writing the client drifts $\Delta a_j:=a_j-a$ and $\Delta b_j:=b_j-b$, one obtains the exact decomposition
\[
K_{\mathrm{loc}}-K_{\mathrm{glob}}
=\frac1n\sum_{j=1}^n\!\bigl(\Delta a_j\Delta b_j^\top-
\Delta b_j\Delta a_j^\top\bigr),
\]
and the Frobenius-norm bound is just a result of applying the Triangle inequality and the Cauchy-Schwarz inequality, 
\[
\|K_{\mathrm{loc}}-K_{\mathrm{glob}}\|_F
\;\le\;
\sqrt{2\,V_f\,V_g},
\qquad
V_f:=\frac1n\sum_{j=1}^n\|\Delta a_j\|^2,\;\;
V_g:=\frac1n\sum_{j=1}^n\|\Delta b_j\|^2.
\]
%%%%%%%%%%%%%%%%%%%%%%%%%%%End Here%%%%%%%%%%%%%%%%%%%%%%%%%%%%%%%%%%%%%
%%%%%%%%%%%%%%%%%%%%%%Additional Assumptions-Stochastic FedSGM%%%%%%%%%%%%%%%%%%%
\section{Additional Assumptions: Stochastic FedSGM}
We will use Assumptions~\ref{as:cvx_lip} and \ref{as:bdd_domain} as is. In addition to that, we add a couple more assumptions as follows.

\begin{assumption}
    \label{as:sgd_noise_bound}
    \begin{enumerate}
        \item The noise, $\delta_t^i$ at each user $i \in [n]$ is $\sigma_\xi^2$-sub-Gaussian, i.e., for all $t \ge 0$ and any $\mathcal{F}_{t-1}-$measurable $x \in \mathbb{R}^d$
        \[
        \mathbb{E} \left[ \exp \left( \frac{\|\delta_t^i\|^2}{\sigma_\xi^2} \right) \middle| \mathcal{F}_{t-1} \right] \le \exp(1),
        \]
        where $\delta_t^i : = \nabla f_i(x_t^i) - \nabla f_i(x_t^i,\xi_t^i)$ for $t \in \A$ or $\delta_t^i : = \nabla g_i(x_t^i) - \nabla g_i(x_t^i,\xi_t^i)$ for $t \in \B$.
        \item The difference between the stochastic and true function evaluation, $(g_i(w,\xi_i) - g(w))$ is $\sigma_b^2/N_b-$ sub-Gaussian. Additionally, constraint evaluation and gradient computation are independent.
    \end{enumerate}
\end{assumption}
\begin{proposition}
    \label{prop:Total_subGauss_stoch}
    We define $\tilde{G}(w_t) := \frac{1}{m}\sum_{j \in \mathcal S_t} g_j(w_t, \xi_t^j).$ Then the quantity $\tilde{G}(w) - g(w)$ is $ \left(\frac{\sigma^2}{m} + \frac{\sigma_b^2}{mN_b}\right)\text{-sub-Gaussian}.$
\end{proposition}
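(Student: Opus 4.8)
The plan is to split the total error into a \emph{client-sampling} component and a \emph{stochastic-evaluation} (batch) component, bound each by its own sub-Gaussian hypothesis, and then recombine them through a conditioning argument so that their variance proxies add. Concretely, introduce the intermediate quantity $\hat{G}(w) = \frac{1}{m}\sum_{j \in \mathcal{S}_t} g_j(w)$, which uses the \emph{exact} local constraints but only on the sampled set, and write
$$
\tilde{G}(w) - g(w) = \underbrace{\big(\tilde{G}(w) - \hat{G}(w)\big)}_{=: \, B} + \underbrace{\big(\hat{G}(w) - g(w)\big)}_{=: \, A}.
$$
Here $A$ depends only on the random set $\mathcal{S}_t$ and is $\sigma^2/m$-sub-Gaussian by Assumption~\ref{as:sub_gauss}, while $B = \frac{1}{m}\sum_{j \in \mathcal{S}_t}\big(g_j(w,\xi_j) - g_j(w)\big)$ collects the per-client evaluation noise.

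First I would handle $B$ by conditioning on $\mathcal{S}_t$. Given the sampled set, the summands $g_j(w,\xi_j) - g_j(w)$ are independent across $j$ (each client draws its own data $\xi_j$) and each is $\sigma_b^2/N_b$-sub-Gaussian by the second part of Assumption~\ref{as:sgd_noise_bound}. Factoring the conditional moment generating function and using that a scaled sum of $m$ independent sub-Gaussians has its variance proxies add gives, for every $\lambda$,
$$
\mathbb{E}\!\left[e^{\lambda B}\,\middle|\,\mathcal{S}_t\right] \le \prod_{j\in\mathcal{S}_t} e^{\frac{\lambda^2}{2m^2}\cdot\frac{\sigma_b^2}{N_b}} = e^{\frac{\lambda^2}{2}\cdot\frac{\sigma_b^2}{mN_b}},
$$
a bound that is \emph{uniform} in the realization of $\mathcal{S}_t$ because it does not depend on which clients were selected.

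The key recombination step then exploits that $A$ is a deterministic function of $\mathcal{S}_t$: conditioning on $\mathcal{S}_t$ turns $e^{\lambda A}$ into a constant that factors out, so
$$
\mathbb{E}\!\left[e^{\lambda(\tilde{G}(w)-g(w))}\right] = \mathbb{E}_{\mathcal{S}_t}\!\left[e^{\lambda A}\,\mathbb{E}\!\left[e^{\lambda B}\mid\mathcal{S}_t\right]\right] \le e^{\frac{\lambda^2}{2}\cdot\frac{\sigma_b^2}{mN_b}}\,\mathbb{E}_{\mathcal{S}_t}\!\left[e^{\lambda A}\right] \le e^{\frac{\lambda^2}{2}\left(\frac{\sigma^2}{m}+\frac{\sigma_b^2}{mN_b}\right)},
$$
where the last inequality is the sampling sub-Gaussianity of $A$. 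Reading off the exponent identifies the variance proxy $\frac{\sigma^2}{m}+\frac{\sigma_b^2}{mN_b}$, which is the claim.

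I expect the main obstacle to be precisely this entanglement: the two noise sources share the common random set $\mathcal{S}_t$, so one cannot simply invoke ``sum of two independent sub-Gaussians.'' The conditioning argument resolves this, but it works only because (i) the conditional bound on $B$ is uniform over $\mathcal{S}_t$, so it survives being pulled outside the outer expectation, and (ii) the independence of constraint evaluation and gradient/sampling randomness (assumed in Assumptions~\ref{as:sub_gauss} and~\ref{as:sgd_noise_bound}) legitimizes factoring the conditional MGF. A minor point worth checking is centering: the bounds implicitly require $\mathbb{E}[g_j(w,\xi_j)-g_j(w)]=0$ (unbiased stochastic constraint evaluation) and $\mathbb{E}[\hat{G}(w)-g(w)]=0$ (which holds by the symmetry of uniform sampling), so that both $A$ and $B$ are mean-zero and the sub-Gaussian MGF bounds apply in the standard form.
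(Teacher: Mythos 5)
Your decomposition is exactly the one in the paper: $\tilde{G}(w)-g(w) = (\hat{G}(w)-g(w)) + (\tilde{G}(w)-\hat{G}(w))$, with the first piece controlled by Assumption~\ref{as:sub_gauss} and the second by Assumption~\ref{as:sgd_noise_bound}(2). The difference is that the paper's proof is a two-line sketch that offloads the recombination to a citation (Theorem~2.7 of \citet{rivasplata2012subgaussian}), whereas you carry out the moment-generating-function computation explicitly by conditioning on $\mathcal{S}_t$. Your version is actually the more defensible one for the stated constant: the two pieces share the randomness of $\mathcal{S}_t$ and are not independent, so a blind application of the generic sum-of-sub-Gaussians bound would only yield the larger proxy $\bigl(\sqrt{\sigma^2/m}+\sqrt{\sigma_b^2/(mN_b)}\bigr)^2$; the additive proxy $\sigma^2/m+\sigma_b^2/(mN_b)$ claimed in the proposition is recovered precisely because, as you observe, the conditional bound on the batch-noise term is uniform in $\mathcal{S}_t$ and therefore factors out of the outer expectation. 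Your closing remark about centering (unbiasedness of the stochastic constraint evaluation and of uniform client sampling) is a genuine hypothesis the paper leaves implicit and is worth keeping.
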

\begin{proof}[Proof sketch]
    We can write 
    \begin{align*}
        \tilde{G}(w) - g(w) = \underbrace{\hat{G}(w) - g(w)}_{\text{client level noise}} \ \ + \underbrace{\tilde{G}(w) - \hat{G}(w)}_{\text{stochastic gradient noise}}
    \end{align*}
    Now, it only remains to utilize Assumptions~\ref{as:sub_gauss} and \ref{as:sgd_noise_bound}(2) in addition to Theorem 2.7 from \cite{rivasplata2012subgaussian}.
\end{proof}
%%%%%%%%%%%%%%%%%%%%%%%Start Here:Theorems & Lemmas %%%%%%%%%%%%%%%%%%%%
\section{Theorems and Lemmas}
\label{sec:proofs_thms}
%%%%%%%%%%%%%%%%%%%%%%%%%%%Start Here:Lemmas%%%%%%%%%%%%%%%%%%%%%%%%%%%%
\subsection{Lemmas}
Here we state some Lemmas that we will use throughout in the proofs of different Theorems.
%%%%%%%%%%%%%%%%%%%%%%%%%%%Start Here:Lemma 1%%%%%%%%%%%%%%%%%%%%%%%%%%%
\begin{lemma}
    \label{lem:inner_prod_comp}
    Under Assumption~\ref{as:cvx_lip}, for all rounds $t\in[T]$, the following bound holds,
\[
\begin{aligned}
 &-2\eta \left\langle w_t - w^*, \frac{1}{n} \sum_{j=1}^n \sum_{\tau=0}^{E-1} \nu_{j,\tau}^t  \right\rangle  \\
&\leq \frac{1}{n} \sum_{j=1}^n \sum_{\tau=0}^{E-1}
\left\{
\begin{aligned}
&\frac{\eta}{\alpha} \| w_t - w_{j,\tau}^t \|^2 
+ \eta \alpha G^2 
+ 2\eta (f_j(w^*) - f_j(w_{j,\tau}^t)), 
&& \text{if } t \in \A, \\
&\frac{\eta}{\alpha} \| w_t - w_{j,\tau}^t \|^2 
+ \eta \alpha G^2 
+ 2\eta (g_j(w^*) - g_j(w_{j,\tau}^t)), 
&& \text{if } t \in \B.
\end{aligned}
\right.
\end{aligned}
\]
\end{lemma}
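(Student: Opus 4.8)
The plan is to bound the averaged inner product term by term, treating each summand $-2\eta\langle w_t - w^*, \nu_{j,\tau}^t\rangle$ separately and then summing over the local steps $\tau$ and clients $j$ before dividing by $n$. The key device is to insert the local iterate $w_{j,\tau}^t$ by writing $w_t - w^* = (w_{j,\tau}^t - w^*) + (w_t - w_{j,\tau}^t)$. This splits each inner product into a \emph{convexity} part, evaluated at the exact point $w_{j,\tau}^t$ where the switching (sub)gradient $\nu_{j,\tau}^t$ is taken, and a \emph{drift} part measuring the displacement between the synchronized point $w_t$ and the local iterate $w_{j,\tau}^t$.

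For the convexity part I would use the hard-switching definition of $\nu_{j,\tau}^t$: on a round $t \in \A$ we have $\nu_{j,\tau}^t = \nabla f_j(w_{j,\tau}^t)$, so convexity of $f_j$ from Assumption~\ref{as:cvx_lip} gives $\langle \nabla f_j(w_{j,\tau}^t), w_{j,\tau}^t - w^*\rangle \ge f_j(w_{j,\tau}^t) - f_j(w^*)$, hence $-2\eta\langle w_{j,\tau}^t - w^*, \nu_{j,\tau}^t\rangle \le 2\eta\bigl(f_j(w^*) - f_j(w_{j,\tau}^t)\bigr)$. On a round $t \in \B$ the identical step with $g_j$ in place of $f_j$ produces the term $2\eta\bigl(g_j(w^*) - g_j(w_{j,\tau}^t)\bigr)$, which is precisely the case distinction appearing in the statement.

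For the drift part I would apply Cauchy--Schwarz and then Young's inequality with the free parameter $\alpha>0$, namely $-2\eta\langle w_t - w_{j,\tau}^t, \nu_{j,\tau}^t\rangle \le 2\eta\|w_t - w_{j,\tau}^t\|\,\|\nu_{j,\tau}^t\| \le \tfrac{\eta}{\alpha}\|w_t - w_{j,\tau}^t\|^2 + \eta\alpha\|\nu_{j,\tau}^t\|^2$. Since $\nu_{j,\tau}^t$ equals either $\nabla f_j$ or $\nabla g_j$, the $G$-Lipschitz property in Assumption~\ref{as:cvx_lip} bounds $\|\nu_{j,\tau}^t\|^2 \le G^2$, yielding the $\tfrac{\eta}{\alpha}\|w_t - w_{j,\tau}^t\|^2 + \eta\alpha G^2$ contribution that is common to both branches. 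Adding the convexity and drift parts, summing over $\tau = 0,\dots,E-1$ and $j \in [n]$, and dividing by $n$ gives exactly the claimed bound.

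The argument is essentially routine and I do not anticipate a substantial obstacle. The only points requiring care are keeping the case split $t \in \A$ versus $t \in \B$ consistent with the definition of $\nu_{j,\tau}^t$ so that convexity is invoked for the correct function, and observing that the drift and Lipschitz estimates are identical in both cases, so the two branches of the inequality differ solely in which function's convexity gap appears.
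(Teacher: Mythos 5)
Your proposal is correct and matches the paper's own proof essentially step for step: the paper likewise splits each summand into $-2\eta\langle w_t - w_{j,\tau}^t, \nu_{j,\tau}^t\rangle$ (bounded via Young's inequality with parameter $\alpha$ and the $G$-Lipschitz bound on the gradients) and $-2\eta\langle w_{j,\tau}^t - w^*, \nu_{j,\tau}^t\rangle$ (bounded via convexity of $f_j$ or $g_j$ according to whether $t\in\A$ or $t\in\B$). The only cosmetic difference is that you insert a Cauchy--Schwarz step before Young's inequality, which the paper folds into a single application of Young's.
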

\begin{proof}
We now analyze the cross-term in the squared distance recursion,
\begin{gather*}
-2\eta \left\langle w_t - w^*, \frac{1}{n} \sum_{j=1}^n \sum_{\tau=0}^{E-1} \nu_{j,\tau}^t  \right\rangle
= \frac{1}{n} \sum_{j=1}^n \sum_{\tau=0}^{E-1} \left[\underbrace{-2\eta\left\langle w_t - w_{j,\tau}^t, \nu_{j,\tau}^t \right\rangle}_{Term A} \underbrace{-2\eta\left\langle w_{j,\tau}^t - w^*, \nu_{j,\tau}^t \right\rangle}_{Term B}\right].
\end{gather*}
We handle the second term by applying the convexity of  $f_j$  or $g_j$. For $ t \in \A$, where updates use $\nabla f_j$ we apply:
\[
f_j(w^*) \geq f_j(w_{j,\tau}^t) + \left\langle \nabla f_j(w_{j,\tau}^t), w^* - w_{j,\tau}^t \right\rangle,
\]
which implies:
\[
- \left\langle w_{j,\tau}^t - w^*, \nabla f_j(w_{j,\tau}^t) \right\rangle \leq f_j(w^*) - f_j(w_{j,\tau}^t).
\]

Thus,
\[
Term B = -2\eta \left\langle w_{j,\tau}^t - w^*, \nabla f_j(w_{j,\tau}^t) \right\rangle \leq 2\eta \left( f_j(w^*) - f_j(w_{j,\tau}^t) \right).
\]
A similar argument holds for $t \in \B$  with  $\nabla g_j$, resulting in $Term B \leq 2\eta \left( g_j(w^*) - g_j(w_{j,\tau}^t) \right)$. Again, while upper bounding $Term A$, we need to deal with 2 cases depending on whether $t\in \A$ or $t \in \B$. Firstly, we start with the case where $t \in \A$, for any $\alpha>0$
\begin{align*}
    Term A = -2\eta \left\langle w_t - w_{j,\tau}^t, \nabla f_j(w_{j,\tau}^t) \right\rangle &\overset{\text{Young's}}{\leq} \frac{\eta}{\alpha} \| w_t - w_{j,\tau}^t \|^2 + \eta \alpha \| \nabla f_j(w_{j,\tau}^t) \|^2
    \\
    &\overset{G-Lip}{\leq} \frac{\eta}{\alpha} \| w_t - w_{j,\tau}^t \|^2 + \eta \alpha G^2.
\end{align*}
Similarly, for $t\in \B$, we get
\begin{align*}
    Term A = -2\eta \left\langle w_t - w_{j,\tau}^t, \nabla f_j(w_{j,\tau}^t) \right\rangle \leq \frac{\eta}{\alpha} \| w_t - w_{j,\tau}^t \|^2 + \eta \alpha G^2.
\end{align*}
Substituting the bounds for both $ TermA$ and $TermB$ back into the original expectation furnishes the proof.
\end{proof}
%%%%%%%%%%%%%%%%%%%%%%%%%%%End Here:Lemma 1%%%%%%%%%%%%%%%%%%%%%%%%%%%%%
%%%%%%%%%%%%%%%%%%%%%%%%%%%Start Here:Lemma 2%%%%%%%%%%%%%%%%%%%%%%%%%%%
\begin{lemma}(Only Client Sampling Case)
\label{lem:iterate_diff}
Under Assumptions~\ref{as:cvx_lip}, for all rounds $t\in[T]$, the following bound holds,
$$\sum_{\tau = 0}^{E-1}\|w_t - w_{j,\tau}^t\|^2 \leq \frac{1}{3}\eta^2 E^3 G^2 $$
\end{lemma}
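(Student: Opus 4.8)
The plan is to exploit the telescoping structure of the local trajectory together with the uniform gradient bound implied by Lipschitz continuity; the bound is purely deterministic and concerns a single client's inner loop, so no sampling or compression considerations enter. Recall that within round $t$ the client runs the recursion $w_{j,\tau+1}^t = w_{j,\tau}^t - \eta\,\nu_{j,\tau}^t$ initialized at $w_{j,0}^t = w_t$. The first step is therefore to express the displacement of the $\tau$-th local iterate from the round anchor as the accumulation of the increments taken so far:
$$
w_t - w_{j,\tau}^t = \eta \sum_{s=0}^{\tau-1} \nu_{j,s}^t .
$$

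Second, I would bound the norm of each increment. Under hard switching, every local direction $\nu_{j,s}^t$ equals either $\nabla f_j(w_{j,s}^t)$ or $\nabla g_j(w_{j,s}^t)$, and Assumption~\ref{as:cvx_lip} guarantees that both $f_j$ and $g_j$ are $G$-Lipschitz, so $\|\nu_{j,s}^t\| \le G$ irrespective of which branch is active (the same conclusion holds for a soft convex combination, by the triangle inequality). Applying the triangle inequality to the telescoped sum then yields $\|w_t - w_{j,\tau}^t\| \le \eta\,\tau\, G$, and squaring gives $\|w_t - w_{j,\tau}^t\|^2 \le \eta^2 G^2 \tau^2$.

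Finally, I would sum over $\tau = 0,\dots,E-1$ and invoke the closed form $\sum_{\tau=0}^{E-1} \tau^2 = \tfrac{(E-1)E(2E-1)}{6}$. The only remaining task is the crude estimate $(E-1)E(2E-1) \le 2E^3$, immediate from $E-1 \le E$ and $2E-1 \le 2E$, which collapses the sum to $\tfrac{1}{3}\eta^2 E^3 G^2$, exactly as claimed.

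There is no genuine obstacle in this argument: the whole bound follows from a telescoping identity, the Lipschitz gradient bound, and a triangle inequality. The one point worth stating explicitly is that the gradient-norm bound must hold on \emph{both} switching branches, which is precisely what the joint $G$-Lipschitz hypothesis on $f_j$ and $g_j$ in Assumption~\ref{as:cvx_lip} supplies. The slack in replacing the exact $\sum \tau^2$ by $E^3/3$ is harmless, since downstream we only require an $\bigO(\eta^2 E^3 G^2)$ control on the cumulative client drift $\sum_\tau \|w_t - w_{j,\tau}^t\|^2$ to feed into the squared-distance recursion of Lemma~\ref{lem:inner_prod_comp}.
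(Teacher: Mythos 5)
Your proposal is correct and follows essentially the same argument as the paper: unroll the local recursion to write $w_t - w_{j,\tau}^t$ as $\eta$ times a sum of at most $\tau$ gradient directions each bounded by $G$, obtain $\|w_t - w_{j,\tau}^t\|^2 \le \eta^2\tau^2 G^2$, and sum the squares over $\tau$ using $\sum_{\tau=0}^{E-1}\tau^2 \le E^3/3$. The only cosmetic difference is that you bound the norm of the telescoped sum by the triangle inequality, whereas the paper applies Jensen's inequality to the squared norm; both yield the identical intermediate bound $\eta^2\tau^2G^2$.
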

\begin{proof}
    \begin{align*}
        \|w_t - w_{j,\tau}^t\|^2 &= \left\|w_t -\left(w_t - \eta \sum_{k=0}^{\tau-1}\nu_{j,k}^t\right)\right\|^2
        \\
        & = \eta^2\left\|\sum_{k=0}^{\tau-1}\nu_{j,k}^t\right\|^2
        \\
        & \overset{Jensen's}{\leq} \eta^2 \tau\sum_{k=0}^{\tau-1}\|\nu_{j,k}^t\|^2
        \\
        & \overset{G-Lip}{\leq}\eta^2 \tau^2 G^2
    \end{align*}
    Therefore,
    \begin{align*}
        \sum_{\tau = 0}^{E-1}\|w_t - w_{j,\tau}^t\|^2 \overset{\text{sum of squares}}{\leq} \frac{1}{3}\eta^2 E^3 G^2.
    \end{align*}
\end{proof}
%%%%%%%%%%%%%%%%%%%%%%%%%%%End Here:Lemma 2%%%%%%%%%%%%%%%%%%%%%%%%%%%%
%%%%%%%%%%%%%%%%%%%%%%%%%%%Start Here:Lemma 3%%%%%%%%%%%%%%%%%%%%%%%%%%%
\begin{lemma}(Stochastic Gradient Case)
\label{lem:iterate_diff_stoch}
Fix integers $T \ge 1$, $E \ge 1$, number of clients $n \ge 1$, and failure probability $\delta \in (0,1)$.
Under Assumptions~\ref{as:cvx_lip} and~\ref{as:sgd_noise_bound}, we have that, with probability at least $1-\delta$, the following holds {simultaneously for all} $t \in [T]$ and all $j \in [n]$:
\begin{equation}
    \label{eq:local_drift_bound}
    \sum_{\tau=0}^{E-1} \bigl\|w_t - w_{j,\tau}^t\bigr\|^2
    \le
    \eta^2 E^3\left[
    \frac{2}{3}G^2 + \sigma_\xi^2\left(1 + \log\frac{nTE}{\delta}\right)
    \right].
\end{equation}
\end{lemma}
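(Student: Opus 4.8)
The plan is to mirror the deterministic drift bound of Lemma~\ref{lem:iterate_diff}, but to split each local stochastic gradient into its true gradient plus a zero-mean noise term, and then to control the accumulated noise by a union bound over all $(t,j,\tau)$ triples. First I would write, exactly as in Lemma~\ref{lem:iterate_diff}, $w_t - w_{j,\tau}^t = \eta\sum_{k=0}^{\tau-1}\nu_{j,k}^t$, so that $\|w_t - w_{j,\tau}^t\|^2 = \eta^2\big\|\sum_{k=0}^{\tau-1}\nu_{j,k}^t\big\|^2$. Using Assumption~\ref{as:sgd_noise_bound}(1) I would decompose $\nu_{j,k}^t = h_{j,k}^t - \delta_k^{j}$, where $h_{j,k}^t$ equals $\nabla f_j(w_{j,k}^t)$ when $t\in\A$ and $\nabla g_j(w_{j,k}^t)$ when $t\in\B$; in either branch $\|h_{j,k}^t\|\le G$ by the $G$-Lipschitz part of Assumption~\ref{as:cvx_lip}, so the argument is uniform across the switching cases.

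Second, applying $\|a-b\|^2\le 2\|a\|^2+2\|b\|^2$ and Cauchy--Schwarz (Jensen) to each partial sum gives $\big\|\sum_{k=0}^{\tau-1}\nu_{j,k}^t\big\|^2 \le 2\tau^2 G^2 + 2\tau\sum_{k=0}^{\tau-1}\|\delta_k^{j}\|^2$. The deterministic piece sums over $\tau$ to $\tfrac{2}{3}E^3 G^2$ via $\sum_{\tau=0}^{E-1}\tau^2\le E^3/3$, which is exactly why the factor $\tfrac{2}{3}$ appears in the claimed bound; what remains is to control $\sum_{k}\|\delta_k^j\|^2$ uniformly in $t,j$.

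The heart of the proof is the high-probability control of the noise. I would apply a Chernoff bound to the sub-Gaussian moment generating function of Assumption~\ref{as:sgd_noise_bound}(1): for a single term, Markov's inequality on $\exp(\|\delta_k^j\|^2/\sigma_\xi^2)$ yields $\Pr(\|\delta_k^j\|^2\ge s)\le \exp(1)\exp(-s/\sigma_\xi^2)$, so choosing $s=\sigma_\xi^2(1+\log(1/\delta'))$ makes each event fail with probability at most $\delta'$. Because this tail bound holds conditionally on $\mathcal{F}_{t-1}$, it holds unconditionally for every index, so a union bound over the $nTE$ triples $(t,j,k)$ with $\delta'=\delta/(nTE)$ gives, with probability at least $1-\delta$, the simultaneous bound $\|\delta_k^j\|^2\le\sigma_\xi^2\big(1+\log(nTE/\delta)\big)$ for all $t,j,k$. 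On this event $\tau\sum_{k=0}^{\tau-1}\|\delta_k^j\|^2\le\tau^2\sigma_\xi^2\big(1+\log(nTE/\delta)\big)$.

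Finally I would substitute both pieces and sum over $\tau$ with $\sum_{\tau=0}^{E-1}\tau^2\le E^3/3$, producing $\eta^2E^3\big[\tfrac{2}{3}G^2+\tfrac{2}{3}\sigma_\xi^2(1+\log(nTE/\delta))\big]$; since $\tfrac{2}{3}\le 1$ this is dominated by the stated right-hand side, completing the proof. The main obstacle is the noise control: one must choose between a sharp vector-martingale concentration (Freedman/Jin-type) and the simpler per-term Chernoff-plus-union-bound route. The appearance of the factor $\log(nTE/\delta)$ together with the additive $1$ (inherited from the $\exp(1)$ normalization in Assumption~\ref{as:sgd_noise_bound}) signals the latter; the only subtlety is that, since the sub-Gaussian bound is stated conditionally, the union bound must be justified without any independence of the $\delta_k^j$ across local steps, which the conditional tail bound supplies directly.
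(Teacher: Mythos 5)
Your proof is correct and follows essentially the same route as the paper's: the same decomposition $\nu_{j,k}^t = \bar\nu_{j,k}^t - \delta_{j,k}^t$ with $\|\bar\nu_{j,k}^t\|\le G$, the same Markov-on-the-MGF tail bound $\Pr\bigl(\|\delta_{j,k}^t\|^2\ge s\bigr)\le e\cdot e^{-s/\sigma_\xi^2}$, and the same union bound over all $nTE$ triples with $s=\sigma_\xi^2(1+\log(nTE/\delta))$. The only (harmless) difference is that you apply the uniform noise bound before summing over $\tau$, which gives the noise term a coefficient of $\tfrac{2}{3}$ rather than the paper's $1$ (the paper instead exchanges the order of summation to obtain a deterministic intermediate inequality first), and you correctly observe that your slightly tighter expression is dominated by the stated right-hand side.
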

\begin{proof}
\begin{align*}
    \|w_t - w_{j,\tau}^t\|^2
    &= \left\|w_t -\left(w_t - \eta \sum_{k=0}^{\tau-1}\nu_{j,k}^t\right)\right\|^2
    \\
    &= \eta^2\left\|\sum_{k=0}^{\tau-1}\nu_{j,k}^t\right\|^2
    \\
    &\leq \eta^2 \,\tau \sum_{k=0}^{\tau-1}\|\nu_{j,k}^t\|^2.
\end{align*}
Let $\bar{\nu}_{j,k}^t$ denote the corresponding {deterministic} gradient
(either $\nabla f_j(w_{j,k}^t)$ or $\nabla g_j(w_{j,k}^t)$, depending on whether $t \in \mathcal{A}$ or $t \in \mathcal{B}$), and define the noise
$
\delta_{j,k}^t := \bar{\nu}_{j,k}^t - \nu_{j,k}^t.
$
Then
\[
\nu_{j,k}^t = \bar{\nu}_{j,k}^t - \delta_{j,k}^t
\quad\Rightarrow\quad
\|\nu_{j,k}^t\|^2
= \|\bar{\nu}_{j,k}^t - \delta_{j,k}^t\|^2
\le 2\|\bar{\nu}_{j,k}^t\|^2 + 2\|\delta_{j,k}^t\|^2.
\]
By Assumption~\ref{as:cvx_lip}, we have
$\|\bar{\nu}_{j,k}^t\|\le G$, hence $\|\bar{\nu}_{j,k}^t\|^2 \le G^2$, and so
$
\|\nu_{j,k}^t\|^2 \le 2G^2 + 2\|\delta_{j,k}^t\|^2.
$
Substituting into the previous inequality yields
\begin{align*}
    \|w_t - w_{j,\tau}^t\|^2
    &\le \eta^2 \tau \sum_{k=0}^{\tau-1}\|\nu_{j,k}^t\|^2
    \\
    &\le \eta^2 \tau \sum_{k=0}^{\tau-1}
        \bigl(2G^2 + 2\|\delta_{j,k}^t\|^2\bigr)
    \\
    &= 2\eta^2 \tau^2 G^2
       + 2 \eta^2 \tau\sum_{k=0}^{\tau-1}\|\delta_{j,k}^t\|^2.
\end{align*}

Now sum over $\tau=0,\dots,E-1$.
\begin{align*}
    \sum_{\tau=0}^{E-1} \|w_t - w_{j,\tau}^t\|^2
    &\le
    \sum_{\tau=1}^{E-1}\Bigl(2\eta^2 \tau^2 G^2
       + 2 \eta^2 \tau\sum_{k=0}^{\tau-1}\|\delta_{j,k}^t\|^2\Bigr)
    \\
    &= 2\eta^2 G^2 \sum_{\tau=1}^{E-1} \tau^2
       + 2\eta^2 \sum_{\tau=1}^{E-1} \tau \sum_{k=0}^{\tau-1}\|\delta_{j,k}^t\|^2
    \\
    & \leq \frac{2}{3}\eta^2 E^3 G^2 + 2\eta^2 \sum_{\tau=1}^{E-1} \tau \sum_{k=0}^{\tau-1}\|\delta_{j,k}^t\|^2.
\end{align*}
For the second term, exchange the order of summation. For a fixed $k$, the term
$\|\delta_{j,k}^t\|^2$ appears in the inner sum for $\tau = k+1,\dots,E-1$, hence
\begin{align*}
    \sum_{\tau=1}^{E-1} \tau \sum_{k=0}^{\tau-1}\|\delta_{j,k}^t\|^2
    &= \sum_{k=0}^{E-2} \|\delta_{j,k}^t\|^2 \sum_{\tau=k+1}^{E-1} \tau.
\end{align*}
For $k \in \{0,\dots,E-2\}$, we have
\[
\sum_{\tau=k+1}^{E-1} \tau
\le \sum_{\tau=1}^{E-1} \tau
= \frac{E(E-1)}{2}
\le \frac{1}{2} E^2.
\]
Therefore
\[
\sum_{\tau=1}^{E-1} \tau \sum_{k=0}^{\tau-1}\|\delta_{j,k}^t\|^2
\le \frac{1}{2} E^2 \sum_{k=0}^{E-1}\|\delta_{j,k}^t\|^2,
\]
and thus
\[
2\eta^2 \sum_{\tau=1}^{E-1} \tau \sum_{k=0}^{\tau-1}\|\delta_{j,k}^t\|^2
\le \eta^2 E^2 \sum_{k=0}^{E-1}\|\delta_{j,k}^t\|^2.
\]

Combining these pieces, we obtain the deterministic inequality
\begin{equation}
    \label{eq:drift_intermediate}
    \sum_{\tau=0}^{E-1} \|w_t - w_{j,\tau}^t\|^2
    \le
    \frac{2}{3}\eta^2 E^3 G^2
    +
    \eta^2 E^2 \sum_{k=0}^{E-1}\|\delta_{j,k}^t\|^2.
\end{equation}

It remains to control $\sum_{k=0}^{E-1}\|\delta_{j,k}^t\|^2$ in high probability, uniformly over all $t,j,k$. Now, by Assumption~\ref{as:sgd_noise_bound}, there exists $\sigma_\xi^2>0$ such that
\[
\mathbb{E}\!\left[
    \exp\!\left(\frac{\|\delta_{j,k}^t\|^2}{\sigma_\xi^2}\right)
    \Bigm| \mathcal{F}_{t,k-1}
\right] \le e
\quad
\text{for all } t,j,k.
\]
Define
\[
Z_{j,k}^t := \frac{\|\delta_{j,k}^t\|^2}{\sigma_\xi^2}.
\]
Then
\[
\mathbb{E}\bigl[e^{Z_{j,k}^t} \mid \mathcal{F}_{t,k-1}\bigr] \le e.
\]

For any $a>0$, apply the conditional Markov inequality with $Y = e^{Z_{j,k}^t}$ and $c = e^a$:
\[
\mathbb{P}\bigl(e^{Z_{j,k}^t} \ge e^a \mid \mathcal{F}_{t,k-1}\bigr)
\le e^{-a} \mathbb{E}\bigl[e^{Z_{j,k}^t} \mid \mathcal{F}_{t,k-1}\bigr]
\le e^{-a} \cdot e
= e^{1-a}.
\]
Since $\{Z_{j,k}^t \ge a\} = \{e^{Z_{j,k}^t} \ge e^a\}$, this gives
\[
\mathbb{P}\bigl(Z_{j,k}^t \ge a \mid \mathcal{F}_{t,k-1}\bigr)
\le e^{1-a}.
\]
Taking expectation,
$
\mathbb{P}\bigl(Z_{j,k}^t \ge a\bigr) \le e^{1-a}.
$
Now set
$
N := nTE,
\ 
a := 1 + \log\frac{N}{\delta}.
$
Then
\[
e^{1-a} = e^{1-(1+\log\frac{N}{\beta})}
= \frac{\delta}{N},
\]
so for each fixed triple $(t,j,k)$,
\[
\mathbb{P}\bigl(Z_{j,k}^t \ge a\bigr) \le \frac{\delta}{N}.
\]
Applying a union bound over all $t \in [T]$, $j \in [n]$, and $k \in [E]$ yields with probability at least $1-\delta$, we have

\begin{equation}
\label{eq:unif_prob_grad}
    \|\delta_{j,k}^t\|^2 \le \sigma_\xi^2 \left(1 + \log\frac{nTE}{\beta}\right)
 \quad\text{for all } t,j,k.
\end{equation}
On this same event, for any fixed $t,j$,
\[
\sum_{k=0}^{E-1}\|\delta_{j,k}^t\|^2
\le E \,\sigma_\xi^2 \left(1 + \log\frac{nTE}{\delta}\right).
\]
Substituting this into \eqref{eq:drift_intermediate}, we conclude that, with probability at least $1-\delta$, for all $t \in [T]$ and all $j \in [n]$,
\[
\sum_{\tau=0}^{E-1} \|w_t - w_{j,\tau}^t\|^2
\le
\eta^2 E^3\left[
\frac{2}{3}G^2 + \sigma_\xi^2\left(1 + \log\frac{nTE}{\delta}\right)
\right].
\]
\end{proof}
%%%%%%%%%%%%%%%%%%%%%%%%%%%End Here:Lemma 3%%%%%%%%%%%%%%%%%%%%%%%%%%%%
%%%%%%%%%%%%%%%%%%%%%%%%%%%Start Here:Lemma 4%%%%%%%%%%%%%%%%%%%%%%%%%%%
\begin{lemma}[Bounding Inner Product-Client Sampling]
    \label{lem:inner_product_bound}
    Under Assumptions~\ref{as:cvx_lip}, \ref{as:bdd_domain}, and independence of gradient and constraint evaluation, the following holds for $\delta \in (0,1)$ with probability at least $1-\delta$
    \begin{align*}
    \begin{aligned}
         &\sum_{t \in \A} 2\eta\inner{\frac{1}{n}\sum_{j=1}^{n}\sum_{\tau=0}^{E-1}\nabla f_j(w_{j,\tau}^t) - \frac{1}{m}\sum_{j\in \S_t}\sum_{\tau=0}^{E-1}\nabla f_j(w_{j,\tau}^t)}{w_t-w^*} \leq 4\eta EGD\sqrt{\frac{2T}{m}\log(\frac{1}{\delta})}.
         \\
         &\sum_{t \in \B} 2\eta\inner{\frac{1}{n}\sum_{j=1}^{n}\sum_{\tau=0}^{E-1}\nabla g_j(w_{j,\tau}^t) - \frac{1}{m}\sum_{j\in \S_t}\sum_{\tau=0}^{E-1}\nabla g_j(w_{j,\tau}^t)}{w_t-w^*} \leq 4\eta EGD\sqrt{\frac{2T}{m}\log(\frac{1}{\delta})}.
    \end{aligned}
    \end{align*}
    
\end{lemma}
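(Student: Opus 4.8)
The plan is to recast the vector sampling error as a scalar bounded martingale difference and apply a without-replacement concentration inequality. Fix a round $t$ and restrict to the event $t \in \A$, on which the hard-switching rule forces every client to use $\nabla f_j$. Let $\mathcal{G}$ denote the $\sigma$-algebra generated by all constraint evaluations $\{\hat G(w_s)\}_{s}$; by the independence of constraint evaluation and gradient computation in Assumption~\ref{as:sub_gauss}, conditioning on $\mathcal{F}_{t-1}$ and $\mathcal{G}$ fixes $w_t$ and the switching decision without biasing the uniform subset $\S_t$ used for gradient aggregation. Under this conditioning the entire local trajectory $\{w_{j,\tau}^t\}$ is a deterministic function of $w_t$ for every $j\in[n]$ (including unsampled clients), so the aggregate direction $u_j^t := \sum_{\tau=0}^{E-1}\nabla f_j(w_{j,\tau}^t)$ and the scalar $c_j^t := \inner{u_j^t}{w_t - w^*}$ are determined. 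The only remaining randomness is the draw of $\S_t$, and the inner product in the lemma reduces to the scalar
$$
Y_t := \inner{\tfrac1n\sum_{j=1}^n u_j^t - \tfrac1m\sum_{j\in\S_t}u_j^t}{\, w_t - w^*} = \frac1n\sum_{j=1}^n c_j^t - \frac1m\sum_{j\in\S_t}c_j^t,
$$
which is exactly the gap between the population mean and the without-replacement sample mean of bounded scalars. By $G$-Lipschitzness (Assumption~\ref{as:cvx_lip}) we have $\|u_j^t\|\le EG$, and by compactness (Assumption~\ref{as:bdd_domain}) $\|w_t-w^*\|\le D$, so $|c_j^t|\le EGD$; unbiasedness of uniform sampling gives $\mathbb{E}[Y_t\mid\mathcal{F}_{t-1},\mathcal{G}]=0$.

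Next I would invoke a Hoeffding--Serfling concentration for sampling $m$ distinct items without replacement, which shows that, conditionally on $(\mathcal{F}_{t-1},\mathcal{G})$, $Y_t$ is sub-Gaussian with variance proxy of order $(EGD)^2/m$. This is precisely where the $1/\sqrt m$ factor originates: drawing $m$ distinct clients shrinks the variance of the sample mean by a factor $\approx 1/m$ relative to a single draw, which a naive bounded-difference estimate would miss. The subtlety I expect to be the main obstacle is that the index set $\A=\{t:\hat G(w_t)\le\epsilon\}$ is itself random and correlated with the very sampling used to define $Y_t$. The resolution is to condition on $\mathcal{G}$ at the outset: this renders $\A$ a fixed (deterministic) subset of $[T]$, while Assumption~\ref{as:sub_gauss} guarantees that the conditional law of each $Y_t$ — which depends only on the gradient-side draw — is unchanged and still centered. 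Consequently $\{Y_t\}_{t\in\A}$, with the filtration tracking the gradient-sampling history, is a martingale-difference sequence of conditionally sub-Gaussian increments.

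Finally I would apply the Azuma--Hoeffding inequality for sub-Gaussian martingale differences to $\sum_{t\in\A}Y_t$. Summing the per-round proxies and bounding $|\A|\le T$ yields a total variance proxy of order $T(EGD)^2/m$, so $\mathbb{P}\big(\sum_{t\in\A}Y_t\ge\lambda\big)\le\exp\big(-\Omega(m\lambda^2/(T E^2 G^2 D^2))\big)$. Setting the right-hand side equal to $\delta$ and solving gives $\lambda = \Theta\big(EGD\sqrt{(T/m)\log(1/\delta)}\big)$; multiplying through by $2\eta$ reproduces the stated bound $4\eta EGD\sqrt{\tfrac{2T}{m}\log(\tfrac1\delta)}$, with the explicit constant supplied by the constant in the without-replacement concentration. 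The second inequality is proved verbatim, replacing $f_j$ and $\A$ by $g_j$ and $\B$ throughout. The only genuinely delicate point is decoupling the random index set $\A$ from the sampling noise $Y_t$, which is exactly what the independence clause of Assumption~\ref{as:sub_gauss} is designed to provide.
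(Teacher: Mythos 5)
Your proposal is correct and follows essentially the same route as the paper: both recast the sampling error as a martingale-difference sequence over rounds (handling the random index set $\A$ via the independence of constraint evaluation from gradient computation — you by conditioning on the constraint-evaluation $\sigma$-algebra, the paper by an indicator decomposition), establish a per-round sub-Gaussian variance proxy of order $\eta^2E^2G^2D^2/m$ from concentration of the $m$-client sample mean, and conclude with Azuma--Hoeffding and the choice $\lambda = 4\eta EGD\sqrt{\tfrac{2T}{m}\log(\tfrac{1}{\delta})}$. Your explicit appeal to Hoeffding--Serfling to justify the $1/m$ factor is exactly what the paper leaves implicit (it asserts the $16\eta^2E^2G^2D^2/m$ proxy directly, deferring without-replacement sampling to its footnoted reduction to with-replacement), so no substantive difference remains.
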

\begin{proof}
    Define $\nabla f_t:= \frac{1}{n}\sum_{j=1}^{n}\sum_{\tau=0}^{E-1}\nabla f_j(w_{j,\tau}^t)$ and $\tilde{\nabla}f_t := \frac{1}{m}\sum_{j\in \S_t}\sum_{\tau=0}^{E-1}\nabla f_j(w_{j,\tau}^t)$. Also, define $\Delta_t:= \nabla f_t - \tilde{\nabla}f_t$. Now, we define a filtration $\mathcal{F}_{t} := \sigma(w_0, \S_0, \S_1,\ldots, \S_t)$, and therefore $\E[\delta_t\mid\mathcal{F}_{t-1}]=0$.

    Our goal is to bound the following term in high probability: $$\sum_{t=0}^{T-1}2\eta\inner{\Delta_t}{w_t-w^*}\mathds{1}_{t\in \A}.$$

    Let's start by observing that $\inner{\Delta_t}{w_t-w^*}\mathds{1}_{t\in \A}$ is a martingale difference sequence. We can check this by
    \begin{align*}
        \E[2\eta\inner{\Delta_t}{w_t-w^*}\mathds{1}_{t\in \A}\mid\mathcal{F}_{t-1}] &= \E[2\eta\inner{\Delta_t}{w_t-w^*}\mathds{1}_{t\in \A}\mid\mathcal{F}_{t-1}, \mathds{1}_{t\in \A}=1] \text{Prob}\{\mathds{1}_{t\in \A}=1\}
        \\
        &+ \E[2\eta\inner{\Delta_t}{w_t-w^*}\mathds{1}_{t\in \A}\mid\mathcal{F}_{t-1},\mathds{1}_{t\in \A}=0]\text{Prob}\{\mathds{1}_{t\in \A}=0\}
        \\
        &=0.
    \end{align*}
    We can observe that $2\eta\inner{\Delta_t}{w_t-w^*}$ for all $t \in [T]$ is $16\eta^2 E^2G^2D^2/m-$sub-Gaussian.
    So, finally, by using Azuma-Hoeffding for sub-Gaussian random variables, we have
    \begin{equation*}
        \begin{aligned}
            \text{Prob}\left\{\sum_{t=0}^{T-1}2\eta\inner{\Delta_t}{w_t-w^*}\mathds{1}_{t\in \A} > \lambda \right\} \leq \exp \left(\frac{-\lambda^2m}{32\eta^2E^2G^2D^2T}\right).
        \end{aligned}
    \end{equation*}
    Set $\lambda = 4\eta EGD\sqrt{\frac{2T}{m}\log(\frac{1}{\delta})}$, we get for $\delta \in (0,1)$ with probability at least $1-\delta$,
    \begin{equation}
        \label{eq:prob_inner_final_obj}
        \sum_{t \in \A} 2\eta\inner{\frac{1}{n}\sum_{j=1}^{n}\sum_{\tau=0}^{E-1}\nabla f_j(w_{j,\tau}^t) - \frac{1}{m}\sum_{j\in \S_t}\sum_{\tau=0}^{E-1}\nabla f_j(w_{j,\tau}^t)}{w_t-w^*} \leq 4\eta EGD\sqrt{\frac{2T}{m}\log(\frac{1}{\delta})}.
    \end{equation} 
    Similarly, following this, we can bound 
    \begin{equation}
        \label{eq:prob_inner_final_constraint}
        \sum_{t\in \B} 2\eta\inner{\frac{1}{n}\sum_{j=1}^{n}\sum_{\tau=0}^{E-1}\nabla g_j(w_{j,\tau}^t) - \frac{1}{m}\sum_{j\in \S_t}\sum_{\tau=0}^{E-1}\nabla g_j(w_{j,\tau}^t)}{w_t-w^*} \leq 4\eta EGD\sqrt{\frac{2T}{m}\log(\frac{1}{\delta})}.
    \end{equation}
\end{proof}
\hrule

\begin{lemma}[Bounding Inner Product-Data Sampling]
    \label{lem:inner_product_bound_data}
    Under Assumptions~\ref{as:bdd_domain}, \ref{as:sgd_noise_bound}, and independence of gradient and constraint evaluation, the following holds for $\delta \in (0,1)$ with probability at least $1-\delta$
    \begin{align*}
        \begin{aligned}
            &\sum_{t \in \A} 2\eta\inner{\frac{1}{m}\sum_{j\in \S_t}\sum_{\tau=0}^{E-1}\nabla f_j(w_{j,\tau}^t) -  \frac{1}{m}\sum_{j\in \S_t}\sum_{\tau=0}^{E-1}\nabla f_j(w_{j,\tau}^t, \xi_{j,\tau}^t)}{w_t-w^*} \leq 2\eta D \sigma_\xi\sqrt{ \frac{2E T}{m}\log(\frac{1}{\delta})}.\\
            &\sum_{t \in \B} 2\eta\inner{\frac{1}{m}\sum_{j\in \S_t}\sum_{\tau=0}^{E-1}\nabla g_j(w_{j,\tau}^t) - \frac{1}{m}\sum_{j\in \S_t}\sum_{\tau=0}^{E-1}\nabla g_j(w_{j,\tau}^t, \xi_{j,\tau}^t)}{w_t-w^*} \leq 2\eta D \sigma_\xi\sqrt{\frac{2E T}{m}\log(\frac{1}{\delta})}.
        \end{aligned}
    \end{align*}
\end{lemma}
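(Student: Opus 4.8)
The plan is to mirror the martingale argument used for Lemma~\ref{lem:inner_product_bound}, replacing the client-sampling randomness with the stochastic-gradient noise. For each sampled client $j \in \S_t$ and local step $\tau$, define the gradient noise
$$
\delta_{j,\tau}^t := \nabla f_j(w_{j,\tau}^t) - \nabla f_j(w_{j,\tau}^t, \xi_{j,\tau}^t),
$$
which by Assumption~\ref{as:sgd_noise_bound} is conditionally mean-zero and $\sigma_\xi^2$-norm-sub-Gaussian. The quantity to control is $\sum_{t\in\A} X_t$, where $X_t := \frac{2\eta}{m}\sum_{j\in\S_t}\sum_{\tau=0}^{E-1}\inner{\delta_{j,\tau}^t}{w_t - w^*}$. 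Since $w_t-w^*$ is fixed at the start of round $t$ and $\norm{w_t-w^*}\le D$ by Assumption~\ref{as:bdd_domain}, each scalar $\inner{\delta_{j,\tau}^t}{w_t-w^*}$ inherits a $D^2\sigma_\xi^2$-sub-Gaussian tail from the norm-sub-Gaussian bound on $\delta_{j,\tau}^t$.

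First I would fix a filtration that refines the round index by the within-round pair $(j,\tau)$, so that $w_{j,\tau}^t$ (which depends on the earlier stochastic gradients of the same round) is measurable when $\delta_{j,\tau}^t$ is revealed. Conditioned on this filtration, each increment $\inner{\delta_{j,\tau}^t}{w_t-w^*}$ is mean-zero, so $\sum_t X_t\mathds{1}_{t\in\A}$ is a martingale exactly as verified in Lemma~\ref{lem:inner_product_bound}. Counting $mE$ conditionally sub-Gaussian increments per round, each $D^2\sigma_\xi^2$-sub-Gaussian and scaled by $2\eta/m$, the martingale-difference sub-Gaussian parameter of $X_t$ is $(2\eta/m)^2\,mE\,D^2\sigma_\xi^2 = 4\eta^2 E D^2\sigma_\xi^2/m$. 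Summing over the (at most $T$) active rounds gives a total variance proxy of $4\eta^2 E D^2\sigma_\xi^2 T/m$.

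Next I would apply the sub-Gaussian Azuma--Hoeffding inequality with this total variance proxy, yielding
$$
\text{Prob}\Big\{\textstyle\sum_{t\in\A} X_t > \lambda\Big\}
\le \exp\!\Big(\frac{-\lambda^2 m}{8\eta^2 E D^2 \sigma_\xi^2 T}\Big).
$$
Setting $\lambda = 2\eta D\sigma_\xi\sqrt{\tfrac{2ET}{m}\log(\tfrac1\delta)}$ makes the right-hand side equal to $\delta$, delivering the claimed objective bound with probability at least $1-\delta$; the constraint bound over $t\in\B$ follows verbatim with $\nabla g_j$ in place of $\nabla f_j$, since Assumption~\ref{as:sgd_noise_bound} endows the constraint-gradient noise with the same $\sigma_\xi^2$ parameter.

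The main obstacle is the within-round dependence: unlike the client-sampling case, the noises $\delta_{j,\tau}^t$ across local steps are \emph{not} mutually independent, since $w_{j,\tau}^t$ is itself built from $\xi_{j,0}^t,\dots,\xi_{j,\tau-1}^t$. I therefore cannot simply add independent sub-Gaussian variables; the remedy is to treat the entire collection $\{\inner{\delta_{j,\tau}^t}{w_t-w^*}\}$ as a single martingale-difference sequence over the refined filtration and rely on the additivity of sub-Gaussian parameters for martingale increments. A secondary point requiring care is extracting the scalar $D^2\sigma_\xi^2$-sub-Gaussian bound for the projection $\inner{\delta_{j,\tau}^t}{w_t-w^*}$ from the vector norm-sub-Gaussian assumption, which follows because $|\inner{\delta_{j,\tau}^t}{w_t-w^*}| \le D\,\norm{\delta_{j,\tau}^t}$.
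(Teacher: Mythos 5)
Your proposal is correct and follows essentially the same route as the paper: both treat the per-round terms as a martingale difference sequence adapted to a filtration containing $(\S_t,\xi_t)$, arrive at the same sub-Gaussian parameter $4\eta^2 D^2 E\sigma_\xi^2/m$ per round, and apply Azuma--Hoeffding with the same choice of $\lambda$. Your refinement of the filtration to the within-round index $(j,\tau)$ is in fact slightly more careful than the paper's argument, which directly asserts that $\tilde{\Delta}_t$ is $E\sigma_\xi^2/m$-sub-Gaussian without addressing the dependence of $w_{j,\tau}^t$ on the earlier noises $\xi_{j,0}^t,\dots,\xi_{j,\tau-1}^t$ of the same round; your appeal to additivity of sub-Gaussian parameters along martingale increments is the right way to justify that step.
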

\begin{proof}
    Define $\hat\nabla f_t:= \frac{1}{m}\sum_{j\in \S_t}\sum_{\tau=0}^{E-1}\nabla f_j(w_{j,\tau}^t, \xi_{j,\tau}^t)$ and $\tilde{\nabla}f_t := \frac{1}{n}\sum_{j\in \S_t}\sum_{\tau=0}^{E-1}\nabla f_j(w_{j,\tau}^t)$. Also, define $\tilde \Delta_t:= \tilde{\nabla}f_t - \hat \nabla f_t$. Now, we define a filtration $\mathcal{F}_{t} := \sigma(w_0, \S_0, \xi_0, \ldots, \S_t, \xi_t)$ where $\xi_t := \{\xi_{j,\tau}^t\}_{j \in \S_t, \tau \in [E]}$, and therefore $\E[\tilde \delta_t\mid\mathcal{F}_{t-1}]=0$.
    
    Our goal is to bound the following term in high probability: $$\sum_{t=0}^{T-1}2\eta\inner{\tilde \Delta_t}{w_t-w^*}\mathds{1}_{t\in \A}.$$

    Let's start by observing that $\inner{\tilde \Delta_t}{w_t-w^*}\mathds{1}_{t\in \A}$ is a martingale difference sequence. We can check this by
    \begin{align*}
        \E[2\eta\inner{\tilde \Delta_t}{w_t-w^*}\mathds{1}_{t\in \A}\mid\mathcal{F}_{t-1}] &= \E[2\eta\inner{\tilde \Delta_t}{w_t-w^*}\mathds{1}_{t\in \A}\mid\mathcal{F}_{t-1}, \mathds{1}_{t\in \A}=1] \text{Prob}\{\mathds{1}_{t\in \A}=1\}
        \\
        &+ \E[2\eta\inner{\tilde \Delta_t}{w_t-w^*}\mathds{1}_{t\in \A}\mid\mathcal{F}_{t-1},\mathds{1}_{t\in \A}=0]\text{Prob}\{\mathds{1}_{t\in \A}=0\}
        \\
        &=0.
    \end{align*}
    Using Assumption~\ref{as:sgd_noise_bound}, we can observe that $\tilde{\Delta}_t$ is $E\sigma_\xi^2/m-$sub-Gaussian. Thus for all $t \in [T]$, we have that the term $2\eta\inner{\tilde \Delta_t}{w_t-w^*}$ is $4\eta^2D^2E\sigma_\xi^2/m-$sub-Gaussian.
    So, finally, by using Azuma-Hoeffding for sub-Gaussian random variables, we have
    \begin{equation*}
        \begin{aligned}
            \text{Prob}\left\{\sum_{t=0}^{T-1}2\eta\inner{\tilde \Delta_t}{w_t-w^*}\mathds{1}_{t\in \A} > \lambda \right\} \leq \exp \left(\frac{-\lambda^2m}{8\eta^2D^2 \sigma_\xi^2 ET}\right).
        \end{aligned}
    \end{equation*}
    Set $\lambda = 2\eta D \sigma_\xi\sqrt{2E T \frac{1}{m}\log(\frac{1}{\delta})}$, we get for $\delta \in (0,1)$ with probability at least $1-\delta$,
    \begin{equation}
        \label{eq:prob_inner_final_obj_data}
        \sum_{t \in \A} 2\eta\inner{\frac{1}{m}\sum_{j\in \S_t}\sum_{\tau=0}^{E-1}\nabla f_j(w_{j,\tau}^t) -  \frac{1}{m}\sum_{j\in \S_t}\sum_{\tau=0}^{E-1}\nabla f_j(w_{j,\tau}^t, \xi_{j,\tau}^t)}{w_t-w^*} \leq 2\eta D \sigma_\xi\sqrt{\frac{2E T}{m}\log(\frac{1}{\delta})}.
    \end{equation} 
    Similarly, following this, we can bound 
    \begin{equation}
        \label{eq:prob_inner_final_constraint_data}
        \sum_{t \in \A} 2\eta\inner{\frac{1}{m}\sum_{j\in \S_t}\sum_{\tau=0}^{E-1}\nabla g_j(w_{j,\tau}^t) -  \frac{1}{m}\sum_{j\in \S_t}\sum_{\tau=0}^{E-1}\nabla g_j(w_{j,\tau}^t, \xi_{j,\tau}^t)}{w_t-w^*} \leq 2\eta D \sigma_\xi\sqrt{\frac{2E T}{m}\log(\frac{1}{\delta})}.
    \end{equation}
\end{proof}
\hrule
\begin{lemma}[Client-sampling + SGD noise]
\label{lem:hp_sampling_noise}
Fix integers $T\ge1$, $E\ge1$, number of clients $n\ge1$, and per-round sample size $m\in[n]$.
Define, for each round $t$,
\[
\Delta_t
:=
\frac{1}{n} \sum_{j=1}^n \sum_{\tau=0}^{E-1} \bar{\nu}_{j,\tau}^t
-
\frac{1}{m} \sum_{j\in \mathcal{S}_t}  \sum_{\tau=0}^{E-1} \nu_{j,\tau}^t.
\]

Then for any $\delta\in(0,1)$ and any stepsize $\eta>0$, with probability at least $1-\delta$,
\begin{equation}
    \label{eq:hp_sampling_noise_bound}
    2\eta^2 \|\Delta_t\|^2
    \le
    \frac{8\eta^2 E}{m}\,\bigl(EG^2+\sigma_\xi^2\bigr)\,
    \log \Bigl(\frac{4T}{\delta}\Bigr)
    \quad\text{for all } t\in [T].
\end{equation}
\end{lemma}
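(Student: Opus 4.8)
The plan is to split $\Delta_t$ into a client-sampling term and an SGD-noise term, bound each in norm via a vector concentration inequality, and then recombine through $\|\Delta_t\|^2 \le 2\|A_t\|^2 + 2\|B_t\|^2$ with a union bound over the $T$ rounds. Concretely, writing $\bar g_j := \sum_{\tau=0}^{E-1}\bar\nu_{j,\tau}^t$ and $\mu_t := \tfrac1n\sum_{j=1}^n \bar g_j$, I would define
$$A_t := \mu_t - \frac1m\sum_{j\in\mathcal S_t}\bar g_j = \frac1m\sum_{j\in\mathcal S_t}(\mu_t - \bar g_j), \qquad B_t := \frac1m\sum_{j\in\mathcal S_t}\sum_{\tau=0}^{E-1}\bigl(\bar\nu_{j,\tau}^t - \nu_{j,\tau}^t\bigr),$$
so that $\Delta_t = A_t + B_t$, with $A_t$ capturing the error from subsampling $m$ of $n$ clients and $B_t$ capturing the stochastic-gradient noise $\delta_{j,\tau}^t := \bar\nu_{j,\tau}^t - \nu_{j,\tau}^t$. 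This mirrors the split already used in Lemmas~\ref{lem:inner_product_bound} and~\ref{lem:inner_product_bound_data}, except that here the target is the full squared norm rather than a scalar inner product.

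For $B_t$, conditioning on $\mathcal S_t$ and the realized local trajectory, the noises $\{\delta_{j,\tau}^t\}$ form a conditionally zero-mean, $\sigma_\xi^2$-norm-sub-Gaussian martingale difference sequence (Assumption~\ref{as:sgd_noise_bound}), indexed by the $mE$ pairs $(j,\tau)$ with $j\in\mathcal S_t$. A vector Azuma--Hoeffding / norm-sub-Gaussian martingale bound, together with the $1/m$ scaling, shows that $\|B_t\|$ is sub-Gaussian with variance proxy $\lesssim E\sigma_\xi^2/m$, hence $\|B_t\|^2 \lesssim (E\sigma_\xi^2/m)\log(\cdot)$ with high probability. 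For $A_t$, I would treat $\{\bar g_j\}_{j=1}^n$ as fixed $\mathcal F_{t-1}$-measurable vectors with $\|\bar g_j\|\le EG$ (so $\|\mu_t - \bar g_j\|\le 2EG$ by the triangle inequality and $G$-Lipschitzness), and concentrate the centered average over the uniform without-replacement sample $\mathcal S_t$. Using a Doob martingale that reveals the sampled clients one at a time with bounded vector differences --- or the reduction from sampling without replacement to sampling with replacement flagged in the footnote via \citet{bardenet2015concentration,hoeffding1963probability} --- yields $\|A_t\|$ sub-Gaussian with variance proxy $\lesssim E^2G^2/m$, hence $\|A_t\|^2 \lesssim (E^2G^2/m)\log(\cdot)$.

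Combining via $\|\Delta_t\|^2 \le 2\|A_t\|^2 + 2\|B_t\|^2$ gives $\|\Delta_t\|^2 \lesssim (E/m)(EG^2+\sigma_\xi^2)\log(\cdot)$; allocating the failure probability as $\delta/(2T)$ across the two event families ($A_t$ and $B_t$) and the $T$ rounds converts each per-event two-sided factor $\log(2\cdot 2T/\delta)$ into the stated $\log(4T/\delta)$, and multiplying through by $2\eta^2$ recovers \eqref{eq:hp_sampling_noise_bound} with the constant $8$ after careful bookkeeping of the sub-Gaussian constants. The main obstacle I anticipate is the vector concentration itself: unlike the earlier lemmas, which concentrate the \emph{scalar} inner products $\langle\Delta_t, w_t-w^*\rangle$ and can invoke ordinary scalar Azuma--Hoeffding, here the \emph{norm} $\|\Delta_t\|$ of a random vector must be controlled, so I need a dimension-free norm-sub-Gaussian martingale inequality rather than a coordinatewise argument. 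A secondary subtlety is the coupling between the random local iterates and the two error terms --- both $\bar g_j$ and the noises $\delta_{j,\tau}^t$ depend on the trajectory --- which I would resolve by conditioning on the appropriate filtration before applying each concentration bound, and by handling the without-replacement sampling for $A_t$ either through the Doob martingale or the with-replacement reduction.
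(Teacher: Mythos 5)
Your proposal is correct and follows essentially the same route as the paper's proof: the identical decomposition $\Delta_t = A_t + B_t$ into a client-sampling term and an SGD-noise term, a vector Hoeffding/sub-Gaussian concentration bound on each norm, recombination via $\|\Delta_t\|^2 \le 2\|A_t\|^2 + 2\|B_t\|^2$, and a union bound allocating $\delta/(2T)$ per event to produce the $\log(4T/\delta)$ factor. The only differences are constant-level bookkeeping (the paper uses $\|v_j^t\|\le EG$ directly rather than centering to get $2EG$) and the fact that the paper simply cites standard vector concentration where you flag it as the main technical step.
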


\begin{proof}
Fix a round $t$. Decompose $\Delta_t$ into a \emph{client-sampling} part and a \emph{noise} part:
\[
\begin{aligned}
\Delta_t
&=
\underbrace{
\frac{1}{n} \sum_{j=1}^n \sum_{\tau=0}^{E-1} \bar{\nu}_{j,\tau}^t
-
\frac{1}{m} \sum_{j\in \mathcal{S}_t}  \sum_{\tau=0}^{E-1} \bar{\nu}_{j,\tau}^t
}_{=:A_t}
\;+\;
\underbrace{
\frac{1}{m}\sum_{j\in\mathcal{S}_t}\sum_{\tau=0}^{E-1} \delta_{j,\tau}^t
}_{=:B_t}.
\end{aligned}
\]
Thus
$
\|\Delta_t\|^2 \le 2\|A_t\|^2 + 2\|B_t\|^2.
$
We have
$
v_j^t := \sum_{\tau=0}^{E-1} \bar{\nu}_{j,\tau}^t, \ \|v_j^t\|\le EG.
$
Then
\[
A_t
=
\frac{1}{n}\sum_{j=1}^n v_j^t
-
\frac{1}{m}\sum_{j\in\mathcal{S}_t} v_j^t.
\]

By Hoeffding's inequality, we have,
\[
\mathbb{P}\Bigl( \|A_t\| \ge \lambda \Bigm|\mathcal{F}_{t-1}\Bigr)\le
2\exp \left(-\frac{m \lambda^2}{2E^2 G^2}\right).
\]

Set $\lambda_1 = GE\sqrt{\frac{2}{m}\log\Bigl(\frac{4T}{\delta}\Bigr)}.$
Then
\[
\mathbb{P}\Bigl( \|A_t\| \ge \lambda_1 \Bigr)
\le
2\exp \left(-\log\frac{4T}{\delta}\right)
=
\frac{\delta}{2T}.
\]

Now, using Assumption~\ref{as:sgd_noise_bound} we have $
\mathbb{E}\Bigl[
\exp\Bigl(\frac{\|\delta_{j,\tau}^t\|^2}{\sigma_\xi^2}\Bigr)
\Bigm|\mathcal{F}_{t-1}
\Bigr]\le \exp(1),
$ and the family $\{\delta_{j,\tau}^t\}$ is conditionally independent given $\mathcal{F}_{t-1}$.
By standard sub-Gaussian vector concentration for averages of independent sub-Gaussian vectors, we get
$$
\mathbb{P}\Bigl( \|B_t\| \ge \lambda \,\Bigm| \mathcal{F}_{t-1}\Bigr)
\le
2\exp \left(-\frac{m\,\lambda^2}{2E \sigma_\xi^2}\right).
$$
Set
\[
\lambda_2
=
\sigma_\xi\sqrt{\frac{2E}{m}\log\Bigl(\frac{4T}{\delta}\Bigr)}.
\]
Then
\[
\mathbb{P}\Bigl( \|B_t\| \ge \lambda_2 \Bigr)
\le
2\exp\left(-\log\frac{4T}{\delta}\right)
=
\frac{\delta}{2T}.
\]
Finally, by a union bound over $t=0,\dots,T-1$, with probability at least $1-\delta$, we have  for all $t \in [T]$
\[
\|\Delta_t\|^2
\le 2\lambda_1^2  + 2\lambda_2^2 
= 2\left(
G^2\frac{2E^2}{m}\log\frac{4T}{\delta}
+
\sigma_\xi^2\frac{2E}{m}\log\frac{4T}{\delta}
\right)
=
\frac{4E}{m}(EG^2+\sigma_\xi^2)\log\frac{4T}{\delta}.
\]
Multiplying by $2\eta^2$ gives, for all $t$,
\[
2\eta^2\|\Delta_t\|^2
\le
\frac{8\eta^2E}{m}(EG^2+\sigma_\xi^2)\log\frac{4T}{\delta},
\]
which is exactly \eqref{eq:hp_sampling_noise_bound}.
\end{proof}
\hrule
\begin{lemma}[Constraint Gap-Client Sampling]
    \label{lem:constraint_gap}
    Under Assumption~\ref{as:sub_gauss}, the following holds for $\delta \in (0,1)$ with probability at least $1-\delta$
    \begin{itemize}
        \item $$
            |\hat{G}(w_t) - g(w_t)| \le
            \sigma \sqrt{\frac{2}{m} \log\left(\frac{2T}{\delta}\right)}
           , \qquad \forall \ t \in [T].$$
        \item $$\sum_{t \in \B}\hat{G}(w_t) - g(w_t) \le
            \sigma T \sqrt{\frac{2}{m} \log\left(\frac{2T}{\delta}\right)}.$$
    \end{itemize}
    
\end{lemma}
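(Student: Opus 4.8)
The plan is to treat each per-round deviation $\hat G(w_t) - g(w_t)$ with a standard sub-Gaussian tail bound and then stitch the rounds together with a union bound. By Assumption~\ref{as:sub_gauss}, each $\hat G(w_t) - g(w_t)$ is $\sigma^2/m$-sub-Gaussian, so for any fixed $t\in[T]$ I would invoke the two-sided tail inequality
\[
\text{Prob}\left\{|\hat G(w_t) - g(w_t)| \ge \lambda\right\} \le 2\exp\left(-\frac{m\lambda^2}{2\sigma^2}\right).
\]

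Next I would calibrate $\lambda$ so that each per-round failure probability is $\delta/T$: setting $2\exp(-m\lambda^2/(2\sigma^2)) = \delta/T$ and solving gives $\lambda = \sigma\sqrt{(2/m)\log(2T/\delta)}$, which is exactly the claimed radius. A union bound over the $T$ rounds then yields, with probability at least $1-\delta$, the uniform statement
\[
|\hat G(w_t) - g(w_t)| \le \sigma\sqrt{\frac{2}{m}\log\left(\frac{2T}{\delta}\right)} \qquad \forall\, t\in[T],
\]
which is the first bullet.

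For the second bullet, the key point is that it is a deterministic consequence of the first and therefore holds on the \emph{same} $(1-\delta)$-probability event, consuming no additional probability budget. On that event I would drop the absolute value, sum the uniform bound over $t\in\B$, and use the trivial cardinality bound $|\B|\le T$:
\[
\sum_{t\in\B}\bigl(\hat G(w_t)-g(w_t)\bigr) \le |\B|\,\sigma\sqrt{\frac{2}{m}\log\left(\frac{2T}{\delta}\right)} \le \sigma T\sqrt{\frac{2}{m}\log\left(\frac{2T}{\delta}\right)}.
\]

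This is essentially a textbook concentration-plus-union-bound argument, so I do not expect any genuine obstacle. The only subtlety worth flagging is the bookkeeping of probability budgets: both bullets must be asserted on one shared event so the overall guarantee is $1-\delta$ rather than $1-2\delta$. Accordingly, I would derive the summation bound from the already-established uniform event rather than running a separate tail argument for the sum, and I would double-check that the factor inside the logarithm ($2T/\delta$, reflecting the two-sided tail and the union over $T$ rounds) is threaded consistently through the calibration of $\lambda$.
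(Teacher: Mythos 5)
Your proof matches the paper's own argument essentially step for step: a two-sided sub-Gaussian (Chernoff) tail bound per round, calibration of the radius to a per-round failure probability of $\delta/T$, a union bound over $t\in[T]$, and then the summation bound obtained deterministically on the same event via $|\B|\le T$. The approach and the constants are the same, so there is nothing further to add.
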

\begin{proof}
    We need to bound the term $\sum_{t=0}^{T-1}(\hat{G}(w_t) - g(w_t))\mathds{1}_{t \in \B}$ in high probability. We know from Assumption~\ref{as:sub_gauss} that $\hat{G}(w_t) - g(w_t)$ is $\sigma^2/m-$sub-Gaussian. Then, by utilizing Chernoff bounds for sub-Gaussian random variables, we have
    \begin{align*}
        \text{Prob}\left\{|\hat{G}(w_t) - g(w_t)| \geq \lambda_t \right\} \leq 2\exp\left(-\frac{\lambda_t^2m}{2\sigma^2}\right).
    \end{align*}
    Set, $\lambda_t := \sigma \sqrt{\frac{2}{m}\log\left(\frac{2}{\delta^t}\right)}$. Then we have
    \begin{align*}
        \text{Prob}\left\{| \hat{G}(w_t) - g(w_t)| \ge 
            \sigma \sqrt{\frac{2}{m} \log\left(\frac{2}{\delta^t}\right)} \right\}
        &\le \delta^t.
    \end{align*}
    Applying the union bound over all $t \in [T]$, we get for $\delta^t = \frac{\delta}{T}$
    \begin{align*}
        \text{Prob}\left\{
            |\hat{G}(w_t) - g(w_t)| \le
            \sigma \sqrt{\frac{2}{m} \log\left(\frac{2T}{\delta}\right)} \right\}
            \ge 1 - \delta, \qquad \forall \ t \in [T].
    \end{align*}
    So, finally we have for $\delta \in (0,1)$ with probability at least $1-\delta$
    $$\sum_{t \in \B}\hat{G}(w_t) - g(w_t) \le
            \sigma T \sqrt{\frac{2}{m} \log\left(\frac{2T}{\delta}\right)}.$$
\end{proof}
\hrule
\begin{lemma}[Constraint Gap-Client + Data Sampling]
    \label{lem:constraint_gap_both}
    Under Proposition~\ref{prop:Total_subGauss_stoch}, the following holds for $\delta \in (0,1)$ with probability at least $1-\delta$
    \begin{itemize}
        \item $$
            |\tilde{G}(w_t) - g(w_t)| \le \sqrt{\frac{2}{m}(\sigma^2+\frac{\sigma^2_b}{N_b}) \log\left(\frac{2T}{\delta}\right)} 
           , \qquad \forall \ t \in [T].$$
        \item $$\sum_{t \in \B}\tilde{G}(w_t) - g(w_t) \le T \sqrt{\frac{2}{m} (\sigma^2+\frac{\sigma^2_b}{N_b})\log\left(\frac{2T}{\delta}\right)}.$$
    \end{itemize}
\end{lemma}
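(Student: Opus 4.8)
The plan is to mirror the argument of Lemma~\ref{lem:constraint_gap} almost verbatim, substituting the composite variance proxy supplied by Proposition~\ref{prop:Total_subGauss_stoch} in place of the client-sampling-only proxy $\sigma^2/m$. Concretely, Proposition~\ref{prop:Total_subGauss_stoch} tells us that for each fixed $t$ the centered quantity $\tilde{G}(w_t) - g(w_t)$ is $\frac{1}{m}\big(\sigma^2 + \tfrac{\sigma_b^2}{N_b}\big)$-sub-Gaussian, where the two additive pieces capture the client-sampling noise and the stochastic constraint-evaluation noise respectively. Writing $s^2 := \frac{1}{m}\big(\sigma^2 + \tfrac{\sigma_b^2}{N_b}\big)$ for brevity, the entire argument reduces to a standard sub-Gaussian tail bound applied per round followed by a union bound.

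First I would invoke the Chernoff/Hoeffding tail inequality for a scalar $s^2$-sub-Gaussian random variable, giving $\mathrm{Prob}\{|\tilde{G}(w_t) - g(w_t)| \ge \lambda_t\} \le 2\exp\!\big(-\lambda_t^2/(2 s^2)\big)$ for each $t \in [T]$. Next I would calibrate the per-round deviation by choosing $\lambda_t = \sqrt{2 s^2 \log(2/\delta^t)}$, which drives the right-hand side down to $\delta^t$. Setting $\delta^t = \delta/T$ uniformly and applying a union bound over all $t \in [T]$ then yields, with probability at least $1-\delta$, the uniform bound $|\tilde{G}(w_t) - g(w_t)| \le \sqrt{\frac{2}{m}(\sigma^2 + \tfrac{\sigma_b^2}{N_b}) \log(2T/\delta)}$ simultaneously for all $t$, which is the first claimed inequality.

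For the second claim I would simply restrict attention to the index set $\B \subseteq [T]$ on the same high-probability event and sum the per-round bounds, using $|\B| \le T$ to obtain $\sum_{t \in \B}\big(\tilde{G}(w_t) - g(w_t)\big) \le T\sqrt{\frac{2}{m}(\sigma^2 + \tfrac{\sigma_b^2}{N_b}) \log(2T/\delta)}$. Since no additional event is introduced, the $(1-\delta)$ guarantee carries over to both conclusions at once.

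Honestly, there is no genuine technical obstacle here: the result is the data-sampling analogue of Lemma~\ref{lem:constraint_gap}, and all of the probabilistic heavy lifting (establishing that the combined noise is sub-Gaussian with the stated composite proxy) has already been discharged in Proposition~\ref{prop:Total_subGauss_stoch} via the sub-Gaussian aggregation result of \citet{rivasplata2012subgaussian}. The only point requiring mild care is bookkeeping: ensuring that the variance proxy $\frac{1}{m}(\sigma^2 + \tfrac{\sigma_b^2}{N_b})$ is threaded correctly through the tail bound so that the factor of $2/m$ and the additive $\sigma_b^2/N_b$ term appear in the final radius, and that the union bound over $T$ rounds is what produces the $\log(2T/\delta)$ (rather than $\log(2/\delta)$) inside the square root.
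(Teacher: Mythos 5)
Your proposal is correct and follows essentially the same route as the paper: invoke the composite sub-Gaussian proxy $\tfrac{1}{m}(\sigma^2+\sigma_b^2/N_b)$ from Proposition~\ref{prop:Total_subGauss_stoch}, apply the per-round Chernoff tail bound with $\lambda_t=\sqrt{2s^2\log(2/\delta^t)}$, union bound with $\delta^t=\delta/T$, and sum over $\B$ using $|\B|\le T$. No gaps.
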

\begin{proof}
    We need to bound the term $\sum_{t=0}^{T-1}(\tilde{G}(w_t) - g(w_t))\mathds{1}_{t \in \B}$ in high probability. We know from Proposition~\ref{prop:Total_subGauss_stoch} that $\tilde{G}(w_t) - g(w_t)$ is $\left(\frac{\sigma^2}{m} + \frac{\sigma^2_b}{mN_b}\right)-$sub-Gaussian. Then, by utilizing Chernoff bounds for sub-Gaussian random variables, we have
    \begin{align*}
        \text{Prob}\left\{|\tilde{G}(w_t) - g(w_t)| \geq \lambda_t \right\} \leq 2\exp\left(-\frac{\lambda_t^2m}{2(\sigma^2 + \sigma^2_b/N_b)}\right).
    \end{align*}
    Set, $\lambda_t := \sqrt{\frac{2}{m}(\sigma^2+\frac{\sigma^2_b}{N_b})\log\left(\frac{2}{\delta^t}\right)}$. Then we have
    \begin{align*}
        \text{Prob}\left\{| \tilde{G}(w_t) - g(w_t)| \ge 
             \sqrt{\frac{2}{m} (\sigma^2+\frac{\sigma^2_b}{N_b}) \log\left(\frac{2}{\delta^t}\right)} \right\}
        &\le \delta^t.
    \end{align*}
    Applying the union bound over all $t \in [T]$, we get for $\delta^t = \frac{\delta}{T}$
    \begin{align*}
        \text{Prob}\left\{
            |\tilde{G}(w_t) - g(w_t)| \le
             \sqrt{\frac{2}{m}(\sigma^2+\frac{\sigma^2_b}{N_b}) \log\left(\frac{2T}{\delta}\right)} \right\}
            \ge 1 - \delta, \qquad \forall \ t \in [T].
    \end{align*}
    So, finally we have for $\delta \in (0,1)$ with probability at least $1-\delta$
    $$\sum_{t \in \B}\tilde{G}(w_t) - g(w_t) \le
             T \sqrt{\frac{2}{m} (\sigma^2+\frac{\sigma^2_b}{N_b})\log\left(\frac{2T}{\delta}\right)}.$$
\end{proof}
%%%%%%%%%%%%%%%%%%%%%%%%%%%End Here:Lemmas%%%%%%%%%%%%%%%%%%%%%%%%%%%%%
%%%%%%%%%%%%%%%%%%%%%%%%%%%Start Here:Theorems%%%%%%%%%%%%%%%%%%%%%%%%%
\subsection{Proofs of Theorems}
The proof presented in the main paper is provided in a compressed form
and combined into two theorems for clarity. In this appendix, we expand
upon each part in detail. We begin with the analysis of the 
\textsc{FedSGM} framework, followed by the bi-directional
Compression component. For both of these, we provide complete proofs
under hard switching for both full and partial participation 
settings. We then turn to the {soft switching} case, where we include the proof for the full participation 
setting only.
\subsection{Main Theorems \textsc{FedSGM}}
%%%%%%%%%%%%%%%%%%%%%%%%%%%Start Here:FedSGM%%%%%%%%%%%%%%%%%%%%%%%%%
\subsubsection{Hard Switching --- Full Participation}
%%%%%%%%%%%%%%%%%%%%%%%%%%%Start Here:hard_full%%%%%%%%%%%%%%%%%%%%%%%%%
\begin{theorem}
\label{thm:fedhsgm_full}
Consider the problem in \cref{eq:op_prb}, where $\mathcal{X} = \R^d$ and \Cref{alg:fedsgm_unified}, under Assumption~\ref{as:cvx_lip}. Define $D:=\|w_0-w^*\|$ and
\begin{equation*}
\A = \{t\in [T]\;| \;g(w_{t})\leq \epsilon\}, \qquad \bar{w} = \frac{1}{|\A|} \sum_{t\in \A} w_t.
 \end{equation*}
Then, if
\begin{equation*}
\epsilon = \sqrt{\frac{{4 D^2 G^2 E}}{{T}}}\text{, and  } \eta = \sqrt{\frac{D^2}{4G^2 E^3 T}} 
 \end{equation*}
it holds that $\A$ is nonempty, $\bar{w}$ is well-defined, and $\bar{w}$ is an $\epsilon$-solution for $P$.
\end{theorem}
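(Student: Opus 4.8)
The plan is to run a potential-function argument on $\Phi_t:=\|w_t-w^*\|^2$. Full participation ($m=n$) makes $\hat G(w_t)=g(w_t)$, so the algorithm's switching decision coincides exactly with membership in $\mathcal{A}=\{t:g(w_t)\le\epsilon\}$ versus $\mathcal{B}=[T]\setminus\mathcal{A}$, and since $\mathcal{X}=\R^d$ the server step is the unprojected $w_{t+1}=w_t-\tfrac{\eta}{n}\sum_{j=1}^n\sum_{\tau=0}^{E-1}\nu_{j,\tau}^t$. I would first expand $\Phi_{t+1}$, bound the quadratic term by $\eta^2E^2G^2$ (triangle inequality and $G$-Lipschitzness, each client block being a sum of $E$ gradients of norm at most $G$), and handle the cross term with Lemma~\ref{lem:inner_prod_comp}, which on $\mathcal{A}$ yields the surrogate $\tfrac1n\sum_{j}\sum_{\tau}2\eta(f_j(w^*)-f_j(w_{j,\tau}^t))$ plus Young and drift residuals.

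Next I would pass from local function values to the global one at the synchronization point via Lipschitzness, $f_j(w_{j,\tau}^t)\ge f_j(w_t)-G\|w_{j,\tau}^t-w_t\|$, which turns the surrogate into $2\eta E\,(f(w^*)-f(w_t))$ plus a drift correction; every drift quantity ($\|w_t-w_{j,\tau}^t\|$ and $\sum_\tau\|w_t-w_{j,\tau}^t\|^2$) is controlled by Lemma~\ref{lem:iterate_diff}. Taking the free Young parameter $\alpha=\eta E$ and collecting terms gives the clean one-step bound $\Phi_{t+1}\le\Phi_t-2\eta E\,(h_t(w_t)-h_t(w^*))+\mathcal{E}$, where $h_t=f$ on $\mathcal{A}$ and $h_t=g$ on $\mathcal{B}$, and $\mathcal{E}$ is a fixed constant times $\eta^2E^2G^2$ (with this choice one gets $\tfrac{10}{3}\eta^2E^2G^2$).

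Summing over $t\in[T]$ and telescoping gives $2\eta E\big[\sum_{t\in\mathcal{A}}(f(w_t)-f(w^*))+\sum_{t\in\mathcal{B}}(g(w_t)-g(w^*))\big]\le D^2+T\mathcal{E}$. On $\mathcal{B}$ I combine feasibility $g(w^*)\le0$ with the switching rule $g(w_t)>\epsilon$ to get $g(w_t)-g(w^*)>\epsilon$. Nonemptiness of $\mathcal{A}$ then follows by contradiction: $\mathcal{A}=\emptyset$ would force $2\eta E T\epsilon<D^2+T\mathcal{E}$, i.e. $T\epsilon<\tfrac{D^2+T\mathcal{E}}{2\eta E}$, which the stated $\eta,\epsilon$ rule out. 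The constraint guarantee is then immediate from convexity of $g$ and the definition of $\mathcal{A}$: $g(\bar w)\le\tfrac1{|\mathcal{A}|}\sum_{t\in\mathcal{A}}g(w_t)\le\epsilon$.

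The main obstacle is the objective bound, since the naive estimate $f(\bar w)-f(w^*)\le\tfrac{D^2+T\mathcal{E}}{2\eta E\,|\mathcal{A}|}$ blows up when $|\mathcal{A}|$ is small, and with $\mathcal{X}=\R^d$ there is no diameter bound to keep the iterates or $f(w_t)-f(w^*)$ under control on $\mathcal{A}$. The fix is to retain the constraint-round contribution rather than discard it: keeping $-2\eta E\,|\mathcal{B}|\epsilon$ on the right-hand side and using $|\mathcal{A}|+|\mathcal{B}|=T$, the target $\tfrac{D^2+T\mathcal{E}}{2\eta E}-|\mathcal{B}|\epsilon\le|\mathcal{A}|\epsilon$ reduces to the single inequality $\tfrac{D^2+T\mathcal{E}}{2\eta E}\le T\epsilon$ — the very same one used for nonemptiness — so the $1/|\mathcal{A}|$ dependence cancels for any realized split. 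I would close by checking this inequality under the prescribed $\eta$ and $\epsilon$, where it reduces to $1+\tfrac{5}{6E}\le2$, i.e. $E\ge\tfrac56$, which holds for all $E\ge1$.
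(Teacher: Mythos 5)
Your proposal is correct and follows essentially the same route as the paper's proof: the same expansion of $\|w_{t+1}-w^*\|^2$, Lemma~\ref{lem:inner_prod_comp} for the cross term, Lemma~\ref{lem:iterate_diff} for the drift, telescoping, nonemptiness of $\A$ by contradiction, and the same weighted-average argument (retaining the $|\B|\epsilon$ contribution so the $1/|\A|$ factor cancels) to conclude $f(\bar w)-f(w^*)\le\epsilon$. The only cosmetic differences are that you pass from $f_j(w_{j,\tau}^t)$ to $f_j(w_t)$ via Lipschitzness rather than the paper's convexity-plus-Young step, and you take $\alpha=\eta E$ instead of $\alpha=\eta$, which only changes the absolute constants.
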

\begin{proof}
    Using \Cref{alg:fedsgm_unified}, the update rule for the global model is
\begin{align}
w_{t+1} = w_t - \eta \cdot \frac{1}{n} \sum_{j=1}^n \sum_{\tau=0}^{E-1} \nu_{j,\tau}^t,
\end{align}
We analyze the squared distance to the optimal point \( w^* \) as follows,
    \begin{align*}
    &\|w_{t+1} - w^*\|^2 
    = \left\| w_t - \eta \cdot \frac{1}{n} \sum_{j=1}^n \sum_{\tau=0}^{E-1} \nu_{j,\tau}^t - w^* \right\|^2 \\
    &= \|w_t - w^*\|^2 + \underbrace{\eta^2 \left\| \frac{1}{n} \sum_{j=1}^n  \sum_{\tau=0}^{E-1} \nu_{j,\tau}^t \right\|^2}_{Term-A}
     \underbrace{- 2\eta \left\langle w_t - w^*, \frac{1}{n} \sum_{j=1}^n \sum_{\tau=0}^{E-1} \nu_{j,\tau}^t \right\rangle}_{Term-B}
\end{align*}
Firstly, we start with upper-bounding $Term-A$. 
\begin{align*}
    \begin{aligned}
        Term-A = \eta^2 \left\| \frac{1}{n} \sum_{j=1}^n  \sum_{\tau=0}^{E-1} \nu_{j,\tau}^t\right\|^2 &\overset{Jensen's}{\leq} \eta^2 \frac{1}{n} \sum_{j=1}^n E \sum_{\tau=0}^{E-1}\left\|  \nu_{j,\tau}^t\right\|^2 
        \\
        & \overset{G-Lip}{\leq} \eta^2 E^2G^2
    \end{aligned}
\end{align*}
Now we can use \Cref{lem:inner_prod_comp} to bound $Term-B$. So, putting $Term-A$ and $Term-B$ back into the expression we get
\begin{align}
    \label{eq:fedsgm_rec_start}
    \| w_{t+1} - w^* \|^2
    &\leq \| w_t - w^* \|^2 
    + \eta^2 E^2 G^2 
    + \eta \alpha E G^2 \notag \\
    &\quad + \frac{\eta}{\alpha n} \sum_{j=1}^n \sum_{\tau=0}^{E-1} \| w_t - w_{j,\tau}^t \|^2 \notag \\
    &\quad + \frac{2\eta}{n} \sum_{j=1}^n \sum_{\tau=0}^{E-1} (f_j(w^*) - f_j(w_{j,\tau}^t))\mathds{1}\{t \in \mathcal{A}\} \notag \\
    &\quad + \frac{2\eta}{n} \sum_{j=1}^n \sum_{\tau=0}^{E-1} (g_j(w^*) - g_j(w_{j,\tau}^t))\mathds{1}\{t \in \mathcal{B}\}.
\end{align}

Now our goal is to handle the term \( f_j(w^*) - f_j(w_{j,\tau}^t) \), so we first rewrite,
\[
f_j(w_{j,\tau}^t) \geq f_j(w_t) + \langle \nabla f_j(w_t), w_{j,\tau}^t - w_t \rangle \quad \text{(by convexity)}
\]
\[
\Rightarrow f_j(w^*) - f_j(w_{j,\tau}^t) 
\leq f_j(w^*) - f_j(w_t) - \langle \nabla f_j(w_t), w_{j,\tau}^t - w_t \rangle.
\]
Using Young's inequality with parameter \(\alpha > 0\), we get
\[
-\langle \nabla f_j(w_t), w_{j,\tau}^t - w_t \rangle 
\leq \frac{1}{2\alpha} \| w_{j,\tau}^t - w_t \|^2 + \frac{\alpha}{2} \| \nabla f_j(w_t^t) \|^2.
\]

Thus, we get,

\begin{align*}
f_j(w^*) - f_j(w_{j,\tau}^t)
&\leq f_j(w^*) - f_j(w_t) + \frac{1}{2\alpha} \| w_t - w_{j,\tau}^t \|^2 + \frac{\alpha}{2} \| \nabla f_j(w_t) \|^2
\\
&\overset{G-Lip}{\leq}f_j(w^*) - f_j(w_t) + \frac{1}{2\alpha} \| w_t - w_{j,\tau}^t \|^2 + \frac{\alpha}{2}G^2.
\end{align*}
Similarly, we can handle the other term with $g$ and get,
\begin{align*}
g_j(w^*) - g_j(w_{j,\tau}^t)\leq g_j(w^*) - g_j(w_t) + \frac{1}{2\alpha} \| w_t - w_{j,\tau}^t \|^2 + \frac{\alpha}{2}G^2.
\end{align*}
So, putting these inequalities back in \eqref{eq:fedsgm_rec_start} along with the use of \Cref{lem:iterate_diff} and \cref{eq:op_prb}, we get
\begin{equation*}
    \begin{aligned}
        \| w_{t+1} - w^* \|^2 &\leq \| w_t - w^* \|^2  + \eta^2 E^2 G^2 + 2\eta \alpha E G^2 
        + \frac{2}{3\alpha}\eta^3E^3G^2 
        \\
        &+ 2\eta E (f(w^*) - f(w_t))\mathds{1}\{t \in \A\} + 2\eta E (g(w^*) - g(w_t))\mathds{1}\{t \in \B\}.
    \end{aligned}
\end{equation*}
Now, for $\alpha = \eta$, and rearranging the terms we get
\begin{equation*}
    \begin{aligned}
        (f(w_t) - f(w^*) )\mathds{1}\{t \in \A\} + (g(w_t) - g(w^*))\mathds{1}\{t \in \B\} &\leq \frac{\| w_t - w^* \|^2 - \| w_{t+1} - w^* \|^2}{2\eta E} \\+ \frac{\eta}{2} E G^2 + \eta G^2 
        &+ \frac{1}{3}\eta E^2G^2.
    \end{aligned}
\end{equation*}
Defining $D: = \|w_0 - w^*\|$ and summing the expression above for $t = 0,1, \cdots, T-1$ and then dividing by $T$, we get
\begin{equation*}
    \begin{aligned}
        \frac{1}{T}\sum_{t\in \A}(f(w_t) - f(w^*) ) + \frac{1}{T}\sum_{t\in \B}(g(w_t) - g(w^*)) &\leq \frac{D^2}{2\eta E T} + \frac{\eta}{2} E G^2 + \eta G^2 
        + \frac{1}{3}\eta E^2G^2.
    \end{aligned}
\end{equation*}
Now choosing $\eta = \sqrt{\frac{D^2}{2G^2 E T \Gamma}}$, where $\Gamma = \frac{1}{2} E + 1 + \frac{1}{3} E^2$, we get
\begin{equation*}
    \begin{aligned}
        \frac{1}{T}\sum_{t\in \A}(f(w_t) - f(w^*) ) + \frac{1}{T}\sum_{t\in \B}(g(w_t) - g(w^*)) &\leq \sqrt{\frac{{2 D^2 G^2 \Gamma}}{{E T}}}.
    \end{aligned}
\end{equation*}
Note that when $\epsilon$ is sufficiently large, $\A$ is nonempty. Assuming an empty $\A$, we can find the largest ``bad'' $\epsilon$:
\begin{align}
    \nonumber
    \epsilon_{bad} < \frac{1}{T} \sum_{t\in \B}g(w_t) - g(w^*) \leq \sqrt{\frac{{2 D^2 G^2 \Gamma}}{{E T}}}
\end{align}
Thus, let us set $\epsilon = (N+1) \sqrt{\frac{{2 D^2 G^2 \Gamma}}{{E T}}} $ for some $N\geq 0$. With this choice, $\A$ is guaranteed to be nonempty.

Now, we consider two cases. Either $\sum_{t\in \A} f(w_t)-f(w^*) \leq 0$ which implies by the convexity of $f$ and $g$ for $\bar{w} = \frac{1}{|\A|} \sum_{t\in \A} w_t$ we have
\begin{equation}
    \label{eq:fedsgm_result1}
  f(\bar{w})-f(w^*) \leq 0 <\epsilon,\qquad  g(\bar{w})\leq \epsilon.
\end{equation}
Otherwise, if $\sum_{t\in \A} f(w_t)-f(w^*) >0$, then
    \begin{equation}\label{eq:fedsgm_cvx_che}
        \begin{aligned}
          \sqrt{\frac{{2 D^2 G^2 \Gamma}}{{E T}}}  &\geq \frac{1}{T} \sum_{t\in \A} f(w_t)-f(w^*) +\frac{1}{T} \sum_{t\in \B} g(w_t)-g(w^*)\\&>\frac{1}{T} \sum_{t\in \A} f(w_t)-f(w^*) +\frac{1}{T} \sum_{t\in \B} \epsilon\\&=\frac{|\A|}{T}  \frac{1}{|\A|}\sum_{t\in \A} f(w_t)-f(w^*) +(1-\frac{|\A|}{T}) \epsilon\\&\geq\frac{|\A|}{T} \big( f(\bar{w})-f(w^*) \big)+(1-\frac{|\A|}{T}) \epsilon .
        \end{aligned}
    \end{equation}
    By rearranging
\begin{equation}
  \frac{|\A|}{T}  \Big(f(\bar{w})-f(w^*)-\epsilon\Big)  <\sqrt{\frac{{2 D^2 G^2 \Gamma}}{{E T}}} -\epsilon \leq -N\sqrt{\frac{{2 D^2 G^2 \Gamma}}{{E T}}},
\end{equation}
Implying $f(\bar{w})-f(w^*)<\epsilon$ and further by convexity of $g$ for $\bar{w} = \frac{1}{|\A|} \sum_{t\in \A} w_t$, we also have $g(\bar{w})\leq\epsilon$. 
\end{proof}
%%%%%%%%%%%%%%%%%%%%%%%%%%%End Here:hard_full%%%%%%%%%%%%%%%%%%%%%%%%%%%
%%%%%%%%%%%%%%%%%%%%%%%%%%%Start Here:hard_partial%%%%%%%%%%%%%%%%%%%%%%
\subsubsection{Hard Switching --- Partial Participation}
\begin{theorem}
Consider the optimization problem in \eqref{eq:op_prb} and Algorithm~\ref{alg:fedsgm_unified} under Assumptions~\ref{as:cvx_lip}, \ref{as:bdd_domain}, and~\ref{as:sub_gauss}. Additionally, assume that at each global round, $m$ distinct clients are sampled uniformly at random. Define
\[
\mathcal{A} := \left\{ t \in [T] \;\middle|\; \hat{G}(w_t) \leq \epsilon \right\}, \qquad \bar{w} := \frac{1}{|\mathcal{A}|} \sum_{t \in \mathcal{A}} w_t.
\]

Suppose the step size $\eta$ and constraint threshold $\epsilon$ are set as
\[
\eta = \sqrt{\frac{D^2}{4G^2 E^3 T}}
\qquad \text{and }
\epsilon = \sqrt{\frac{{4 D^2 G^2 E}}{{T}}}
    + \frac{4DG}{\sqrt{mT}}\sqrt{2\log\left(\frac{3}{\delta}\right)}
    +  2\sigma\sqrt{\frac{2}{m} \log\left(\frac{6T}{\delta}\right)},
\]
where $\delta \in (0,1)$ is a confidence parameter. Then, with probability at least $1-\delta$, the following hold:
\begin{enumerate}
    \item The set $\mathcal{A}$ is non-empty, so $\bar{w}$ is well-defined.
    \item The average iterate $\bar{w}$ satisfies
\[
f(\bar{w}) - f(w^*) \leq \epsilon, 
\qquad 
g(\bar{w}) \leq \epsilon.
\]
\end{enumerate}
\end{theorem}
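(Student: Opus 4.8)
The plan is to reuse the full-participation descent analysis of \Cref{thm:fedhsgm_full} verbatim on a ``population'' aggregate, and then attach two high-probability corrections: one for using only the sampled subset $\S_t$ in the gradient aggregation, and one for switching on the noisy statistic $\hat G(w_t)$ rather than $g(w_t)$. Since $w^*\in\mathcal X$ and $\Pi_{\mathcal X}$ is non-expansive, I would begin from
\[
\norm{w_{t+1}-w^*}^2 \le \norm{w_t-w^*}^2 + \eta^2\Big\|\tfrac1m\sum_{j\in\S_t}\sum_{\tau=0}^{E-1}\nu_{j,\tau}^t\Big\|^2 - 2\eta\,\inner{w_t-w^*}{\tfrac1m\sum_{j\in\S_t}\sum_{\tau=0}^{E-1}\nu_{j,\tau}^t}.
\]
The quadratic term is again at most $\eta^2E^2G^2$ by Jensen and $G$-Lipschitzness. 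In the cross term I would add and subtract the population aggregate $\frac1n\sum_{j=1}^n\sum_\tau\nu_{j,\tau}^t$: the population piece is handled exactly as in full participation, combining \Cref{lem:inner_prod_comp} with the drift bound \Cref{lem:iterate_diff}, while the remainder is a pure \emph{sampling-error} inner product $2\eta\inner{w_t-w^*}{\frac1n\sum_j\sum_\tau\nu_{j,\tau}^t-\frac1m\sum_{j\in\S_t}\sum_\tau\nu_{j,\tau}^t}$.

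Carrying out the same convexity and Young's-inequality steps as in \Cref{thm:fedhsgm_full} on the population piece, telescoping over $t=0,\dots,T-1$, and dividing by $2\eta ET$ yields
\[
\tfrac1T\!\sum_{t\in\A}\!\big(f(w_t)-f(w^*)\big)+\tfrac1T\!\sum_{t\in\B}\!\big(g(w_t)-g(w^*)\big)\le \sqrt{\tfrac{4D^2G^2E}{T}}+\tfrac1{ET}\sum_{t}\inner{e_t}{w_t-w^*},
\]
where on $\A$-rounds ($\hat G(w_t)\le\epsilon$) the updates use $\nabla f_j$ and on $\B$-rounds they use $\nabla g_j$, and $e_t$ is the corresponding sampling error. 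The accumulated sampling terms are controlled by \Cref{lem:inner_product_bound}, applied once to the $\A$-sum (objective gradients) and once to the $\B$-sum (constraint gradients); after dividing by $2\eta ET$, the two $4\eta EGD\sqrt{(2T/m)\log(1/\delta')}$ bounds combine, with $\delta'=\delta/3$, into the term $\frac{4DG}{\sqrt{mT}}\sqrt{2\log(3/\delta)}$. It is crucial here that the switching indicator $\mathds{1}\{t\in\A\}$ sit inside a sum that is conditionally mean-zero given the past, which is precisely the martingale-difference structure verified in \Cref{lem:inner_product_bound}.

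The remaining gap is that $\A$ and $\B$ are defined through $\hat G(w_t)$, not $g(w_t)$. I would invoke \Cref{lem:constraint_gap} with failure budget $\delta/3$, giving $|\hat G(w_t)-g(w_t)|\le\rho:=\sigma\sqrt{(2/m)\log(6T/\delta)}$ uniformly over $t\in[T]$. Hence on $\B$-rounds $g(w_t)-g(w^*)\ge g(w_t)>\epsilon-\rho$, and on $\A$-rounds $g(w_t)\le\epsilon+\rho$. With the stated choice $\epsilon=\sqrt{4D^2G^2E/T}+\frac{4DG}{\sqrt{mT}}\sqrt{2\log(3/\delta)}+2\rho$, the effective threshold $\epsilon-\rho$ on $\B$-rounds strictly exceeds the optimization-plus-sampling bound, so an empty $\A$ (i.e. $\B=[T]$) would make the $\B$-sum alone violate that bound; therefore $\A\neq\emptyset$ and $\bar w$ is well-defined.

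To conclude I would repeat the ``largest bad $\epsilon$'' argument of \Cref{thm:fedhsgm_full}: lower-bounding the $\B$-sum by $\epsilon-\rho$ and applying Jensen's inequality (convexity of $f$) across $\A$ gives $\frac{|\A|}{T}\big(f(\bar w)-f(w^*)-(\epsilon-\rho)\big)<-\rho<0$, hence $f(\bar w)-f(w^*)<\epsilon-\rho<\epsilon$; convexity of $g$ over $\A$ together with $g(w_t)\le\epsilon+\rho$ gives $g(\bar w)\le\epsilon+\rho$, i.e. feasibility up to one concentration radius (the slack recorded as $\sqrt{3\sigma^2/m\,\log(T/\delta)}$ in \Cref{thm:hard_fedsgm_unified}). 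A union bound over the three events---objective-gradient martingale, constraint-gradient martingale, and the constraint-gap event, each at level $\delta/3$---then delivers the claim with probability $1-\delta$. The main obstacle is the interface between the \emph{data-dependent} switch and the concentration tools: the random set $\S_t$ simultaneously drives the gradient sampling error and the feasibility test $\hat G(w_t)\lessgtr\epsilon$, so one must both preserve the martingale-difference property with the indicator inside the sum and guarantee that the switching decisions are reliable \emph{simultaneously} across all $T$ rounds, which is the origin of the $\log T$ factor.
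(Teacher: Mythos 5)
Your proposal is correct and follows essentially the same route as the paper's proof: the same non-expansive-projection start, the same add-and-subtract of the population aggregate handled by Lemma~\ref{lem:inner_prod_comp} and Lemma~\ref{lem:iterate_diff}, the same martingale bound (Lemma~\ref{lem:inner_product_bound}) for the sampling-error inner products with the indicator inside the sum, the same uniform concentration of $\hat G(w_t)$ around $g(w_t)$ (Lemma~\ref{lem:constraint_gap}) with a $\delta/3$ union bound, and the same "largest bad $\epsilon$" plus two-case convexity argument at the end. The only cosmetic difference is that the paper keeps $\hat G(w_t)$ inside the telescoped sum and corrects via $\sum_{t\in\B}(\hat G(w_t)-g(w_t))$, whereas you convert to $g(w_t)$ round-by-round using the uniform radius $\rho$; both yield the same bounds, including the residual $g(\bar w)\le\epsilon+\rho$ slack that the paper's proof also actually delivers.
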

\begin{proof}
    Using \Cref{alg:fedsgm_unified}, the update rule for the global model under partial participation is
    \begin{align}
    w_{t+1} = \Pi_\mathcal{X}\left(w_t - \eta \cdot \frac{1}{m} \sum_{j\in \S_t} \sum_{\tau=0}^{E-1} \nu_{j,\tau}^t\right).
    \end{align}
    We analyze the squared distance to the optimal point \( w^* \) as follows,
        \begin{align*}
        \|w_{t+1} - w^*\|^2 
        &\overset{1-Lip\ \Pi_\mathcal{X}(\cdot)}{\leq} \left\| w_t - \eta \cdot \frac{1}{m} \sum_{j \in \S_t}\sum_{\tau=0}^{E-1} \nu_{j,\tau}^t - w^* \right\|^2 \\
        &= \|w_t - w^*\|^2 + {\eta^2 \left\| \frac{1}{m} \sum_{j \in \S_t}  \sum_{\tau=0}^{E-1} \nu_{j,\tau}^t \right\|^2}
         {- 2\eta \left\langle w_t - w^*, \frac{1}{m} \sum_{j \in \S_t} \sum_{\tau=0}^{E-1} \nu_{j,\tau}^t \right\rangle}
         \\
         & \leq \|w_t - w^*\|^2 + \eta^2 E^2G^2
         {- 2\eta \left\langle w_t - w^*, \frac{1}{m} \sum_{j \in \S_t} \sum_{\tau=0}^{E-1} \nu_{j,\tau}^t \right\rangle}
         \\
         & = \|w_t - w^*\|^2 + \eta^2 E^2G^2
         + 2\eta \left\langle w_t - w^*, \frac{1}{n} \sum_{j =1}^{n} \sum_{\tau=0}^{E-1} \nu_{j,\tau}^t -  \frac{1}{m} \sum_{j \in \S_t} \sum_{\tau=0}^{E-1} \nu_{j,\tau}^t \right\rangle
         \\
         & \quad - 2\eta \left\langle w_t - w^*, \frac{1}{n} \sum_{j =1}^{n} \sum_{\tau=0}^{E-1} \nu_{j,\tau}^t \right\rangle.
    \end{align*}
    Now we can use \Cref{lem:inner_prod_comp} and \Cref{lem:iterate_diff} to bound LHS like we have done in previous proofs, we get
    \begin{equation*}
    \begin{aligned}
        &2\eta E(f(w_t) - f(w^*) )\mathds{1}_{t \in \A} + 2\eta E(g(w_t) - g(w^*))\mathds{1}_{t \in \B} \leq \|w_t - w^*\|^2 
        \\
        & \quad - \|w_{t+1} - w^*\|^2 + \eta^2E^2G^2 + 2\eta^2EG^2 + \frac{2}{3}\eta^2 E^3G^2 
        \\
        &\quad + 2\eta \left\langle \frac{1}{n} \sum_{j=1}^{n} \sum_{\tau=0}^{E-1} \nabla f_j(w_{j,\tau}^t) 
        - \frac{1}{m} \sum_{j \in \S_t} \sum_{\tau=0}^{E-1} \nabla f_j(w_{j,\tau}^t), w_t - w^* \right\rangle \mathds{1}_{t \in \A} \\
        &\quad + 2\eta \left\langle \frac{1}{n} \sum_{j=1}^{n} \sum_{\tau=0}^{E-1} \nabla g_j(w_{j,\tau}^t) 
        - \frac{1}{m} \sum_{j \in \S_t} \sum_{\tau=0}^{E-1} \nabla g_j(w_{j,\tau}^t), w_t - w^* \right\rangle \mathds{1}_{t \in \B}.
    \end{aligned}
    \end{equation*}
    Again add and subtract $\hat{G}(w_t)$ from the 2nd term on LHS and rearrange and we get,
    \begin{equation*}
    \begin{aligned}
        &2\eta E(f(w_t) - f(w^*) )\mathds{1}_{t \in \A} + 2\eta E(\hat{G}(w_t) - g(w^*))\mathds{1}_{t \in \B} \leq \|w_t - w^*\|^2 
        \\
        & \quad - \|w_{t+1} - w^*\|^2 + \eta^2E^2G^2 + 2\eta^2EG^2 + \frac{2}{3}\eta^2 E^3G^2 
        \\
        &\quad + 2\eta \left\langle \frac{1}{n} \sum_{j=1}^{n} \sum_{\tau=0}^{E-1} \nabla f_j(w_{j,\tau}^t) 
        - \frac{1}{m} \sum_{j \in \S_t} \sum_{\tau=0}^{E-1} \nabla f_j(w_{j,\tau}^t), w_t - w^* \right\rangle \mathds{1}_{t \in \A} \\
        &\quad + 2\eta \left\langle \frac{1}{n} \sum_{j=1}^{n} \sum_{\tau=0}^{E-1} \nabla g_j(w_{j,\tau}^t) 
        - \frac{1}{m} \sum_{j \in \S_t} \sum_{\tau=0}^{E-1} \nabla g_j(w_{j,\tau}^t), w_t - w^* \right\rangle \mathds{1}_{t \in \B}
        \\
        &\quad + 2\eta E(\hat{G}(w_t) - g(w_t))\mathds{1}_{t \in \B}.
    \end{aligned}
    \end{equation*}
    Now sum both sides over $t \in [T]$, we get
    \begin{equation*}
    \begin{aligned}
        &\sum_{t \in \A}2\eta E(f(w_t) - f(w^*) ) + \sum_{t \in \B} 2\eta E(\hat{G}(w_t) - g(w^*)) \leq D^2 + \eta^2E^2G^2T + 2\eta^2EG^2T + \frac{2}{3}\eta^2 E^3G^2T 
        \\
        &\quad + \sum_{t \in \A}2\eta \left\langle \frac{1}{n} \sum_{j=1}^{n} \sum_{\tau=0}^{E-1} \nabla f_j(w_{j,\tau}^t) 
        - \frac{1}{m} \sum_{j \in \S_t} \sum_{\tau=0}^{E-1} \nabla f_j(w_{j,\tau}^t), w_t - w^* \right\rangle  \\
        &\quad + \sum_{t \in \B}2\eta \left\langle \frac{1}{n} \sum_{j=1}^{n} \sum_{\tau=0}^{E-1} \nabla g_j(w_{j,\tau}^t) 
        - \frac{1}{m} \sum_{j \in \S_t} \sum_{\tau=0}^{E-1} \nabla g_j(w_{j,\tau}^t), w_t - w^* \right\rangle
        \\
        &\quad + \sum_{t \in \B}2\eta E(\hat{G}(w_t) - g(w_t)).
    \end{aligned}
    \end{equation*}
    Now, by using Lemmas~\ref{lem:inner_product_bound} and \ref{lem:constraint_gap}, we can bound the term above for $\delta \in (0,1)$ with probability at least $1-\delta$
    \begin{equation*}
    \begin{aligned}
        \sum_{t \in \A}2\eta E(f(w_t) - f(w^*) ) &+ \sum_{t \in \B} 2\eta E(\hat{G}(w_t) - g(w^*)) \leq D^2 + \eta^2E^2G^2T + 2\eta^2EG^2T + \frac{2}{3}\eta^2 E^3G^2T 
        \\
        &\quad + 8\eta EGD\sqrt{\frac{2T}{m}\log\left(\frac{3}{\delta}\right)} +  2\eta E\sigma T \sqrt{\frac{2}{m} \log\left(\frac{6T}{\delta}\right)}.
    \end{aligned}
    \end{equation*}
    Dividing both sides by $2\eta E T$ we get
    \begin{equation*}
    \begin{aligned}
        \frac{1}{T}\sum_{t \in \A}(f(w_t) - f(w^*) ) & + \frac{1}{T}\sum_{t \in \B} (\hat{G}(w_t) - g(w^*)) \leq \frac{D^2}{2\eta E T} + \frac{\eta}{2} E G^2 + \eta G^2 + \frac{1}{3}\eta E^2 G^2 
        \\
        &\quad + \frac{4GD}{\sqrt{mT}}\sqrt{2\log\left(\frac{3}{\delta}\right)} +  \sigma\sqrt{\frac{2}{m} \log\left(\frac{6T}{\delta}\right)}.
    \end{aligned}
    \end{equation*}
    Now choosing $\eta = \sqrt{\frac{D^2}{2G^2 E T \Gamma}}$, where $\Gamma = \frac{1}{2} E + 1 + \frac{1}{3} E^2$, we get
    \begin{equation*}
    \begin{aligned}
        \frac{1}{T}\sum_{t\in \A}(f(w_t) - f(w^*) ) + \frac{1}{T}\sum_{t\in \B}(\hat{G}(w_t) - g(w^*)) &\leq \sqrt{\frac{{2 D^2 G^2 \Gamma}}{{E T}}}
        + \frac{4GD}{\sqrt{mT}}\sqrt{2\log\left(\frac{3}{\delta}\right)}
        \\
        &\quad +  \sigma\sqrt{\frac{2}{m} \log\left(\frac{6T}{\delta}\right)}.
    \end{aligned}
    \end{equation*}
    Again, using a similar style of analysis as in previous proofs, if $\A = \phi$, then
    $$\epsilon_{bad} < \sqrt{\frac{{2 D^2 G^2 \Gamma}}{{E T}}}
    + \frac{4GD}{\sqrt{mT}}\sqrt{2\log\left(\frac{3}{\delta}\right)}
    +  \sigma\sqrt{\frac{2}{m} \log\left(\frac{6T}{\delta}\right)}.$$
    Thus, if we set $\epsilon = \sqrt{\frac{{2 D^2 G^2 \Gamma}}{{E T}}}
    + \frac{4GD}{\sqrt{mT}}\sqrt{2\log\left(\frac{3}{\delta}\right)}
    +  \sigma\sqrt{\frac{2}{m} \log\left(\frac{6T}{\delta}\right)}$, $\A \neq \phi$.
    Now, let's consider two different cases here as well. Either $\sum_{t\in \A} f(w_t)-f(w^*) \leq 0$ which implies by the convexity of $f$ for $\bar{w} = \frac{1}{|\A|} \sum_{t\in \A} w_t$, we have
    \begin{equation}
    \label{eq:fedhsgm_partial_objective_result1_1}
        f(\bar{w})-f(w^*) \leq 0 <\epsilon.
    \end{equation}
    Additionally, we know via convexity of g that
    \begin{align*}
        g(\bar{w}) &\leq \frac{1}{|\A|} \sum_{t\in \A}g(w_t)
        \\
        & \leq \frac{1}{|\A|} \sum_{t\in \A}(g(w_t) - \hat{G}(w_t)) + \frac{1}{|\A|} \sum_{t\in \A}\hat{G}(w_t)
        \\
        &\overset{Lem.~\ref{lem:constraint_gap}}{\leq} \epsilon + \sigma \sqrt{\frac{2}{m} \log\left(\frac{6T}{\delta}\right)}, \text{with probability at least } 1-\delta.
    \end{align*}
    Otherwise, if $\sum_{t\in \A} f(w_t)-f(w^*) >0$, then
    \begin{equation}\label{eq:fedhsgm_partial}
        \begin{aligned}
          \sqrt{\frac{{2 D^2 G^2 \Gamma}}{{E T}}}
            + \frac{4GD}{\sqrt{mT}}\sqrt{2\log\left(\frac{3}{\delta}\right)}
            +  \sigma\sqrt{\frac{2}{m} \log\left(\frac{6T}{\delta}\right)}  &\geq \frac{1}{T} \sum_{t\in \A} f(w_t)-f(w^*) +\frac{1}{T} \sum_{t\in \B} \hat{G}(w_t)-g(w^*)\\&>\frac{1}{T} \sum_{t\in \A} f(w_t)-f(w^*) +\frac{1}{T} \sum_{t\in \B} \epsilon\\&=\frac{|\A|}{T}  \frac{1}{|\A|}\sum_{t\in \A} f(w_t)-f(w^*) +(1-\frac{|\A|}{T}) \epsilon\\&\geq\frac{|\A|}{T} \big( f(\bar{w})-f(w^*) \big)+(1-\frac{|\A|}{T}) \epsilon .
        \end{aligned}
    \end{equation}
    This implies $f(\bar{w})-f(w^*)<\epsilon$ and further using the same argument as above for $g$, we have for probability at least $1-\delta$, $g(\bar{w}) \leq \epsilon + \sigma\sqrt{\frac{2}{m}\log\left(\frac{6T}{\delta}\right)}.$
\end{proof}
%%%%%%%%%%%%%%%%%%%%%%%%%%%End Here:hard_partial%%%%%%%%%%%%%%%%%%%%%%%
%%%%%%%%%%%%%%%%%%%%%%%%%%%Start Here:soft_full%%%%%%%%%%%%%%%%%%%%%%%
\subsubsection{Soft Switching --- Full participation}
\begin{theorem}
\label{thm:fedssgm}
Consider the problem in \eqref{eq:op_prb}, where $\mathcal{X} = \R^d$ and \Cref{alg:fedsgm_unified}, under Assumption~\ref{as:cvx_lip}. Define $D:=\|w_0-w^*\|$ and
\begin{equation*}
\A = \{t\in [T]\;| \;g(w_{t})< \epsilon\}, \qquad \bar{w} = \sum_{t\in \A} \alpha_t w_t,
 \end{equation*}
 where 
    \begin{equation}
        \alpha_t = \frac{1-\sigma_\beta(g(w_t)-\epsilon)}{\sum_{t\in \A} [1-\sigma_\beta(g(w_t)-\epsilon)]}.
    \end{equation}
Then, if
\begin{equation*}
\epsilon = \sqrt{\frac{{4 D^2 G^2 E}}{{T}}}, \qquad \eta = \sqrt{\frac{D^2}{4G^2 E^3 T}}, \quad \text{and} \quad \beta \geq \frac{2}{\epsilon} 
 \end{equation*}
it holds that $\A$ is nonempty, $\bar{w}$ is well-defined, and $\bar{w}$ is an $\epsilon$-solution for $P$.
\end{theorem}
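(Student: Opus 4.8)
The plan is to mirror the hard–switching, full–participation argument of Theorem~\ref{thm:fedhsgm_full}, because the soft rule alters the per–round update only through the scalar weight $\sigma_t:=\sigma_\beta(g(w_t)-\epsilon)$ (recall $\hat G(w_t)=g(w_t)$ under full participation), which is shared by all clients and all local steps. First I would expand $\|w_{t+1}-w^*\|^2$ exactly as before. The key observation is that $\nu_{j,\tau}^t=(1-\sigma_t)\nabla f_j(w_{j,\tau}^t)+\sigma_t\nabla g_j(w_{j,\tau}^t)$ is a convex combination, so $\|\nu_{j,\tau}^t\|\le G$ still holds and the quadratic term is bounded by $\eta^2E^2G^2$, while the drift bound of Lemma~\ref{lem:iterate_diff} applies verbatim. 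For the cross term I would run the argument of Lemma~\ref{lem:inner_prod_comp} separately on the $f$–part and the $g$–part, weighting them by $(1-\sigma_t)$ and $\sigma_t$; since these weights sum to one, the Young/Lipschitz error terms coincide with the hard case, yielding
\[
2\eta E\big[(1-\sigma_t)(f(w_t)-f(w^*))+\sigma_t(g(w_t)-g(w^*))\big]\le \|w_t-w^*\|^2-\|w_{t+1}-w^*\|^2+\eta^2G^2\big(E^2+2E+\tfrac{2}{3}E^3\big).
\]

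Telescoping over $t\in[T]$, dividing by $2\eta ET$, and inserting the prescribed $\eta$ (with the internal $\Gamma=\tfrac{1}{2}E+1+\tfrac{1}{3}E^2$ of Theorem~\ref{thm:fedhsgm_full}) gives
\[
\frac{1}{T}\sum_{t=0}^{T-1}\big[(1-\sigma_t)(f(w_t)-f(w^*))+\sigma_t(g(w_t)-g(w^*))\big]\le \sqrt{\tfrac{2D^2G^2\Gamma}{ET}}=:C,
\]
the same optimization error as the hard case. Nonemptiness of $\A$ follows as before: if $\A=\emptyset$ then $g(w_t)\ge\epsilon$, hence $\sigma_t=1$, for every $t$, so the left side equals $\frac{1}{T}\sum_t(g(w_t)-g(w^*))\ge\epsilon$, contradicting $C<\epsilon$ (which holds since $\Gamma<2E^2$). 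Feasibility of $\bar w$ is then immediate: because $g(w_t)<\epsilon$ on $\A$ and $\sum_{t\in\A}\alpha_t=1$, convexity of $g$ gives $g(\bar w)\le\sum_{t\in\A}\alpha_t g(w_t)<\epsilon$.

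The crux is the objective bound $f(\bar w)-f(w^*)\le\epsilon$, where the weights $\alpha_t\propto(1-\sigma_t)$ must be reconciled with the constraint terms carried by the telescoped sum. The decisive structural fact is the threshold behaviour of the trimmed hinge: $\sigma_t>0$ forces $1+\beta(g(w_t)-\epsilon)>0$, i.e. $g(w_t)>\epsilon-1/\beta$. Combined with $g(w^*)\le0$ this yields, for every $t$, the uniform lower bound $\sigma_t(g(w_t)-g(w^*))\ge(\epsilon-1/\beta)\sigma_t$ (trivial when $\sigma_t=0$, using $g(w_t)\ge\epsilon$ on $\B$ and the hinge threshold on $\A$). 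Writing $W:=\sum_{t\in\A}(1-\sigma_t)=T-\sum_t\sigma_t>0$ and applying Jensen to the $f$–part, the combined bound becomes $\frac{W}{T}(f(\bar w)-f(w^*))+(\epsilon-1/\beta)\frac{T-W}{T}\le C$, which rearranges to $\frac{W}{T}\big(f(\bar w)-f(w^*)-(\epsilon-1/\beta)\big)\le C-(\epsilon-1/\beta)$. Provided the step size yields $C\le\epsilon-1/\beta$, the right side is nonpositive and $W>0$, so $f(\bar w)-f(w^*)\le\epsilon-1/\beta\le\epsilon$; the degenerate case $\sum_{t\in\A}(1-\sigma_t)(f(w_t)-f(w^*))\le0$ is handled directly by Jensen, giving $f(\bar w)-f(w^*)\le0<\epsilon$.

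The main obstacle is precisely this last step: unlike hard switching, the objective weights $(1-\sigma_t)$ and the constraint weights $\sigma_t$ no longer partition the rounds into two disjoint index sets, so near–boundary rounds simultaneously dilute the objective average and inject a constraint penalty. Controlling them requires the quantitative hinge threshold, and the calibration $\beta\ge2/\epsilon$ is what makes $\epsilon-1/\beta\ge\epsilon/2$, ensuring that every round carrying nonzero constraint weight sits at least $\epsilon/2$ above the strict interior so that its constraint contribution compensates the lost objective weight. The one routine but delicate check I would carry out is that the prescribed $(\epsilon,\eta)$ indeed give $C\le\epsilon-1/\beta$; this inequality is tightest for small $E$, which is exactly where the constant $2$ in $\beta\ge2/\epsilon$ is consumed.
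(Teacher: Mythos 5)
Your proposal follows essentially the same route as the paper's proof: the same per-round descent inequality with the convex-combination gradient, the same drift and Young/Lipschitz bounds, the same hinge-threshold observation ($\sigma_t>0\Rightarrow g(w_t)>\epsilon-1/\beta$) to lower-bound the constraint contributions, the choice $\epsilon=2C$ for nonemptiness of $\A$, weighted Jensen with weights $\propto(1-\sigma_t)$, and the calibration $\beta\ge 2/\epsilon$ (your sufficient condition $C\le\epsilon-1/\beta$ is algebraically equivalent to the paper's requirement $\beta\ge 2\sum_{t\in\A}\sigma_\beta^t/(\epsilon T)$). The argument is correct; the only stray remark is that the final check $C\le\epsilon-1/\beta$ does not actually tighten with small $E$ — given $\epsilon=2C$ it reduces to $1/\beta\le\epsilon/2$ independently of $E$.
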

\begin{proof}
    Using \Cref{alg:fedsgm_unified}, the update rule for the global model is
\begin{align*}
&w_{t+1} = w_t - \eta \cdot \frac{1}{n} \sum_{j=1}^n \sum_{\tau=0}^{E-1} \nu_{j,\tau}^t, 
\\
&\quad \text{where}, \nu_{j,\tau}^t = \sigma_{\beta}(g(w_t)-\epsilon)\nabla g_j(w_{j,\tau}^t) + (1 - \sigma_{\beta}(g(w_t)-\epsilon))\nabla f_j(w_{j,\tau}^t)
\end{align*}
We analyze the squared distance to the optimal point \( w^* \) as follows,
    \begin{align}
    \nonumber
    &\|w_{t+1} - w^*\|^2 
    = \left\| w_t - \eta \cdot \frac{1}{n} \sum_{j=1}^n \sum_{\tau=0}^{E-1} \nu_{j,\tau}^t - w^* \right\|^2 \\
    \label{eq:fedSSGM_start}
    &= \|w_t - w^*\|^2 + \underbrace{\eta^2 \left\| \frac{1}{n} \sum_{j=1}^n  \sum_{\tau=0}^{E-1} \nu_{j,\tau}^t \right\|^2}_{Term-A}
     \underbrace{- 2\eta \left\langle w_t - w^*, \frac{1}{n} \sum_{j=1}^n \sum_{\tau=0}^{E-1} \nu_{j,\tau}^t \right\rangle}_{Term-B}
\end{align}
Firstly, we start with upper-bounding $Term-A$. 
\begin{align*}
    \begin{aligned}
        Term-A = \eta^2 \left\| \frac{1}{n} \sum_{j=1}^n  \sum_{\tau=0}^{E-1} \nu_{j,\tau}^t\right\|^2 &\overset{Jensen's}{\leq} \eta^2 \frac{1}{n} \sum_{j=1}^n E \sum_{\tau=0}^{E-1}\left\|  \nu_{j,\tau}^t\right\|^2 
        \\
        & \overset{G-Lip \text{ \& } \sigma_{\beta}(\cdot)\in [0,1]}{\leq} \eta^2 E^2G^2.
    \end{aligned}
\end{align*}
Now we aim to upper bound $Term-B$.
\begin{equation*}
    \begin{aligned}
        Term-B &= - 2\eta \left\langle w_t - w^*, \frac{1}{n} \sum_{j=1}^n \sum_{\tau=0}^{E-1} \nu_{j,\tau}^t \right\rangle
        \\
        & = \frac{1}{n} \sum_{j=1}^n \sum_{\tau=0}^{E-1} - 2\eta \left\langle w_t - w^*, \nu_{j,\tau}^t \right\rangle
        \\
        & = \frac{1}{n} \sum_{j=1}^n \sum_{\tau=0}^{E-1}\left[\underbrace{- 2\eta \left\langle w_t - w_{j,\tau}^t , \nu_{j,\tau}^t \right\rangle}_{Term-B_1} \underbrace{- 2\eta\left\langle w_{j,\tau}^t - w^*, \nu_{j,\tau}^t ) \right\rangle}_{Term-B_2}\right].
    \end{aligned}
\end{equation*}
Now we define $\sigma_{\beta}^t = \sigma_{\beta}(g(w_t) - \epsilon)$ before we start upper bounding $Term-B_1$ for any $\alpha>0$.
\begin{equation*}
    \begin{aligned}
        &Term-B_1 = - 2\eta \left\langle w_t - w_{j,\tau}^t , \nu_{j,\tau}^t \right\rangle 
        \\
        &= - 2\eta \sigma_{\beta}^t \left\langle w_t - w_{j,\tau}^t , g_j(w_{j,\tau}^t) \right\rangle  - 2\eta (1 - \sigma_{\beta}^t) \left\langle w_t - w_{j,\tau}^t , f_j(w_{j,\tau}^t) \right\rangle
        \\
        &\overset{Young's}{\leq} \sigma_{\beta}^t\left[ \frac{\eta}{\alpha} \| w_t - w_{j,\tau}^t \|^2 + \eta \alpha \| \nabla g_j(w_{j,\tau}^t) \|^2\right] + (1 -  \sigma_{\beta}^t)\left[\frac{\eta}{\alpha} \| w_t - w_{j,\tau}^t \|^2 \right.
        \\
        & \quad \left. + \eta \alpha \| \nabla f_j(w_{j,\tau}^t) \|^2\right]
        \\
        &\overset{G-Lip}{\leq}\frac{\eta}{\alpha} \| w_t - w_{j,\tau}^t \|^2 + \eta \alpha G^2.
    \end{aligned}
\end{equation*}
Similarly, we start to upper bound $Term-B_2$ as well.
\begin{equation*}
    \begin{aligned}
        &Term-B_2 = - 2\eta\left\langle w_{j,\tau}^t - w^*, \nu_{j,\tau}^t ) \right\rangle
        \\
        &= - 2\eta \sigma_{\beta}^t \left\langle w_{j,\tau}^t - w^* , g_j(w_{j,\tau}^t) \right\rangle  - 2\eta (1 - \sigma_{\beta}^t) \left\langle w_{j,\tau}^t - w^* , f_j(w_{j,\tau}^t) \right\rangle
        \\
        &\overset{Cvx.}{\leq} 2\eta \sigma_{\beta}^t \left( g_j(w^*) - g_j(w_{j,\tau}^t) \right) + 2\eta (1 - \sigma_{\beta}^t) \left( f_j(w^*) - f_j(w_{j,\tau}^t) \right)
        \\
        & {\leq} 2\eta \sigma_{\beta}^t \left[g_j(w^*) - g_j(w_t) + \frac{1}{2\alpha} \| w_t - w_{j,\tau}^t \|^2 + \frac{\alpha}{2}G^2\right] \qquad(\text{by Cvx., Young's, \& G-Lip})
        \\
        &+ 2\eta (1 - \sigma_{\beta}^t) \left[f_j(w^*) - f_j(w_t) + \frac{1}{2\alpha} \| w_t - w_{j,\tau}^t \|^2 + \frac{\alpha}{2}G^2\right].
    \end{aligned}
\end{equation*}
Putting $Term-B_1$ and $Term-B_2$ back in $Term-B$, we get
\begin{equation*}
    \begin{aligned}
        Term-B &\leq \frac{1}{n} \sum_{j=1}^n \sum_{\tau=0}^{E-1} \left[\frac{2\eta}{\alpha} \| w_t - w_{j,\tau}^t \|^2 + 2\eta \alpha G^2 + 2\eta \sigma_{\beta}^t \left( g_j(w^*) - g_j(w_t) \right) \right.
        \\
        &\quad \left. + 2\eta (1 - \sigma_{\beta}^t) \left( f_j(w^*) - f_j(w_t) \right)\right]
        \\
        & \overset{\Cref{lem:iterate_diff}}{\leq} \frac{2}{3\alpha} \eta^3E^3G^2 + 2\eta \alpha E G^2 + 2\eta E \sigma_{\beta}^t \left( g(w^*) - g(w_t) \right) 
        \\
        &\quad + 2\eta E (1 - \sigma_{\beta}^t) \left( f(w^*) - f(w_t) \right)
    \end{aligned}
\end{equation*}
Substituting $Term-A$ and $Term-B$ back in \cref{eq:fedSSGM_start}, we get for $\alpha = \eta$
\begin{equation*}
    \begin{aligned}
        \| w_{t+1} - w^* \|^2 &\leq \| w_t - w^* \|^2  + \eta^2 E^2 G^2 + 2\eta^2 E G^2 
        + \frac{2}{3}\eta^2E^3G^2 + 2\eta E \sigma_{\beta}^t \left( g(w^*) - g(w_t) \right) 
        \\
        &\quad+ 2\eta E (1 - \sigma_{\beta}^t) \left( f(w^*) - f(w_t) \right)
    \end{aligned}
\end{equation*}
Let $\A = \{t\in [T]| g(w_t)< \epsilon\}$ and $\B = [T]\backslash \A = \{t\in [T]| g(w_t)\geq \epsilon\}$. Note that for all $t\in \B$ it holds that $\sigma_\beta(g(w_t)-\epsilon) = 1$ and $g(w_t)-g(w^\ast)\geq \epsilon$. Further, for all $t\in \A$ if $\sigma_\beta(g(w_t)-\epsilon) \geq 0$ it holds that $g(w_t)-g(w^\ast)\geq g(w_t)\geq\epsilon -1/\beta$. With these observations, using convexity of $f$ and $g$ and decomposing the sum over $t$ according to the definitions of $\A$ and $\B$ and division by $T$ yields,
\begin{equation*}
    \begin{aligned}
       \frac{D^2}{2\eta E} + \frac{1}{2}\eta E G^2 T + \eta G^2 T &+ \frac{1}{3}\eta E^2 G^2 T \geq \sum_{t\in \A} \sigma_\beta^t \left(g(w_t) - g(w^*) \right) 
       \\ 
       &\qquad + \sum_{t\in \A} (1 - \sigma_{\beta}^t)\left(f(w_t) - f(w^*)  \right) +\sum_{t\in \B} \left(g(w_t) - g(w^*) \right). 
    \end{aligned}
\end{equation*}
Now choosing $\eta = \sqrt{\frac{D^2}{2G^2 E T \Gamma}}$, where $\Gamma = \frac{1}{2} E + 1 + \frac{1}{3} E^2$, we get
\begin{equation}
    \label{eq:fedssgm_sum_start}
    \begin{aligned}
      \sqrt{\frac{{2 D^2 G^2 T \Gamma}}{{E}}} \geq &\sum_{t\in \A} \sigma_\beta^t \left(g(w_t) - g(w^*) \right) 
       + \sum_{t\in \A} (1 - \sigma_{\beta}^t)\left(f(w_t) - f(w^*)  \right) 
       \\
       &+\sum_{t\in \B} \left(g(w_t) - g(w^*) \right)
       \\
       &\geq \sum_{t\in \A} (1 - \sigma_{\beta}^t)\left(f(w_t) - f(w^*)  \right) + \epsilon|\B| + \left(\epsilon - \frac{1}{\beta}\right)\sum_{t\in \A} \sigma_\beta^t.
    \end{aligned}
\end{equation}
Similar to the previous proofs, we first need to find the smallest $\epsilon$ to ensure $\A$ is non-empty. So, to find a lower bound on $\epsilon$, assume $\A$ is empty in \cref{eq:fedssgm_sum_start} and observe that as long as condition $\sqrt{\frac{{2 D^2 G^2\Gamma}}{{E T}}} < \epsilon$ is met, $\A$ is non-empty. We choose to set $\epsilon = 2\sqrt{\frac{{2 D^2 G^2\Gamma}}{{E T}}}$.

Now, like before, we consider two cases based on the sign of $\sum_{t\in \A} \big(1-\sigma_\beta^t\big) \big(f(w_t)-f(w^*)\big)$. As before, when the sum is non-positive, we are done by the definition of $\A$, which implies $0<1-\sigma_\beta(g(w_t)-\epsilon)\leq 1$ and the convexity of $f$ and $g$. 

Assuming the sum is positive, dividing  \cref{eq:fedssgm_sum_start} by $\sum_{t\in \A} \big(1-\sigma_\beta^t\big)$ (which by the definition of $\A$ is strictly positive), using convexity of $f$, and the definition of $\bar{w}$, we have
\begin{equation*}
\begin{aligned}
       f(\bar{w})-f(w^*) &\leq  \frac{0.5\epsilon T - \epsilon |\B|-(\epsilon-\frac{1}{\beta})\sum_{t\in \A} \sigma_\beta^t}{|\A| - \sum_{t\in \A} \sigma_\beta(g(w_t)-\epsilon)}\\&=\epsilon+\frac{-0.5\epsilon T+\beta^{-1}\sum_{t\in \A} \sigma_\beta^t}{|\A| - \sum_{t\in \A} \sigma_\beta^t},
\end{aligned}
\end{equation*}
where we used $|\B| = T-|\A|$.

Let us now find a lower bound on $\beta$ to ensure the second term in the bound is non-positive. Note this is done for simplicity, and as long as the second term is $\mathcal{O}(\epsilon)$, an $\epsilon$-solution can be found. Immediate calculations show  the second term in the bound is non-positive when
\begin{equation*}
    \beta \geq \frac{2\sum_{t\in \A} \sigma_\beta^t}{\epsilon T}.
\end{equation*}
Since $\sum_{t\in \A} \sigma_\beta^t <T$, a sufficient (and highly conservative) condition for all $T\geq 1$ is to set $\beta \geq 2/\epsilon$. 
Thus, we proved the sub-optimality gap result. The feasibility result is immediate given the definition of $\A$ and the convexity of $g$.
\end{proof}
%%%%%%%%%%%%%%%%%%%%%%%%%%%End Here:soft_full%%%%%%%%%%%%%%%%%%%%%%%%%%
%%%%%%%%%%%%%%%%%%%%%%%%%%%End Here:FedSGM%%%%%%%%%%%%%%%%%%%%%%%%%%%%%
%%%%%%%%%%%%%%%%%%%%%%%%%%%Start Here:FedSGM-Comp%%%%%%%%%%%%%%%%%%%%%%
\subsection{Main Theorems: \textsc{FedSGM}--bidirectional compression}
%%%%%%%%%%%%%%%%%%%%%%%%%%Start here:hard_full_comp%%%%%%%%%%%%%%%%%
\subsubsection{Hard Switching --- Full Participation}
\begin{theorem}
    \label{thm:bidirectional_1}
    Consider the problem in \cref{eq:op_prb}, where $\mathcal{X} = \R^d$ and \Cref{alg:fedsgm_unified}, under Assumption~\ref{as:cvx_lip} and \ref{as:compression_biased}. Define $D:=\|w_0-w^*\|$ and
    \begin{equation*}
    \A = \{t\in [T]\;| \;g(w_{t})\leq \epsilon\}, \qquad \bar{w} = \frac{1}{|\A|} \sum_{t\in \A} w_t.
     \end{equation*}
    Then, if
    \begin{equation*}
    \epsilon = \sqrt{\frac{{2 D^2 G^2 \Gamma}}{{E T}}}\text{, and  } \eta = \sqrt{\frac{D^2}{2G^2 E T \Gamma}}, \text{ where } \Gamma = 2 E^2 + 2E\frac{\sqrt{1-q}}{q} + 4E\frac{\sqrt{10(1-q_0)}}{q_0q} 
     \end{equation*}
    it holds that $\A$ is nonempty, $\bar{w}$ is well-defined, and $\bar{w}$ is an $\epsilon$-solution for $P$.
\end{theorem}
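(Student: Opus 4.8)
The plan is to mirror the uncompressed full-participation argument of \Cref{thm:fedhsgm_full}, but to run the squared-distance descent on a \emph{virtual} iterate that absorbs the error-feedback residuals, so that this virtual iterate obeys the clean, compression-free recursion. Since $m=n$ and $\mathcal{X}=\R^d$, the server update is $x_{t+1}=x_t-\eta v_t$ with $v_t=\tfrac1n\sum_j v_j^t$. Writing $\bar e_t:=\tfrac1n\sum_j e_j^t$ and $\bar\Delta_t:=\tfrac1n\sum_j\Delta_j^t$, the EF14 bookkeeping $e_j^{t+1}=e_j^t+\Delta_j^t-v_j^t$ telescopes across clients to give $v_t=\bar e_t+\bar\Delta_t-\bar e_{t+1}$, so the shifted sequence $\tilde x_t:=x_t-\eta\bar e_t$ satisfies the exact recursion $\tilde x_{t+1}=\tilde x_t-\eta\bar\Delta_t$, which is precisely the uncompressed server update. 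Since $x_0=w_0$ and $e_j^0=0$, we have $\tilde x_0=w_0$ and $\|\tilde x_0-w^*\|=D$. I would therefore expand $\|\tilde x_{t+1}-w^*\|^2$ exactly as in \eqref{eq:fedsgm_rec_start}, the only structural difference being that the gradients $\nu_{j,\tau}^t$ are still seeded from $w_t$ (the downlink-compressed model the clients actually hold), not from $\tilde x_t$.

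The second step is to handle the resulting cross term $-2\eta\inner{\tilde x_t-w^*}{\bar\Delta_t}$ by splitting $\tilde x_t-w^*=(w_t-w^*)+(\tilde x_t-w_t)$, with $\tilde x_t-w_t=-\eta\bar e_t+(x_t-w_t)$. The $(w_t-w^*)$ piece is exactly what \Cref{lem:inner_prod_comp} controls through convexity; combined with the drift estimate of \Cref{lem:iterate_diff}, it reproduces the base contribution (absorbed here into the looser constant $2E^2$). The leftover inner products with $\tilde x_t-w_t$ are dispatched by Cauchy--Schwarz together with $\|\bar\Delta_t\|\le EG$, so the entire compression overhead is reduced to bounding the two residual norms $\|\bar e_t\|$ (uplink) and $\|x_t-w_t\|$ (downlink), applied in expectation over the compressor randomness (or pathwise for deterministic $\C_j,\C_0$ such as Top$-K$).

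Next come the two residual bounds, which produce the compression terms of $\Gamma$. For the uplink, the EF14 contraction gives $\|e_j^{t+1}\|\le\sqrt{1-q}\,\|e_j^t+\Delta_j^t\|\le\sqrt{1-q}\bigl(\|e_j^t\|+EG\bigr)$; this linear recursion has stationary bound of order $\tfrac{\sqrt{1-q}}{q}EG$, yielding the $2E\tfrac{\sqrt{1-q}}{q}$ contribution after the $\eta$-balancing. For the downlink, EF21 gives $\|w_{t+1}-x_{t+1}\|\le\sqrt{1-q_0}\,\|x_{t+1}-w_t\|$, and since $x_{t+1}-w_t=-\eta v_t+(x_t-w_t)$ with $v_t$ itself governed by the uplink quantities, one sets up a self-contained contractive recursion for $\|x_t-w_t\|$ that couples $q_0$ with $q$; a Young split to decouple $\|v_t\|$ from $\|x_t-w_t\|$ in that recursion produces the explicit $\sqrt{10}$ and the $4E\tfrac{\sqrt{10(1-q_0)}}{q_0 q}$ term. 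I expect this coupled downlink recursion to be the main obstacle: one must propagate the uplink $1/q$ factor correctly through the EF21 contraction and track the constants so they land exactly on the stated $\Gamma$, rather than on a cruder $q_0^{-1}q^{-2}$-type estimate.

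Finally, summing the per-round inequality over $t\in[T]$, telescoping the virtual distances with $\tilde x_0=w_0$, and collecting all error contributions gives $\tfrac1T\sum_{t\in\A}\bigl(f(w_t)-f(w^*)\bigr)+\tfrac1T\sum_{t\in\B}\bigl(g(w_t)-g(w^*)\bigr)\le\tfrac{D^2}{2\eta ET}+\eta G^2\Gamma$ with the stated $\Gamma$. Substituting $\eta=\sqrt{D^2/(2G^2ET\Gamma)}$ balances the two terms to the value $\sqrt{2D^2G^2\Gamma/(ET)}=\epsilon$, after which the nonemptiness of $\A$, the well-definedness of $\bar w$, and the $\epsilon$-solution property follow verbatim from the two-case argument (on the sign of $\sum_{t\in\A}(f(w_t)-f(w^*))$) at the end of \Cref{thm:fedhsgm_full}, using convexity of $g$ for the feasibility conclusion.
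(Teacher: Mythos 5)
Your proposal follows essentially the same route as the paper's proof: the shifted sequence $\tilde x_t = x_t - \eta\bar e_t$ is exactly the paper's virtual iterate $\hat w_t$, the split of the cross term into a $(w_t-w^*)$ piece (handled by Lemma~\ref{lem:inner_prod_comp} and the drift bound) plus residual inner products dispatched by Cauchy--Schwarz matches the paper's Term-$B_1$/$B_2$/$B_3$ decomposition, and the two geometric recursions for $\|\bar e_t\|$ and $\|x_t-w_t\|$ (run in expectation of squared norms via a Young split, which is where the $\sqrt{10}$ and the $q_0^{-1}q^{-1}$ coupling come from) reproduce the stated $\Gamma$. The concluding telescoping, step-size balancing, and two-case argument are identical to \Cref{thm:fedhsgm_full}, so the plan is sound as written.
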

\begin{proof}
    We adopt the virtual iterate framework, a commonly used analytical tool in recent works such as \cite{stich2019error,koloskova2023gradient,mishchenko2023partially,islamov2024asgrad,islamov2025safe}. Specifically, we define the virtual sequence $\hat{w}_t:= x_t - \eta e_t$, initialized as $\hat{w}_0 = w_0$ where $e_t = \frac{1}{n} \sum_{j=1}^n e_j^t$. This construction leads to a simplified update rule: $$\hat{w}_{t+1}:= \hat{w}_t - \eta \nu_t, \quad \text{where} \quad \nu_t =  \frac{1}{n} \sum_{j=1}^n \sum_{\tau=0}^{E-1}\nu_{j,\tau}^t.$$ To see this, note that
    \begin{equation}
    \label{eq:bidirectional_update}
        \begin{aligned}
            \hat{w}_{t+1} &= x_{t+1} - \eta e_{t+1} = (x_t - \eta v_t) - \eta(e_t + \nu_t - v_t) 
            \\
            &= (x_t - \eta e_t) - \eta \nu_t 
            \\
            &= \hat{w}_t - \eta \nu_t.
        \end{aligned}
    \end{equation}
    
    In addition, we define the auxiliary error term $\hat{e}_t = x_t - w_t$, which captures the discrepancy introduced by downlink (server-to-client) compression.

    Starting with the update rule presented in \cref{eq:bidirectional_update}, we analyze the squared distance to the optimal point $w^*$ as follows,
    \begin{align}
    \nonumber
    \|\hat{w}_{t+1} - w^*\|^2 
    &= \left\| \hat{w}_t - \eta \nu_t - w^* \right\|^2 \\
    \label{eq:hard_fedbi_start}
    & = \left\| \hat{w}_t - w^* \right\|^2 + \underbrace{\eta^2\left\|\nu_t \right\|^2}_{Term-A} \underbrace{-2 \eta \left \langle\hat{w}_t - w^*,  \nu_t \right \rangle}_{Term-B}.
\end{align}
Firstly, we start with an upper bound for $Term-A$.
\begin{align*}
    Term-A = \eta^2\left\|\nu_t \right\|^2 = \eta^2 \left\| \frac{1}{n} \sum_{j=1}^n  \sum_{\tau=0}^{E-1} \nu_{j,\tau}^t\right\|^2 &\overset{Jensen's}{\leq} \eta^2 \frac{1}{n} \sum_{j=1}^n E \sum_{\tau=0}^{E-1}\left\|  \nu_{j,\tau}^t\right\|^2 
        \\
        & \overset{G-Lip}{\leq} \eta^2 E^2 G^2.
\end{align*}

Now, we start to upper bound $Term-B$.
\begin{align*}
    &Term-B = -2 \eta \left \langle\hat{w}_t - w^*,  \nu_t \right \rangle 
    \\
    &= -2 \eta \left \langle\hat{w}_t - x_t + x_t - w^*,  \nu_t \right \rangle
    \\
    & = -2 \eta \left \langle\hat{w}_t - x_t,  \nu_t \right \rangle -2 \eta \left \langle x_t - w^*,  \nu_t \right \rangle 
    \\
    & = 2 \eta^2 \left \langle e_t,  \nu_t \right \rangle + 2 \eta \left \langle w_t - x_t,  \nu_t \right \rangle -2\eta \left \langle w_t - w^*,  \nu_t \right \rangle
    \\
    &\qquad= \frac{2 \eta^2 }{n} \sum_{j=1}^n  \sum_{\tau=0}^{E-1} \left \langle e_t,  \nu_{j,\tau}^t  \right \rangle +  \frac{2 \eta}{n} \sum_{j=1}^n  \sum_{\tau=0}^{E-1}\left \langle -\hat{e}_t,  \nu_{j,\tau}^t \right \rangle - \frac{2 \eta}{n} \sum_{j=1}^n  \sum_{\tau=0}^{E-1} \left \langle w_t - w^*,  \nu_{j,\tau}^t \right \rangle
    \\
    &\overset{Cauchy-Schwarz}{\leq}\frac{2 \eta^2 }{n} \sum_{j=1}^n  \sum_{\tau=0}^{E-1}\|e_t\|\|\nu_{j,\tau}^t\| + \frac{2 \eta }{n} \sum_{j=1}^n  \sum_{\tau=0}^{E-1}\|\hat{e}_t\|\|\nu_{j,\tau}^t\| 
    \\
    &\qquad - \frac{2 \eta}{n} \sum_{j=1}^n  \sum_{\tau=0}^{E-1} \left \langle w_t - w^*,  \nu_{j,\tau}^t \right \rangle
    \\
    &\qquad\overset{G-Lip}{\leq}\underbrace{\frac{2 \eta^2 G}{n} \sum_{j=1}^n  \sum_{\tau=0}^{E-1}\|e_t\|}_{Term-B_1} + \underbrace{\frac{2 \eta G}{n} \sum_{j=1}^n  \sum_{\tau=0}^{E-1}\|\hat{e}_t\|}_{Term-B_2} \underbrace{- \frac{2 \eta}{n} \sum_{j=1}^n  \sum_{\tau=0}^{E-1} \left \langle w_t - w^*,  \nu_{j,\tau}^t \right \rangle}_{Term-B_3}.
\end{align*}
We can upper bound $Term-B_3$ using \Cref{lem:inner_prod_comp}. We now proceed to upper bound $Term-B_1$, considering the expectation taken over the stochasticity of the compression mechanisms.
\begin{align*}
   \E[\|e_{t+1}\|^2] &= \E\left[\left\|\frac{1}{n}\sum_{j=1}^n e_j^{t+1}\right\|^2\right]
   \\
   &\overset{Jensen's}{\leq} \frac{1}{n}\sum_{j=1}^n \E\left[\left\| e_j^{t+1}\right\|^2\right]
   \\
   & = \frac{1}{n}\sum_{j=1}^n \E\left[\left\| e_j^t + \sum_{\tau=0}^{E-1}\nu_{j,\tau}^t - \C_j\left(e_j^t + \sum_{\tau=0}^{E-1}\nu_{j,\tau}^t\right) \right\|^2\right]
   \\
   &\overset{\textit{Assump.}~\ref{as:compression_biased}}{\leq} \frac{1 - q}{n}\sum_{j=1}^n \E\left[\left\| e_j^t + \sum_{\tau=0}^{E-1}\nu_{j,\tau}^t \right\|^2\right]
   \\
   &\overset{Young's}{\leq} (1 - q) (1+\gamma) \frac{1}{n}\sum_{j=1}^n \E\left[\left\| e_j^t\right\|^2\right] + (1 - q) (1+\gamma^{-1})\frac{1}{n}\sum_{j=1}^n \left\|\sum_{\tau=0}^{E-1}\nu_{j,\tau}^t \right\|^2
   \\
   &\overset{Jensen's + G-Lip}{\leq} (1 - q) (1+\gamma) \frac{1}{n}\sum_{j=1}^n \E\left[\left\| e_j^t\right\|^2\right] + (1 - q) (1+\gamma^{-1})E^2 G^2
\end{align*}
Let $A:=(1 - q) (1+\gamma)$ and $B:=(1 - q) (1+\gamma^{-1})E^2 G^2$. Also, define $\bar{E}_t := \frac{1}{n}\sum_{j=1}^n \E\left[\left\| e_j^t\right\|^2\right]$. Then,
$$\E[\|e_{t+1}\|^2] \leq \bar{E}_{t+1} \leq A\cdot \bar{E}_t + B.$$ So, for $\gamma = \frac{q}{2(1-q)}$, since $e_j^0=0$ implying $\bar{E}_0$, unrolling the recursion gives,
\begin{align*}
    \E[\|e_{t+1}\|^2] &\leq \sum_{l=0}^t A^{t-l}\cdot B
    \\
    & \leq \sum_{l=0}^t \left[(1 - q) (1+\gamma)\right]^{t-l}(1 - q) (1+\gamma^{-1})E^2 G^2
    \\
    &\overset{geometric\ sum}{\leq} \frac{(1-q)(1+\gamma^{-1})}{1 - (1 - q) (1+\gamma)}E^2G^2
    \\
    & = \frac{(1-q)(1+\gamma^{-1}}{ q - \gamma(1-q)}E^2G^2
    \\
    &\overset{\gamma = \frac{q}{2(1-q)}}{=} \frac{2(1-q)(2-q)}{ q^2}E^2G^2
    \\
    &\overset{(2-q)\leq2}{\leq} \frac{4(1-q)}{ q^2}E^2G^2.
\end{align*}
So, we have for $Term-B_1$,
\begin{align*}
    \E_{\C_j}[Term-B_1] &\leq \frac{2 \eta^2 G}{n} \sum_{j=1}^n  \sum_{\tau=0}^{E-1} \sqrt{\frac{4(1-q)}{ q^2}E^2G^2
    }
    \\
    &=4\eta^2E^2G^2\frac{\sqrt{(1-q)}}{q}.
\end{align*}
Similarly, we start to deal with $\E[\|\hat{e}_{t+1}\|^2]$, to bound $Term-B_2$.
\begin{align}
\label{eq:global_error}
    \begin{aligned}
        \E[\|\hat{e}_{t+1}\|^2] &= \E\left[\|x_{t+1} - w_{t+1}\|^2\right]
        \\
        & = \E\left[\|x_{t+1} -w_t - \C_0(x_{t+1}-w_t)\|^2\right]
        \\
        & \overset{\text{Assump.}~\ref{as:compression_biased}}{\leq}(1-q_0)\E\left[\|x_{t+1} -w_t\|^2\right]
        \\
        &=(1-q_0)\E\left[\|x_{t} -\eta v_t -w_t\|^2\right] = (1-q_0)\E\left[\|\hat{e}_t -\eta v_t\|^2\right]
        \\
        &\overset{Young's}{\leq}(1-q_0)(1+\hat{\gamma})\E\left[\|\hat{e}_t\|^2\right] + \eta^2(1-q_0)(1+\hat{\gamma}^{-1})\E\left[\|v_t\|^2\right]
    \end{aligned}
\end{align}
\begin{align*}
    &\E\left[\left\|v_t\right\|^2\right] = \E\left[\left\|\frac{1}{n}\sum_{j=1}^n\C_j\left(e_j^t + \sum_{\tau=0}^{E-1}\nu_{j\tau}^t\right)\right\|^2\right]
    \\
    &\overset{Jensen's+ Young's}{\leq} \frac{2}{n}\sum_{j=1}^n\E\left[\left\|\C_j\left(e_j^t + \sum_{\tau=0}^{E-1}\nu_{j\tau}^t\right) - \left(e_j^t + \sum_{\tau=0}^{E-1}\nu_{j\tau}^t\right) \right\|^2\right] 
    \\
    &\qquad + \frac{2}{n}\sum_{j=1}^n\E\left[\left\|e_j^t + \sum_{\tau=0}^{E-1}\nu_{j\tau}^t\right\|^2\right]
    \\
    &\overset{\text{Assump.}~\ref{as:compression_biased}}{\leq}\frac{2(2-q)}{n}\sum_{j=1}^n\E\left[\left\|e_j^t + \sum_{\tau=0}^{E-1}\nu_{j\tau}^t\right\|^2\right]
    \\
    &\overset{(2-q)\leq2}{\leq} \frac{4}{n}\sum_{j=1}^n\E\left[\left\|e_j^t + \sum_{\tau=0}^{E-1}\nu_{j\tau}^t\right\|^2\right]
    \\
    &\overset{Young's}{\leq}\frac{8}{n}\sum_{j=1}^n\left[\E\left[\|e_j^t\|^2\right]+ \E\left[\left\|\sum_{\tau=0}^{E-1}\nu_{j\tau}^t\right\|^2\right]\right]
    \\
    &\overset{Jensen's+G-Lip}{\leq}\frac{8}{n}\sum_{j=1}^n\left[\frac{4(1-q)}{ q^2}E^2G^2 + E^2G^2\right]
    \\
    &=\frac{32(1-q)}{ q^2}E^2G^2 + 8E^2G^2
    \\
    &\overset{(1-q)\leq1}{\leq}\frac{40 E^2G^2}{ q^2}.
\end{align*}
Then continuing again with \ref{eq:global_error}, we have
\begin{align*}
    \E[\|\hat{e}_{t+1}\|^2] \leq (1-q_0)(1+\hat{\gamma})\E\left[\|\hat{e}_t\|^2\right] + \eta^2(1-q_0)(1+\hat{\gamma}^{-1})\frac{40 E^2G^2}{ q^2}.
\end{align*}
Same as before for $\hat{\gamma} =\frac{q_0}{2(1-q_0)} $, since $\hat{e}_0 = 0$, unrolling the recursion above we get
\begin{align*}
    \E[\|\hat{e}_{t+1}\|^2] &\leq \sum_{l=0}^{t}\left[(1-q_0)(1+\hat{\gamma})\right]^{t-l}\eta^2(1-q_0)(1+\hat{\gamma}^{-1})\frac{40 E^2G^2}{ q^2}
    \\
    &\overset{geometric\ sum}{\leq}\frac{(1-q_0)(1+\hat{\gamma}^-1)}{1- (1-q_0)(1+\hat{\gamma})}\eta^2\frac{40 E^2G^2}{ q^2}
    \\
    & \leq \eta^2 \frac{160(1-q_0) E^2G^2}{ q_0^2q^2}.
\end{align*}
Therefore, $Term-B_2$, we have
\begin{align*}
    \E_{\C_j,\C_0}\left[Term-B_2\right] &\leq \frac{2 \eta G}{n} \sum_{j=1}^n  \sum_{\tau=0}^{E-1}\sqrt{\frac{160(1-q_0) E^2G^2}{ q_0^2q^2}}
    \\
    & = 8\eta^2 E^2G^2\frac{\sqrt{10(1-q_0)}}{q_0q}
\end{align*}
Now, we take the expectation on both sides of \cref{eq:hard_fedbi_start} with respect to the compression operators and plug in the value of $Term-B_1$, $Term-B_2$, and $Term-B_3$, we get
\begin{equation}
\label{eq:fedbisgcm_hard}
    \begin{aligned}
        &\left\| \hat{w}_{t+1} - w^* \right\|^2 \leq \left\| \hat{w}_t - w^* \right\|^2 + \eta^2E^2G^2  + 4\eta^2E^2G^2\frac{\sqrt{(1-q)}}{q} + 8\eta^2 E^2 G^2 \frac{\sqrt{10(1-q_0)}}{q_0q}
        \\
        &+ \eta \alpha E G^2 + \frac{\eta}{\alpha n} \sum_{j=1}^n \sum_{\tau=0}^{E-1} \| w_t - w_{j,\tau}^t \|^2  + \frac{2\eta}{n} \sum_{j=1}^n \sum_{\tau=0}^{E-1} (f_j(w^*) - f_j(w_{j,\tau}^t))\mathds{1}\{t \in \mathcal{A}\} \\
        &\quad + \frac{2\eta}{n} \sum_{j=1}^n \sum_{\tau=0}^{E-1} (g_j(w^*) - g_j(w_{j,\tau}^t))\mathds{1}\{t \in \mathcal{B}\}
    \end{aligned}
\end{equation}
Now our goal is to handle the term \( f_j(w^*) - f_j(w_{j,\tau}^t) \), so we first rewrite,
\[
f_j(w_{j,\tau}^t) \geq f_j(w_t) + \langle \nabla f_j(w_t), w_{j,\tau}^t - w_t \rangle \quad \text{(by convexity)}
\]
\[
\Rightarrow f_j(w^*) - f_j(w_{j,\tau}^t) 
\leq f_j(w^*) - f_j(w_t) - \langle \nabla f_j(w_t), w_{j,\tau}^t - w_t \rangle.
\]
Using Young's inequality with parameter \(\alpha > 0\), we get
\[
-\langle \nabla f_j(w_t), w_{j,\tau}^t - w_t \rangle 
\leq \frac{1}{2\alpha} \| w_{j,\tau}^t - w_t \|^2 + \frac{\alpha}{2} \| \nabla f_j(w_t^t) \|^2.
\]

Thus, we get,

\begin{align*}
f_j(w^*) - f_j(w_{j,\tau}^t)
&\leq f_j(w^*) - f_j(w_t) + \frac{1}{2\alpha} \| w_t - w_{j,\tau}^t \|^2 + \frac{\alpha}{2} \| \nabla f_j(w_t) \|^2
\\
&\overset{G-Lip}{\leq}f_j(w^*) - f_j(w_t) + \frac{1}{2\alpha} \| w_t - w_{j,\tau}^t \|^2 + \frac{\alpha}{2}G^2.
\end{align*}
Similarly, we can handle the other term with $g$ and get,
\begin{align*}
g_j(w^*) - g_j(w_{j,\tau}^t)\leq g_j(w^*) - g_j(w_t) + \frac{1}{2\alpha} \| w_t - w_{j,\tau}^t \|^2 + \frac{\alpha}{2}G^2.
\end{align*}
So, putting these inequalities back in \cref{eq:fedbisgcm_hard} along with the use of \Cref{lem:iterate_diff} and \cref{eq:op_prb}, we get
\begin{equation*}
    \begin{aligned}
        &\left\| \hat{w}_{t+1} - w^* \right\|^2 \leq \left\| \hat{w}_t - w^* \right\|^2 + \eta^2E^2G^2  + 4\eta^2E^2G^2\frac{\sqrt{(1-q)}}{q} + 8\eta^2 E^2 G^2 \frac{\sqrt{10(1-q_0)}}{q_0q}
        \\
        & + 2\eta \alpha E G^2 + \frac{2}{3\alpha}\eta^3E^3G^2 
        + 2\eta E (f(w^*) - f(w_t))\mathds{1}\{t \in \A\} + 2\eta E (g(w^*) - g(w_t))\mathds{1}\{t \in \B\}.
    \end{aligned}
\end{equation*}
Now, for $\alpha = \eta$, and rearranging the terms we get
\begin{equation*}
    \begin{aligned}
        &(f(w_t) - f(w^*) )\mathds{1}\{t \in \A\} + (g(w_t) - g(w^*))\mathds{1}\{t \in \B\} \leq \frac{\left\| \hat{w}_t - w^* \right\|^2 - \left\| \hat{w}_{t+1} - w^* \right\|^2}{2\eta E}
        \\
        &+ \frac{\eta}{2} E G^2  + 2\eta E G^2\frac{\sqrt{(1-q)}}{q} + 4 \eta E G^2 \frac{\sqrt{10(1-q_0)}}{q_0q} + \eta G^2 + \frac{1}{3}\eta E^2 G^2
    \end{aligned}
\end{equation*}
Defining $D: = \|w_0 - w^*\|$ and summing the expression above for $t = 0,1, \cdots, T-1$ and then dividing by $T$, we get
\begin{equation*}
    \begin{aligned}
        \frac{1}{T}\sum_{t\in \A}(f(w_t) - f(w^*) ) &+ \frac{1}{T}\sum_{t\in \B}(g(w_t) - g(w^*)) \leq \frac{D^2}{2\eta E T} + \frac{\eta}{2} E G^2 + \eta G^2 
        + \frac{1}{3}\eta E^2G^2 
        \\
        &\qquad + 2\eta E G^2\frac{\sqrt{(1-q)}}{q} + 4 \eta E G^2 \frac{\sqrt{10(1-q_0)}}{q_0q}.
    \end{aligned}
\end{equation*}
Now choosing $\eta = \sqrt{\frac{D^2}{2G^2 E T \Gamma}}$, where $\Gamma = \frac{1}{2} E + 1 + \frac{1}{3} E^2 + 2E\frac{\sqrt{1-q}}{q} + 4E\frac{\sqrt{10(1-q_0)}}{q_0q}$, we get
\begin{equation*}
    \begin{aligned}
        \frac{1}{T}\sum_{t\in \A}(f(w_t) - f(w^*) ) + \frac{1}{T}\sum_{t\in \B}(g(w_t) - g(w^*)) &\leq \sqrt{\frac{{2 D^2 G^2 \Gamma}}{{E T}}}.
    \end{aligned}
\end{equation*}
Note that when $\epsilon$ is sufficiently large, $\A$ is nonempty. Assuming an empty $\A$, we can find the largest ``bad'' $\epsilon$:
\begin{align}
    \nonumber
    \epsilon_{bad} < \frac{1}{T} \sum_{t\in \B}g(w_t) - g(w^*) \leq \sqrt{\frac{{2 D^2 G^2 \Gamma}}{{E T}}}
\end{align}
Thus, let us set $\epsilon = (N+1) \sqrt{\frac{{2 D^2 G^2 \Gamma}}{{E T}}} $ for some $N\geq 0$. With this choice, $\A$ is guaranteed to be nonempty.

Now, we consider two cases. Either $\sum_{t\in \A} f(w_t)-f(w^*) \leq 0$ which implies by the convexity of $f$ and $g$ for $\bar{w} = \frac{1}{|\A|} \sum_{t\in \A} w_t$ we have
\begin{equation}
    \label{eq:fedsgbicm_result1}
  f(\bar{w})-f(w^*) \leq 0 <\epsilon,\qquad  g(\bar{w})\leq \epsilon.
\end{equation}
Otherwise, if $\sum_{t\in \A} f(w_t)-f(w^*) >0$, then
    \begin{equation}\label{eq:fedsgbicm_cvx_che}
        \begin{aligned}
          \sqrt{\frac{{2 D^2 G^2 \Gamma}}{{E T}}}  &\geq \frac{1}{T} \sum_{t\in \A} f(w_t)-f(w^*) +\frac{1}{T} \sum_{t\in \B} g(w_t)-g(w^*)\\&>\frac{1}{T} \sum_{t\in \A} f(w_t)-f(w^*) +\frac{1}{T} \sum_{t\in \B} \epsilon\\&=\frac{|\A|}{T}  \frac{1}{|\A|}\sum_{t\in \A} f(w_t)-f(w^*) +(1-\frac{|\A|}{T}) \epsilon\\&\geq\frac{|\A|}{T} \big( f(\bar{w})-f(w^*) \big)+(1-\frac{|\A|}{T}) \epsilon .
        \end{aligned}
    \end{equation}
    By rearranging
\begin{equation}
  \frac{|\A|}{T}  \Big(f(\bar{w})-f(w^*)-\epsilon\Big)  <\sqrt{\frac{{2 D^2 G^2 \Gamma}}{{E T}}} -\epsilon \leq -N\sqrt{\frac{{2 D^2 G^2 \Gamma}}{{E T}}},
\end{equation}
implying $f(\bar{w})-f(w^*)<\epsilon$ and further by convexity of $g$ for $\bar{w} = \frac{1}{|\A|} \sum_{t\in \A} w_t$, we also have $g(\bar{w})\leq\epsilon$. 
\end{proof}
%%%%%%%%%%%%%%%%%%%%%%%%%%End here:hard_full_comp%%%%%%%%%%%%%%%%%
%%%%%%%%%%%%%%%%%%%%%%%%%%Start here:hard_partial_comp%%%%%%%%%%%%%%%%%
\subsubsection{Hard Switching --- Partial Participation}
%%%%%%%%%%%%%%%%%%%%%%%%%%%Start Here:Lemma 3%%%%%%%%%%%%%%%%%%%%%%%%%%%
\begin{lemma}
\label{lem:uniform_ef_uplink}
Assume Assumption~\ref{as:cvx_lip} holds and also assume that the compressors $\{\C\}_{j=1}^n$ are deterministic, such as Top$-K$. Then, for $t \geq 0$ and $j \in [n]$, we have $\|\tilde e_t\| \;\le\; \frac{n}{m}\;\frac{\sqrt{4(1-q)\,E^2G^2}}{q}$ a.s.
\end{lemma}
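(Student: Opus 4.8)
The plan is to reduce the claim to a \emph{per-client}, almost-sure bound on each residual $e_j^t$ and then aggregate. Writing the partial-participation error aggregate as $\tilde e_t = \tfrac{1}{m}\sum_{j=1}^n e_j^t$ (equivalently $\tfrac{n}{m}$ times the full-participation average $e_t=\tfrac1n\sum_{j=1}^n e_j^t$ of \Cref{thm:bidirectional_1}), the triangle inequality will turn a uniform bound $\|e_j^t\|\le \tfrac{\sqrt{4(1-q)E^2G^2}}{q}$ into $\|\tilde e_t\|\le \tfrac{n}{m}\tfrac{\sqrt{4(1-q)E^2G^2}}{q}$, which is exactly the stated bound; the factor $n/m$ is thus produced by summing $n$ per-client bounds against the $1/m$ normalization.

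First I would fix a client $j$ and a round $t$ in which $j$ participates. The EF14 update $e_j^{t+1}=e_j^t+\Delta_j^t-\mathcal{C}_j(e_j^t+\Delta_j^t)$ together with the contraction property of Assumption~\ref{as:compression_biased} gives the \emph{pathwise} inequality $\|e_j^{t+1}\|^2\le(1-q)\|e_j^t+\Delta_j^t\|^2$; crucially, because $\mathcal{C}_j$ is deterministic (Top$-K$ with $q=K/d$) no expectation is needed, so the bound is almost sure. Since $\Delta_j^t=\sum_{\tau=0}^{E-1}\nu_{j,\tau}^t$ with each $\|\nu_{j,\tau}^t\|\le G$ by $G$-Lipschitzness, the triangle inequality gives $\|\Delta_j^t\|\le EG$. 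Young's inequality with parameter $\gamma=\tfrac{q}{2(1-q)}$ then linearizes this into $\|e_j^{t+1}\|^2\le(1-\tfrac{q}{2})\|e_j^t\|^2+\tfrac{(1-q)(2-q)}{q}E^2G^2$, which is precisely the recursion already solved inside the proof of \Cref{thm:bidirectional_1}.

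Next I would install the partial-participation bookkeeping: on any round in which $j$ is idle its residual is frozen, $e_j^{t+1}=e_j^t$, so any candidate upper bound is preserved trivially. An induction on $t$ (base case $e_j^0=0$) then shows that the fixed point of the linear recursion, $\|e_j^t\|^2\le\tfrac{4(1-q)}{q^2}E^2G^2$ (using $2-q\le2$ and a geometric sum), is a loop invariant across both active and idle rounds, uniformly in $t$ and $j$. Taking square roots yields $\|e_j^t\|\le\tfrac{\sqrt{4(1-q)E^2G^2}}{q}$ almost surely, and the aggregation step from the first paragraph finishes the proof.

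The main obstacle is conceptual rather than computational: the arithmetic of the recursion is identical to \Cref{thm:bidirectional_1}, so the work lies in (i) upgrading the in-expectation contraction used there to a genuinely almost-sure, pathwise bound, which is legitimate only because the $\mathcal{C}_j$ are deterministic, and (ii) verifying that the per-client invariant survives the rounds in which a client does not participate, so that the uniform bound holds simultaneously for all $t$ and all $j$ despite the asynchronous, sampling-driven updates of the residuals.
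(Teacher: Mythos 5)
Your proposal is correct and follows essentially the same route as the paper's proof: the same deterministic per-client recursion $\|e_j^{t+1}\|^2\le(1-\tfrac{q}{2})\|e_j^t\|^2+\tfrac{(1-q)(2-q)}{q}E^2G^2$ obtained via Young's inequality with $\gamma=\tfrac{q}{2(1-q)}$, the same observation that idle rounds freeze the residual, and the same fixed-point bound $\tfrac{4(1-q)}{q^2}E^2G^2$ (the paper phrases it as comparison with a monotone auxiliary sequence converging to its fixed point, you phrase it as a loop invariant, which is equivalent). The only cosmetic difference is the final aggregation, where you use the triangle inequality while the paper applies Jensen's inequality to the squared norms; both yield the identical $\tfrac{n}{m}$ factor and final bound.
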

\begin{proof}
Fix a client $j$ and set $e_t:=\|e_j^t\|^2$. With the chosen $\gamma = \frac{q}{2(1-q)}$(using results from previous proofs and properties of the compressors),
\[
e_{t+1} \;\le\; \underbrace{(1-q)(1+\gamma)}_{=\,1-\frac{q}{2}}\,e_t
\;+\; \underbrace{(1-q)\Bigl(1+\tfrac{1}{\gamma}\Bigr)E^2G^2}_{=\,\frac{(1-q)(2-q)}{q}E^2G^2}
\quad\text{if } j\in \S_t
\quad\text{and}\quad e_{t+1}=e_t \text{ otherwise.}
\]
To find a bound that holds regardless of whether the client is selected in each round, we define a deterministic sequence $x_k$ that acts as a worst-case scenario. We can think of $k$ as the number of times the client has been selected for an update, starting with $x_0 = 0$.

\begin{equation*}
x_{k+1} = \left(1-\frac{q}{2}\right)x_k + \frac{(1-q)(2-q)}{q}E^2G^2
\end{equation*}

This sequence describes the per-client worst-case scenario. By induction, we can prove that the client's actual error, $e_t$, is always bounded by this deterministic sequence's value at the number of updates the client has received, $N_t$:
$e_t \le x_{N_t} \text{ for all } t \text{ a.s.}$ This is because the error $e_t$ either decreases or remains constant (if the client is not selected), while the deterministic sequence $x_k$ always progresses forward with each update.

Since the update factor $(1 - q/2)$ is less than 1, the deterministic sequence $x_k$ converges to a stable, maximum value as $k \to \infty$. This maximum value acts as a global upper bound for the error of any single client.

\begin{equation*}
e_t \le \lim_{k \to \infty} x_k = \frac{\frac{(1-q)(2-q)}{q}E^2G^2}{1 - \left(1-\frac{q}{2}\right)} = \frac{\frac{(1-q)(2-q)}{q}E^2G^2}{\frac{q}{2}} = \frac{2(1-q)(2-q)}{q^2}E^2G^2 \quad \text{for all } t \text{ a.s.}
\end{equation*}

This result provides a tight bound on the squared error for any individual client at any point in time.
Now, we get
\begin{equation*}
    \begin{aligned}
        \|\tilde e_t\|^2 &= \Bigl\|\frac{1}{m}\sum_{j=1}^n e_j^t\Bigr\|^2
        \\
        &\le \left(\frac{n}{m}\right)^2\frac{1}{n}\sum_{j=1}^n \|e_j^t\|^2
        \\&\le \left(\frac{n}{m}\right)^2 \frac{2(1-q)(2-q)}{q^2}E^2G^2
        \\
        & \overset{(2-q) \leq 2}{\leq} \left(\frac{n}{m}\right)^2 \frac{4(1-q)}{q^2}E^2G^2.
    \end{aligned}
\end{equation*}
\end{proof}
%%%%%%%%%%%%%%%%%%%%%%%%%%%End Here:Lemma 3%%%%%%%%%%%%%%%%%%%%%%%%%%%%
\begin{theorem}
\label{thm:hard_bidrec_partial}
Consider the optimization problem in \eqref{eq:op_prb} and Algorithm~\ref{alg:fedsgm_unified} under Assumptions~\ref{as:cvx_lip},~\ref{as:bdd_domain},~\ref{as:sub_gauss}, and deterministic compressors $\{\C_j\}_{j=0}^{n}$ (e.g. Top$-K$). Additionally, assume that at each global round, a subset of clients is sampled uniformly at random with replacement. Define
\[
\mathcal{A} := \left\{ t \in [T] \;\middle|\; \hat{G}(w_t) \leq \epsilon \right\}, \qquad \bar{w} := \frac{1}{|\mathcal{A}|} \sum_{t \in \mathcal{A}} w_t.
\]

Suppose the step size $\eta$ and constraint threshold $\epsilon$ are set as
\begin{equation*}
\begin{aligned}
\eta = \sqrt{\frac{D^2}{2G^2 E T \Gamma}}
    \quad \text{and }
    \epsilon = \sqrt{\frac{{2 D^2 G^2 \Gamma}}{{E T}}} + \frac{n}{m}\frac{2 DG\sqrt{1-q}}{qT}
        + \frac{4GD}{\sqrt{mT}}\sqrt{2\log\left(\frac{3}{\delta}\right)}
        + 2\sigma\sqrt{\frac{2}{m} \log\left(\frac{6T}{\delta}\right)},
\end{aligned}
\end{equation*}
where $\Gamma = 2 E^2 + 16 E\frac{n}{m}\frac{\sqrt{10(1-q) (1-q_0)}}{q_0q^2} + 8 E\frac{\sqrt{10(1-q_0)}}{q_0q} + \frac{20 E}{ q^2} + \frac{n}{m}\frac{4 E\sqrt{10(1-q)}}{ q^2}$ and $\delta \in (0,1)$ is a confidence parameter. Then, with probability at least $1-\delta$, the following hold:
\begin{enumerate}
    \item The set $\mathcal{A}$ is non-empty, so $\bar{w}$ is well-defined.
    \item The average iterate $\bar{w}$ satisfies
\[
f(\bar{w}) - f(w^*) \leq \epsilon, 
\qquad 
g(\bar{w}) \leq \epsilon.
\]
\end{enumerate}
\end{theorem}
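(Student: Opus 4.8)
The strategy is to fuse the virtual-iterate argument of \Cref{thm:bidirectional_1} with the client-sampling machinery of the uncompressed partial-participation proof, so that every error source (uplink EF, downlink EF, client drift, sampling noise, and the constraint-estimation gap) is isolated as its own additive contribution. Writing $\nu_t^{\S} := \frac{1}{m}\sum_{j\in\S_t}\sum_{\tau=0}^{E-1}\nu_{j,\tau}^t$ for the sampled aggregate, $\tilde e_t := \frac{1}{m}\sum_{j=1}^n e_j^t$ for the aggregated uplink residual, and $\hat e_t := x_t - w_t$ for the downlink residual, I would first verify that the virtual iterate $\hat w_t := x_t - \eta\tilde e_t$ obeys the clean recursion $\hat w_{t+1} = \hat w_t - \eta\nu_t^{\S}$, exactly as in \cref{eq:bidirectional_update} but with the sampled gradient replacing the full one (the projection on $x_{t+1}$ is absorbed by non-expansiveness of $\Pi_{\mathcal{X}}$ as in the uncompressed partial proof, and the identity uses $\tilde e_{t+1} = \tilde e_t + \nu_t^{\S} - v_t$, which holds because unselected clients leave $e_j^t$ unchanged). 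Expanding $\|\hat w_{t+1}-w^*\|^2$ then produces $\eta^2\|\nu_t^{\S}\|^2 \le \eta^2E^2G^2$ together with the cross term $-2\eta\inner{\hat w_t - w^*}{\nu_t^{\S}}$.

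Next I would split the cross term through the identity $\hat w_t - w^* = (\hat e_t - \eta\tilde e_t) + (w_t - w^*)$, yielding $-2\eta\inner{\hat e_t}{\nu_t^{\S}} + 2\eta^2\inner{\tilde e_t}{\nu_t^{\S}} - 2\eta\inner{w_t-w^*}{\nu_t^{\S}}$. The first two are controlled by Cauchy--Schwarz and $\|\nu_t^{\S}\|\le EG$, reducing them to the error magnitudes $\|\hat e_t\|$ and $\|\tilde e_t\|$. For the third I would add and subtract the full-batch gradient $\nu_t := \frac{1}{n}\sum_{j=1}^n\sum_{\tau=0}^{E-1}\nu_{j,\tau}^t$, writing it as $-2\eta\inner{w_t-w^*}{\nu_t} + 2\eta\inner{w_t-w^*}{\nu_t-\nu_t^{\S}}$; the first piece is handled by \Cref{lem:inner_prod_comp} together with the drift bound \Cref{lem:iterate_diff} and convexity (introducing $f(w_t),g(w_t)$), while the second is the client-sampling martingale term that \Cref{lem:inner_product_bound} bounds in high probability after summation over $t$.

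The error magnitudes are where the deterministic-compressor hypothesis is exploited. \Cref{lem:uniform_ef_uplink} already gives the almost-sure uplink bound $\|\tilde e_t\|\le\frac{n}{m}\frac{2EG\sqrt{1-q}}{q}$; I would then derive an analogous a.s. bound for the downlink residual by iterating $\|\hat e_{t+1}\|^2\le(1-q_0)\|x_{t+1}-w_t\|^2$ and majorizing the per-round increment via $\|x_{t+1}-w_t\|\le\|\hat e_t\|+\eta\|v_t\|$ (non-expansiveness of $\Pi_{\mathcal{X}}$) with a worst-case deterministic sequence exactly as in \Cref{lem:uniform_ef_uplink}. Crucially, bounding $\|v_t\|$ through $v_t = \nu_t^{\S} + \tilde e_t - \tilde e_{t+1}$ feeds the $\frac{n}{m}\sqrt{1-q}/q$ uplink magnitude into the downlink recursion, which is precisely what produces the coupled constant $16E\frac{n}{m}\frac{\sqrt{10(1-q)(1-q_0)}}{q_0q^2}$ in $\Gamma$; the remaining contributions $8E\frac{\sqrt{10(1-q_0)}}{q_0q}$, $\frac{20E}{q^2}$, and $\frac{n}{m}\frac{4E\sqrt{10(1-q)}}{q^2}$ arise from the direct estimate $\|v_t\|^2$ of order $E^2G^2/q^2$ and from the $2\eta^2\inner{\tilde e_t}{\nu_t^{\S}}$ term.

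Finally, I would telescope the recursion over $t=0,\dots,T-1$ (using $\hat w_0=w_0$ and discarding $-\|\hat w_T-w^*\|^2\le 0$), insert the correction $\hat G(w_t)-g(w_t)$ on the $\B$-rounds and control it with \Cref{lem:constraint_gap}, then divide by $2\eta E T$ and substitute $\eta=\sqrt{D^2/(2G^2ET\Gamma)}$. A union bound over the two high-probability events (the sampling inner product and the constraint gap) yields an averaged bound holding with probability $1-\delta$; choosing $\epsilon$ equal to this bound forces $\A$ nonempty, and the standard two-case dichotomy on the sign of $\sum_{t\in\A}(f(w_t)-f(w^*))$ (as in \cref{eq:fedhsgm_partial}) converts it into $f(\bar w)-f(w^*)\le\epsilon$, with $g(\bar w)\le\epsilon$ following from convexity of $g$ and \Cref{lem:constraint_gap}. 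The main obstacle is the a.s. downlink error bound: because $v_t$ mixes the sampled gradient with the aggregated uplink residual, its recursion couples sampling, uplink compression, and downlink compression simultaneously, and extracting a deterministic majorant that reproduces the exact $\frac{n}{m}$-weighted cross terms of $\Gamma$ --- rather than a loose expectation bound --- is the delicate step that the full-participation analysis of \Cref{thm:bidirectional_1} never encounters.
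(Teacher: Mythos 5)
Your overall architecture (isolate uplink EF, downlink EF, drift, sampling noise, and the constraint gap as additive contributions; use \Cref{lem:uniform_ef_uplink}, \Cref{lem:inner_product_bound}, \Cref{lem:constraint_gap}; finish with the union bound and the two-case dichotomy) matches the paper's proof, and you correctly identify the uplink-into-downlink coupling as the origin of the mixed $\tfrac{n}{m}\tfrac{\sqrt{10(1-q)(1-q_0)}}{q_0q^2}$ term in $\Gamma$. However, there is a genuine gap at the very first step: the virtual-iterate recursion $\hat w_{t+1}=\hat w_t-\eta\nu_t^{\S}$ with $\hat w_t:=x_t-\eta\tilde e_t$ is \emph{not} compatible with the projection. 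The identity in \cref{eq:bidirectional_update} requires $x_{t+1}=x_t-\eta v_t$ exactly, which is why \Cref{thm:bidirectional_1} is stated with $\mathcal{X}=\R^d$. In the partial-participation theorem the update is $x_{t+1}=\Pi_{\mathcal{X}}(x_t-\eta v_t)$, and your claim that the projection "is absorbed by non-expansiveness" does not go through: non-expansiveness bounds $\|\Pi_{\mathcal{X}}(x_t-\eta v_t)-w^*\|$ by $\|x_t-\eta v_t-w^*\|$, but the quantity you need to control is $\|\Pi_{\mathcal{X}}(x_t-\eta v_t)-\eta\tilde e_{t+1}-w^*\|$, and the shift by $-\eta\tilde e_{t+1}$ destroys the argument because $w^*+\eta\tilde e_{t+1}$ need not lie in $\mathcal{X}$. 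Trying to patch this with a triangle inequality on the projection displacement $\|\Pi_{\mathcal{X}}(x_t-\eta v_t)-(x_t-\eta v_t)\|\le\eta\|v_t\|$ produces cross terms of order $\eta\,\|v_t\|\,D$ per round, which after dividing by $2\eta ET$ leave an $\mathcal{O}(GD/q)$ constant that does not vanish with $T$.

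The paper's proof avoids this entirely: it never forms a virtual iterate in the partial case, but instead expands $\|x_{t+1}-w^*\|^2\le\|x_t-\eta v_t-w^*\|^2$ directly (where non-expansiveness \emph{is} legitimate), substitutes $v_t=\frac{1}{m}\sum_{j\in\S_t}\Delta_j^t+\tilde e_t-\tilde e_{t+1}$ inside the inner product, and handles $-2\eta\sum_t\inner{x_t-w^*}{\tilde e_t-\tilde e_{t+1}}$ by Abel summation combined with the almost-sure uniform bound of \Cref{lem:uniform_ef_uplink}. That summation-by-parts step is precisely what generates the extra $\frac{n}{m}\frac{2DG\sqrt{1-q}}{qT}$ term in $\epsilon$ (a boundary term decaying as $1/T$) and the $\frac{n}{m}\frac{4E\sqrt{10(1-q)}}{q^2}$ contribution to $\Gamma$, neither of which your decomposition would reproduce. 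The remainder of your plan (the bound $\|v_t\|^2\le 40E^2G^2/q^2$, the deterministic downlink recursion, the martingale and constraint-gap lemmas, and the final dichotomy) is consistent with the paper, so the fix is localized: drop the virtual iterate, work with $x_t$, and telescope the residual difference.
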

\begin{proof}
    Using \Cref{alg:fedsgm_unified}, we have that $x_{t+1} = \Pi_{\mathcal{X}}(x_t - \eta v_t)$. We also define $\tilde{e}_t := \frac{1}{m}\sum_{j =1}^{n}e_j^t$ and $ \hat{e}_t = x_t - w_t$. Additionally, we follow all symbols used in the algorithm and other proofs as is. Any new notation will be mentioned when it comes.
    \begin{align*}
        \|x_{t+1} - w^*\|^2 &\overset{(1-Lip\ \Pi_\mathcal{X})}{\leq} \|x_t - \eta v_t - w^*\|^2
        \\
        & = \|x_t - w^*\|^2 \underbrace{- 2\eta \inner{x_t - w^*}{v_t}}_{Term-1} + \underbrace{\eta^2 \|v_t\|^2}_{Term-2}
    \end{align*}
    Now, we bound
    \begin{align*}
        &\left\|v_t\right\|^2 = \left\|\frac{1}{m}\sum_{j \in \S_t}\C_j\left(e_j^t + \sum_{\tau=0}^{E-1}\nu_{j\tau}^t\right)\right\|^2
        \\
        &\overset{Jensen's+ Young's}{\leq} \frac{2}{m}\sum_{j \in \S_t}\left\|\C_j\left(e_j^t + \sum_{\tau=0}^{E-1}\nu_{j\tau}^t\right) - \left(e_j^t + \sum_{\tau=0}^{E-1}\nu_{j\tau}^t\right) \right\|^2 + \frac{2}{m}\sum_{j \in \S_t}\left\|e_j^t + \sum_{\tau=0}^{E-1}\nu_{j\tau}^t\right\|^2
        \\
        &{\leq}\frac{2(2-q)}{m}\sum_{j\in \S_t}\left\|e_j^t + \sum_{\tau=0}^{E-1}\nu_{j\tau}^t\right\|^2
        \\
        &\overset{(2-q)\leq2}{\leq} \frac{4}{m}\sum_{j\in \S_t}\left\|e_j^t + \sum_{\tau=0}^{E-1}\nu_{j\tau}^t\right\|^2
        \\
        &\overset{Young's}{\leq}\frac{8}{m}\sum_{j\in \S_t}\left[\|e_j^t\|^2+ \left\|\sum_{\tau=0}^{E-1}\nu_{j\tau}^t\right\|^2\right]
        \\
        &\overset{Jensen's+G-Lip+\Cref{lem:uniform_ef_uplink}}{\leq}\frac{8}{m}\sum_{j\in \S_t}\left[\frac{4(1-q)}{ q^2}E^2G^2 + E^2G^2\right]
        \\
        &=\frac{32(1-q)}{ q^2}E^2G^2 + 8E^2G^2
        \\
        &\overset{(1-q)\leq1}{\leq}\frac{40 E^2G^2}{ q^2}
    \end{align*}
    So, \begin{align*}
        Term-2 \leq \frac{40 \eta^2 E^2G^2}{ q^2}
    \end{align*}
    Now, upper-bounding $Term-1$.
    \begin{align*}
        &Term-1 = - 2\eta \inner{x_t - w^*}{v_t}
        \\
        & = - 2\eta \inner{w_t - w^*}{v_t} - 2\eta \inner{x_t - w_t}{v_t}
        \\
        &= - 2\eta \inner{w_t - w^*}{v_t} - 2\eta \inner{\hat{e}_t}{v_t}
        \\
        & = - 2\eta \inner{w_t - w^*}{\frac{1}{m}\sum_{j \in \S_t}(\Delta_j^t + e_j^t - e_j^{t+1})} - 2\eta \inner{\hat{e}_t}{v_t}
        \\
        & = - 2\eta \inner{w_t - w^*}{\frac{1}{m}\sum_{j \in \S_t}\Delta_j^t } - 2\eta \inner{w_t - w^*}{\frac{1}{m}\sum_{j \in \S_t}(e_j^t - e_j^{t+1})} - 2\eta \inner{\hat{e}_t}{v_t}
        \\
        & = - 2\eta \inner{w_t - w^*}{\bar{\nu}_t} + 2\eta \inner{w_t - w^*}{\bar{\nu}_t - \nu_t} - 2\eta \inner{w_t - w^*}{\frac{1}{m}\sum_{j \in [n]}(e_j^t - e_j^{t+1})} - 2\eta \inner{\hat{e}_t}{v_t}
        \\
        &= - 2\eta \inner{w_t - w^*}{\bar{\nu}_t} + 2\eta \inner{w_t - w^*}{\bar{\nu}_t - \nu_t} - 2\eta \inner{w_t - w^*}{\tilde{e}_t - \tilde{e}_{t+1}} - 2\eta \inner{\hat{e}_t}{v_t}
        \\
        & = - 2\eta \inner{w_t - w^*}{\bar{\nu}_t} + 2\eta \inner{w_t - w^*}{\bar{\nu}_t - \nu_t} - 2\eta \inner{w_t - x_t}{\tilde{e}_t - \tilde{e}_{t+1}} - 2\eta \inner{x_t - w^*}{\tilde{e}_t - \tilde{e}_{t+1}} 
        \\
        &- 2\eta \inner{\hat{e}_t}{v_t}
        \\
        & = - 2\eta \inner{w_t - w^*}{\bar{\nu}_t} + 2\eta \inner{w_t - w^*}{\bar{\nu}_t - \nu_t} + \underbrace{2\eta \inner{\hat{e}_t}{\tilde{e}_t - \tilde{e}_{t+1}}}_{Term-1-1} - 2\eta \inner{x_t - w^*}{\tilde{e}_t - \tilde{e}_{t+1}} \underbrace{- 2\eta \inner{\hat{e}_t}{v_t}}_{Term-1-2}
    \end{align*}
    Now bounding $Term-1-1$.
    \begin{align*}
        Term-1-1 &= 2\eta \inner{\hat{e}_t}{\tilde{e}_t - \tilde{e}_{t+1}}
        \\
        & = 2\eta \inner{\hat{e}_t}{\tilde{e}_t} + 2\eta \inner{\hat{e}_t}{\tilde{e}_{t+1}}
        \\
        &\overset{Cauchy-Schwarz}{\leq} 2\eta\|\hat{e}_t\|\|\tilde{e}_t\| + 2\eta\|\hat{e}_t\|\|\tilde{e}_{t+1}\|
        \\
        &\overset{\Cref{lem:uniform_ef_uplink}}{\leq}8\eta EG\frac{n}{m} \frac{\sqrt{(1-q)}}{q} \|\hat{e}_t\|
    \end{align*}
    Now we bound $\|\hat{e}_{t+1}\|^2$.
    \begin{equation*}
        \begin{aligned}
        \|\hat{e}_{t+1}\|^2 &= \|x_{t+1} - w_{t+1}\|^2
        \\
        & = \|x_{t+1} -w_t - \C_0(x_{t+1}-w_t)\|^2
        \\
        & \leq (1-q_0) \|x_{t+1} -w_t\|^2\
        \\
        & \leq (1-q_0) \|x_{t+1} - x_t + x_t - w_t\|^2
        \\
        &\overset{Young's}{\leq}(1-q_0)(1+\hat{\gamma})\|\hat{e}_t\|^2 + (1-q_0)(1+\hat{\gamma}^{-1}) \|x_{t+1} - x_t\|^2
        \\
        &\overset{1-Lip\ \Pi_\mathcal{X}}{\leq} (1-q_0)(1+\hat{\gamma})\|\hat{e}_t\|^2 + (1-q_0)(1+\hat{\gamma}^{-1}) \|x_t - \eta v_t - x_t \|^2
        \\
        &\leq(1-q_0)(1+\hat{\gamma})\|\hat{e}_t\|^2 + \eta^2(1-q_0)(1+\hat{\gamma}^{-1})\|v_t\|^2
        \end{aligned}
    \end{equation*}
    Plugging $\|v_t\|^2$, we have 
    \begin{align*}
    \|\hat{e}_{t+1}\|^2 \leq (1-q_0)(1+\hat{\gamma})\|\hat{e}_t\|^2 + \eta^2(1-q_0)(1+\hat{\gamma}^{-1})\frac{40 E^2G^2}{ q^2}.
    \end{align*}
    Same as before for $\hat{\gamma} =\frac{q_0}{2(1-q_0)} $, since $\hat{e}_0 = 0$, unrolling the recursion above we get
    \begin{align*}
        \|\hat{e}_{t+1}\|^2 &\leq \sum_{l=0}^{t}\left[(1-q_0)(1+\hat{\gamma})\right]^{t-l}\eta^2(1-q_0)(1+\hat{\gamma}^{-1})\frac{40 E^2G^2}{ q^2}
        \\
        &\overset{geometric\ sum}{\leq}\frac{(1-q_0)(1+\hat{\gamma}^-1)}{1- (1-q_0)(1+\hat{\gamma})}\eta^2\frac{40 E^2G^2}{ q^2}
        \\
        & \leq \eta^2 \frac{160(1-q_0) E^2G^2}{ q_0^2q^2}.
    \end{align*}
    Now, plugging the $\|\hat{e}_t\|$ back in $Term-1-1$, we get
    \begin{align*}
        Term-1-1 \leq 32\eta^2 E^2G^2\frac{n}{m} \frac{\sqrt{10(1-q) (1-q_0)}}{q_0q^2}.
    \end{align*}
    Similarly, we can bound $Term-1-2$.
    \begin{align*}
        Term-1-2 &= - 2\eta \inner{\hat{e}_t}{v_t}
        \\
        & \overset{Cauchy-Schwarz}{\leq}2\eta\norm{\hat{e}_t}\norm{v_t}
        \\
        & \leq  16\eta^2 E^2G^2\frac{\sqrt{10(1-q_0)}}{q_0q}
    \end{align*}
    So, for $Term-1$, we get
    \begin{align*}
        Term-1 &\leq - 2\eta \inner{w_t - w^*}{\bar{\nu}_t} + 2\eta \inner{w_t - w^*}{\bar{\nu}_t - \nu_t} - 2\eta \inner{x_t - w^*}{\tilde{e}_t - \tilde{e}_{t+1}} 
        \\
        &+ 32\eta^2 E^2G^2\frac{n}{m} \frac{\sqrt{10(1-q) (1-q_0)}}{q_0q^2} + 16\eta^2 E^2G^2\frac{\sqrt{10(1-q_0)}}{q_0q}
    \end{align*}
    Plugging $Term-1$ and $Term-2$ back into the starting inequality, we get
    \begin{align*}
         \|x_{t+1} - w^*\|^2 &\leq \|x_t - w^*\|^2 - 2\eta \inner{w_t - w^*}{\bar{\nu}_t} + 2\eta \inner{w_t - w^*}{\bar{\nu}_t - \nu_t} - 2\eta \inner{x_t - w^*}{\tilde{e}_t - \tilde{e}_{t+1}} 
        \\
        &+ 32\eta^2 E^2G^2\frac{n}{m} \frac{\sqrt{10(1-q) (1-q_0)}}{q_0q^2} + 16\eta^2 E^2G^2\frac{\sqrt{10(1-q_0)}}{q_0q} + \frac{40 \eta^2 E^2G^2}{ q^2}
    \end{align*}
    Now, bounding $- 2\eta \inner{w_t - w^*}{\bar{\nu}_t}$, using \Cref{lem:inner_prod_comp} and \Cref{lem:iterate_diff} to bound LHS like we have done in previous proofs, we get
    \begin{align*}
        &2\eta E(f(w_t) - f(w^*) )\mathds{1}\{t \in \A\} + 2\eta E(g(w_t) - g(w^*))\mathds{1}\{t \in \B\} \leq \norm{x_{t} - w^*}^2 - \norm{x_{t+1} - w^*}^2
        \\
        & + 2\eta^2EG^2 + \frac{2}{3}\eta^2E^3G^2 + 32\eta^2 E^2G^2\frac{n}{m} \frac{\sqrt{10(1-q) (1-q_0)}}{q_0q^2} + 16\eta^2 E^2G^2\frac{\sqrt{10(1-q_0)}}{q_0q} + \frac{40 \eta^2 E^2G^2}{ q^2}
        \\
        & + 2\eta \inner{w_t - w^*}{\bar{\nu}_t - \nu_t} - 2\eta \inner{x_t - w^*}{\tilde{e}_t - \tilde{e}_{t+1}}. 
    \end{align*}
    Again add and subtract $\hat{G}(w_t)$ from the 2nd term on LHS and rearrange and we get,
    \begin{align*}
        &2\eta E(f(w_t) - f(w^*) )\mathds{1}\{t \in \A\} + 2\eta E(\hat{G}(w_t) - g(w^*))\mathds{1}\{t \in \B\} \leq \norm{x_{t} - w^*}^2 - \norm{x_{t+1} - w^*}^2
        \\
        & + 2\eta^2EG^2 + \frac{2}{3}\eta^2E^3G^2 + 32\eta^2 E^2G^2\frac{n}{m} \frac{\sqrt{10(1-q) (1-q_0)}}{q_0q^2} + 16\eta^2 E^2G^2\frac{\sqrt{10(1-q_0)}}{q_0q} + \frac{40 \eta^2 E^2G^2}{ q^2}
        \\
        & + 2\eta \inner{w_t - w^*}{\bar{\nu}_t - \nu_t} - 2\eta \inner{x_t - w^*}{\tilde{e}_t - \tilde{e}_{t+1}} + 2\eta E(\hat{G}(w_t) - g(w_t))\mathds{1}\{t \in \B\}.
    \end{align*}
    Now sum both sides over $t \in [T]$, we get
    \begin{align*}
        &\sum_{t \in \A}2\eta E(f(w_t) - f(w^*) ) + \sum_{t \in \B}2\eta E(\hat{G}(w_t) - g(w^*)) \leq D^2+ 2\eta^2EG^2T + \frac{2}{3}\eta^2E^3G^2T 
        \\
        &+ 32\eta^2 E^2G^2\frac{n}{m}\frac{\sqrt{10(1-q) (1-q_0)}}{q_0q^2}T + 16\eta^2 E^2G^2\frac{\sqrt{10(1-q_0)}}{q_0q}T + \frac{40 \eta^2 E^2G^2}{ q^2}T
        \\
        & + \sum_{t=0}^{T-1}2\eta \inner{w_t - w^*}{\bar{\nu}_t - \nu_t} \underbrace{-\sum_{t=0}^{T-1} 2\eta \inner{x_t - w^*}{\tilde{e}_t - \tilde{e}_{t+1}}}_{Term-3}+ \sum_{t \in \A} 2\eta E(\hat{G}(w_t) - g(w_t)).
    \end{align*}
    Now, we bound $Term-3$.
    \begin{align*}
        &Term-3 = -\sum_{t=0}^{T-1} 2\eta \inner{x_t - w^*}{\tilde{e}_t - \tilde{e}_{t+1}}
        \\
        & = - 2\eta \left[\sum_{t=0}^{T-1}\inner{x_t - w^*}{\tilde{e}_t} - \sum_{t=0}^{T-1}\inner{x_t - w^*}{\tilde{e}_{t+1}}\right]
        \\
        & = - 2\eta \left[\sum_{t=0}^{T-1}\inner{x_t - w^*}{\tilde{e}_t} - \sum_{t=1}^{T}\inner{x_{t-1} - w^*}{\tilde{e}_{t}}\right]
        \\
        & = - 2\eta \left[\inner{x_0 - w^*}{\tilde{e}_0} + \sum_{t=1}^{T-1}\inner{x_t - w^*}{\tilde{e}_t} - \sum_{t=1}^{T-1}\inner{x_{t-1} - w^*}{\tilde{e}_{t}} - \inner{x_{T-1} - w^*}{\tilde{e}_{T}}\right]
        \\
        & = - 2\eta \inner{x_0 - w^*}{\tilde{e}_0} + 2\eta\inner{x_{T-1} - w^*}{\tilde{e}_{T}} - 2\eta\sum_{t=1}^{T-1}\inner{x_t - x_{t-1}}{\tilde{e}_t}
        \\
        &\overset{e_j^0=0, Cauchy-Schwarz}{\leq} 2\eta \norm{x_{T-1} - w^*}\norm{\tilde{e}_{T}} + 2\eta\sum_{t=1}^{T-1}\norm{x_t - x_{t-1}}\norm{\tilde{e}_t}
        \\
        &\overset{As.~\ref{as:bdd_domain}, 1-Lip\ \Pi_\mathcal{X}}{\leq}  \frac{n}{m}\frac{4\eta DEG\sqrt{1-q}}{q} + 2\eta^2 \sum_{t=1}^{T-1}\norm{v_t}\frac{n}{m}\frac{2\sqrt{1-q}EG}{q}
        \\
        &\leq \frac{n}{m}\frac{4\eta DEG\sqrt{1-q}}{q} +  \frac{n}{m}\frac{8\eta^2 E^2G^2\sqrt{10(1-q)}}{ q^2}T
    \end{align*}
    Plugging $Term-3$ back we get
     \begin{align*}
        &\sum_{t \in \A}2\eta E(f(w_t) - f(w^*) ) + \sum_{t \in \B}2\eta E(\hat{G}(w_t) - g(w^*)) \leq D^2+ 2\eta^2EG^2T + \frac{2}{3}\eta^2E^3G^2T 
        \\
        &+ 32\eta^2 E^2G^2\frac{n}{m}\frac{\sqrt{10(1-q) (1-q_0)}}{q_0q^2}T + 16\eta^2 E^2G^2\frac{\sqrt{10(1-q_0)}}{q_0q}T + \frac{40 \eta^2 E^2G^2}{ q^2}T
        \\
        & +  \frac{n}{m}\frac{4\eta DEG\sqrt{1-q}}{q} +  \frac{n}{m}\frac{8\eta^2 E^2G^2\sqrt{10(1-q)}}{ q^2}T 
        \\
        & + \sum_{t \in \A}2\eta \left\langle \frac{1}{n} \sum_{j=1}^{n} \sum_{\tau=0}^{E-1} \nabla f_j(w_{j,\tau}^t) 
        - \frac{1}{m} \sum_{j \in \S_t} \sum_{\tau=0}^{E-1} \nabla f_j(w_{j,\tau}^t), w_t - w^* \right\rangle  \\
        & + \sum_{t \in \B}2\eta \left\langle \frac{1}{n} \sum_{j=1}^{n} \sum_{\tau=0}^{E-1} \nabla g_j(w_{j,\tau}^t) 
        - \frac{1}{m} \sum_{j \in \S_t} \sum_{\tau=0}^{E-1} \nabla g_j(w_{j,\tau}^t), w_t - w^* \right\rangle
        \\
        & + \sum_{t \in \B}2\eta E(\hat{G}(w_t) - g(w_t)).
    \end{align*}
    Now, by using Lemmas~\ref{lem:inner_product_bound} and \ref{lem:constraint_gap}, we can bound the term above for $\delta \in (0,1)$ with probability at least $1-\delta$
    \begin{align*}
        &\sum_{t \in \A}2\eta E(f(w_t) - f(w^*) ) + \sum_{t \in \B}2\eta E(\hat{G}(w_t) - g(w^*)) \leq D^2+ 2\eta^2EG^2T + \frac{2}{3}\eta^2E^3G^2T 
        \\
        &+ 32\eta^2 E^2G^2\frac{n}{m}\frac{\sqrt{10(1-q) (1-q_0)}}{q_0q^2}T + 16\eta^2 E^2G^2\frac{\sqrt{10(1-q_0)}}{q_0q}T + \frac{40 \eta^2 E^2G^2}{ q^2}T
        \\
        & +  \frac{n}{m}\frac{4\eta DEG\sqrt{1-q}}{q} +  \frac{n}{m}\frac{8\eta^2 E^2G^2\sqrt{10(1-q)}}{ q^2}T 
         + 8\eta EGD\sqrt{\frac{2T}{m}\log\left(\frac{3}{\delta}\right)} +  2\eta E\sigma T \sqrt{\frac{2}{m} \log\left(\frac{6T}{\delta}\right)}.
    \end{align*}
    Dividing both sides by $2\eta E T$ we get
    \begin{align*}
        &\frac{1}{T}\sum_{t \in \A}(f(w_t) - f(w^*) ) + \frac{1}{T}\sum_{t \in \B} (\hat{G}(w_t) - g(w^*)) \leq \frac{D^2}{2\eta E T}+ \eta G^2 + \frac{1}{3}\eta E^2 G^2 
        \\
        &+ 16\eta EG^2\frac{n}{m}\frac{\sqrt{10(1-q) (1-q_0)}}{q_0q^2} + 8\eta EG^2\frac{\sqrt{10(1-q_0)}}{q_0q} + \frac{20 \eta EG^2}{ q^2}
        \\
        & +  \frac{n}{m}\frac{2 DG\sqrt{1-q}}{qT} +  \frac{n}{m}\frac{4\eta EG^2\sqrt{10(1-q)}}{ q^2} 
         + \frac{4GD}{\sqrt{mT}}\sqrt{2\log\left(\frac{3}{\delta}\right)} +  \sigma\sqrt{\frac{2}{m} \log\left(\frac{6T}{\delta}\right)}.
    \end{align*}
    Now choosing $\eta = \sqrt{\frac{D^2}{2G^2 E T \Gamma}}$, where $\Gamma = 1 + \frac{1}{3} E^2 + 16 E\frac{n}{m}\frac{\sqrt{10(1-q) (1-q_0)}}{q_0q^2} + 8 E\frac{\sqrt{10(1-q_0)}}{q_0q} + \frac{20 E}{ q^2} + \frac{n}{m}\frac{4 E\sqrt{10(1-q)}}{ q^2}$, we get
    \begin{equation*}
    \begin{aligned}
        \frac{1}{T}\sum_{t\in \A}(f(w_t) - f(w^*) ) + \frac{1}{T}\sum_{t\in \B}(\hat{G}(w_t) - g(w^*)) &\leq \sqrt{\frac{{2 D^2 G^2 \Gamma}}{{E T}}} + \frac{n}{m}\frac{2 DG\sqrt{1-q}}{qT}
        + \frac{4GD}{\sqrt{mT}}\sqrt{2\log\left(\frac{3}{\delta}\right)}
        \\
        &\quad +  \sigma\sqrt{\frac{2}{m} \log\left(\frac{6T}{\delta}\right)}.
    \end{aligned}
    \end{equation*}
    Again, using a similar style of analysis as in previous proofs, if $\A = \phi$, then
    $$\epsilon_{bad} < \sqrt{\frac{{2 D^2 G^2 \Gamma}}{{E T}}} + \frac{n}{m}\frac{2 DG\sqrt{1-q}}{qT}
    + \frac{4GD}{\sqrt{mT}}\sqrt{2\log\left(\frac{3}{\delta}\right)}
    +  \sigma\sqrt{\frac{2}{m} \log\left(\frac{6T}{\delta}\right)}.$$
    Thus, if we set $\epsilon = \sqrt{\frac{{2 D^2 G^2 \Gamma}}{{E T}}} + \frac{n}{m}\frac{2 DG\sqrt{1-q}}{qT}
    + \frac{4GD}{\sqrt{mT}}\sqrt{2\log\left(\frac{3}{\delta}\right)}
    +  \sigma\sqrt{\frac{2}{m} \log\left(\frac{6T}{\delta}\right)}$, $\A \neq \phi$.
    Now, let's consider two different cases here as well. Either $\sum_{t\in \A} f(w_t)-f(w^*) \leq 0$ which implies by the convexity of $f$ for $\bar{w} = \frac{1}{|\A|} \sum_{t\in \A} w_t$, we have
    \begin{equation}
    \label{eq:fedhbisgm_partial_objective_result1_1}
        f(\bar{w})-f(w^*) \leq 0 <\epsilon.
    \end{equation}
    Additionally, we know using the convexity of g that
    \begin{align*}
        g(\bar{w}) &\leq \frac{1}{|\A|} \sum_{t\in \A}g(w_t)
        \\
        & \leq \frac{1}{|\A|} \sum_{t\in \A}(g(w_t) - \hat{G}(w_t)) + \frac{1}{|\A|} \sum_{t\in \A}\hat{G}(w_t)
        \\
        &\overset{Lem.~\ref{lem:constraint_gap}}{\leq} \epsilon + \sigma \sqrt{\frac{2}{m} \log\left(\frac{6T}{\delta}\right)}, \text{with probability at least } 1-\delta.
    \end{align*}
    Otherwise, if $\sum_{t\in \A} f(w_t)-f(w^*) >0$, then
    \begin{equation}\label{eq:fedhbisgm_partial}
        \begin{aligned}
          \sqrt{\frac{{2 D^2 G^2 \Gamma}}{{E T}}} + \frac{n}{m}\frac{2 DG\sqrt{1-q}}{qT}
            &+ \frac{4GD}{\sqrt{mT}}\sqrt{2\log\left(\frac{3}{\delta}\right)}
            +  \sigma\sqrt{\frac{2}{m} \log\left(\frac{6T}{\delta}\right)}  
            \\
            &\geq \frac{1}{T} \sum_{t\in \A} f(w_t)-f(w^*) +\frac{1}{T} \sum_{t\in \B} \hat{G}(w_t)-g(w^*)\\&>\frac{1}{T} \sum_{t\in \A} f(w_t)-f(w^*) +\frac{1}{T} \sum_{t\in \B} \epsilon\\&=\frac{|\A|}{T}  \frac{1}{|\A|}\sum_{t\in \A} f(w_t)-f(w^*) +(1-\frac{|\A|}{T}) \epsilon\\&\geq\frac{|\A|}{T} \big( f(\bar{w})-f(w^*) \big)+(1-\frac{|\A|}{T}) \epsilon .
        \end{aligned}
    \end{equation}
    This implies $f(\bar{w})-f(w^*)<\epsilon$ and further using the same argument as above for $g$, we have for probability at least $1-\delta$, $g(\bar{w}) \leq \epsilon + \sigma\sqrt{\frac{2}{m}\log\left(\frac{6T}{\delta}\right)}.$
\end{proof}
\hrule
%%%%%%%%%%%%%%%%%%%%%%%%%%End here:hard_partial_comp%%%%%%%%%%%%%%%%%
%%%%%%%%%%%%%%%%%%%%%%%%%%Start here:soft_full_comp%%%%%%%%%%%%%%%%%
\subsubsection{Soft Switching --- Full Participation}
\begin{theorem}
\label{thm:fedssgbicm}
Consider the problem in \eqref{eq:op_prb}, where $\mathcal{X} = \R^d$ and \Cref{alg:fedsgm_unified}, under Assumption~\ref{as:cvx_lip} and \ref{as:compression_biased}. Define $D:=\|w_0-w^*\|$ and
\begin{equation*}
\A = \{t\in [T]\;| \;g(w_{t})< \epsilon\}, \qquad \bar{w} = \sum_{t\in \A} \alpha_t w_t,
 \end{equation*}
 where 
    \begin{equation}
        \alpha_t = \frac{1-\sigma_\beta(g(w_t)-\epsilon)}{\sum_{t\in \A} [1-\sigma_\beta(g(w_t)-\epsilon)]}.
    \end{equation}
Then, if
\begin{equation*}
\epsilon = \sqrt{\frac{{2 D^2 G^2 \Gamma}}{{E T}}}, \eta = \sqrt{\frac{D^2}{2G^2 E T \Gamma}}, \& \beta = \frac{2}{\epsilon}\text{ where } \Gamma = 2 E^2 + 2E\frac{\sqrt{1-q}}{q} + 4E\frac{\sqrt{10(1-q_0)}}{q_0q}
 \end{equation*}
it holds that $\A$ is nonempty, $\bar{w}$ is well-defined, and $\bar{w}$ is an $\epsilon$-solution for $P$.
\end{theorem}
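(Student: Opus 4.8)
The plan is to fuse the virtual-iterate error-feedback machinery from the proof of \Cref{thm:bidirectional_1} with the blended-gradient convexity argument from the proof of \Cref{thm:fedssgm}, exploiting the fact that these two ingredients decouple cleanly. First I would introduce the same virtual sequence $\hat{w}_t := x_t - \eta e_t$ with $e_t = \frac{1}{n}\sum_{j=1}^n e_j^t$, so that the recursion collapses to $\hat{w}_{t+1} = \hat{w}_t - \eta \nu_t$ with $\nu_t = \frac{1}{n}\sum_{j=1}^n\sum_{\tau=0}^{E-1}\nu_{j,\tau}^t$, the only difference being that $\nu_{j,\tau}^t = (1-\sigma_\beta^t)\nabla f_j(w_{j,\tau}^t) + \sigma_\beta^t\nabla g_j(w_{j,\tau}^t)$ is now the soft-switching direction with $\sigma_\beta^t := \sigma_\beta(g(w_t)-\epsilon)$. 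I also carry the downlink discrepancy $\hat{e}_t = x_t - w_t$.

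Expanding $\norm{\hat{w}_{t+1}-w^*}^2$ along \cref{eq:hard_fedbi_start} gives the usual $Term\text{-}A = \eta^2\norm{\nu_t}^2 \le \eta^2 E^2 G^2$, which survives verbatim because $\sigma_\beta(\cdot)\in[0,1]$ keeps $\norm{\nu_{j,\tau}^t}\le G$. The cross term $Term\text{-}B = -2\eta\inner{\hat{w}_t - w^*}{\nu_t}$ is split exactly as in \Cref{thm:bidirectional_1} by inserting $\pm x_t$ and $\pm w_t$, producing the uplink-error piece $\tfrac{2\eta^2 G}{n}\sum_{j,\tau}\norm{e_t}$, the downlink-error piece $\tfrac{2\eta G}{n}\sum_{j,\tau}\norm{\hat{e}_t}$, and a ``clean'' piece $-\tfrac{2\eta}{n}\sum_{j,\tau}\inner{w_t-w^*}{\nu_{j,\tau}^t}$.

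The key point is that the two error recursions are agnostic to switching: the client compressor acts on $e_j^t + \sum_\tau\nu_{j,\tau}^t$ and the only property used is $\norm{\nu_{j,\tau}^t}\le G$, never the specific gradient chosen, so the geometric recursions from \Cref{thm:bidirectional_1} carry over unchanged to yield $\E\norm{e_t}^2\le \tfrac{4(1-q)}{q^2}E^2G^2$ and $\E\norm{\hat{e}_t}^2\le \eta^2\tfrac{160(1-q_0)}{q_0^2q^2}E^2G^2$, hence the per-round contributions $4\eta^2 E^2 G^2\tfrac{\sqrt{1-q}}{q}$ and $8\eta^2 E^2 G^2\tfrac{\sqrt{10(1-q_0)}}{q_0 q}$. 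The clean piece is instead treated by the soft-switching route: splitting $w_t-w^* = (w_t-w_{j,\tau}^t)+(w_{j,\tau}^t-w^*)$, bounding the drift part via Young's inequality and \Cref{lem:iterate_diff} and the second part via the blended convexity of $f_j,g_j$ (the soft-switching analog of \Cref{lem:inner_prod_comp}), upper bounds it by $2\eta E\sigma_\beta^t(g(w^*)-g(w_t)) + 2\eta E(1-\sigma_\beta^t)(f(w^*)-f(w_t))$ plus the same drift overhead. Choosing $\alpha=\eta$, telescoping over $t\in[T]$, and substituting $\eta = \sqrt{D^2/(2G^2ET\Gamma)}$ with $\Gamma = 2E^2 + 2E\tfrac{\sqrt{1-q}}{q} + 4E\tfrac{\sqrt{10(1-q_0)}}{q_0 q}$ (where $2E^2$ conservatively absorbs the lower-order $\tfrac12 E + 1 + \tfrac13 E^2$ terms) reproduces the master inequality $\sqrt{2D^2G^2T\Gamma/E} \ge \sum_{t\in\A}(1-\sigma_\beta^t)(f(w_t)-f(w^*)) + \epsilon|\B| + (\epsilon-\tfrac1\beta)\sum_{t\in\A}\sigma_\beta^t$, identical in form to the one in \Cref{thm:fedssgm} but with the compression-inflated $\Gamma$. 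From here the endgame is verbatim: use $\sigma_\beta^t=1$ with $g(w_t)-g(w^*)\ge\epsilon$ on $\B$ and $g(w_t)\ge\epsilon-\tfrac1\beta$ on the active part of $\A$, set $\epsilon = \sqrt{2D^2G^2\Gamma/(ET)}$ to force $\A$ nonempty, form $\bar{w}=\sum_{t\in\A}\alpha_t w_t$, and impose $\beta\ge 2/\epsilon$ so the correction term $\beta^{-1}\sum_{t\in\A}\sigma_\beta^t - \tfrac12\epsilon T$ is non-positive, yielding $f(\bar w)-f(w^*)\le\epsilon$; feasibility $g(\bar w)\le\epsilon$ follows from the definition of $\A$ and convexity of $g$.

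The main obstacle is rigorously justifying the decoupling in the third paragraph, namely that introducing the continuous weight $\sigma_\beta^t$ does not perturb the error-feedback contraction constants. This is resolved by observing that compression is applied to the fully accumulated local update $\Delta_j^t = \sum_\tau\nu_{j,\tau}^t$ after the local loop, so the only quantity entering the $\norm{e_t}^2$ and $\norm{\hat{e}_t}^2$ recursions is $\norm{\Delta_j^t}\le EG$, a bound that holds identically under hard and soft switching. Consequently the contraction parameters and the resulting $\Gamma$ are inherited without modification, and the sole substantive difference from the compressed hard-switching proof is the convexity bookkeeping on the clean term.
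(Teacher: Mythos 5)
Your proposal is correct and follows essentially the same route as the paper's proof: the paper likewise expands $\|\hat{w}_{t+1}-w^*\|^2$ via the virtual iterate, explicitly borrows the $Term$-$B_1$/$Term$-$B_2$ compression-error bounds from Theorem~\ref{thm:bidirectional_1} and the blended-convexity bound on the clean term from Theorem~\ref{thm:fedssgm}, and finishes with the identical two-case argument and the $\beta \geq 2/\epsilon$ condition. Your explicit justification that the error-feedback recursions depend only on $\|\Delta_j^t\|\le EG$ (and hence are unaffected by the soft weights) is a welcome clarification of a step the paper leaves implicit, but it is not a different argument.
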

\begin{proof}
    Starting with the update rule presented in \cref{eq:bidirectional_update} along with the soft-switching update mechanism, we analyze the squared distance to the optimal point $w^*$ as follows,
    \begin{align}
    \nonumber
    \|\hat{w}_{t+1} - w^*\|^2 
    &= \left\| \hat{w}_t - \eta \nu_t - w^* \right\|^2 \\
    \label{eq:soft_fedbi_start}
    & = \left\| \hat{w}_t - w^* \right\|^2 + \underbrace{\eta^2\left\|\nu_t \right\|^2}_{Term-A} \underbrace{-2 \eta \left \langle\hat{w}_t - w^*,  \nu_t \right \rangle}_{Term-B}
\end{align}
Firstly, we start with upper bounding $Term-A$ where we use the fact that $\sigma_\beta^t =\sigma_\beta(g(w_t) - \epsilon) \in [0,1]$.
\begin{align*}
    Term-A = \eta^2\left\|\nu_t \right\|^2 = \eta^2 \left\| \frac{1}{n} \sum_{j=1}^n  \sum_{\tau=0}^{E-1} \nu_{j,\tau}^t\right\|^2 &\overset{Jensen's}{\leq} \eta^2 \frac{1}{n} \sum_{j=1}^n E \sum_{\tau=0}^{E-1}\left\|  \nu_{j,\tau}^t\right\|^2 \overset{G-Lip}{\leq} \eta^2 E^2 G^2
\end{align*}
Now, we start to upper bound $Term-B$.
\begin{align*}
    &Term-B = -2 \eta \left \langle\hat{w}_t - w^*,  \nu_t \right \rangle 
    \\
    &= -2 \eta \left \langle\hat{w}_t - x_t + x_t - w^*,  \nu_t \right \rangle
    \\
    & = -2 \eta \left \langle\hat{w}_t - x_t,  \nu_t \right \rangle -2 \eta \left \langle x_t - w^*,  \nu_t \right \rangle 
    \\
    & = 2 \eta^2 \left \langle e_t,  \nu_t \right \rangle + 2 \eta \left \langle w_t - x_t,  \nu_t \right \rangle -2\eta \left \langle w_t - w^*,  \nu_t \right \rangle
    \\
    &= \frac{2 \eta^2 }{n} \sum_{j=1}^n  \sum_{\tau=0}^{E-1} \left \langle e_t,  \nu_{j,\tau}^t  \right \rangle +  \frac{2 \eta}{n} \sum_{j=1}^n  \sum_{\tau=0}^{E-1}\left \langle -\hat{e}_t,  \nu_{j,\tau}^t \right \rangle - \frac{2 \eta}{n} \sum_{j=1}^n  \sum_{\tau=0}^{E-1} \left \langle w_t - w^*,  \nu_{j,\tau}^t \right \rangle
    \\
    &\overset{Cauchy-Schwarz}{\leq}\frac{2 \eta^2 }{n} \sum_{j=1}^n  \sum_{\tau=0}^{E-1}\|e_t\|\|\nu_{j,\tau}^t\| + \frac{2 \eta }{n} \sum_{j=1}^n  \sum_{\tau=0}^{E-1}\|\hat{e}_t\|\|\nu_{j,\tau}^t\| 
    \\
    &\qquad- \frac{2 \eta}{n} \sum_{j=1}^n  \sum_{\tau=0}^{E-1} \left \langle w_t - w^*,  \nu_{j,\tau}^t \right \rangle
    \\
    &\overset{G-Lip}{\leq}\underbrace{\frac{2 \eta^2 G}{n} \sum_{j=1}^n  \sum_{\tau=0}^{E-1}\|e_t\|}_{Term-B_1} + \underbrace{\frac{2 \eta G}{n} \sum_{j=1}^n  \sum_{\tau=0}^{E-1}\|\hat{e}_t\|}_{Term-B_2} \underbrace{- \frac{2 \eta}{n} \sum_{j=1}^n  \sum_{\tau=0}^{E-1} \left \langle w_t - w^*,  \nu_{j,\tau}^t \right \rangle}_{Term-B_3}
\end{align*}
We will use the results for upper bounding $Term-B_1$ and $Term-B_2$ from the proof of \Cref{thm:bidirectional_1}, and we will borrow the result for upper bounding $Term-B_3$ from the proof of \Cref{thm:fedssgm}. Now, we take the expectation on both sides of \cref{eq:soft_fedbi_start} with respect to the compression operators and plug in the value of $Term-B_1$, $Term-B_2$, and $Term-B_3$, we get
\begin{equation}
\label{eq:fedbisgcm_soft}
    \begin{aligned}
        &\left\| \hat{w}_{t+1} - w^* \right\|^2 \leq \left\| \hat{w}_t - w^* \right\|^2 + \eta^2E^2G^2  + 4\eta^2E^2G^2\frac{\sqrt{(1-q)}}{q} + 8\eta^2 E^2 G^2 \frac{\sqrt{10(1-q_0)}}{q_0q}
        \\
        & \quad \frac{2\eta}{3\alpha} \eta^3E^3G^2 + 2\eta \alpha E G^2 + 2\eta E \sigma_{\beta}^t \left( g(w^*) - g(w_t) \right) + 2\eta E (1 - \sigma_{\beta}^t) \left( f(w^*) - f(w_t) \right)
        \\
        &\overset{\alpha = \eta}{\leq}\left\| \hat{w}_t - w^* \right\|^2 + \eta^2E^2G^2  + 4\eta^2E^2G^2\frac{\sqrt{(1-q)}}{q} + 8\eta^2 E^2 G^2 \frac{\sqrt{10(1-q_0)}}{q_0q}
        \\
        & \quad \frac{2\eta}{3} \eta^2E^3G^2 + 2\eta^2 E G^2 + 2\eta E \sigma_{\beta}^t \left( g(w^*) - g(w_t) \right) + 2\eta E (1 - \sigma_{\beta}^t) \left( f(w^*) - f(w_t) \right)
        \end{aligned}
\end{equation}
Let $\A = \{t\in [T]| g(w_t)< \epsilon\}$ and $\B = [T]\backslash \A = \{t\in [T]| g(w_t)\geq \epsilon\}$. Note that for all $t\in \B$ it holds that $\sigma_\beta(g(w_t)-\epsilon) = 1$ and $g(w_t)-g(w^\ast)\geq \epsilon$. Further, for all $t\in \A$ if $\sigma_\beta(g(w_t)-\epsilon) \geq 0$ it holds that $g(w_t)-g(w^\ast)\geq g(w_t)\geq\epsilon -1/\beta$. With these observations, using convexity of $f$ and $g$ and decomposing the sum over $t$ according to the definitions of $\A$ and $\B$ and division by $T$ yields,
\begin{equation*}
    \begin{aligned}
       &\frac{D^2}{2\eta E} + \frac{1}{2}\eta E G^2 T + \eta G^2 T + \frac{1}{3}\eta E^2 G^2 T + 2\eta E G^2\frac{\sqrt{(1-q)}}{q} + 4\eta E G^2 \frac{\sqrt{10(1-q_0)}}{q_0q} \geq 
       \\
       &
       \sum_{t\in \A} \sigma_\beta^t \left(g(w_t) - g(w^*) \right) 
       + \sum_{t\in \A} (1 - \sigma_{\beta}^t)\left(f(w_t) - f(w^*)  \right)+\sum_{t\in \B} \left(g(w_t) - g(w^*) \right). 
    \end{aligned}
\end{equation*}
Now choosing $\eta = \sqrt{\frac{D^2}{2G^2 E T \Gamma}}$, where $\Gamma = \frac{1}{2} E + 1 + \frac{1}{3} E^2 + 2E\frac{\sqrt{1-q}}{q} + 4E\frac{\sqrt{10(1-q_0)}}{q_0q}$, we get
\begin{equation}
    \label{eq:fedssgbicm_sum_start}
    \begin{aligned}
      \sqrt{\frac{{2 D^2 G^2 T \Gamma}}{{E}}} \geq &\sum_{t\in \A} \sigma_\beta^t \left(g(w_t) - g(w^*) \right) 
       + \sum_{t\in \A} (1 - \sigma_{\beta}^t)\left(f(w_t) - f(w^*)  \right) 
       \\
       &+\sum_{t\in \B} \left(g(w_t) - g(w^*) \right)
       \\
       &\geq \sum_{t\in \A} (1 - \sigma_{\beta}^t)\left(f(w_t) - f(w^*)  \right) + \epsilon|\B| + \left(\epsilon - \frac{1}{\beta}\right)\sum_{t\in \A} \sigma_\beta^t.
    \end{aligned}
\end{equation}
Similar to the previous proofs, we first need to find the smallest $\epsilon$ to ensure $\A$ is non-empty. So, to find a lower bound on $\epsilon$, assume $\A$ is empty in \cref{eq:fedssgbicm_sum_start} and observe that as long as condition $\sqrt{\frac{{2 D^2 G^2\Gamma}}{{E T}}} < \epsilon$ is met, $\A$ is non-empty. We choose to set $\epsilon = 2\sqrt{\frac{{2 D^2 G^2\Gamma}}{{E T}}}$.

Now, like before, we consider two cases based on the sign of $\sum_{t\in \A} \big(1-\sigma_\beta^t\big) \big(f(w_t)-f(w^*)\big)$. As before, when the sum is non-positive, we are done by the definition of $\A$, which implies $0<1-\sigma_\beta(g(w_t)-\epsilon)\leq 1$ and the convexity of $f$ and $g$. 

Assuming the sum is positive, dividing  \cref{eq:fedssgbicm_sum_start} by $\sum_{t\in \A} \big(1-\sigma_\beta^t\big)$ (which by the definition of $\A$ is strictly positive), using convexity of $f$, and the definition of $\bar{w}$, we have
\begin{equation*}
\begin{aligned}
       f(\bar{w})-f(w^*) &\leq  \frac{0.5\epsilon T - \epsilon |\B|-(\epsilon-\frac{1}{\beta})\sum_{t\in \A} \sigma_\beta^t}{|\A| - \sum_{t\in \A} \sigma_\beta(g(w_t)-\epsilon)}\\&=\epsilon+\frac{-0.5\epsilon T+\beta^{-1}\sum_{t\in \A} \sigma_\beta^t}{|\A| - \sum_{t\in \A} \sigma_\beta^t},
\end{aligned}
\end{equation*}
where we used $|\B| = T-|\A|$.

Let us now find a lower bound on $\beta$ to ensure the second term in the bound is non-positive. Note this is done for simplicity, and as long as the second term is $\mathcal{O}(\epsilon)$, an $\epsilon$-solution can be found. Immediate calculations show  the second term in the bound is non-positive when
\begin{equation*}
    \beta \geq \frac{2\sum_{t\in \A} \sigma_\beta^t}{\epsilon T}.
\end{equation*}
Since $\sum_{t\in \A} \sigma_\beta^t <T$, a sufficient (and highly conservative) condition for all $T\geq 1$ is to set $\beta = 2/\epsilon$. 
Thus, we proved the sub-optimality gap result. The feasibility result is immediate given the definition of $\A$ and the convexity of $g$.
\end{proof}
%%%%%%%%%%%%%%%%%%%%%%%%%%End here:soft_full_comp%%%%%%%%%%%%%%%%%
%%%%%%%%%%%%%%%%%%%%%%%%%%%%%%%%%New Results - Stochastic FedSGM%%%%%%%%%%%%%%%%%%%%%%%
\section{Stochastic \textsc{FedSGM}}
\begin{theorem}
\label{thm:stoch_fedsgm}
Consider the optimization problem in \eqref{eq:op_prb} and Algorithm~\ref{alg:fedsgm_unified} under Assumptions~\ref{as:cvx_lip}, \ref{as:bdd_domain}, and~\ref{as:sgd_noise_bound}. Additionally, assume that in each global round, $m$ distinct clients are sampled uniformly at random. Define $\tilde{G}(w_t):=\frac{1}{m}\sum_{j \in \S_t}g_j(w_t,\xi_t^j)$, 
\[
\mathcal{A} := \left\{ t \in [T] \;\middle|\; \tilde{G}(w_t) \leq \epsilon \right\}, \qquad \bar{w} := \frac{1}{|\mathcal{A}|} \sum_{t \in \mathcal{A}} w_t.
\]

Suppose the step size $\eta$ and constraint threshold $\epsilon$ are set as
\[
\eta = \sqrt{\frac{D^2}{2G_{eff} E T}} \qquad \text{and }\]
\[
\epsilon = \sqrt{\frac{{2 D^2 G_{eff}}}{{E T}}} + 4 G D \sqrt{\frac{2}{{mT}}\log\left(\frac{7}            {\delta}\right)}
        + 2 D \sigma_\xi \sqrt{\frac{2}{mET}\log\left(\frac{7}{\delta}\right)} 
        + 2\sqrt{\frac{2}{m}\left({\sigma^2} + \frac{\sigma^2_b}{N_b}\right) \log\left(\frac{14T}{\delta}\right)},
\]
where $G_{eff} = 3 E^{2} G^{2}+ E^{2} \sigma_{\xi}^{2} \left(1 + \log\left(\frac{7 n T E}{\delta}\right)\right)+ \frac{4}{m}(EG^{2}+\sigma_{\xi}^{2}) \log\left(\frac{28 T}{\delta}\right)$ and $\delta \in (0,1)$ is a confidence parameter. Then, with probability at least $1-\delta$, the following hold:
\begin{enumerate}
    \item The set $\mathcal{A}$ is non-empty, so $\bar{w}$ is well-defined.
    \item The average iterate $\bar{w}$ satisfies
\[
f(\bar{w}) - f(w^*) \leq \epsilon, 
\qquad 
g(\bar{w}) \leq \epsilon.
\]
\end{enumerate}
\end{theorem}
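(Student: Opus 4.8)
The plan is to mirror the hard-switching partial-participation argument (\Cref{thm:hard_bidrec_partial} with compression switched off, i.e.\ $\C_j\equiv\mathbf{Id}$) while replacing every exact local subgradient by its stochastic counterpart $\nu_{j,\tau}^t$ and absorbing the extra randomness into high-probability terms. Without compression the iterate obeys $w_{t+1}=\Pi_{\mathcal{X}}\!\big(w_t-\eta\,\tfrac1m\sum_{j\in\S_t}\sum_{\tau=0}^{E-1}\nu_{j,\tau}^t\big)$, so using the $1$-Lipschitzness of $\Pi_{\mathcal{X}}$ I would expand $\|w_{t+1}-w^*\|^2$ into the squared term (Term-A) and the cross term (Term-B) and route each through the lemmas already established.

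For Term-A I would write $\tfrac1m\sum_{j\in\S_t}\sum_\tau\nu_{j,\tau}^t=\tfrac1n\sum_{j=1}^n\sum_\tau\bar\nu_{j,\tau}^t-\Delta_t$ with $\Delta_t$ as in \Cref{lem:hp_sampling_noise}, apply Young to obtain $2\eta^2E^2G^2+2\eta^2\|\Delta_t\|^2$, and control the variance term uniformly in $t$ via \Cref{lem:hp_sampling_noise}; this supplies the $\tfrac4m(EG^2+\sigma_\xi^2)\log(\cdot)$ piece of $G_{eff}$. For Term-B I would split off the ideal deterministic direction $\tfrac1n\sum_j\sum_\tau\bar\nu_{j,\tau}^t$, bound it by \Cref{lem:inner_prod_comp} (which legitimately uses convexity only for the true gradients), transfer the per-iterate gradients back to $w_t$ by convexity plus Young exactly as in the deterministic proofs, and control the resulting local drift $\sum_\tau\|w_t-w_{j,\tau}^t\|^2$ by the stochastic bound \Cref{lem:iterate_diff_stoch}, which furnishes the $E^2\sigma_\xi^2\big(1+\log(nTE/\delta)\big)$ term of $G_{eff}$. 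The leftover mismatch $2\eta\inner{w_t-w^*}{\Delta_t}$ I would decompose into a client-sampling inner product (\Cref{lem:inner_product_bound}) and an SGD-noise inner product (\Cref{lem:inner_product_bound_data}); after summing over $t$ and dividing by $2\eta ET$ these become the $4GD\sqrt{\tfrac{2}{mT}\log}$ and $2D\sigma_\xi\sqrt{\tfrac{2}{mET}\log}$ terms of $\epsilon$. Collecting the deterministic-style gradient, drift, and sampling-variance contributions and crudely bounding $EG^2,G^2\le E^2G^2$ assembles the remaining constant into $G_{eff}=3E^2G^2+E^2\sigma_\xi^2(1+\log(\cdot))+\tfrac4m(EG^2+\sigma_\xi^2)\log(\cdot)$, so that $\eta=\sqrt{D^2/(2G_{eff}ET)}$ makes $\tfrac{D^2}{2\eta ET}+\eta G_{eff}$ collapse to $\sqrt{2D^2G_{eff}/(ET)}$.

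Because the switching set $\A$ is defined through the stochastic estimate $\tilde G(w_t)$, I would add and subtract $\tilde G(w_t)$ in the $t\in\B$ terms, writing $g(w_t)-g(w^*)=\big(\tilde G(w_t)-g(w^*)\big)+\big(g(w_t)-\tilde G(w_t)\big)$, and sum the last gap by \Cref{lem:constraint_gap_both} (through \Cref{prop:Total_subGauss_stoch}), contributing the final $\sqrt{\tfrac2m(\sigma^2+\sigma_b^2/N_b)\log}$ term. Dividing by $2\eta ET$ then yields a bound of the stated shape on $\tfrac1T\sum_{t\in\A}(f(w_t)-f(w^*))+\tfrac1T\sum_{t\in\B}(\tilde G(w_t)-g(w^*))$. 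From here the endgame is identical to the earlier theorems: assuming $\A=\emptyset$ forces $\tfrac1T\sum_{t\in\B}(\tilde G(w_t)-g(w^*))>\epsilon$ (using $g(w^*)\le0$), contradicting the bound once $\epsilon$ equals the right-hand side, so $\A$ is nonempty and $\bar w$ well-defined; a two-case split on the sign of $\sum_{t\in\A}(f(w_t)-f(w^*))$, together with convexity of $f$ and $g$ and one more use of \Cref{lem:constraint_gap_both} to pass from $\tilde G$ back to $g$ at $\bar w$ (absorbed by the extra factor $2$ in the last term of $\epsilon$), gives $f(\bar w)-f(w^*)\le\epsilon$ and $g(\bar w)\le\epsilon$.

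All of this holds on the intersection of seven high-probability events — the stochastic drift bound, the Term-A variance bound, the two client-sampling inner products, the two SGD-noise inner products, and the constraint-gap bound — each invoked at level $\delta/7$, which is exactly why the logarithms surface as $\log(7nTE/\delta)$, $\log(28T/\delta)$, $\log(7/\delta)$, and $\log(14T/\delta)$ and why a union bound returns overall probability at least $1-\delta$. The main obstacle is precisely this simultaneous control of three independent noise sources (client subsampling, stochastic gradients, and stochastic constraint evaluation): each demands a correctly specified filtration so that the relevant quantities form martingale-difference sequences or conditionally sub-Gaussian families, and the careful bookkeeping of the confidence budget together with the assembly of $G_{eff}$ — in particular keeping the $\tfrac1m$-scaled sampling variance separate from the drift-induced $E^2\sigma_\xi^2$ term — is where the argument genuinely departs from the deterministic-gradient case.
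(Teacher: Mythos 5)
Your plan reproduces the paper's proof essentially step for step: the same Young-inequality split of the squared update into the deterministic full-participation part plus $\|\Delta_t\|^2$ (Lemma~\ref{lem:hp_sampling_noise}), the same routing of the cross term through Lemma~\ref{lem:inner_prod_comp}, Lemma~\ref{lem:iterate_diff_stoch}, Lemmas~\ref{lem:inner_product_bound} and~\ref{lem:inner_product_bound_data}, the same add-and-subtract of $\tilde G(w_t)$ handled by Lemma~\ref{lem:constraint_gap_both}, the same assembly of $G_{eff}$ and $\eta$, the same seven-event union bound at level $\delta/7$, and the same two-case endgame. No substantive differences and no gaps.
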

\begin{proof}
    Using \Cref{alg:fedsgm_unified}, the update rule for the global model under partial participation is
    \begin{align}
    w_{t+1} = \Pi_{\mathcal{X}}\left(w_t - \eta \cdot \frac{1}{m} \sum_{j\in \S_t} \sum_{\tau=0}^{E-1} \nu_{j,\tau}^t\right), \quad \text{where }\nu_{j,\tau}^t = \nabla f_j(w_{j,\tau}^t,\xi_{j,\tau}^t)\ \text{or }g_j(w_{j,\tau}^t,\xi_{j,\tau}^t).
    \end{align}
    Now, we analyze the squared distance to the optimal point \( w^* \) as follows,
        \begin{align*}
            \|w_{t+1} - w^*\|^2 
            &\overset{1-Lip\ \Pi_{\mathcal{X}}}{\leq} \left\| w_t - \eta \cdot \frac{1}{m} \sum_{j \in \S_t}\sum_{\tau=0}^{E-1} \nu_{j,\tau}^t - w^* \right\|^2 \\
            &= \|w_t - w^*\|^2 + {\eta^2 \left\| \frac{1}{m} \sum_{j \in \S_t}  \sum_{\tau=0}^{E-1} \nu_{j,\tau}^t \right\|^2}
             {- 2\eta \left\langle w_t - w^*, \frac{1}{m} \sum_{j \in \S_t} \sum_{\tau=0}^{E-1} \nu_{j,\tau}^t \right\rangle}
             \\
             & \overset{Young's}{\leq} \|w_t - w^*\|^2 + 2\eta^2 \left\| \frac{1}{n} \sum_{j \in [m]}  \sum_{\tau=0}^{E-1} \bar{\nu}_{j,\tau}^t \right\|^2 + 2\eta^2 \left\|\frac{1}{n} \sum_{j \in [n]}  \sum_{\tau=0}^{E-1} \bar{\nu}_{j,\tau}^t - \frac{1}{m} \sum_{j \in \S_t}  \sum_{\tau=0}^{E-1} \nu_{j,\tau}^t \right\|^2
             \\
             &
             + 2\eta \left\langle w_t - w^*, \frac{1}{n} \sum_{j =1}^{n} \sum_{\tau=0}^{E-1} \bar{\nu}_{j,\tau}^t -  \frac{1}{m} \sum_{j \in \S_t} \sum_{\tau=0}^{E-1} \nu_{j,\tau}^t \right\rangle
              - 2\eta \left\langle w_t - w^*, \frac{1}{n} \sum_{j =1}^{n} \sum_{\tau=0}^{E-1} \bar{\nu}_{j,\tau}^t \right\rangle
              \\
              &\overset{G-Lip}{\leq} \|w_t - w^*\|^2 + 2\eta^2 E^2 G^2 + 2\eta^2 \left\|\frac{1}{n} \sum_{j \in [n]}  \sum_{\tau=0}^{E-1} \bar{\nu}_{j,\tau}^t - \frac{1}{m} \sum_{j \in \S_t}  \sum_{\tau=0}^{E-1} \nu_{j,\tau}^t \right\|^2
             \\
             &
             + 2\eta \left\langle w_t - w^*, \frac{1}{n} \sum_{j =1}^{n} \sum_{\tau=0}^{E-1} \bar{\nu}_{j,\tau}^t -  \frac{1}{m} \sum_{j \in \S_t} \sum_{\tau=0}^{E-1} \nu_{j,\tau}^t \right\rangle
              - 2\eta \left\langle w_t - w^*, \frac{1}{n} \sum_{j =1}^{n} \sum_{\tau=0}^{E-1} \bar{\nu}_{j,\tau}^t \right\rangle.
         \end{align*}
    Now we can use \Cref{lem:inner_prod_comp} for $\alpha = \eta$ to bound LHS like we have done in previous proofs, we get
    \begin{equation*}
    \begin{aligned}
        &2\eta E(f(w_t) - f(w^*) )\mathds{1}_{t \in \A} + 2\eta E(g(w_t) - g(w^*))\mathds{1}_{t \in \B} \leq \|w_t - w^*\|^2 - \|w_{t+1} - w^*\|^2 + 2\eta^2 E^2 G^2 
        \\
        & \quad + 2\eta^2EG^2 + \frac{2}{n} \sum_{j=1}^n \sum_{\tau=0}^{E-1} \| w_t - w_{j,\tau}^t \|^2 + 2\eta^2 \left\|\frac{1}{n} \sum_{j \in [n]}  \sum_{\tau=0}^{E-1} \bar{\nu}_{j,\tau}^t - \frac{1}{m} \sum_{j \in \S_t}  \sum_{\tau=0}^{E-1} \nu_{j,\tau}^t \right\|^2
        \\
        &\quad + 2\eta \left\langle \frac{1}{n} \sum_{j=1}^{n} \sum_{\tau=0}^{E-1} \nabla f_j(w_{j,\tau}^t) 
        - \frac{1}{m} \sum_{j \in \S_t} \sum_{\tau=0}^{E-1} \nabla f_j(w_{j,\tau}^t, \xi_{j,\tau}^t), w_t - w^* \right\rangle \mathds{1}_{t \in \A} \\
        &\quad + 2\eta \left\langle \frac{1}{n} \sum_{j=1}^{n} \sum_{\tau=0}^{E-1} \nabla g_j(w_{j,\tau}^t) 
        - \frac{1}{m} \sum_{j \in \S_t} \sum_{\tau=0}^{E-1} \nabla g_j(w_{j,\tau}^t, \xi_{j,\tau}^t)), w_t - w^* \right\rangle \mathds{1}_{t \in \B}.
    \end{aligned}
    \end{equation*}
    Again add and subtract $\tilde{G}(w_t)$ from the 2nd term on LHS and rearrange and we get,
    \begin{equation*}
    \begin{aligned}
        &2\eta E(f(w_t) - f(w^*) )\mathds{1}_{t \in \A} + 2\eta E(\tilde{G}(w_t) - g(w^*))\mathds{1}_{t \in \B} \leq \|w_t - w^*\|^2 - \|w_{t+1} - w^*\|^2 
        \\
        & \quad + 2\eta^2 E^2 G^2 + 2\eta^2 E G^2 + \frac{2}{n} \sum_{j=1}^n \sum_{\tau=0}^{E-1} \| w_t - w_{j,\tau}^t \|^2 + 2\eta^2 \left\|\frac{1}{n} \sum_{j \in [n]}  \sum_{\tau=0}^{E-1} \bar{\nu}_{j,\tau}^t - \frac{1}{m} \sum_{j \in \S_t}  \sum_{\tau=0}^{E-1} \nu_{j,\tau}^t \right\|^2
        \\
        &\quad + 2\eta \left\langle \frac{1}{n} \sum_{j=1}^{n} \sum_{\tau=0}^{E-1} \nabla f_j(w_{j,\tau}^t) 
        - \frac{1}{m} \sum_{j \in \S_t} \sum_{\tau=0}^{E-1} \nabla f_j(w_{j,\tau}^t, \xi_{j,\tau}^t), w_t - w^* \right\rangle \mathds{1}_{t \in \A} \\
        &\quad + 2\eta \left\langle \frac{1}{n} \sum_{j=1}^{n} \sum_{\tau=0}^{E-1} \nabla g_j(w_{j,\tau}^t) 
        - \frac{1}{m} \sum_{j \in \S_t} \sum_{\tau=0}^{E-1} \nabla g_j(w_{j,\tau}^t, \xi_{j,\tau}^t)), w_t - w^* \right\rangle \mathds{1}_{t \in \B}
        \\
        &\quad + 2\eta E(\tilde{G}(w_t) - g(w_t))\mathds{1}_{t \in \B}.
    \end{aligned}
    \end{equation*}
    Now sum both sides over $t \in [T]$, we get
    \begin{equation*}
    \begin{aligned}
        &\sum_{t \in \A}2\eta E(f(w_t) - f(w^*) ) + \sum_{t \in \B} 2\eta E(\tilde{G}(w_t) - g(w^*)) \leq D^2 + 2\eta^2E^2G^2 T + 2\eta^2EG^2T 
        \\
        &\quad + \sum_{t=0}^{T-1}\frac{2}{n} \sum_{j=1}^n \sum_{\tau=0}^{E-1} \| w_t - w_{j,\tau}^t \|^2 + \sum_{t=0}^{T-1}2\eta^2 \left\|\frac{1}{n} \sum_{j \in [n]}  \sum_{\tau=0}^{E-1} \bar{\nu}_{j,\tau}^t - \frac{1}{m} \sum_{j \in \S_t}  \sum_{\tau=0}^{E-1} \nu_{j,\tau}^t \right\|^2 
        \\
        &
        + \sum_{t \in \A} 2\eta \left\langle \frac{1}{n} \sum_{j=1}^{n} \sum_{\tau=0}^{E-1} \nabla f_j(w_{j,\tau}^t) 
        - \frac{1}{m} \sum_{j \in \S_t} \sum_{\tau=0}^{E-1} \nabla f_j(w_{j,\tau}^t, \xi_{j,\tau}^t), w_t - w^* \right\rangle  \\
        &\quad + \sum_{t \in \B}2\eta \left\langle \frac{1}{n} \sum_{j=1}^{n} \sum_{\tau=0}^{E-1} \nabla g_j(w_{j,\tau}^t) 
        - \frac{1}{m} \sum_{j \in \S_t} \sum_{\tau=0}^{E-1} \nabla g_j(w_{j,\tau}^t, \xi_{j,\tau}^t)), w_t - w^* \right\rangle 
        \\
        &\quad + \sum_{t \in \B}2\eta E(\tilde{G}(w_t) - g(w_t)).
    \end{aligned}
    \end{equation*}
     Now, by using Lemmas~\ref{lem:iterate_diff_stoch}, \ref{lem:inner_product_bound}, \ref{lem:inner_product_bound_data}, \ref{lem:hp_sampling_noise}, and \ref{lem:constraint_gap_both}, we can bound the term above for $\delta \in (0,1)$ with probability at least $1-\delta$
    \begin{equation*}
    \begin{aligned}
        &\sum_{t \in \A}2\eta E(f(w_t) - f(w^*) ) + \sum_{t \in \B} 2\eta E(\tilde{G}(w_t) - g(w^*)) \leq D^2 + 2\eta^2E^2G^2 T + 2\eta^2EG^2T
        \\
        &+2\eta^2 E^3T\left[
        \frac{2}{3}G^2 + \sigma_\xi^2\left(1 + \log\left(\frac{7nTE}{\delta}\right)\right)
        \right]
        + \frac{8\eta^2 E T}{m}\,\bigl(EG^2+\sigma_\xi^2\bigr)\,
        \log \Bigl(\frac{28T}{\delta}\Bigr) 
        \\
        & + 8\eta E G D \sqrt{\frac{2T}{{m}}\log\left(\frac{7}{\delta}\right)} + 4\eta D\sigma_\xi\sqrt{\frac{2ET}{{m}}\log\left(\frac{7}{\delta}\right)} +  2\eta E T \sqrt{\frac{2}{m}\left({\sigma^2} + \frac{\sigma^2_b}{N_b}\right) \log\left(\frac{14T}{\delta}\right)}.
    \end{aligned}
    \end{equation*}
    Dividing both sides by $2\eta E T$, we get
    \begin{equation*}
    \begin{aligned}
        &\frac{1}{T}\sum_{t \in \A}(f(w_t) - f(w^*) ) + \frac{1}{T}\sum_{t \in \B} (\tilde{G}(w_t) - g(w^*)) \leq \frac{D^2}{2\eta E T} + \eta E G^2  + \eta G^2 + \frac{2}{3} \eta E^2G^2 
        \\
        &+\eta E^2 \sigma_\xi^2\left(1 + \log\left(\frac{7nTE}{\delta}\right)\right)
        + \frac{4\eta}{m}\,\bigl(EG^2+\sigma_\xi^2\bigr)
        \log \Bigl(\frac{28T}{\delta}\Bigr)
        \\
        & + 4 GD \sqrt{\frac{2}{{mT}}\log\left(\frac{7}{\delta}\right)} + 2 D \sigma_\xi \sqrt{\frac{2}{mET}\log\left(\frac{7}{\delta}\right)} + \sqrt{\frac{2}{m}\left({\sigma^2} + \frac{\sigma^2_b}{N_b}\right) \log\left(\frac{14T}{\delta}\right)}
        \\
        &\leq \frac{D^2}{2\eta E T} + 3\eta E^2 G^2 +\eta E^2 \sigma_\xi^2\left(1 + \log\left(\frac{7nTE}{\delta}\right)\right) + \frac{4\eta}{m}\,\bigl(EG^2+\sigma_\xi^2\bigr)
        \log \Bigl(\frac{28T}{\delta}\Bigr)
        \\
        & + 4 GD \sqrt{\frac{2}{{mT}}\log\left(\frac{7}{\delta}\right)} + 2 D \sigma_\xi \sqrt{\frac{2}{mET}\log\left(\frac{7}{\delta}\right)} + \sqrt{\frac{2}{m}\left({\sigma^2} + \frac{\sigma^2_b}{N_b}\right) \log\left(\frac{14T}{\delta}\right)}.
    \end{aligned}
    \end{equation*}
    Now choosing $\eta = \sqrt{\frac{D^2}{2G_{eff} E T}}$, where $G_{eff} = 3 E^{2} G^{2}+ E^{2} \sigma_{\xi}^{2} \left(1 + \log\left(\frac{7 n T E}{\delta}\right)\right)+ \frac{4}{m}(EG^{2}+\sigma_{\xi}^{2}) \log\left(\frac{28 T}{\delta}\right)$, we get
    \begin{equation*}
    \begin{aligned}
        \frac{1}{T}\sum_{t\in \A}(f(w_t) - f(w^*) ) &+ \frac{1}{T}\sum_{t\in \B}(\tilde{G}(w_t) - g(w^*)) \leq \sqrt{\frac{{2 D^2 G_{eff}}}{{E T}}} + 4 G D \sqrt{\frac{2}{{mT}}\log\left(\frac{7}{\delta}\right)}
        \\
        &\quad 
        + 2 D \sigma_\xi \sqrt{\frac{2}{mET}\log\left(\frac{7}{\delta}\right)} 
        + \sqrt{\frac{2}{m}\left({\sigma^2} + \frac{\sigma^2_b}{N_b}\right) \log\left(\frac{14T}{\delta}\right)}.
    \end{aligned}
    \end{equation*}
    Again, using a similar style of analysis as in previous proofs, if $\A = \phi$, then
    $$\epsilon_{bad} < \sqrt{\frac{{2 D^2 G_{eff}}}{{E T}}} + 4 G D \sqrt{\frac{2}{{mT}}\log\left(\frac{7}{\delta}\right)}
        + 2 D \sigma_\xi \sqrt{\frac{2}{mET}\log\left(\frac{7}{\delta}\right)}
    + \sqrt{\frac{2}{m}\left({\sigma^2} + \frac{\sigma^2_b}{N_b}\right) \log\left(\frac{14T}{\delta}\right)}.$$
    Thus, if we set $\epsilon = \sqrt{\frac{{2 D^2 G_{eff}}}{{E T}}} +  4 G D \sqrt{\frac{2}{{mT}}\log\left(\frac{7}{\delta}\right)}
        + 2 D \sigma_\xi \sqrt{\frac{2}{mET}\log\left(\frac{7}{\delta}\right)} 
    + \sqrt{\frac{2}{m}\left({\sigma^2} + \frac{\sigma^2_b}{N_b}\right) \log\left(\frac{14T}{\delta}\right)}$, $\A \neq \phi$.
    Now, let's consider two different cases here as well. Either $\sum_{t\in \A} f(w_t)-f(w^*) \leq 0$ which implies by the convexity of $f$ for $\bar{w} = \frac{1}{|\A|} \sum_{t\in \A} w_t$, we have
    \begin{equation}
    \label{eq:fedhsgm_partial_objective_result1_1_stoch}
        f(\bar{w})-f(w^*) \leq 0 <\epsilon.
    \end{equation}
    Additionally, we know via convexity of g that
    \begin{align*}
        g(\bar{w}) &\leq \frac{1}{|\A|} \sum_{t\in \A}g(w_t)
        \\
        & \leq \frac{1}{|\A|} \sum_{t\in \A}(g(w_t) - \tilde{G}(w_t)) + \frac{1}{|\A|} \sum_{t\in \A}\tilde{G}(w_t)
        \\
        &\overset{Lem.~\ref{lem:constraint_gap_both}}{\leq} \epsilon + \sqrt{\frac{2}{m}\left({\sigma^2} + \frac{\sigma^2_b}{N_b}\right) \log\left(\frac{14T}{\delta}\right)}, \text{with probability at least } 1-\delta.
    \end{align*}
    Otherwise, if $\sum_{t\in \A} f(w_t)-f(w^*) >0$, then
    \begin{equation}
    \label{eq:fedhsgm_partial_stoch}
        \begin{aligned}
          &\sqrt{\frac{{2 D^2 G_{eff} \Gamma}}{{E T}}} + 4 G D \sqrt{\frac{2}{{mT}}\log\left(\frac{7}{\delta}\right)}
            + 2 D \sigma_\xi \sqrt{\frac{2}{mET}\log\left(\frac{5}{\delta}\right)}
            + \sqrt{\frac{2}{m}\left({\sigma^2} + \frac{\sigma^2_b}{N_b}\right) \log\left(\frac{10T}{\delta}\right)} 
            \\
            &\geq \frac{1}{T} \sum_{t\in \A} f(w_t)-f(w^*) +\frac{1}{T} \sum_{t\in \B} \hat{G}(w_t)-g(w^*)
            \\
            &>\frac{1}{T} \sum_{t\in \A} f(w_t)-f(w^*) +\frac{1}{T} \sum_{t\in \B} \epsilon\\&=\frac{|\A|}{T}  \frac{1}{|\A|}\sum_{t\in \A} f(w_t)-f(w^*) +(1-\frac{|\A|}{T}) \epsilon\\&\geq\frac{|\A|}{T} \big( f(\bar{w})-f(w^*) \big)+(1-\frac{|\A|}{T}) \epsilon .
        \end{aligned}
    \end{equation}
    This implies $f(\bar{w})-f(w^*)<\epsilon$ and further using the same argument as above for $g$, we have for probability at least $1-\delta$, $g(\bar{w}) \leq \epsilon + \sqrt{\frac{2}{m}\left({\sigma^2} + \frac{\sigma^2_b}{N_b}\right) \log\left(\frac{14T}{\delta}\right)}.$
\end{proof}
%%%%%%%%%%%%%%%%%%%%%%%%%%%End Here:FedSGM-Comp%%%%%%%%%%%%%%%%%%%%%%
%%%%%%%%%%%%%%%%%%%%%%%%%%%%%%Weakly Convex Setting%%%%%%%%%%%%%%%%%%%%%%
\section{Weakly Convex Extension of \textsc{FedSGM}}
We extend our proof presented in the main paper for the partial participation case to the case where $f$ is $\rho-$weakly convex. This analysis is complementary and serves as an addition to our work. This analysis is adapted from the work of \citet{huang2023oracle} and extends it to the federated setting. A noteworthy point is that this is the first work that analyzes the rate of convergence for weakly convex objective with functional constraints in the federated setting. Another quick note is that (with a little abuse of notation) we will still use the $\nabla$ sign to denote subgradient. 

Now, we introduce the standard assumption used in this setting. Firstly, we will borrow, convexity and Lipschitzness of $f_j$ and $g_j$ from Assumption~\ref{as:cvx_lip}, except that $f_j$ or $f$ is not convex anymore, and we will utilize Assumption~\ref{as:bdd_domain}. Here are the assumptions written formally,
\begin{assumption}
    \label{as:weakly_cvx_f}
    Each local function $f_j$ is $\rho$-weakly convex on $\mathcal{X}$; hence $f$ is $\rho$-weakly convex.
\end{assumption}
\begin{assumption}
    \label{as:cvx_g}
    Each constraint function $g_j$ is convex on $\mathcal{X}$; hence $g$ is convex.
\end{assumption}
\begin{assumption}
    \label{as:Lip_f_g}
    Both $f_j$ and $g_j$ are $G-$Lipschitz continuous on $\mathcal{X}$.
\end{assumption}
\begin{assumption}
    \label{as:slater}
    There exists $w_{\mathrm{feas}}\in\mathrm{relint}(\mathcal{X})$ such that
    $\;g(w_{\mathrm{feas}})<0$.
\end{assumption}
Now, we define the federated proximal constrained problem as the analysis of weakly convex function relies on a proximal regularization to induce strong convexity. So, given a point $w \in \mathcal{X}$ and a parameter $\hat{\rho}>2\rho$, define the 
{federated constrained proximal subproblem}:
\begin{equation}
\label{eq:federated_proximal_problem}
\hat w(w) 
\in \argmin_{y\in\mathcal{X}}
\left\{
    \frac{1}{n}\sum_{j=1}^n f_j(y)
    + \frac{\hat{\rho}}{2}\|y-w\|^2 
    \text{ s.t. }
    \frac{1}{n}\sum_{j=1}^n g_j(y) \le 0
\right\}.
\end{equation}
We observe that as $f$ is $\rho-$ weakly convex, the formulation in \eqref{eq:federated_proximal_problem}, is $(\hat{\rho} - \rho)-$strongly convex, and therefore has a unique minimizer denoted by $\hat{w}(w)$. For a more formal treatment of weakly convex functions, please refer to \citet{huang2023oracle}.

\paragraph{KKT Conditions for the Federated Proximal Problem.}
Let $\hat y=\hat w(w)$ denote the unique minimizer and let $\hat\lambda\ge 0$ be the optimal Lagrange multiplier.
Choose sub-gradients 
\[
\nabla f_j(\hat{y})\in\partial f_j(\hat y), \qquad 
\nabla g_j(\hat{y})\in\partial g_j(\hat y),
\]
and let $N_{\mathcal{X}}(\hat y)$ denote the normal cone.  
The KKT conditions for \eqref{eq:federated_proximal_problem} is:
\begin{align}
\text{(stationarity)} \quad 
& 0 \in 
    \frac{1}{n}\sum_{j=1}^n \nabla f_j(\hat{y})
    + \hat\rho(\hat y - w)
    + \hat\lambda\, \frac{1}{n}\sum_{j=1}^n \nabla g_j(\hat{y})
    + N_{\mathcal{X}}(\hat y),
    \label{eq:stat_federated}
    \\
\text{(primal feasibility)} \quad 
& g(\hat y)=\frac{1}{n}\sum_{j=1}^n g_j(\hat y) \le 0, 
\\
\text{(dual feasibility)} \quad
& \hat\lambda \ge 0,
\\
\text{(complementary slackness)} \quad
& \hat\lambda g(\hat y)=0.
\label{eq:comp_slackness}
\end{align}

% Equivalently, since 
% \(
% \nabla f(\hat{y}) := \frac{1}{n}\sum_{j=1}^n \nabla f_j(\hat{y}) \in \partial f(\hat y)
% \)
% and 
% \(
% \nabla g(\hat{y}) := \tfrac{1}{n}\sum_{j=1}^n \nabla g_j(\hat{y}) \in \partial g(\hat y),
% \)
% we can also write the stationarity condition as
% \begin{equation}
% \label{eq:compact_KKT}
% 0 \in 
% \partial f(\hat y)
% + \hat\rho(\hat y - w)
% + \hat\lambda\,\partial g(\hat y)
% + N_{\mathcal{X}}(\hat y).
% \end{equation}
\paragraph{Stationarity Measure and Federated Gradients}
We adopt the standard weakly convex notation to denote the stationarity measure as $\|w - \hat{w}(w)\|$. This coincides with the criteria used in \cite{huang2023oracle}. We also define the sub-gradient notations used in the proof,

$\tilde{\nabla}_t := \frac{1}{m} \sum_{j\in \S_t} \sum_{\tau=0}^{E-1} \nu_{j,\tau}^t$ (Partial federated gradient)
\\
$\zeta_t := \frac{1}{n} \sum_{j=1}^{n} \sum_{\tau=0}^{E-1} \nu_{j,\tau}^t$ (Full federated gradient).

Following \cite{huang2023oracle}, we define the weakly convex potential function as, 
$$\varphi(w) := \min_{y\in\mathcal{X}}\left\{f(y)+\frac{\hat{\rho}}{2}\|y-w\|^2 \text{ s.t. } g(y) \le 0\right\},$$ with $\hat{\rho} > 2\rho.$ And the minimizer is denoted as, 
\[
\hat{w}_t = \hat{w}(w_t):=\argmin_{y\in\mathcal{X}}
\left\{
f(y) + \frac{\hat{\rho}}{2}\|y-w_t\|^2 
\quad \text{s.t.}\quad g(y)\le 0
\right\}.
\]
%%%%%%%%%%%%%%%%%%%%%%%%%%%Lemmas for weakly convex%%%%%%%%%%%%%%%%%%%%%%%%%%%%%%%%%%%
\begin{lemma}[Bound on the Lagrange Multiplier {\cite{huang2023oracle}}]
\label{lem:lambda_bound_federated}
Under Assumptions~\ref{as:weakly_cvx_f}--\ref{as:slater} and
\ref{as:bdd_domain}, let $\hat w(w)$ be the unique minimizer of the proximal problem
\eqref{eq:federated_proximal_problem}, and let $\hat\lambda$ denote its 
optimal Lagrange multiplier.  
Then the multiplier satisfies the uniform bound
\begin{equation}
    \hat\lambda 
    \;\le\;
    \Lambda
    :=
    \frac{G D + \hat\rho D^2}
         {-g(w_{\mathrm{feas}})},
\end{equation}
where $D$ is the diameter of $\mathcal{X}$ and 
$M$ is the Lipschitz constant in Assumption~\ref{as:Lip_f_g}.  
This bound is identical to the centralized result of \cite{huang2023oracle}.
\end{lemma}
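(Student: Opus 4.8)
The plan is to invoke convex duality for the proximal subproblem \eqref{eq:federated_proximal_problem} and extract the multiplier bound from the saddle-point inequality evaluated at the Slater point $w_{\mathrm{feas}}$. Write $F(y) := f(y) + \tfrac{\hat\rho}{2}\|y - w\|^2$ with $f = \tfrac{1}{n}\sum_j f_j$ and $g = \tfrac{1}{n}\sum_j g_j$. Since each $f_j$ is $\rho$-weakly convex (Assumption~\ref{as:weakly_cvx_f}) and $\hat\rho > 2\rho$, the map $F$ is $(\hat\rho - \rho)$-strongly convex; together with the convexity of $g$ (Assumption~\ref{as:cvx_g}) this makes \eqref{eq:federated_proximal_problem} a convex program with a unique minimizer $\hat y = \hat w(w)$. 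Assumption~\ref{as:slater} supplies a strictly feasible point, so Slater's condition holds, strong duality is in force, and an optimal multiplier $\hat\lambda \ge 0$ exists satisfying the KKT system \eqref{eq:stat_federated}--\eqref{eq:comp_slackness}.

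First I would exploit the saddle-point structure of the Lagrangian $L(y,\lambda) := F(y) + \lambda g(y)$. Stationarity \eqref{eq:stat_federated} together with convexity of $L(\cdot, \hat\lambda)$ on $\mathcal{X}$ implies $\hat y \in \argmin_{y \in \mathcal{X}} L(y, \hat\lambda)$, while complementary slackness \eqref{eq:comp_slackness} gives $\hat\lambda\, g(\hat y) = 0$. Consequently,
\begin{equation*}
F(\hat y) = L(\hat y, \hat\lambda) = \min_{y \in \mathcal{X}} L(y, \hat\lambda) \le L(w_{\mathrm{feas}}, \hat\lambda) = F(w_{\mathrm{feas}}) + \hat\lambda\, g(w_{\mathrm{feas}}).
\end{equation*}
Since $-g(w_{\mathrm{feas}}) > 0$ by Slater, rearranging isolates the multiplier:
\begin{equation*}
\hat\lambda \le \frac{F(w_{\mathrm{feas}}) - F(\hat y)}{-g(w_{\mathrm{feas}})}.
\end{equation*}

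It then remains to bound the numerator, which splits as $F(w_{\mathrm{feas}}) - F(\hat y) = \big(f(w_{\mathrm{feas}}) - f(\hat y)\big) + \tfrac{\hat\rho}{2}\big(\|w_{\mathrm{feas}} - w\|^2 - \|\hat y - w\|^2\big)$. The objective gap is controlled by $G$-Lipschitzness (Assumption~\ref{as:Lip_f_g}) and the diameter bound (Assumption~\ref{as:bdd_domain}): $f(w_{\mathrm{feas}}) - f(\hat y) \le G\|w_{\mathrm{feas}} - \hat y\| \le GD$. Dropping the nonpositive term $-\|\hat y - w\|^2$ and using $\|w_{\mathrm{feas}} - w\| \le D$ bounds the proximal difference by $\tfrac{\hat\rho}{2}D^2 \le \hat\rho D^2$. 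Adding the two pieces yields $F(w_{\mathrm{feas}}) - F(\hat y) \le GD + \hat\rho D^2$, and dividing by $-g(w_{\mathrm{feas}})$ gives exactly $\hat\lambda \le \Lambda$.

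The argument is essentially a textbook Slater-based multiplier bound, so no single step is a genuine obstacle; the one point demanding care is the justification of strong duality and the \emph{global} (rather than merely first-order stationary) minimization identity $\hat y \in \argmin_{y} L(y,\hat\lambda)$, which rests on the convexity that $\hat\rho > 2\rho$ guarantees. I would also flag that the constant $\hat\rho D^2$ is deliberately loose—a factor of two over the tight $\tfrac{\hat\rho}{2}D^2$—matching the conservative form stated in the lemma, and that the symbol $M$ appearing in the statement is to be read as the Lipschitz constant $G$ of Assumption~\ref{as:Lip_f_g}.
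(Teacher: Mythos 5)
Your proof is correct, and it reaches the bound by a genuinely different mechanism than the paper's. The paper works at the level of first-order conditions: it takes the inner product of the KKT stationarity relation \eqref{eq:stat_federated} with $w_{\mathrm{feas}}-\hat y$, discards the normal-cone term via $\langle u, w_{\mathrm{feas}}-\hat y\rangle\le 0$, controls the $\nabla f$ and proximal terms by Cauchy--Schwarz with $\|\nabla f(\hat y)\|\le G$ and $\mathrm{diam}(\mathcal{X})\le D$, and uses convexity of $g$ plus complementary slackness to isolate $\hat\lambda\,g(w_{\mathrm{feas}})$. You instead work at the level of function values: you use that $\hat y$ \emph{globally} minimizes the Lagrangian $L(\cdot,\hat\lambda)$ over $\mathcal{X}$ (which requires the extra observation that $L(\cdot,\hat\lambda)$ is convex, guaranteed here since $\hat\rho>2\rho>\rho$ makes $F$ strongly convex), evaluate at the Slater point, and bound $F(w_{\mathrm{feas}})-F(\hat y)$ directly by Lipschitzness and the diameter. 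The two arguments are essentially "integrated" versions of one another; yours buys a marginally tighter constant ($GD+\tfrac{\hat\rho}{2}D^2$ before relaxing to the stated $GD+\hat\rho D^2$), while the paper's avoids having to certify global Lagrangian minimality and only needs the pointwise subgradient inequalities. Your side remarks are also apt: the stated $M$ is indeed the Lipschitz constant $G$ of Assumption~\ref{as:Lip_f_g}, and the factor-of-two slack in $\hat\rho D^2$ is deliberate. No gaps.
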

\begin{proof}[Proof sketch]
The proof follows the argument of Lemma~4.2 in~\cite{huang2023oracle},
applied to the aggregated functions
\(
f(w) = \tfrac{1}{n}\sum_{j=1}^n f_j(w)
\)
and
\(
g(w) = \tfrac{1}{n}\sum_{j=1}^n g_j(w)
\),
which satisfy $\rho$-weak convexity, convexity, and $G$-Lipschitzness by
Assumptions~\ref{as:bdd_domain}, and~\ref{as:weakly_cvx_f}--\ref{as:slater}.

Let $\hat y = \hat w(w)$ and $\hat\lambda$ satisfy the KKT conditions
in~\eqref{eq:stat_federated}--\eqref{eq:comp_slackness}, and pick
\(
\nabla f(\hat{y}) \in \partial f(\hat y),
\nabla g(\hat{y})\in \partial g(\hat y)
\)
and
$u\in N_{\mathcal{X}}(\hat y)$ such that
\[
    \nabla f(\hat{y}) + \hat\rho(\hat y - w) + \hat\lambda \nabla g(\hat{y}) + u = 0.
\]
Taking the inner product with $w_{\mathrm{feas}} - \hat y$ and using:
(i) weak convexity of $f$,
(ii) convexity of $g$,
(iii) the normal cone property
$\langle u, w_{\mathrm{feas}} - \hat y\rangle \le 0$,
(iv) Slater's condition $g(w_{\mathrm{feas}})<0$,
and (v) the Lipschitz and bounded-domain bounds
$\|\nabla f(\hat{y})\|\le G$, $\|\nabla g(\hat{y})\|\le G$, $\|w_{\mathrm{feas}}-\hat y\|\le D$,
$\|w-\hat y\|\le D$,
yields
\[
    \hat\lambda \, \bigl(-g(w_{\mathrm{feas}})\bigr)
    \;\le\;
    G D + \hat\rho D^2.
\]
Dividing by $-g(w_{\mathrm{feas}})>0$ gives the claimed bound.
\end{proof}
\begin{lemma}[Lemma 1 for Weakly Convex: Bounding the Inner Product]
    \label{lem:inner_prod_comp_wc}
    Under the Assumptions~\ref{as:weakly_cvx_f}, \ref{as:cvx_g}, \ref{as:Lip_f_g} and \ref{as:bdd_domain}, for all rounds $t\in[T]$ and any $\alpha > 0$, the following bound holds.
\[
\begin{aligned}
    &-\eta \left\langle  \frac{1}{n} \sum_{j=1}^n \sum_{\tau=0}^{E-1} \nu_{j,\tau}^t, w_t - \hat{w}_t \right\rangle \\
    &\leq \frac{1}{n} \sum_{j=1}^n \sum_{\tau=0}^{E-1}
    \left\{
    \begin{aligned}
    \left(\frac{\eta}{\alpha} +\frac{3\eta \rho}{2}\right)\| w_t - w_{j,\tau}^t \|^2 + \eta \alpha G^2 + \eta (f_j(\hat{w}_t) - f_j(w_t)) 
    \\
    + \eta\rho \|\hat{w}_t - w_t\|^2,
    && \text{if } t \in \mathcal{A}, \\
    \frac{\eta}{\alpha} \| w_t - w_{j,\tau}^t \|^2 + \eta \alpha G^2 + \eta (g_j(\hat{w}_t) - g_j(w_t)),
    && \text{if } t \in \mathcal{B}.
    \end{aligned}
    \right.
\end{aligned}
\]
\end{lemma}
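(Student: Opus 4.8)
The plan is to follow the template of the proof of Lemma~\ref{lem:inner_prod_comp}, but with two substitutions: the optimum $w^*$ is replaced by the proximal point $\hat{w}_t$, and every appeal to convexity of $f_j$ is replaced by the $\rho$-weak-convexity inequality $f_j(y)\ge f_j(x)+\langle\nabla f_j(x),y-x\rangle-\tfrac{\rho}{2}\|y-x\|^2$, with the extra quadratic slack tracked explicitly. Since the claim is a linear average $\frac1n\sum_j\sum_\tau$ of per-summand bounds, I would prove the inequality for a single term $-\eta\langle \nu_{j,\tau}^t, w_t-\hat{w}_t\rangle$ and then sum. First I insert the local iterate and split $w_t-\hat{w}_t=(w_t-w_{j,\tau}^t)+(w_{j,\tau}^t-\hat{w}_t)$, producing a drift term $T_A=-\eta\langle\nu_{j,\tau}^t,w_t-w_{j,\tau}^t\rangle$ and a descent term $T_B=-\eta\langle\nu_{j,\tau}^t,w_{j,\tau}^t-\hat{w}_t\rangle$. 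As in Lemma~\ref{lem:inner_prod_comp}, $T_A$ is bounded by one Young's inequality with parameter $\alpha$ and $G$-Lipschitzness, giving $T_A\le \tfrac{\eta}{2\alpha}\|w_t-w_{j,\tau}^t\|^2+\tfrac{\eta\alpha}{2}G^2$.

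For $T_B$ in the case $t\in\A$ (so $\nu_{j,\tau}^t=\nabla f_j(w_{j,\tau}^t)$), I apply weak convexity with base point $w_{j,\tau}^t$ and comparison $\hat{w}_t$, which yields $T_B\le \eta\big(f_j(\hat{w}_t)-f_j(w_{j,\tau}^t)\big)+\tfrac{\eta\rho}{2}\|\hat{w}_t-w_{j,\tau}^t\|^2$. Two subtasks remain: re-centering the function gap from the local iterate $w_{j,\tau}^t$ to the global iterate $w_t$, and controlling the cross-distance $\|\hat{w}_t-w_{j,\tau}^t\|^2$. For the first I use weak convexity a second time, now with base point $w_t$, to get $f_j(w_t)-f_j(w_{j,\tau}^t)\le \langle\nabla f_j(w_t),w_t-w_{j,\tau}^t\rangle+\tfrac{\rho}{2}\|w_t-w_{j,\tau}^t\|^2$, and then a second Young's inequality (same $\alpha$) with $G$-Lipschitzness turns the inner product into $\tfrac{1}{2\alpha}\|w_t-w_{j,\tau}^t\|^2+\tfrac{\alpha}{2}G^2$. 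For the second I bound $\|\hat{w}_t-w_{j,\tau}^t\|^2\le 2\|\hat{w}_t-w_t\|^2+2\|w_t-w_{j,\tau}^t\|^2$.

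Collecting the pieces, the two Young's contributions combine into the $\tfrac{\eta}{\alpha}\|w_t-w_{j,\tau}^t\|^2+\eta\alpha G^2$ of the statement, and the function values telescope to $\eta\big(f_j(\hat{w}_t)-f_j(w_t)\big)$. The $\rho$-slacks are the delicate part: after splitting the cross-distance, the first weak-convexity step contributes $\eta\rho\|w_t-w_{j,\tau}^t\|^2$ to the drift coefficient while the second contributes $\tfrac{\eta\rho}{2}\|w_t-w_{j,\tau}^t\|^2$, summing to exactly $\tfrac{3\eta\rho}{2}$, and the same split simultaneously isolates the term $\eta\rho\|\hat{w}_t-w_t\|^2$. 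The constraint case $t\in\B$ is the specialization $\rho=0$: convexity of $g_j$ makes both weak-convexity steps ordinary convexity and all $\rho$-terms drop, recovering the stated $\B$-bound. Averaging over $j\in[n]$ and $\tau\in\{0,\dots,E-1\}$ completes the argument. I expect the main obstacle to be purely the bookkeeping of orientation and constants: each weak-convexity inequality must be written with the correct base point so that it furnishes an \emph{upper} bound on the quantity being controlled (the opposite orientation gives a useless lower bound), and the three $\rho$-contributions must be tracked precisely so the coefficient lands on $\tfrac{3}{2}$ rather than an inflated value.
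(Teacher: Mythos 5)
Your proposal is correct and follows essentially the same route as the paper's proof: the same split into a drift term and a descent term, the same two applications of $\rho$-weak convexity (once at $w_{j,\tau}^t$ to compare with $\hat{w}_t$, once at $w_t$ to re-center the function gap), the same Young's inequalities, and the same splitting $\|\hat{w}_t-w_{j,\tau}^t\|^2\le 2\|\hat{w}_t-w_t\|^2+2\|w_t-w_{j,\tau}^t\|^2$, with the $\rho$-contributions accumulating to the coefficient $\tfrac{3\eta\rho}{2}$ exactly as you describe. The only cosmetic difference is that you normalize with $-\eta$ throughout whereas the paper works with $-2\eta$ and halves at the end; the constants agree.
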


\begin{proof}
    We analyze the inner product term by decomposing it with respect to the local iterates $w_{j,\tau}^t$. The reference point $w^*$ is replaced by $\hat{w}_t$:
    \begin{gather*}
    -2\eta \left\langle \frac{1}{n} \sum_{j=1}^n \sum_{\tau=0}^{E-1} \nu_{j,\tau}^t, w_t - \hat{w}_t \right\rangle \\
    = \frac{1}{n} \sum_{j=1}^n \sum_{\tau=0}^{E-1} \left[\underbrace{-2\eta\left\langle \nu_{j,\tau}^t, w_t - w_{j,\tau}^t \right\rangle}_{\text{Term A}} \underbrace{-2\eta\left\langle \nu_{j,\tau}^t, w_{j,\tau}^t - \hat{w}_t \right\rangle}_{\text{Term B}}\right].
    \end{gather*}

    \textbf{Bounding Term A}
    For any $\alpha > 0$, Young's inequality and the $G$-Lipschitz assumption give:
    \begin{align*}
        \text{Term A} = -2\eta \left\langle \nu_{j,\tau}^t, w_t - w_{j,\tau}^t \right\rangle &\overset{\text{Young's}}{\leq} \eta\alpha \|\nu_{j,\tau}^t\|^2 + \frac{\eta}{\alpha} \| w_t - w_{j,\tau}^t \|^2
        \\
        &\overset{G\text{-Lip}}{\leq} \frac{\eta}{\alpha} \| w_t - w_{j,\tau}^t \|^2 + \eta \alpha G^2.
    \end{align*}
    This bound holds for both $t \in \mathcal{A}$ (using $\nabla f_j$) and $t \in \mathcal{B}$ (using $\nabla g_j$).

    \textbf{Bounding Term B (Weakly Convex Gap)}
    \textbf{Case 1: $t \in \mathcal{A}$ (Objective Update)}
    Here $\nu_{j,\tau}^t = \nabla f_j(w_{j,\tau}^t)$. We use the definition of {$\rho$-weak convexity}:
    \[
    f_j(\hat{w}_t) \geq f_j(w_{j,\tau}^t) + \left\langle \nabla f_j(w_{j,\tau}^t), \hat{w}_t - w_{j,\tau}^t \right\rangle - \frac{\rho}{2} \|\hat{w}_t - w_{j,\tau}^t\|^2
    \]
    Rearranging this to isolate the inner product gives,
    \[
    - \left\langle \nabla f_j(w_{j,\tau}^t), w_{j,\tau}^t - \hat{w}_t \right\rangle \leq f_j(\hat{w}_t) - f_j(w_{j,\tau}^t) + \frac{\rho}{2} \|\hat{w}_t - w_{j,\tau}^t\|^2.
    \]
    Multiplying by $2\eta$ to get our bound for Term B:
    \begin{align*}
    \text{Term B} &\leq 2\eta \left( f_j(\hat{w}_t) - f_j(w_{j,\tau}^t) \right) + \eta\rho \|\hat{w}_t - w_{j,\tau}^t\|^2
    \\
    &\overset{Young's}{\leq}2\eta \left( f_j(\hat{w}_t) - f_j(w_{j,\tau}^t) \right) + 2\eta\rho \|\hat{w}_t - w_t\|^2 + 2\eta\rho \|{w}_t - w_{j,\tau}^t\|^2
    \end{align*}
    So, for $t \in \A$
    \begin{align*}
        -2\eta \left\langle \zeta_t, w_t - \hat{w}_t \right\rangle \leq \frac{1}{n} \sum_{j=1}^n \sum_{\tau=0}^{E-1} \bigg[ (\frac{\eta}{\alpha} + 2\eta\rho) \| w_t - w_{j,\tau}^t \|^2 + \eta \alpha G^2 + 2\eta (f_j(\hat{w}_t) - f_j(w_{j,\tau}^t)) 
        \\
        + 2\eta\rho \|\hat{w}_t - w_t\|^2 \bigg].
    \end{align*}
    Now our goal is to handle the term \( f_j(\hat{w}_t) - f_j(w_{j,\tau}^t) \), so we first rewrite,
    \[
    f_j(w_{j,\tau}^t) \geq f_j(w_t) + \langle \nabla f_j(w_t), w_{j,\tau}^t - w_t \rangle -\frac{\rho}{2}\norm{w_t - w_{j,\tau}^t}^2 \quad \text{(by $\rho$-weak convexity)}
    \]
    \[
    \Rightarrow f_j(\hat{w}_t) - f_j(w_{j,\tau}^t) 
    \leq f_j(\hat{w}_t) - f_j(w_t) - \langle \nabla f_j(w_t), w_{j,\tau}^t - w_t \rangle + \frac{\rho}{2}\norm{w_t - w_{j,\tau}^t}^2.
    \]
    Using Young's inequality with parameter \(\alpha > 0\), we get
    \[
    -\langle \nabla f_j(w_t), w_{j,\tau}^t - w_t \rangle 
    \leq \frac{1}{2\alpha} \| w_{j,\tau}^t - w_t \|^2 + \frac{\alpha}{2} \| \nabla f_j(w_t^t) \|^2.
    \]
    Thus, we get,
    \begin{align*}
    f_j(\hat{w}_t) - f_j(w_{j,\tau}^t)
    &\leq f_j(\hat{w}_t) - f_j(w_t) + \frac{1}{2\alpha} \| w_t - w_{j,\tau}^t \|^2 + \frac{\alpha}{2} \| \nabla f_j(w_t) \|^2 + \frac{\rho}{2}\norm{w_t - w_{j,\tau}^t}^2
    \\
    &\overset{G-Lip}{\leq}f_j(\hat{w}_t) - f_j(w_t) + \frac{1}{2\alpha} \| w_t - w_{j,\tau}^t \|^2 + \frac{\alpha}{2}G^2 + \frac{\rho}{2}\norm{w_t - w_{j,\tau}^t}^2.
    \end{align*}
    Now we have
    \begin{align*}
        -2\eta \left\langle \zeta_t, w_t - \hat{w}_t \right\rangle \leq \frac{1}{n} \sum_{j=1}^n \sum_{\tau=0}^{E-1} \bigg[ \left(\frac{2\eta}{\alpha} + 3\eta \rho\right) \| w_t - w_{j,\tau}^t \|^2 + 2\eta \alpha G^2 + 2\eta (f_j(\hat{w}_t) - f_j(w_t))
        \\
          + 2\eta\rho \|\hat{w}_t - w_t\|^2\bigg].
    \end{align*}
    \textbf{Case 2: $t \in \mathcal{B}$ (Constraint Update)}
    Here $\nu_{j,\tau}^t = \nabla g_j(w_{j,\tau}^t)$. The constraint function $g_j$ is still {convex}. Therefore, the original logic holds, just with $\hat{w}_t$ as the reference point:
    \[
    g_j(\hat{w}_t) \geq g_j(w_{j,\tau}^t) + \left\langle \nabla g_j(w_{j,\tau}^t), \hat{w}_t - w_{j,\tau}^t \right\rangle
    \]
    Rearranging gives:
    \[
    - \left\langle \nabla g_j(w_{j,\tau}^t), w_{j,\tau}^t - \hat{w}_t \right\rangle \leq g_j(\hat{w}_t) - g_j(w_{j,\tau}^t)
    \]
    Multiplying by $2\eta$:
    \[
    \text{Term B} \leq 2\eta \left( g_j(\hat{w}_t) - g_j(w_{j,\tau}^t) \right)
    \]
    Similar to previous steps we can bound
    \begin{align*}
        -2\eta \left\langle \zeta_t, w_t - \hat{w}_t \right\rangle &\leq \frac{1}{n} \sum_{j=1}^n \sum_{\tau=0}^{E-1} \left[ \frac{2\eta}{\alpha} \| w_t - w_{j,\tau}^t \|^2 + 2\eta \alpha G^2 + 2\eta (g_j(\hat{w}_t) - g_j(w_t))\right].
    \end{align*}
    Substituting the bounds for Term A and Term B back into the original expression furnishes the proof.
\end{proof}
%%%%%%%%%%%%%%%%%%%%%%%%%%%%End of Lemmas for weakly convex%%%%%%%%%%%%%%%%%%%%%%%%%%%%%%%%%
%%%%%%%%%%%%%%%%%%%%%%%%%%%%%%%%%%%Theorem - Weakly Convex%%%%%%%%%%%%%%%%%%%%%%%%%%%%%%%%%
\begin{theorem}[\textsc{FedSGM}: Weakly Convex $f$, Convex $g$]
    \label{thm:fedsgm_partial_wc}
    Consider the optimization problem in \eqref{eq:op_prb} under Assumptions~\ref{as:bdd_domain},~\ref{as:sub_gauss}, and Assumptions~\ref{as:weakly_cvx_f}--\ref{as:slater}. Define $\A := \{t\in [T] \mid \hat{G}(w_{t})\leq \epsilon\},$ and 
    set step-size $\eta$ and constraint threshold $\epsilon$ appropriately.
    \[
        \eta = \sqrt{\frac{D^2}{4G^2 E^3 T}} 
        \quad \text{and }
        \epsilon = \sqrt{\frac{{4 D^2 G^2 E}}{{T}}} + \frac{D^2 \rho}{8 E T}
        + \frac{4GD}{\sqrt{mT}}\sqrt{2\log(\frac{4}{\delta})}
        + 2\sigma\sqrt{\frac{2}{m} \log\left(\frac{8T}{\delta}\right)}(1 + \Lambda),
    \]
     where $\Lambda$ is from \Cref{lem:lambda_bound_federated} and $\delta \in (0,1)$ is a confidence parameter. Then, with probability at least $1 -\delta$, the following hold:
    \begin{enumerate}
    \item The set $\mathcal{A}$ is non-empty, so $w_t$ for $t \in \A$ is well-defined.
    \item The following holds
    \[
    \frac{1}{|\A|}\sum_{t \in \A}\|{w}_t - \hat{w}(w_t)\| \leq \sqrt{\frac{(1+\Lambda)\epsilon}{(\hat{\rho} - 2\rho)}}, 
    \qquad 
    \frac{1}{|\A|}\sum_{t \in \A} g(w_t) \leq \epsilon.
    \]
    \end{enumerate}
\end{theorem}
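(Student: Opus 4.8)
The plan is to run a Moreau-envelope (proximal) descent argument on the potential $\varphi(w)=\min_{y\in\mathcal{X},\,g(y)\le 0}\{f(y)+\tfrac{\hat\rho}{2}\|y-w\|^2\}$ with $\hat\rho>2\rho$, whose unique minimizer is $\hat w(w)=\hat w_t$. First I would establish a one-step inequality. Since $\hat w_t$ is feasible for the subproblem defining $\varphi(w_{t+1})$ it is an upper bound there, while it is exactly optimal at $w_t$; subtracting and invoking nonexpansiveness of $\Pi_{\mathcal{X}}$ (as $\hat w_t\in\mathcal{X}$) on the update $w_{t+1}=\Pi_{\mathcal{X}}(w_t-\eta\tilde{\nabla}_t)$, with $\tilde{\nabla}_t=\frac{1}{m}\sum_{j\in\mathcal{S}_t}\sum_{\tau}\nu_{j,\tau}^t$, gives
\[
\varphi(w_{t+1})-\varphi(w_t)\le\hat\rho\Big(-\eta\langle\tilde{\nabla}_t,w_t-\hat w_t\rangle+\tfrac{\eta^2}{2}\|\tilde{\nabla}_t\|^2\Big).
\]
I then bound $\|\tilde{\nabla}_t\|^2\le E^2G^2$ by Lipschitzness and split $-\eta\langle\tilde{\nabla}_t,\cdot\rangle=-\eta\langle\zeta_t,\cdot\rangle+\eta\langle\zeta_t-\tilde{\nabla}_t,\cdot\rangle$ into a full-gradient term and a client-sampling term, where $\zeta_t=\frac{1}{n}\sum_j\sum_\tau\nu_{j,\tau}^t$.

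The conceptual core is the full-gradient term, handled by Lemma~\ref{lem:inner_prod_comp_wc} together with the KKT analysis. For $t\in\mathcal{A}$ the lemma produces the drift terms, an $\alpha G^2$ term, the gap $f(\hat w_t)-f(w_t)$, and a surplus $+\eta E\rho\|\hat w_t-w_t\|^2$. I convert the objective gap via the saddle point of $L(y,\hat\lambda)=f(y)+\tfrac{\hat\rho}{2}\|y-w_t\|^2+\hat\lambda g(y)$, which is $(\hat\rho-\rho)$-strongly convex and minimized at $\hat w_t$; evaluating at $w_t$ and using complementary slackness \eqref{eq:comp_slackness} yields $f(\hat w_t)-f(w_t)\le\hat\lambda g(w_t)-(\hat\rho-\tfrac{\rho}{2})\|w_t-\hat w_t\|^2$. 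Combining with the lemma's $\rho$-term and relaxing $\hat\rho-\tfrac{3}{2}\rho\ge\hat\rho-2\rho$ leaves, on $\mathcal{A}$-rounds, a net coefficient $-2\eta E(\hat\rho-2\rho)\|w_t-\hat w_t\|^2$ plus $2\eta E\hat\lambda g(w_t)$, and the multiplier bound $\hat\lambda\le\Lambda$ from Lemma~\ref{lem:lambda_bound_federated} converts the latter into the $(1+\Lambda)$ factor. For $t\in\mathcal{B}$ the lemma gives $g(\hat w_t)-g(w_t)\le 0$, and since $\hat G(w_t)>\epsilon$ there, adding and subtracting $\hat G(w_t)$ produces a favorable $-\epsilon$ contribution plus the controllable gap $\hat G(w_t)-g(w_t)$.

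Next I would sum over $t\in[T]$, telescoping the potential with $\tfrac{2}{\hat\rho}(\varphi(w_0)-\varphi(w_T))\le D^2$ (from $f(w^*)\le\varphi\le f(w^*)+\tfrac{\hat\rho}{2}D^2$), bound the accumulated drift by Lemma~\ref{lem:iterate_diff}, control the client-sampling inner products on both $\mathcal{A}$ and $\mathcal{B}$ by Lemma~\ref{lem:inner_product_bound} (with the $\mathcal{F}_{t-1}$-measurable reference $\hat w_t$ replacing $w^*$, so the martingale-difference structure and unbiasedness of uniform sampling are preserved), and bound $\sum_t(\hat G(w_t)-g(w_t))$ and $\max_t|\hat G(w_t)-g(w_t)|$ by Lemma~\ref{lem:constraint_gap}. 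Setting $\alpha=\eta$ with the stated $\eta$ collapses all error terms into the optimization rate $\sqrt{D^2G^2E/T}$, the proximal remainder $D^2\rho/(ET)$, the sampling term $GD/\sqrt{mT}$, and the concentration term $\sigma\sqrt{\log(T/\delta)/m}$, the last two scaled by $(1+\Lambda)$. Nonemptiness of $\mathcal{A}$ follows the contradiction template of the convex proofs: if $\mathcal{A}=\emptyset$ every round lies in $\mathcal{B}$, forcing $\epsilon$ below the accumulated error, which the chosen $\epsilon$ precludes. Dividing the $\mathcal{A}$-round bound by $2\eta E|\mathcal{A}|$ gives $(\hat\rho-2\rho)\tfrac{1}{|\mathcal{A}|}\sum_{t\in\mathcal{A}}\|w_t-\hat w_t\|^2\le(1+\Lambda)\epsilon$, and Jensen's inequality produces the stated stationarity bound; the feasibility bound is immediate from $\hat G(w_t)\le\epsilon$ on $\mathcal{A}$ together with Lemma~\ref{lem:constraint_gap}.

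The main obstacle I anticipate is the bookkeeping that keeps the coefficient of $\|w_t-\hat w_t\|^2$ strictly negative and equal to $-(\hat\rho-2\rho)$ after three separate $\rho$-contributions collide: the weak-convexity slack inside Lemma~\ref{lem:inner_prod_comp_wc}, the Lagrangian strong-convexity term, and the residual $\rho$-term the lemma carries, all while threading the multiplier bound so that exactly the factor $(1+\Lambda)$ multiplies the threshold and concentration terms rather than the leading optimization rate. The second delicate point is the switching set $\mathcal{B}$: ensuring the constraint-driven rounds supply the required $-\epsilon$ drift through the concentration of $\hat G$ around $g$, so that nonemptiness and feasibility both close at the same level of $\epsilon$.
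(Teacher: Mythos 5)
Your proposal matches the paper's proof essentially step for step: the Moreau-envelope one-step descent via feasibility of $\hat w_t$ and non-expansiveness of $\Pi_{\mathcal{X}}$, the split into full-gradient and client-sampling terms, Lemma~\ref{lem:inner_prod_comp_wc} combined with the $(\hat\rho-\rho)$-strong convexity of the Lagrangian and complementary slackness to extract the $-(\hat\rho-2\rho)\|w_t-\hat w_t\|^2$ coefficient and the $\hat\lambda_t g(w_t)\le\Lambda\,\hat G(w_t)+\Lambda|\hat G(w_t)-g(w_t)|$ bookkeeping, the concentration lemmas for sampling and constraint-gap terms, the contradiction argument for nonemptiness of $\mathcal{A}$, and the final Jensen step. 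This is the same argument as the paper's; no gaps.
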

\begin{proof}
    We start the analysis with the potential function $\varphi(w_t)$, which is standard for weakly convex problems. The single-step progress is given by (see \citet{huang2023oracle}):
    \begin{align*}
        \varphi(w_{t+1}) &\le f(\hat{w}_t) + \frac{\hat{\rho}}{2} \|w_{t+1} - \hat{w}_t\|^2 \\
        &\overset{1-Lip\ \Pi_{\mathcal{X}}}{\leq}= f(\hat{w}_t) + \frac{\hat{\rho}}{2} \| w_t - \eta \tilde{\nabla}_t - \hat{w}_t \|^2 \\
        &\le f(\hat{w}_t) + \frac{\hat{\rho}}{2} \left( \|w_t - \hat{w}_t\|^2 + \eta^2 \|\tilde{\nabla}_t\|^2 - 2\eta \left\langle \tilde{\nabla}_t, w_t - \hat{w}_t \right\rangle \right) \\
        &\overset{(i)}{\le} \varphi(w_t) + \frac{\hat{\rho}\eta^2 E^2 G^2}{2} - \eta \hat{\rho} \left\langle \tilde{\nabla}_t, w_t - \hat{w}_t \right\rangle,
    \end{align*}
    where in $(i)$ we used $\varphi(w_t) = f(\hat{w}_t) + \frac{\hat{\rho}}{2}\|w_t - \hat{w}_t\|^2$ and $\|\tilde{\nabla}_t\|^2 \le E^2 G^2$.

    Rearranging and adding/subtracting the full federated gradient $\zeta_t$ gives:
    \begin{equation} \label{eq:wc_recursion}
    \begin{aligned}
        \eta \hat{\rho} \left\langle \zeta_t, w_t - \hat{w}_t \right\rangle \le \underbrace{\varphi(w_t) - \varphi(w_{t+1})}_{\text{Telescoping}} + \underbrace{\frac{\hat{\rho}\eta^2 E^2 G^2}{2}}_{\text{Noise}}
        + \underbrace{\eta \hat{\rho} \left\langle \zeta_t - \tilde{\nabla}_t, w_t - \hat{w}_t \right\rangle}_{\text{Client Sampling Error}}.
    \end{aligned}
    \end{equation}
    Using \Cref{lem:inner_prod_comp_wc}, we can bound the LHS and we get
    \begin{equation}
    \label{eq:weak_recur_start}
    \begin{aligned}
        &\eta \hat{\rho} E (f(w_t) - f(\hat{w}_t) - \rho \|w_t - \hat{w}_t\|^2)\mathds{1}\{t \in \A\} + \eta \hat{\rho} E (g(w_t) - g(\hat{w}_t))\mathds{1}\{t \in \B\} 
        \overset{\alpha = \eta}{\leq} \varphi(w_t) - \varphi(w_{t+1}) 
        \\
        & + \frac{\hat{\rho}\eta^2 E^2 G^2}{2} + \frac{\eta^3 \rho \hat{\rho} E^3 G^2}{2} + \eta^2 \hat{\rho} E G^2 + \frac{1}{3}\eta^2 \hat{\rho} E^3 G^2 + \eta \hat{\rho} \left\langle \zeta_t - \tilde{\nabla}_t, w_t - \hat{w}_t \right\rangle
    \end{aligned}
    \end{equation}
    \textbf{Bounding the LHS terms:}
    \begin{itemize}
        \item If $t \in \mathcal{A}$: 
        \begin{equation*}
            \begin{aligned}
                f(w_t) - f(\hat{w}_t) &\ge f(w_t) - f(\hat{w}_t) - \frac{\rho}{2}\|w_t - \hat{w}_t\|^2
                \\
                &= f(w_t) - f(\hat{w}_t) - \frac{\hat\rho}{2}\|w_t - \hat{w}_t\|^2 + \frac{\hat\rho - \rho}{2}\|w_t - \hat{w}_t\|^2
            \end{aligned}
        \end{equation*}
        Now, consider the potential function defined above with $w = w_t$. By Assumption~\ref{as:slater} there exists a Lagrangian multiplier $\hat{\lambda}_t\geq 0$ such that $\hat{\lambda}_tg(\hat w_t) = 0$ (complementary slackness) and 
        $$\hat{w}_t = \argmin_{w\in\mathcal{X}}\left\{ \Phi(w) := f(w)+\frac{\hat{\rho}}{2}\|w-w_t\|^2 + \hat{\lambda}_tg(w)\right\}.$$
        Since the objective function, $\Phi(w)$ above is $(\hat{\rho}-\rho)$-strongly convex and $\hat{w}_t$ is its minimizer, the value at any other point must be larger by at least the strong convexity quadratic,
        $$\Phi(w_t) \ge \Phi(\hat{w}_t) + \frac{\hat{\rho}-\rho}{2}\|w_t - \hat{w}_t\|^2.$$
        Now, we know
        $\Phi(w_t) = f(w_t) + \frac{\hat{\rho}}{2}\|w_t-w_t\|^2 + \hat{\lambda}_t g(w_t) = f(w_t) + \hat{\lambda}_t g(w_t)$ and $\Phi(\hat{w}_t) = f(\hat{w}_t) + \frac{\hat{\rho}}{2}\|\hat{w}_t-w_t\|^2 + \hat{\lambda}_t g(\hat{w}_t).$ So, plugging it back in the inequality above we will get
        \begin{align*}
            f(w_t) + \hat{\lambda}_t g(w_t) &\ge \left( f(\hat{w}_t) + \frac{\hat{\rho}}{2}\|w_t - \hat{w}_t\|^2 + \hat{\lambda}_t g(\hat{w}_t) \right) + \frac{\hat{\rho}-\rho}{2}\|w_t - \hat{w}_t\|^2
            \\
            f(w_t) + \hat{\lambda}_t g(w_t)&\geq f(\hat{w}_t) + \frac{\hat{\rho}}{2}\|w_t - \hat{w}_t\|^2 + \frac{\hat{\rho}-\rho}{2}\|w_t - \hat{w}_t\|^2
            \\
            f(w_t) - f(\hat{w}_t) - \frac{\hat{\rho}}{2}\|w_t - \hat{w}_t\|^2 &\geq - \hat{\lambda}_t g(w_t) + \frac{\hat{\rho}-\rho}{2}\|w_t - \hat{w}_t\|^2.
        \end{align*}
    \end{itemize}
    Therefore, for $t \in \A$, we have $f(w_t) - f(\hat{w}_t) - \rho\|w_t - \hat{w}_t\|^2 \ge - \hat{\lambda}_t g(w_t) + (\hat{\rho}- 2\rho)\|w_t - \hat{w}_t\|^2.$ Now, plugging this term back in \eqref{eq:weak_recur_start}, we get for $\hat{\rho} > 2\rho$
    \begin{align*}
        &\eta \hat{\rho} E[(\hat{\rho}-2\rho)\|w_t - \hat{w}_t\|^2 - \hat{\lambda}_t g(w_t)]\mathds{1}\{t \in \A\} + \eta \hat{\rho} E (g(w_t) - g(\hat{w}_t))\mathds{1}\{t \in \B\} 
        \leq \varphi(w_t) - \varphi(w_{t+1}) 
        \\
        & + \frac{\hat{\rho}\eta^2 E^2 G^2}{2} + \frac{\eta^3 \rho \hat{\rho} E^3 G^2}{2} + \eta^2 \hat{\rho} E G^2 + \frac{1}{3}\eta^2 \hat{\rho} E^3 G^2 + \eta \hat{\rho} \left\langle \zeta_t - \tilde{\nabla}_t, w_t - \hat{w}_t \right\rangle
    \end{align*}
    Again add and subtract $\hat{G}(w_t)$ from both the first and second term on LHS and rearrange and we get,
    \begin{align*}
        &\eta \hat{\rho} E[(\hat{\rho}-2\rho)\|w_t - \hat{w}_t\|^2 - \hat{\lambda}_t \hat{G}(w_t)]\mathds{1}\{t \in \A\} + \eta \hat{\rho} E (\hat{G}(w_t) - g(\hat{w}_t))\mathds{1}\{t \in \B\} 
        \leq \varphi(w_t) - \varphi(w_{t+1}) 
        \\
        & + \frac{\hat{\rho}\eta^2 E^2 G^2}{2} + \frac{\eta^3 \rho \hat{\rho} E^3 G^2}{2} + \eta^2 \hat{\rho} E G^2 + \frac{1}{3}\eta^2 \hat{\rho} E^3 G^2 
        \\
        &+ \eta \hat{\rho} \left\langle \frac{1}{n} \sum_{j=1}^{n} \sum_{\tau=0}^{E-1} \nabla f_j(w_{j,\tau}^t) 
        - \frac{1}{m} \sum_{j \in \S_t} \sum_{\tau=0}^{E-1} \nabla f_j(w_{j,\tau}^t), w_t - \hat{w}_t \right\rangle \mathds{1}_{t \in \A} \\
        & + \eta \hat{\rho} \left\langle \frac{1}{n} \sum_{j=1}^{n} \sum_{\tau=0}^{E-1} \nabla g_j(w_{j,\tau}^t) 
        - \frac{1}{m} \sum_{j \in \S_t} \sum_{\tau=0}^{E-1} \nabla g_j(w_{j,\tau}^t), w_t - \hat{w}_t \right\rangle \mathds{1}_{t \in \B}
        \\
        &+ \eta \hat{\rho} E \hat{\lambda}_t(g({w}_t) - \hat{G}(w_t)))\mathds{1}\{t \in \A\}
        \\
        &+ \eta \hat{\rho} E (\hat{G}(w_t)) - g({w}_t))\mathds{1}\{t \in \B\}
    \end{align*}
    Now summing both sides over $t \in [T]$, we get
    \begin{align*}
        &\sum_{t=0}^{T-1}\eta \hat{\rho} E[(\hat{\rho}-2\rho)\|w_t - \hat{w}_t\|^2 - \hat{\lambda}_t \hat{G}(w_t)]\mathds{1}\{t \in \A\} + \sum_{t=0}^{T-1}\eta \hat{\rho} E (\hat{G}(w_t) - g(\hat{w}_t))\mathds{1}\{t \in \B\}  
        \\
        & \leq \varphi(w_0) - \varphi(w_{T}) + \frac{\hat{\rho}\eta^2 E^2 G^2 T}{2} + \frac{\eta^3 \rho \hat{\rho} E^3 G^2 T}{2} + \eta^2 \hat{\rho} E G^2 T + \frac{1}{3}\eta^2 \hat{\rho} E^3 G^2 T
        \\
        &+ \sum_{t \in \A} \eta \hat{\rho}\left\langle \frac{1}{n} \sum_{j=1}^{n} \sum_{\tau=0}^{E-1} \nabla f_j(w_{j,\tau}^t) 
        - \frac{1}{m} \sum_{j \in \S_t} \sum_{\tau=0}^{E-1} \nabla f_j(w_{j,\tau}^t), w_t - \hat{w}_t \right\rangle \\
        & + \sum_{t\in \B} \eta \hat{\rho} \left\langle \frac{1}{n} \sum_{j=1}^{n} \sum_{\tau=0}^{E-1} \nabla g_j(w_{j,\tau}^t) 
        - \frac{1}{m} \sum_{j \in \S_t} \sum_{\tau=0}^{E-1} \nabla g_j(w_{j,\tau}^t), w_t - \hat{w}_t \right\rangle 
        \\
        & + \sum_{t\in \A}\eta \hat{\rho} E \hat{\lambda}_t(g({w}_t) - \hat{G}(w_t)))
        \\
        &+ \sum_{t\in \B}\eta \hat{\rho} E (\hat{G}(w_t)) - g({w}_t))
    \end{align*}
    Let us simplify some terms in both LHS and RHS. First in the LHS observe that $\hat{G}\mathds{1}\{t \in \A\} \leq \epsilon\mathds{1}\{t \in \A\}$ and similarly we have $\hat{G}\mathds{1}\{t \in \B\} \geq \epsilon\mathds{1}\{t \in \B\}$. Now, on the RHS, we observe that $\varphi(w_0) - \varphi(w_T) \leq \frac{\hat{\rho}D^2}{2}$. And we can bound the last 4 terms in the RHS using \Cref{lem:inner_product_bound} and \Cref{lem:constraint_gap}. So finally, we get for $\delta \in (0,1)$ with probability at least $1-\delta$
    \begin{align*}
        &\sum_{t=0}^{T-1}\eta \hat{\rho} E[(\hat{\rho}-2\rho)\|w_t - \hat{w}_t\|^2 - \hat{\lambda}_t \epsilon]\mathds{1}\{t \in \A\} + \sum_{t=0}^{T-1}\eta \hat{\rho} E \epsilon \mathds{1}\{t \in \B\}  
        \\
        & \leq \frac{\hat{\rho}D^2}{2} + \frac{\hat{\rho}\eta^2 E^2 G^2 T}{2} + \frac{\eta^3 \rho \hat{\rho} E^3 G^2 T}{2} + \eta^2 \hat{\rho} E G^2 T + \frac{1}{3}\eta^2 \hat{\rho} E^3 G^2 T
        \\
        & + 4\eta \hat{\rho} EGD\sqrt{\frac{2T}{m}\log(\frac{4}{\delta})}
         + \sum_{t = 0}^{T-1}\eta \hat{\rho} E \sigma\sqrt{\frac{2}{m} \log\left(\frac{8T}{\delta}\right)}\left[\hat{\lambda}_t\mathds{1}\{t \in \A\} + \mathds{1}\{t \in \B\} \right]
    \end{align*}
    Notice that $0 \leq \hat{\lambda}_t \leq \Lambda$, we get
    \begin{align*}
        &\sum_{t=0}^{T-1}\eta \hat{\rho} E[(\hat{\rho}-2\rho)\|w_t - \hat{w}_t\|^2 - {\Lambda}_t \epsilon]\mathds{1}\{t \in \A\} + \sum_{t=0}^{T-1}\eta \hat{\rho} E \epsilon \mathds{1}\{t \in \B\}
        \\
        & \leq \frac{\hat{\rho}D^2}{2} + \frac{\hat{\rho}\eta^2 E^2 G^2 T}{2} + \frac{\eta^3 \rho \hat{\rho} E^3 G^2 T}{2} + \eta^2 \hat{\rho} E G^2 T + \frac{1}{3}\eta^2 \hat{\rho} E^3 G^2 T
        \\
        & + 4\eta \hat{\rho} EGD\sqrt{\frac{2T}{m}\log(\frac{4}{\delta})}
        + \eta \hat{\rho} E T \sigma\sqrt{\frac{2}{m} \log\left(\frac{8T}{\delta}\right)}(1 + \Lambda)
    \end{align*}
    Dividing both sides by $\eta \hat{\rho} ET$ we get
    \begin{align*}
        &\frac{1}{T}\sum_{t=0}^{T-1}[(\hat{\rho}-2\rho)\|w_t - \hat{w}_t\|^2 - {\Lambda} \epsilon]\mathds{1}\{t \in \A\} + \frac{1}{T}\sum_{t=0}^{T-1} \epsilon \mathds{1}\{t \in \B\}  
        \\
        & \leq  \frac{D^2}{2\eta E T} + \frac{\eta E G^2}{2} + \frac{\eta^2 \rho E^2 G^2 }{2} + \eta G^2 + \frac{1}{3}\eta E^2 G^2
        + \frac{4GD}{\sqrt{mT}}\sqrt{2\log(\frac{4}{\delta})}
        + \sigma\sqrt{\frac{2}{m} \log\left(\frac{8T}{\delta}\right)}(1 + \Lambda)
    \end{align*}
    Now choosing $\eta = \sqrt{\frac{D^2}{2G^2 E T \Gamma}}$, where $\Gamma = \frac{1}{2} E + 1 + \frac{1}{3} E^2$, we get
    \begin{align*}
        &\frac{1}{T}\sum_{t=0}^{T-1}[(\hat{\rho}-2\rho)\|w_t - \hat{w}_t\|^2 - {\Lambda} \epsilon]\mathds{1}\{t \in \A\} + \frac{1}{T}\sum_{t=0}^{T-1} \epsilon \mathds{1}\{t \in \B\}  
        \\
        & \leq  \sqrt{\frac{{2 D^2 G^2 \Gamma}}{{E T}}} + \frac{D^2 \rho E}{4T\Gamma}
        + \frac{4GD}{\sqrt{mT}}\sqrt{2\log(\frac{4}{\delta})}
        + \sigma\sqrt{\frac{2}{m} \log\left(\frac{8T}{\delta}\right)}(1 + \Lambda)
    \end{align*}
    Again, using a similar style of analysis as in previous proofs, if $\A = \phi$, then
    $$\epsilon_{bad} < \sqrt{\frac{{2 D^2 G^2 \Gamma}}{{E T}}} + \frac{D^2 \rho E}{4T\Gamma}
        + \frac{4GD}{\sqrt{mT}}\sqrt{2\log(\frac{4}{\delta})}
        + \sigma\sqrt{\frac{2}{m} \log\left(\frac{8T}{\delta}\right)}(1 + \Lambda).$$
    Thus, if we set $\epsilon = \sqrt{\frac{{2 D^2 G^2 \Gamma}}{{E T}}} + \frac{D^2 \rho E}{4T\Gamma}
        + \frac{4GD}{\sqrt{mT}}\sqrt{2\log(\frac{4}{\delta})}
        + \sigma\sqrt{\frac{2}{m} \log\left(\frac{8T}{\delta}\right)}(1 + \Lambda)$, $\A \neq \phi$.
    Now, let's consider two different cases here as well.
    
    \textbf{Case 1:} When $\sum_{t \in \A}[(\hat{\rho}-2\rho)\|w_t - \hat{w}_t\|^2 - {\Lambda} \epsilon] \leq 0.$ Then, we have
    \begin{align*}
        \sum_{t \in \A}(\hat{\rho}-2\rho)\|w_t - \hat{w}_t\|^2 &\leq {\Lambda} \epsilon |\A|
        \\
        \frac{1}{|\A|}\sum_{t \in \A}\|w_t - \hat{w}_t\|^2 &\leq \frac{{(1+\Lambda)} \epsilon}{(\hat{\rho}-2\rho)}, \quad \text{(Adding positive quantity)}
        \\
        \left(\frac{1}{|\A|}\sum_{t \in \A}\|w_{t} - \hat{w}_t\|\right)^2 &\leq \frac{{(1+\Lambda)} \epsilon}{(\hat{\rho}-2\rho)}, \quad \text{(by convexity of $\|\cdot\|^2$ and Jensen's)}
        \\
        \frac{1}{|\A|}\sum_{t \in \A}\|w_{t} - \hat{w}_t\| &\leq \sqrt{\frac{{(1+\Lambda)} \epsilon}{(\hat{\rho}-2\rho)}}.
    \end{align*}
    Additionally, we know
    \begin{align*}
         \frac{1}{|\A|} \sum_{t\in \A}g(w_t)
        & \leq \frac{1}{|\A|} \sum_{t\in \A}(g(w_t) - \hat{G}(w_t)) + \frac{1}{|\A|} \sum_{t\in \A}\hat{G}(w_t)
        \\
        &\overset{Lem.~\ref{lem:constraint_gap}}{\leq} \epsilon + \sigma \sqrt{\frac{2}{m} \log\left(\frac{8T}{\delta}\right)}, \text{with probability at least } 1-\delta.
    \end{align*}

    \textbf{Case 2:} Otherwise, if $\sum_{t \in \A}[(\hat{\rho}-2\rho)\|w_t - \hat{w}_t\|^2 - {\Lambda} \epsilon] > 0$, then
    \begin{align*}
        \epsilon T &\geq \sum_{t=0}^{T-1}[(\hat{\rho}-2\rho)\|w_t - \hat{w}_t\|^2 - {\Lambda} \epsilon]\mathds{1}\{t \in \A\} + \sum_{t=0}^{T-1} \epsilon \mathds{1}\{t \in \B\}  
        \\
        \epsilon T&\geq |\A|\left[\frac{1}{|\A|}\sum_{t \in \A}(\hat{\rho}-2\rho)\|w_t - \hat{w}_t\|^2\right] - {\Lambda} \epsilon|\A| + \epsilon|\B|
        \\
       \epsilon T &\geq |\A|(\hat{\rho}-2\rho)\left(\frac{1}{|\A|}\sum_{t \in \A}\|w_{t} - \hat{w}_t\|\right)^2 - {\Lambda} \epsilon|\A| + \epsilon|\B|
       \\
       \epsilon|\A| &\geq |\A|(\hat{\rho}-2\rho)\left(\frac{1}{|\A|}\sum_{t \in \A}\|w_{t} - \hat{w}_t\|\right)^2 - {\Lambda} \epsilon|\A|
       \\
       \epsilon(1+\Lambda) &\geq (\hat{\rho}-2\rho)\left(\frac{1}{|\A|}\sum_{t \in \A}\|w_{t} - \hat{w}_t\|\right)^2
       \\
       \frac{\epsilon(1+\Lambda)}{(\hat{\rho}-2\rho)} &\geq \left(\frac{1}{|\A|}\sum_{t \in \A}\|w_{t} - \hat{w}_t\|\right)^2
       \\
       \sqrt{\frac{\epsilon(1+\Lambda)}{(\hat{\rho}-2\rho)}} &\geq \frac{1}{|\A|}\sum_{t \in \A}\|w_{t} - \hat{w}_t\|
       . 
    \end{align*}
    And as before $\frac{1}{|\A|} \sum_{t\in \A}g(w_t) \leq \epsilon + \sigma \sqrt{\frac{2}{m} \log\left(\frac{8T}{\delta}\right)}$ with probability at least $1-\delta$ for $\delta \in (0,1)$.
 \end{proof}
%%%%%%%%%%%%%%%%%%%%%%%%%%%End Here:Theorems%%%%%%%%%%%%%%%%%%%%%%%%%%%
\section{Experiments}
\label{apdx:exp}
\subsection{CMDP Experiment Details}
\label{apdx:cartpole}
\begin{table}[!ht]
    \caption{Detailed setting of CMDP experiment on Cartpole}
    \label{tab:detail_cartpole}
    \centering
    \begin{tabular}{lc|lc}
        \toprule
        Hyperparameter & Value & Hyperparameter & Value \\
        \midrule
        Local epochs (E) & 1 & Communication rounds (T) & 500 \\
        Neural network dimension ($d$) & 17,410 & Total number of clients ($n$) & 10 \\
        Batch size per client & 1,000 & Number of runs & 5 \\
        Episode length & 200 & Discount factor ($\gamma$) & 1.0 \\
        \bottomrule
    \end{tabular}
\end{table}
Common details of Cartpole experiments are provided in \Cref{tab:detail_cartpole}.
% For the results in \Cref{fig:cartpole_main}, federated setting uses Top-K compression with $K/d=0.5$, partial participation with $m/n=0.7$ and a single local epoch. 
% Regarding the batch size and the number of clients, 5M samples are collected and used in each experiment over 500 rounds. Note that this is significantly lower than other CMDP experiments in federated learning setting, including \cite{islamov2025safe}, which used 100M samples for Cartpole experiment. Thus, the results in this paper represent early stage of learning corresponding to the first 1/20.
Constraint follows the two conditions used by \cite{xu2021crpo}: a player is given a cost of 1 every time step for (1) violating five prohibited areas defined as $[-2.4,-2.2], [-1.3,-1.1],[-0.1,0.1],[1.1,1.3],[2.2,2.4]$ or (2) exceeding the pole angle by more than 6 degrees.

% \paragraph{Effect of compression - hard switching}
% \begin{table}[ht]
% \caption{Effect of quantization and top-K compression in terms of average episodic rewards and costs after 100 and 500 rounds. The arrows indicate that the episodic reward is higher the better, with a maximum of 200, and the episodic cost is lower the better, with a safety margin of 30. The asterisks indicate that the episodic costs next to them exceed the safety margin. The bold numbers indicate the maximum feasible episodic reward across different compression methods in specific rounds.}
% \label{tab:compression_cartpole_hard}
% \centering
% % Using 'c' for numerical columns often improves alignment
% \begin{tabular}{llcccc} 
% \toprule
% Compression & $\hat{f}(w_{100})\ \uparrow$ & $\hat{g}(w_{100})\ \downarrow$ & $\hat{f}(w_{500})\ \uparrow$ & $\hat{g}(w_{500})\ \downarrow$ \\
% \midrule
% No comp. (float32) & 52.3 & 31.1* & 100.4 & 32.8* \\ 
% float16 & 37.2 & 32.6* & 32.5 & 29.2 \\
% float8 & \textbf{41.6} & 29.3 & 91.1 & 32.7* \\
% float4 & 34.7 & 29.8 & 156.3 & 30.6* \\
% % $K/d=0.75$ & 71.6 & 31.9* & \textbf{165.1} & 29.8 \\
% $K/d=0.5$ & 63.6 & 35.2* & 195.1 & 31.6* \\
% $K/d=0.25$ & 26.0 & 24.0 & 26.2 & 22.3 \\
% \bottomrule
% \end{tabular}
% \end{table}

\subsection{NP Classification Experiment Details}
\label{apdx:np_class}
% \begin{figure}
%     \centering
%     \subfloat[\centering Objective]{
%         \includegraphics[width=0.45\textwidth]{iclr2026/Results/NP/fed_vs_cent_f_hist.pdf}
%     }%
%     \quad % Space between the two figures
%     % ---- Second subfigure ----
%     \subfloat[\centering Constraint]{
%         \includegraphics[width=0.45\textwidth]{iclr2026/Results/NP/fed_vs_cent_g_hist.pdf}
%     }%
%     \caption{Comparison of federated vs centralized setup for (a) objective values and (b) constraint violations.}
%     \label{fig:fed_vs_cent}
% \end{figure}
\textbf{Hyperparameter}
We tune the learning rate on a logarithmic scale ranging from $1$ down to $1e-4$. We set $\epsilon = 0.05$ for all our experiments and plot the best result. Additionally, we use the Top$-K$ compressor with $K/d=0.1$ for both server and client compressors. Unless stated otherwise, we set global communication rounds $T=500$, local epochs $E=5$, total number of clients $n=20$, and $50\%$ as the participation rate. In the case of centralized training, we store all the training data locally and train the model for $T\times E$ epochs. Then, for a fair comparison, the results are reported every $E$ epochs, aligning with the frequency of model aggregation in FL. We report the experiments run for 3 different seed values and then report the mean with the variance bands.

We consider the constrained optimization problem in~\eqref{eq:op_prb}, where the objective is to minimize the empirical loss on the majority class while ensuring that the loss on the minority class remains below a prescribed tolerance. For each client $j$, the local objective and constraint are defined as
\begin{align}
    f_j(w) := \frac{1}{m_{j0}} \sum_{x \in \mathcal{D}_j^{(0)}} \phi(w;(x,0)), 
    \qquad
    g_j(w) := \frac{1}{m_{j1}} \sum_{x \in \mathcal{D}_j^{(1)}} \phi(w;(x,1)),
\end{align}
where $\mathcal{D}_j^{(0)}$ and $\mathcal{D}_j^{(1)}$ denote local samples of class $0$ and class $1$, respectively, and $m_{j0}, m_{j1}$ are their cardinalities. The function $\phi$ is the binary logistic loss,
\begin{equation}
    \phi(w;(x,y)) = -y \, w^\top x + \log\!\left(1 + e^{w^\top x}\right),
    \quad y \in \{0,1\}.
\end{equation}
This captures the NP paradigm: $f(w)$ enforces performance on the majority class, while the constraint $g(w) \leq \epsilon$ ensures that the loss on the minority class does not exceed the tolerance.

We use the breast cancer dataset~\citep{breast_cancer_wisconsin_dataset}, containing $569$ samples with $30$ features. We utilize $80\%$ of the data for training and the rest for testing. The training data is then split in an IID fashion between $n=20$ clients, such that each client receives an equal number of samples and the same class ratio. 

For communication efficiency, both on the uplink and downlink sides, we evaluate the use of Top$-K$~\citep{haddadpour2021federated,shulgin2022shifted,gruntkowska2024improving,islamov2025safe} compressor, which transmits a fraction $K/d$ of coordinates. Performance is measured in terms of the majority-class objective loss $f(w)$, the minority-class constraint loss $g(w)$, and the number of rounds in which the feasibility condition $g(w) \leq \epsilon$ is violated. From \Cref{fig:NP_classification_main}, we observe that both hard and soft switching achieve convergence of the majority-class objective $f(w)$ while satisfying the constraint $g(w)$. 

\begin{figure}[!t]
    \centering
    \includegraphics[width=\textwidth]{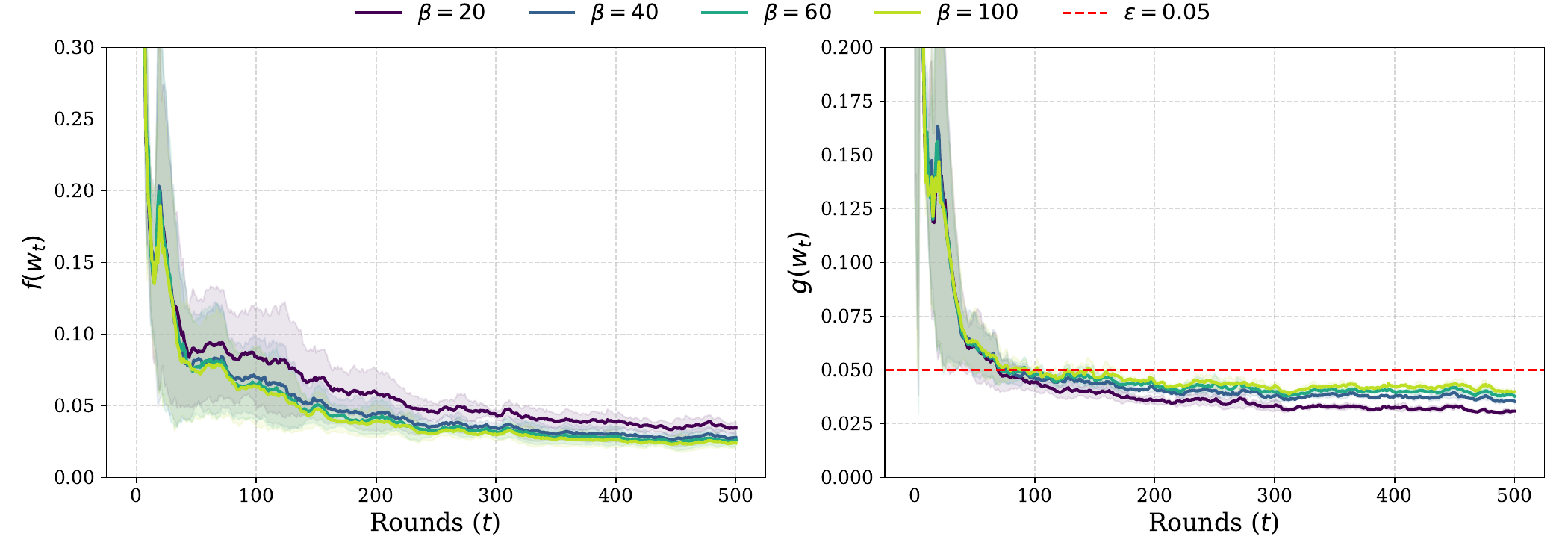}
    \caption{\textit{NP classification:} Comparison of soft switching parameter in FedSGM. Solid lines and shaded areas indicate the mean and standard deviation over 3 runs with different random seeds respectively. Red dotted line indicates the constraint violation threshold $\epsilon$.}
    \label{fig:beta}
\end{figure}

Finally, the soft switching parameter $\beta$ in FedSGM is exponentially easier to tune than penalty-based methods. In \Cref{fig:beta}, we compare differernt choices of $\beta$ around the value $\beta=40$ suggested by our theoretical guarantee $\beta>\frac{2}{\epsilon}$. For lower $\beta=20$, the convergence becomes noticeably less stable and more conservative on constraint satisfaction. For higher $\beta$, the behavior becomes asymptotically closer to its hard switching counterpart. In all experiments, we used the theoretically suggested value or tuned the value to about 2-3 times the point.

\begin{figure}[htb]
    \centering
    \includegraphics[width=\textwidth]{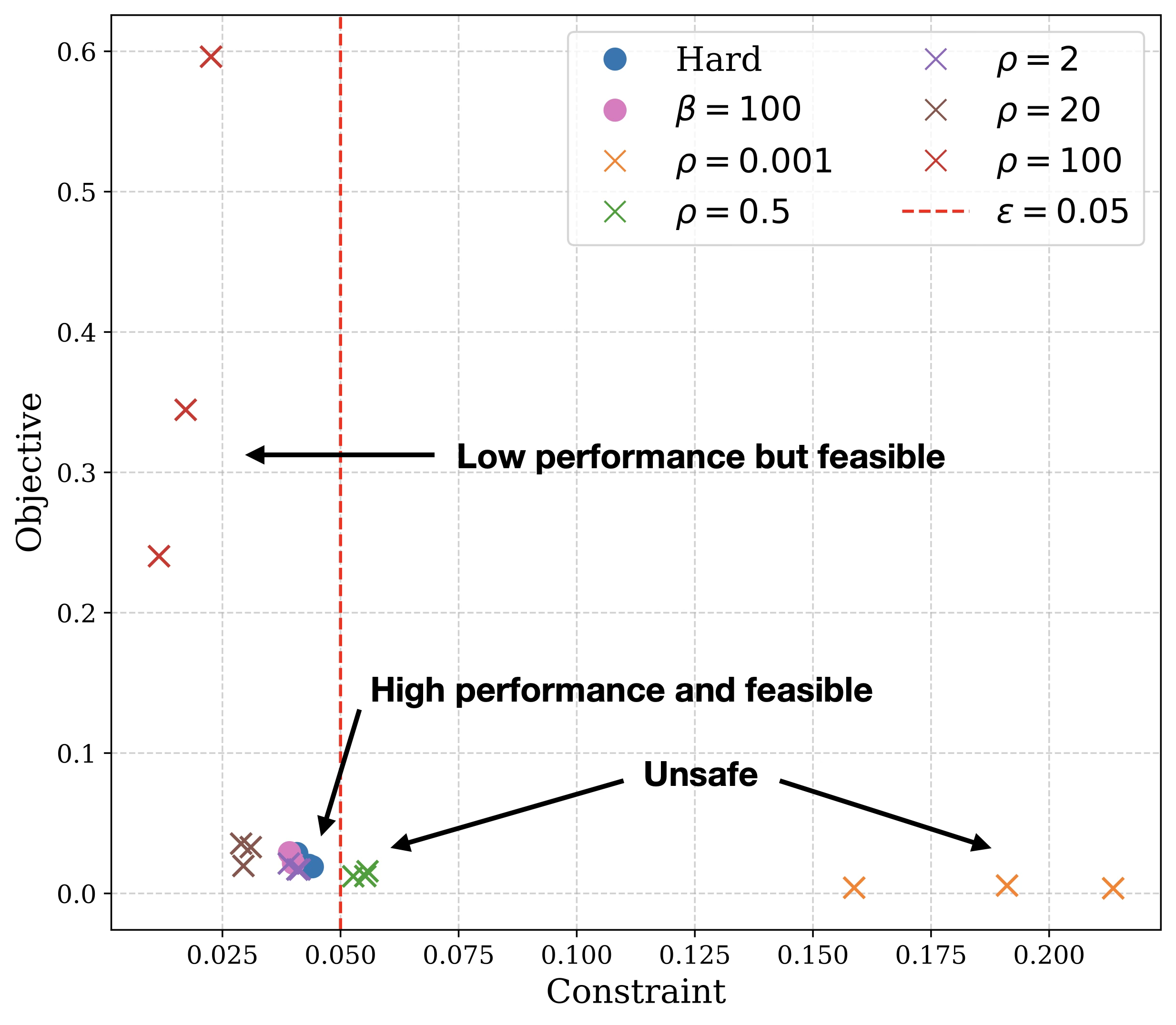}
    \caption{\textit{NP classification:} Comparison of FedSGM against penalty-based FedAvg. Each point represents a result from a separate experimental run with different random seeds. Circles represent the FedSGM algorithm (soft and hard switching), while crosses ('X') represent penalty-based FedAvg with varying penalty parameters}
    \label{fig:penalty}
\end{figure}

In the meantime, \Cref{fig:penalty} shows that the counterpart baseline (penalty-based FedAvg) is extremely unstable with the choice of penalty parameter $\rho$. Mild change in the parameter to $\rho=0.5$ immediately makes the algorithm to converge to an infeasible solution. Although theory requires the penalty parameter to explode to infinity, larger $\rho$ prohibits the convergence performance.

\subsection{Fair Classification Experiment}

\begin{figure}[!t]
    \centering
    \includegraphics[width=\textwidth]{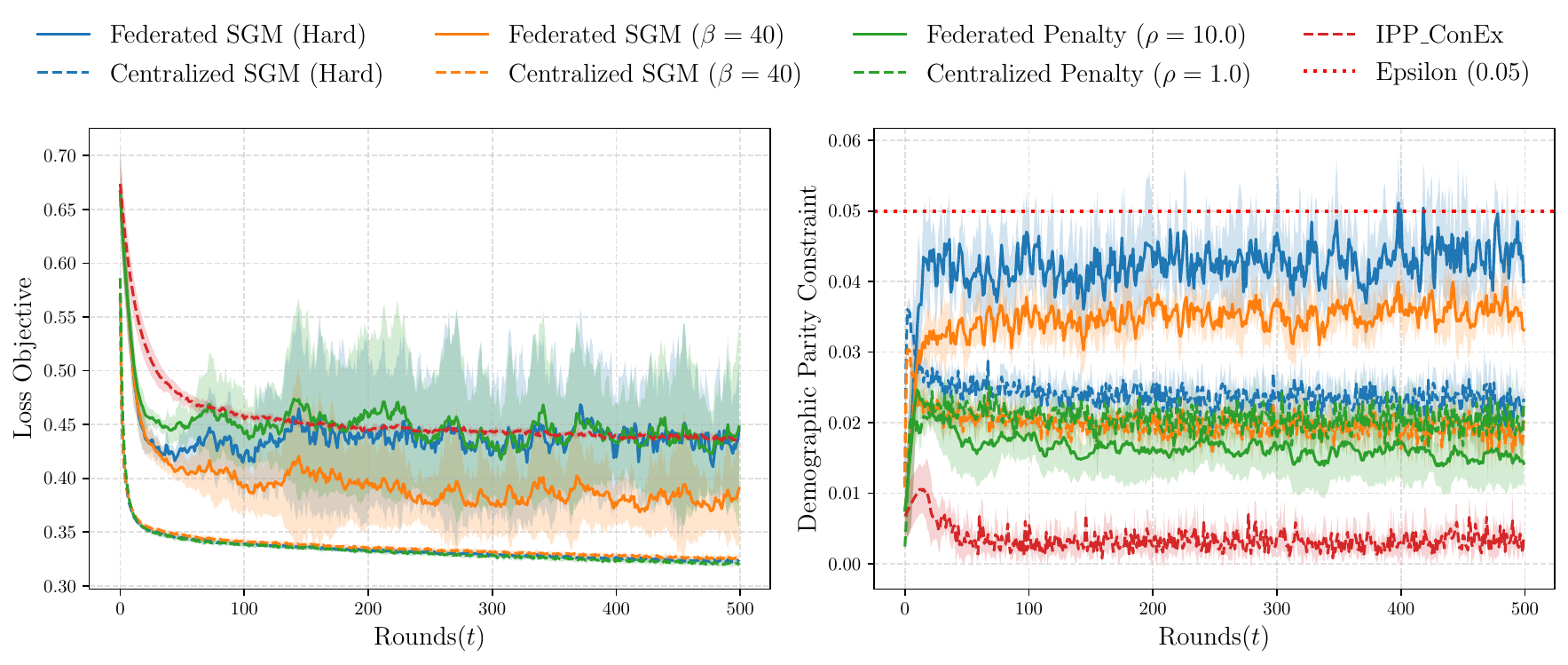}
    \caption{\textit{Fair classification:} Comparison of FedSGM against centralized counterpart, centralized/federated penalty-based method and IPP ConEx. FedSGM (blue/orange) achieves the best balance of the trade-off between loss minimization and fairness violation without penalty parameter tuning.}
    \label{fig:DP_IPP_Penalty}
\end{figure}

\paragraph{Fair Classification with Demographic Parity. }
Fair classification task is formulated by the local objective of binary cross entropy loss and constraint of demographic parity: $$f_j(w) := \frac{1}{m_j}\sum_{(x,y)\in\mathcal{D}_j}\left[y\log(\pi(x;w)) + (1-y)\log(1-\pi(x;w))\right]$$
$$g_j(w) := \left|\frac{1}{m_{j,p}}\sum_{x\in\mathcal{D}_{j,p}} \pi(x;w) - \frac{1}{m_{j,u}}\sum_{x\in\mathcal{D}_{j,u}} \pi(x;w)\right|,$$
where the subscript $p$ and $u$ represent protected and unprotected groups respectively. Since the demographic parity is defined as the absolute difference between the average logits of protected and unprotected groups, the aggregation at the server is treated with extra care. At the server, the average logits were aggregated instead of the final constraint value to recalculated the weighted average of logits for the global constraint calculation:
$$g(w) := \left|\frac{1}{m_{\mathcal{S},p}}\sum_{x\in\mathcal{D}_{\mathcal{S},p}} \pi(x;w) - \frac{1}{m_{\mathcal{S},u}}\sum_{x\in\mathcal{D}_{\mathcal{S},u}} \pi(x;w)\right|,$$
where the subscript $\mathcal{S}$ denote all sampled clients. Here, we explore the setting of stochastic data sampling, heterogeneous client data distribution, and deep neural network, making the problem highly non-convex, stochastic, skewed, and non-smooth. The experiments were conducted on Adult dataset \citep{kohavi1996adult} using $n=10, m=5, E=2, T=500, \epsilon=0.05, \eta=0.0001$, and $\alpha=2.0$ for client selection from Dirichlet distribution. For the penalty-based methods, the penalty parameter was chosen from $\rho\in[0.1, 1.0, 10.0]$ and only the best results are presented. Aligned with the findings in \Cref{fig:penalty}, higher values produced slow convergence and lower values produced infeasible solutions.

In \Cref{fig:DP_IPP_Penalty}, we report the mean and standard deviation bands over five runs with different random seeds. Centralized algorithms in dashed lines are more conservative than their federated version, while IPP ConEx being the most conservative. Centralized penalty-based and switching gradient methods achieve the fastest loss convergence while satisfying the fairness constraint. With appropriately tuned penalty parameter, penalty-based FedAvg (green solid line) bahaves comparably to FedSGM with hard switching (blue solid line). FedSGM with soft switching (orange solid line) outperforms both in terms of objective and constraint, as well as the stability of the training. This confirms our theoretical findings from \Cref{rmk:local_skewness} that soft switching effectively mitigates the locally-induced skewness from the heterogeneity.

\section{Related works}
\label{apdx:rel}
FL has become a foundation in privacy-preserving learning since \citet{mcmahan2017communication} with key applications in healthcare~\citep{rieke2020future, peng2024depth}, battery management systems~\citep{wang2024adaptive,zhu2024collaborative,banabilah2022federated}, finance~\citep{wen2023survey,chatterjee2023federated} to name a few. While most existing FL algorithms focus on unconstrained optimization, many of these applications naturally involve constraints, such as fairness requirements, safety limits, or resource budgets~\citep{du2021fairness,huang2024federated,zhang2024unified,salehi2021federated,yang2022federated}. This motivates the development of constrained FL algorithms, which explicitly incorporate these constraints into the learning process to ensure reliable learning.

FedAvg, introduced in \citet{mcmahan2017communication}, remains the most widely used FL algorithm. In addition, there have been many variants of it to tackle different aspects of learning, such as data heterogeneity~\citep{karimireddy2020scaffold,seo2024relaxed,morafah2024towards}, system heterogeneity~\citep{gong2022fedadmm,li2020federated,wu2024fedbiot}, and fairness~\citep{li2021ditto,badar2024fairtrade}. Beyond FedAvg, ADMM-based FL algorithms have been explored to tackle heterogeneity in FL~\citet{acar2021federated,gong2022fedadmm,zhou2023federated}. Despite the rich FL literature proposed previously, the majority of works primarily focus on unconstrained FL problems, leaving a gap between constrained optimization and FL.

The most common setting in which~\ref{eq:op_prb} is analyzed is with $n=1$, i.e., the centralized setting. Most works address this scenario by formulating it as a saddle-point problem and solving it using various primal-dual methods~\citep{nemirovski2004prox,chambolle2011first,bertsekas2014constrained,hamedani2021primal,zhang2022solving,hounie2023resilient,boob2024optimal}. While theoretically grounded, these approaches can be highly sensitive to the tuning of the dual variables and typically require prior knowledge of an upper bound on the optimal dual variable. Moreover, many of these methods involve projecting both primal and dual variables onto bounded sets, which are often unknown in practice, making implementation challenging.

Another primal-only approach is the switching (sub-)gradient method (SGM) proposed by \citet{polyak1967general}. This method has been extensively studied across different settings. For instance, \citet{titov2018mirror,stonyakin2019mirror,alkousa2020modification} studies mirror descent integrated with SGM. Additionally, \cite{lan2020algorithms} extended SGM to the stochastic regime, establishing iteration complexity results under both convex and strongly convex assumptions. The analysis has also been broadened to weakly convex problems in works such as \citet{huang2023oracle,liu2025single,ma2020quadratically,jia2025first} extend the analysis of SGM to the weakly convex setting. In the federated learning context, \citet{islamov2025safe} provides a seminal extension of SGM incorporating communication compression, though their framework does not address local multi-step updates or partial client participation. Building on these ideas, and drawing inspiration from \cite{upadhyay2025optimization}, we, in addition to the hard switching, introduce a soft-switching variant of SGM and demonstrate its practical advantages.

\subsection{Novelty in Comparison to Baselines}
\textsc{FedSGM} is the first method that simultaneously and provably addresses functional constraints, multi-step local updates, bi-directional compression with error feedback, and partial participation in federated learning. Existing approaches do not apply to this setting, which is why direct comparisons to them are neither feasible nor relevant.

To examine the issue at its root and recognize the novelty of our method, we first clarify the difference between \textbf{set constraints} and \textbf{functional constraints}. Standard set constrained optimization often assumes a ``simple" constraint set $\mathcal{X}$ (e.g., a box or ball) where projection is computationally cheap. However, in our work, we address functional constraints of the form $g(w) \leq 0$ like the loss function in our NP-classification experiment.

We will go into the details of existing methods, specifically projection, penalty, and the Frank-Wolfe method.

\paragraph{Why \textsc{FedSGM} is not comparable to projected or proximal methods?}
We can observe that, $$\min_w f(w) \quad \text{s.t.} \quad g(w) \le 0,$$
is equivalent to an unconstrained problem with an indicator function as a regularizer (with $\epsilon-$ feasibility satisfaction):
$$\min_w f(w) + R(w), \quad \text{where} \quad R(w) = \begin{cases} 0 & \text{if } g(w) \le \epsilon \\ \infty & \text{otherwise} \end{cases}.$$
The standard class of algorithms for this formulation is Proximal Gradient Descent (PGD) \citep{condat2022murana}. A PGD step would be:
$$w_{t+1} = \text{prox}_{\eta R}(w_t - \eta \nabla f(w_t)).$$
However, the bottleneck of this computation is the proximal operator, $\text{prox}_{\eta R}(v)$. By definition, this is
$$\text{prox}_{\eta R}(v) = {\displaystyle\arg\min}_{w} (\eta R(w) + \frac{1}{2} \|w - v\|^2) = {\displaystyle\arg\min}_{\{w \text{ s.t. } g(w) \le \epsilon\}} \|w - v\|^2.\   \ \ \ \ \ \ (Projection)$$

This is the Euclidean projection onto the feasible set $\boldsymbol{\mathcal{W}}^g_{\epsilon} = \{w \mid g(w) \le \epsilon\}$. In case of full participation, the server can compute $g$ by doing the inner solve where it asks the clients to send $g_j(w)$ multiple times (ours only require one constraint evaluation and communication) to solve $(Projection)$. For a general, non-linear convex function $g(w)$, this projection does not have a closed-form solution. Rather, it is a complex constrained optimization problem in itself that must be solved in every iteration and requires multiple communication rounds. This bottleneck is precisely what is violated in our problem setting.

It is the motivation of our paper to develop a method that avoids this intractable projection. Our \textsc{FedSGM} framework is explicitly projection-free and primal-only.

\paragraph{Why penalty-based FedAvg is not an appropriate baseline?}

Consider the following penalty-based formulation instead of our problem (1) in the paper,
$$\min_{w} f(w)+\rho[g(w) - \epsilon]_+ \ \ \ \ (Penalty)$$

It is evident that when $g(w) \leq \epsilon$ the (sub-)gradient is $\nabla f(w)$ while when $g(w) > \epsilon$ the (sub-)gradient is $\nabla f(w) + \rho \nabla g(w)$. Thus, for large $\rho$ such that $\nabla f(w) + \rho \nabla g(w) \approx \rho \nabla g(w)$, the penalty formulation in $(Penalty)$ would be a simplified approximation of the SGM dynamics. To precisely emulate the dynamics of hard switching, we must set $\rho$ to be very large. However, making $\rho$ large induces ill-conditioning, slowing optimization.

On the other hand, any fixed $\rho$ that is too small can leave persistent violations. The ``right" $\rho$ depends on the relative scale and geometry of $\nabla f$ and $\nabla g$. \textsc{FedSGM} avoids this hyperparameter tuning by switching directions (objective vs. constraint), i.e., no penalty weight to tune, no conditioning blow-up. Therefore, this penalty-based FedAvg is not an appropriate baseline.

However, to elucidate the above points, we create the heuristics to run a penalty-based federated algorithm where in each round the clients update their gradient as $$
\nu_{j,\tau}^t = \nabla f_j(w_{j,\tau}^t) + \mathds{1}_{\{\hat{G}(w_t) > \epsilon\}} \rho \nabla g_j(w_{j,\tau}^t).
$$
We compare \textsc{FedSGM} against the "Penalty-FedAVG" with varying $\rho \in [0.001, 0.5, 2, 20, 100]$. We present the result in \Cref{fig:penalty}. The result confirms that $\rho$ has a significant impact on performance as we expected. A small $\rho$ such as $0.001$ or $0.5$ does not satisfy the feasibility criteria, while a large $\rho = 100$ (which essentially should have been indicative of switching dynamics) makes the problem ill-conditioned and gives low performance.

\paragraph{Why additional comparison with Frank-Wolfe is not suitable?}

Frank-Wolfe \citet{pokutta2024frank,dadras2024federated} requires access to a Linear Minimization Oracle (LMO) over the constraint set at each iteration, formally written as,
$$s_t = {\displaystyle\arg\min}_{s \in \boldsymbol{\mathcal{X}}} \langle s, \nabla f(w_t) \rangle. \ \ \ \ (Frank-Wolfe)$$
While Frank-Wolfe is efficient for "simple" set constraints such as a polytope, box, or simplex, for general non-smooth convex functional constraints $g(w) \leq 0$, solving the LMO is computationally infeasible. For our case in FL, where $g(w) = \frac{1}{n}\sum g_i(w)$, the constraint boundary is defined by distributed clients. Therefore, solving the LMO would require an iterative subroutine (like projected gradient descent), where every inner iteration requires a full communication round. Thus, just like projection-based methods, a Frank-Wolfe approach is communication-prohibitive in the functional constraint regime.

As no existing method in FL supports functional constraints with partial participation, local updates, and compression with error feedback simultaneously, a direct comparison is not feasible. Our work bridges this gap by characterizing the interplay between factors that are well motivated in federated learning, establishing realistic, functionally constrained framework.

\end{document}